\newcommand\DoToC{%
  \startcontents
  \printcontents{}{1}{\vskip 1.5em\hrule\vskip .75em}
  \vskip .75em\hrule\vskip 2em
}
\newcommand\myparagraph{\@startsection{paragraph}{4}{\z@}%
  {0ex plus 0ex minus 0ex}
  {-1em}
  {\normalsize\bfseries}}         
\definecolor{Bleu}{RGB}{0,0,175} 
\definecolor{DZO}{rgb}{1,.5,0} 
\definecolor{DZG}{rgb}{0.151,0.620,0.151} 
\definecolor{ZB}{rgb}{.255,.42,.882} 
\definecolor{myDarkGrey}{rgb}{.1,.1,.1}
\definecolor{myLessDarkGrey}{rgb}{.125,.125,.125}
\definecolor{slateGrey}{HTML}{708090}
\definecolor{CBmagenta}{HTML}{EE3377}
\definecolor{CBred}{HTML}{CC3311}
\definecolor{CBorange}{HTML}{EE7733}
\definecolor{CBteal}{HTML}{009988}
\definecolor{CBcyan}{HTML}{33BBEE}
\definecolor{CBblue}{HTML}{0077BB}
\definecolor{CBgrey}{HTML}{BBBBBB}
\definecolor{CB2blue}{HTML}{6699CC}
\definecolor{CB2yellow}{HTML}{EECC66}
\definecolor{CB2red}{HTML}{EE99AA}
\definecolor{CB3yellow}{HTML}{CCBB44}
\definecolor{CB3green}{HTML}{228833}
\definecolor{CB3purple}{HTML}{EE6677}
\definecolor{CB3red}{HTML}{AA3377}
\definecolor{CB4red}{HTML}{882255}
\definecolor{CB4green}{HTML}{117733}
\definecolor{CB4purple}{HTML}{AA4499}
\definecolor{CB5red}{HTML}{DA0000} 
\definecolor{CB5blue}{HTML}{0066FF} 
\definecolor{CB5purple}{HTML}{8000B3}  
\definecolor{codegreen}{rgb}{0,0.6,0}
\definecolor{codegray}{rgb}{0.5,0.5,0.5}
\definecolor{codepurple}{rgb}{0.58,0,0.82}
\definecolor{backcolour}{rgb}{0.97,0.97,0.95}
\lstdefinestyle{mystyle}{
    language=Python,
    escapeinside={(*}{*)},
    escapebegin=\color{codegreen},    backgroundcolor=\color{backcolour},   
    commentstyle=\color{codegreen},
    keywordstyle=\color{magenta},
    numberstyle=\tiny\color{codegray},
    stringstyle=\color{codepurple},
    basicstyle=\ttfamily\footnotesize,
    breakatwhitespace=false,         
    breaklines=true,                 
    captionpos=b,                    
    keepspaces=true,                 
    numbers=left,                    
    showspaces=false,                
    showstringspaces=false,
    showtabs=false,                  
    tabsize=2,
    moredelim=**[is][\color{slateGrey}]{@}{@},
    morestring=[b]' 
}
\def\thm@space@setup{%
  \thm@preskip=\dimexpr\parskip+\smallskipamount\relax
  \thm@postskip=0pt
}
\crefname{section}{Sec.}{Secs.}
\crefname{theorem}{Thm.}{Thms.}
\crefname{lemma}{Lem.}{Lems.}
\crefname{corollary}{Cor.}{Cors.}
\crefname{proposition}{Prop.}{Props.}
\crefname{assumption}{Asm.}{Asms.}
\crefname{property}{Propt.}{Propts.}
\crefname{algorithm}{Alg.}{Algs.}
\crefname{appendix}{Appx.}{Appendices}
\crefname{figure}{Fig.}{Figs.}
\crefname{table}{Tab.}{Tabs.}
\newcommand{\tablefont}{\fontsize{9.5}{11.4}\selectfont} 
\DeclareFontFamily{U}{mathb}{}
\DeclareFontShape{U}{mathb}{m}{n}{
  <-5.5> mathb5
  <5.5-6.5> mathb6
  <6.5-7.5> mathb7
  <7.5-8.5> mathb8
  <8.5-9.5> mathb9
  <9.5-11.5> mathb10
  <11.5-> mathb12
}{}
\DeclareSymbolFont{mathb}{U}{mathb}{m}{n}
\DeclareMathSymbol{\olddrsh}{3}{mathb}{"EB} 
\newcommand*{\TakeFourierOrnament}[1]{{%
\fontencoding{U}\fontfamily{futs}\selectfont\char#1}}
\newcommand*{\danger}{\raisebox{.18em}{\small\TakeFourierOrnament{`1}}}
\algnewcommand{\SilentRequire}{\item[\phantom{\textbf{Require:}}]}
\renewcommand{\thealgorithm}{\arabic{algorithm}}
\renewcommand{\fnum@algorithm}{\algorithmname\space\thealgorithm 
~(\mbox{\normalfont\@currentlabelname}):}
\newcommand{\algcaption}[2][]{%
  \caption[#1]{#2}
}
\renewcommand\thmcontinues[1]{continued}
\newtheorem{theorem}{Theorem}
\newtheorem{lemma}{Lemma}
\newtheorem{corollary}{Corollary}
\newtheorem{proposition}{Proposition}
\declaretheoremstyle[
  spaceabove=\dimexpr\parskip+\smallskipamount\relax,  
]{examplestyle}
\declaretheorem[style=examplestyle]{example}
\newcounter{parentnumber}
\crefname{example}{Example}{Examples} 
\DeclareMathOperator{\KLop}{D_{\mathrm{KL}}}
\DeclarePairedDelimiterX{\KLinnerx}[2]{(}{)}{%
  #1\;\delimsize\|\;#2%
}
\newcommand{\KL}{\KLop\KLinnerx}
\DeclarePairedDelimiterX{\KLinnerxsqrt}[2]{(}{)}{%
  #1\;\delimsize\|\;#2%
}
\DeclareMathOperator{\chisqop}{\chi^2}
\newcommand{\chisq}{\chisqop\KLinnerx}
\DeclareMathOperator*{\logit}{logit}
\DeclareMathOperator*{\expit}{expit}
\DeclareMathOperator*{\argmin}{argmin}
\DeclareMathOperator*{\esssup}{ess\,sup}
\DeclareMathOperator*{\ReLU}{ReLU}
\DeclareMathOperator{\TV}{TV}
\DeclareMathOperator{\Law}{Law}
\newcommand{\mymid}{\,|\,}
\DeclareFontFamily{U}{stix2bb}{\skewchar\font127 }
\DeclareFontShape{U}{stix2bb}{m}{n} {<-> stix2-mathbb}{}
\DeclareMathAlphabet{\mathbbp}{U}{stix2bb}{m}{n}
\newcommand{\Cdivergence}{C_{\ref*{cond:divergence}}}
\newcommand{\Cpos}{C_{\ref*{cond:strongPositivity}}}
\newcommand{\Closs}{C_{\ref*{cond:bddLoss}}}
\newcommand{\Ccurv}{C_{\ref*{cond:curvature}}}
\newcommand{\Cmixedstrong}{C_{\ref*{cond:mixedLipschitzStrong}}}
\newcommand{\Cmixedweak}{C_{\ref*{cond:mixedLipschitzWeak}}}
\newcommand{\Centstrong}{C_{\ref*{cond:entropyStronger}}}
\newcommand{\CdiffDens}{C_{\ref*{cond:diffusionBddDensity}}}
\newcommand{\Cbesov}{C_{\ref*{cond:diffusionBesov}}}
\newcommand{\CdiffusionBoundary}{C_{\ref*{cond:diffusionBoundary}}}
\newlist{condenum}{enumerate}{1}
\newcommand{\indep}{\perp \!\!\! \perp}
\title{DoubleGen: Debiased Generative Modeling of Counterfactuals}
\author{Alex Luedtke \\
Harvard University\\
Boston, USA \\
\texttt{luedtke@hms.harvard.edu} \\
\And
Kenji Fukumizu \\
Institute of Statistical Mathematics \\
Tokyo, Japan \\
\texttt{fukumizu@ism.ac.jp}
}
\begin{document}
\allowdisplaybreaks

\maketitle
\thispagestyle{plain}

\begin{abstract}
Generative models for counterfactual outcomes face two key sources of bias. Confounding bias arises when approaches fail to account for systematic differences between those who receive the intervention and those who do not. Misspecification bias arises when methods attempt to address confounding through estimation of an auxiliary model, but specify it incorrectly. We introduce DoubleGen, a doubly robust framework that modifies generative modeling training objectives to mitigate these biases. The new objectives rely on two auxiliaries---a propensity and outcome model---and successfully address confounding bias even if only one of them is correct. We provide finite-sample guarantees for this robustness property. We further establish conditions under which DoubleGen achieves oracle optimality---matching the convergence rates standard approaches would enjoy if interventional data were available---and minimax rate optimality. We illustrate DoubleGen with three examples: diffusion models, flow matching, and autoregressive language models.
\end{abstract}

\section{Introduction}

Generative models have achieved remarkable success at imitating real-world data. This capability underlies recent advances in image generation software \citep{ramesh2021zero,rombach2021highresolution} and large language models \citep{radford2018improving,touvron2023llama}. But sometimes, there is a need to go beyond the world as it is, and instead imitate counterfactual data---that which would have arisen had the world been intervened on in some way \citep{komanduri2023identifiable}.

For example, suppose a policymaker wants to see how medical records would look if a new treatment were universally adopted. The available records come from a time when resources were limited, and so the treatment was given preferentially to sicker patients. These patients also tend to have worse outcomes on treatment than others. A generative model trained on only treated patients' records would internalize this confounding, suggesting overly pessimistic outcomes. What the policymaker wants instead is to generate records as they \emph{would have appeared} had treatment been given to everyone.

\begin{table}[b]
    \caption{Selected attributes possessed by a larger percentage of smiling ($n$\;=\;78{,}080) than nonsmiling ($n$\;=\;84{,}690) CelebA faces. When trained on only smiling faces, {\color{CB5red}traditional generative models overrepresent these attributes}, failing to reflect \textbf{\color{CB5blue}how the population would look if everyone smiled}.}\label{tab:imbalance} \
    {\centering
    \begin{tabular}{l *{6}{>{\arraybackslash}b{4.6em}}}
        & Lipstick & Makeup & Female$^*$ & Earrings & No Beard & Blonde \\\hline
        \rule{0pt}{1.05em}{\color{CB5red}Smiling} & {\color{CB5red}56\%} & {\color{CB5red}47\%} & {\color{CB5red}65\%} & {\color{CB5red}26\%} & {\color{CB5red}88\%} & {\color{CB5red}18\%} \\
        Not Smiling & 38\% & 30\% & 52\% & 12\% & 79\% & 12\% \\\hline
        \rule{0pt}{1.05em}{\bf\color{CB5blue}Overall} & {\bf\color{CB5blue}47\%} & {\bf\color{CB5blue}38\%} & {\bf\color{CB5blue}58\%} & {\bf\color{CB5blue}19\%} & {\bf\color{CB5blue}83\%} & {\bf\color{CB5blue}15\%}
    \end{tabular}\vspace{.75em}
    }
    {\footnotesize$^*\,$\textit{Perceived binary sex, as labeled by human annotators \citep{liu2015faceattributes}.}}
\end{table}

Confounding can also arise in many other settings. More active internet users are more likely to be exposed to web campaigns \citep{chan2010evaluating}. Students with greater family resources are more likely to get extracurricular tutoring \citep{zhang2023effect}. Celebrities with certain attributes are more likely to smile---see \Cref{tab:imbalance} \citep{liu2015faceattributes}. In this work, we use this celebrity setting as an illustration, seeking to answer the question: what would celebrity photos look like if everyone smiled?

Formally, we aim to generate samples from the distribution $\mathbb{P}$ of a counterfactual outcome $Y^\star\in \mathcal{Y}$ under intervention $a^\star$. To accomplish this, we approximate a transport map $\phi_{\mathbb{P}}$ that takes as input simulated noise $U\sim \Pi$ and outputs $\phi_{\mathbb{P}}(U)\sim \mathbb{P}$---in other words, its pushforward satisfies $\phi_{\mathbb{P}\sharp}\Pi=\mathbb{P}$. If independent samples from $\mathbb{P}$ were available, a standard generative modeling approach such as variational autoencoders \citep{kingma2013auto}, diffusion models \citep{sohl2015deep}, or flow matching \citep{lipman2022flow} could be used. However, such samples are not available: $\mathbb{P}$ represents the distribution of outcomes if everyone had received $a^\star$, but in reality not everyone did.  Instead, a sample of independent copies of $Z=(X,A,Y)\sim P$ is available, and the intervention $A$ that someone received may depend on confounders $X$ of its effect on the outcome $Y$. Under a no-unmeasured confounders condition, $\mathbb{P}$ can be learned using data from $P$---see \Cref{sec:proposedApproach}.

\myparagraph{Contributions.} We introduce \nameref*{alg:double}, the first doubly robust framework for adapting standard generative modeling frameworks to generate counterfactuals (\Cref{sec:proposedApproach}). We illustrate how \nameref*{alg:double} can adapt three frameworks: flow matching, diffusion models, and autoregressive language models (\Cref{sec:examples}). We establish high-probability statistical divergence guarantees  (\Cref{sec:theoryOverview,sec:excRisk}) and develop a method to evaluate minimax rate optimality, which we use to give conditions for \nameref*{alg:double} diffusion models to be rate optimal (\Cref{sec:minimax}). Finally, we conduct experiments  (\Cref{sec:experiments} and \Cref{fig:faces}).

\begin{figure}[t]
    \centering
    \includegraphics[width=\textwidth]{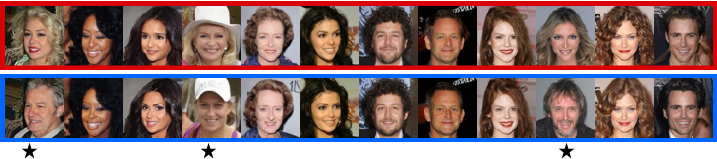}
    \caption{Counterfactual smiling celebrities generated by a \textbf{\color{CB5red}traditional diffusion model} trained~on only smiling faces (top) and a \textbf{\color{CB5blue}\nameref*{alg:double} diffusion model} (bottom). Columns contain coupled samples, with the random seed set to the same value before generation. The \raisebox{-0.1em}{\includegraphics[height=.85em]{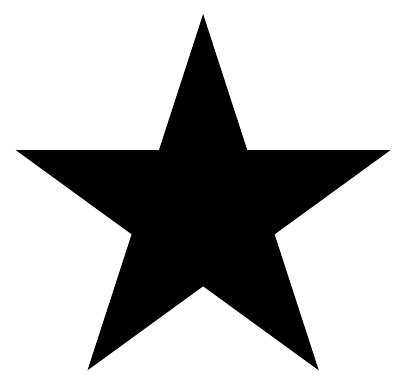}}'s mark the most qualitatively different pairs. 
    See \Cref{fig:moreFaces} for 200 random samples.}
    \label{fig:faces}
\end{figure}

\section{Related work and motivation}

Previous works have provided ways to learn a counterfactual transport map $\phi_{\mathbb{P}}$ from observational data. These methods use one of three generation strategies: iterative, joint, or direct.

Iterative approaches generate data according to a known causal ordering. In our setting, they first generate features $X$, then intervene to set $A=a^\star$, and finally generate an outcome. These strategies have been proposed using generative adversarial networks \citep{kocaoglu2017causalgan}, normalizing flows \citep{pawlowski2020deep}, variational autoencoders \citep{karimi2020algorithmic}, and diffusion models \citep{chaomodeling}. A disadvantage of these approaches is that they can suffer from error propagation, which is especially problematic if $X$ is high-dimensional (e.g., an image) \citep{javaloy2024causal}.

Joint generation approaches avoid error propagation by simultaneously learning the distribution of $(X,Y^\star)$. This has been done with normalizing flows \citep{khemakhem2021causal,javaloy2024causal}, variational graph autoencoders \citep{sanchez2021vaca}, and diffusion models \citep{sanchez2022diffusion}. However, these methods, too, solve a harder problem than is necessary: generating $(X,Y^\star)$ even when only $Y^\star$ is of interest. This can worsen sample efficiency and computation time.

Direct approaches generate only the counterfactual outcome $Y^\star$. \cite{wu2024counterfactual} describes how to use inverse probability weighting with (approximate) likelihood approaches---such as classifier-free guided diffusion models \citep{ho2022classifier} and conditional variational autoencoders \citep{sohn2015learning}. Our proposed approach---\nameref*{alg:double}---is also a direct approach. It applies to any loss-based generative modeling strategy, of which likelihood approaches are a special case. 

A limitation of all existing counterfactual generation approaches is that they are only singly robust. For iterative and joint generation approaches, this means they can only correctly generate samples of $Y^\star$ if they correctly model the distribution of $(X,Y^\star)$. For inverse probability weighted approaches, this means that they must correctly specify the propensity, $P\{A=a^\star\mymid X=\cdot\,\}$ \citep{wu2024counterfactual}. When these auxiliary models are incorrect, the resulting misspecification bias can lead to incorrect counterfactual distributions. Doubly robust estimators are less prone to this bias by remaining valid if either a propensity or outcome model is correct \citep{robins1994estimation}. These estimators have been used previously to estimate counterfactual distributions \citep{fawkes2022doubly,kennedy2023semiparametric,martinez2023efficient,luedtke2024one}, but not to generate samples from them.

Another limitation of existing works is that, with the exception of \citep{wu2024counterfactual}, they only show how to adapt individual generative modeling approaches, and so cannot be immediately applied when new ones are developed. 
For instance, despite the recent success of flow matching, this approach has not yet been adapted to generate counterfactuals.

From a theoretical perspective, existing counterfactual generation methods are not well understood. This is in contrast to non-counterfactual settings, where generalization upper bounds and minimax lower bounds have been developed \citep{lee2022convergence,de2022convergence,oko2023diffusion,chen2023score,lotfi2023non,fukumizu2024flow,holk2024statistical}. This theoretical gap is significant: without performance guarantees, practitioners cannot know when these methods will succeed or fail. 

\section{Proposed approach}\label{sec:proposedApproach}

Our proposed approach leverages observational data from $P$ to learn to generate samples from the counterfactual distribution $\mathbb{P}$. It does so under a no-unmeasured confounders assumption \citep{robins1986new,pearl2009causality}, which makes it possible to identify $\mathbb{P}$ with $P$ through the relation
\begin{align}
    \mathbb{P}\{Y^\star\in \mathcal{Y}'\}&= \medint\int P\{Y\in \mathcal{Y}'\mymid A=a^\star,X=x\}\, P_X(dx),\;\textnormal{ for all measurable sets $\mathcal{Y}'\subseteq \mathcal{Y}$,} \label{eq:ident}
\end{align}
with $P_X$ the marginal distribution of $X$ under $P$. 
Formally, \eqref{eq:ident} holds if $Z$ has a corresponding counterfactual outcome $Y^\star$ and the following hold: no unmeasured confounders ($Y^\star\indep A\mymid X$), consistency ($Y=Y^\star$ whenever $A=a^\star$), and positivity ($P\{A=a^\star\mymid X\}>0$ $P_X$-a.s.).

Our method builds upon existing generative modeling strategies---those that practitioners would use in idealized scenarios where draws from $\mathbb{P}$ are available. We restrict our focus to loss-based strategies expressible as in \Cref{alg:oracle} (\nameref*{alg:oracle}), which impose that the final transport map $\phi_n^\star$ is defined as a transformation of a hypothesis $\theta_n^\star$ selected by optimizing an empirical risk $R_n^\star$. This hypothesis may, for example, be a global empirical risk minimizer or a neural network whose weights were learned by stochastic gradient descent. As illustrated in the next subsection, flow matching, diffusion models, and autoregressive language models can all be expressed as in \nameref*{alg:oracle}.

\begin{algorithm}[b]
   \algcaption[OracleGen]{Oracle counterfactual generative modeling \\[.1em]
   {\color{CB5purple}\danger\, \textit{Cannot be used in practice}}\textit{ --- requires access to draws from the counterfactual distribution $\mathbb{P}$}}
   \label{alg:oracle}
   \linespread{1.05}\selectfont
\begin{algorithmic}[1]
    \Require $\bullet$ counterfactual data $Y_1^\star,Y_2^\star,\ldots,Y_n^\star\overset{\textnormal{iid}}{\sim} \mathbb{P}$
    \SilentRequire \begin{itemize}[leftmargin=*,itemindent=-3.1em]
        \item choice of generative modeling framework, defined by a hypothesis space $\Theta$, loss $\ell : \Theta\times\mathcal{Y}\rightarrow \mathbb{R}$, and sampling map transformation $\tau : \Theta\rightarrow \mathcal{Y}^{\mathcal{U}}$
    \end{itemize}
    \State \textbf{Risk minimization:} using any preferred approach, define $\theta_n^\star$ via $R_n^\star(\theta):= \frac{1}{n}\sum_{i=1}^n\ell(\theta,Y_i^\star)$
    \State \Return transport map $\phi_n^\star:=\tau(\theta_n^\star)$
\end{algorithmic}
\end{algorithm}

\begin{algorithm}[tb]
    \algcaption[DoubleGen]{Doubly robust counterfactual generative modeling\\
    {\color{CB5blue}\makebox[0pt][l]{$\square$}\raisebox{.15ex}{\hspace{0.1em}$\checkmark$}\, \textit{Can be used in practice}}\textit{ --- only requires draws from the factual distribution $P$}}
   \label{alg:double}
   \linespread{1.05}\selectfont
\begin{algorithmic}[1]\vspace{-.075em}
    \Require $\bullet$ data $Z_1,Z_2,\ldots,Z_n\overset{\textnormal{iid}}{\sim} P$, partitioned into multisets $\mathcal{Z}_n^1,\mathcal{Z}_n^2$ of sizes $\lfloor n/2\rfloor$ and $\lceil n/2\rceil$
    \SilentRequire $\bullet$ choice of generative modeling framework, as in \nameref*{alg:oracle}
    \State \textbf{Nuisance estimation:} for $j\in \{1,2\}$, use observations in $\mathcal{Z}_n^j$ to obtain estimates $\psi_n^j$ and $\alpha_n^j$ of
    \begin{enumerate}[label=(\roman*)]
        \item \textbf{Outcome model:} a conditional transport map with $\psi_P(\,\cdot\mymid x)_{\sharp}\Pi= P_{Y|A=a^\star,X=x}$ $P_X$-a.s.
        \item \textbf{Inverse propensity:} $\alpha_P(x):= 1/P(A=a^\star\mymid X=x)$
    \end{enumerate}
    \State\label{ln:riskMin} \textbf{Risk minimization:} using any preferred approach, define $\theta_n$ based on
    \[
        R_n(\theta):= \frac{1}{n}\medop\sum_{j=1}^2\medop\sum_{z\in\mathcal{Z}_n^{3-j}}\medint\int \big[1(a=a^\star)\alpha_n^j(x)\big\{\ell(\theta,y)-\ell(\theta,\psi_n^j(u|x))\big\} + \ell(\theta,\psi_n^j(u|x))\big] \Pi(du)
    \]
    \Statex $\triangleright$ \textit{unbiased gradients can be obtained in the usual way, via random sampling of $(j,z,u)$}
    \State \Return $\phi_n:=\tau(\theta_n)$
\end{algorithmic}
\end{algorithm}

\Cref{alg:double} displays our proposed approach, \nameref*{alg:double}. In it, the oracle risk $R_n^\star$ is replaced by a risk $R_n$ that can be computed using data drawn from $P$. Evaluating this risk requires estimating two auxiliary nuisances: an inverse propensity $1/P(A=a^\star\mymid X=\cdot\,)$ and a conditional transport map $\psi_P$ that returns draws of $Y\mymid A=1,X$. These nuisances can be constructed using any preferred approach, such as Riesz regression for the inverse propensity \citep{chernozhukov2021automatic} or a diffusion model for the outcome model \citep{batzolis2021conditional}. The nuisance estimates are used to construct an augmented inverse probability weighted estimator of the risk \citep{robins1994estimation}. The resulting estimator, $R_n$, is doubly robust \citep{scharfstein1999adjusting}, in the sense that, for each $\theta\in\Theta$, $R_n(\theta)$ is a consistent estimator of the counterfactual population risk $E_{\mathbb{P}}[\ell(\theta,Y^\star)]$ if either nuisance is estimated consistently \citep{van2003unified,bang2005doubly}. 
Such double robustness has previously been leveraged when deriving causal machine learning guarantees \citep[e.g.,][]{rotnitzky2006doubly,laan2006cross,rubin2007doubly,luedtke2017sequential,rotnitzky2017multiply,nie2021quasi,kennedy2023towards,morzywolek2023general,foster2023orthogonal,van2024combining}.

\section{Examples of frameworks expressible as in \nameref*{alg:oracle}}\label{sec:examples}

\begin{example}[label=ex:flow,name=Flow matching]
    Flow matching estimates a transport map $\phi_{\mathbb{P}}$ that solves an ordinary differential equation \citep{lipman2022flow}. Taking $\mathcal{U}=\mathcal{Y}=\mathbb{R}^d$, a  simple version of this approach takes $\phi_{\mathbb{P}}(u):=y_1$, with the value of $y_1$ implied by the differential equation on $[0,1]$ satisfying initial condition $y_0=u$ with $y_t':=dy_t/dt$ satisfying
    \begin{align*}
        y_t'&= E\left[Y^\star-U\mymid (1-t)U+tY^\star=y_t\right],
    \end{align*}
    where $(U,Y^\star)\sim \Pi\times \mathbb{P}$ with $\Pi=N(0_d,I_d)$ \citep{liu2022flow}. This can be estimated as in \nameref*{alg:oracle} by letting $\Theta$ denote an appropriately restricted (e.g., bounded, smooth) set of $\mathcal{Y}\times [0,1]\rightarrow\mathcal{Y}$ functions, taking $\ell(\theta,y)=\int_0^1 E_\Pi[\|y-U-\theta([1-t]U+ty,t)\|^2] dt$, and letting $\tau(\theta)(u)$ denote the value of $y_1$ implied by the differential equation with initial condition $y_0=u$ and $dy_t/dt=\theta(y_t,t)$.
\end{example}

\begin{example}[label=ex:diffusion,name=Diffusion model] 
Diffusion models seek to reverse an iterative process that gradually converts data from $\mathbb{P}$ into Gaussian noise \citep{song2020score}. The noising process is described via a stochastic differential equation (SDE) on $\mathbb{R}^d\supseteq \mathcal{Y}$. A common framing takes $Y_0\sim \mathbb{P}$ and then, for $W$ a $d$-dimensional Wiener process on $[0,\overline{t}]$, evolves as $dY_t= -\beta_t Y_t\, dt + \sqrt{2\beta_t}\,dW_t$, with $\overline{t}<\infty$ a truncation time at which the noising process is terminated and $t\mapsto \beta_t$ a smooth map from $\mathbb{R}$ into $[\underline{\beta},\overline{\beta}]\subset (0,\infty)$. A strong solution exists provided $E_{\mathbb{P}}[\|Y_0\|^2]<\infty$ \citep[][Thm.~5.2.1]{oksendal2013stochastic}.

An approximate transport map $\phi_{\mathbb{P}}$ from $\Law(Y_{\overline{t}})\times \Law(\widetilde{W})$ to $\mathbb{P}$ can be obtained by reversing this SDE \citep{anderson1982reverse}, where $\widetilde{W}$ denotes a $d$-dimensional Wiener process with time flowing backward from $\overline{t}$ to $\underline{t}$. The truncation time $\underline{t}$ can be chosen to be slightly larger than $0$ to avoid instability \citep{oko2023diffusion}. To generate a sample using $\phi_{\mathbb{P}}$, first independently draw $\widetilde{Y}_{\overline{t}}\sim \Law(Y_{\overline{t}})$ and $\widetilde{W}$, and then let $\phi_{\mathbb{P}}(\widetilde{Y}_{\overline{t}},\widetilde{W})$ be a strong solution $\widetilde{Y}_{\underline{t}}$ to the reverse-time SDE
\begin{align}
    d\widetilde{Y}_t&= -\beta_t\big[\widetilde{Y}_t + 2\theta_{\mathbb{P}}(\tilde{Y}_t,t)\big]dt + \sqrt{2\beta_t}\, d\widetilde{W}_t, \label{eq:revTimeSDE}
\end{align}
where $\theta_{\mathbb{P}}(y,t):= \nabla_{\!y} \log \mathbbm{p}_t(y)$ is the score of $\mathbbm{p}_t$, the density of $Y_t$.
This score rewrites as $\theta_{\mathbb{P}}(y,t)=\{\mu_t E\left[Y_0\mymid Y_t=y\right]-y\}/\sigma_t^2$ with $\mu_t:=\exp(-\int_0^t \beta_{v}\, dv)$ and $\sigma_t^2:=1-\exp(-2\int_0^t \beta_{v}\, dv)$, and belongs to the set $\Theta$ of maps from $\mathbb{R}^d\times [\underline{t},\overline{t}]$ to $\mathbb{R}^d$.

The approximate transport map $\phi_{\mathbb{P}}$ cannot be used to generate samples in practice, because using it relies on knowing two $\mathbb{P}$-dependent quantities: $\Law(Y_{\overline{t}})$ and $\theta_{\mathbb{P}}$. Diffusion modeling replaces both by approximations. For the first, it uses that $\Law(Y_{\overline{t}})$ is approximately Gaussian noise, $N(0_d,I_d)$, provided the hyperparameter $\beta$ is chosen so $(\mu_{\overline{t}},\sigma_{\overline{t}}^2)\approx (0,1)$; this holds since $Y_t\mymid Y_0\sim N(\mu_t  Y_0,\sigma_t^2 I_d)$. Second, it uses that $\theta_{\mathbb{P}}$ can be estimated using the denoising score matching loss \citep{vincent2011connection}
\[
    \ell(\theta,y)={\medint\int_{\underline{t}}^{\overline{t}}} \,E\big[\big\|(\mu_t Y_0-Y_t)/\sigma_t^2-\theta(Y_t,t)\big\|^2\,\big|\, Y_0=y\big] dt.
\]
The final map $\tau(\theta)$ is defined a.s. over $(\widetilde{Y}_{\overline{t}},\widetilde{W})\sim N(0_d,I_d)\times \Law(\widetilde{W})=:\Pi$ so that $\tau(\theta)(\widetilde{Y}_{\overline{t}},\widetilde{W})$ is the value of $\widetilde{Y}_{\underline{t}}$ under a strong solution of an SDE that evolves in as \eqref{eq:revTimeSDE}, but with $\theta_{\mathbb{P}}$ replaced by $\theta$.
\end{example}

\begin{example}[label=ex:autoreg,name=Unsupervised pretraining of autoregressive language model] 
Autoregressive language models generate sequences $Y(1),Y(2),\ldots,Y(d)$ whose elements belong to a token dictionary $[k]$ \citep{graves2013generating,radford2019language,touvron2023llama}. Token $k$ marks the end of content and token $1$ is used for padding thereafter, meaning $Y(j)=k$ implies $Y(i)=1$ for all $i>j$; removing these tokens in post-processing allows generation of variable-length outputs.

Samples from a token-sequence distribution $\mathbb{P}$ can be obtained via ancestral sampling, which recursively draws the next token from a categorical distribution conditional on previous ones \citep{bishop2006pattern}. 
Expressing this procedure via inverse transform sampling yields a transport map $\phi_{\mathbb{P}}(\cdot)=(\phi_{\mathbb{P},j}(\cdot))_{j=1}^d$ from $\Pi=\mathrm{Unif}[0,1]^{d}$ to $\mathbb{P}$. This map is defined recursively through the conditional quantile function of $Y^\star(j)$ as $\phi_{\mathbb{P},j}(u)= Q_{\mathbb{P},j}\left(u_j\mymid Y^\star(i)=\phi_{\mathbb{P},i}(u) : i<j\right)$, $j=1,2,\ldots,d$. 

The transport map $\phi_{\mathbb{P}}$ can be estimated as in \nameref*{alg:oracle}. To do this, $\Theta$ is taken to be the set of functions mapping from $[k]^{d-1}$ to the ($k-1$)-simplex. The cross-entropy loss
\begin{align*}
    \ell(\theta,y)&= -{\medop\sum_{j : y(j)\not=1}} \log \theta_{y(j)}(1,1,\ldots,1,y(1),y(2),\ldots,y(j-1))
\end{align*}
is used \citep{graves2013generating}, with the input to each $\theta_{y(j)}$ left-padded with $1$s to make them ($d-1$)-dimensional. The risk minimization step of \nameref*{alg:oracle} may, for example, train a transformer model that masks the padded tokens \citep{vaswani2017attention}. The inverse transform sampling map $\tau(\theta)(\cdot)=(\tau_j(\theta)(\cdot))_{j=1}^d$ is defined recursively as $\tau_j(\theta)(u)=Q_{\theta,j}(u_j\mymid \tau_i(\theta)(u) : i<j)$, with the conditional quantile function $Q_{\theta,j}$ defined as the left-continuous generalized inverse of the distribution function
$$F_{\theta,j}(\,\cdot\,\mymid \tau_i(\theta)(u) : i<j)= {\medop\sum_{m=1}^{k}} 1\{m\le\cdot\,\}\,\theta_{m}\big(1,1,\ldots,1,\tau_1(\theta)(u),\tau_2(\theta)(u),\ldots,\tau_{j-1}(\theta)(u)\big).$$
The transport map $\phi_{\mathbb{P}}$ defined earlier equals $\tau(\theta_{\mathbb{P}})$ with $\theta_{\mathbb{P}}\in \argmin_{\theta\in \Theta}E_{\mathbb{P}}[\ell(\theta,Y^\star)]$.
\end{example}

\begin{table}[tb]\centering
    \caption{Examples of generative modeling paradigms that can be expressed as in \nameref*{alg:oracle}.}
    \begin{tabular}{@{\hspace{0.3em}}l l l l l@{\hspace{0.3em}}}
        & Outcome type ($Y^\star$) & Hypothesis ($\theta_{\mathbb{P}}$) & Loss ($\ell$) & Sampler ($\tau$) \\\midrule
        Flow matching & $\mathbb{R}^d$ (e.g., image) & vector field  & velocity matching & ODE solver \\
        Diffusion model & $\mathbb{R}^d$ (e.g., image) & score & denoising score matching & SDE solver \\
       Autoreg. model & $[k]^d$ (token seq.) & next-token prob.  & cross-entropy & ancestral
    \end{tabular}
\end{table}

\section{Theoretical guarantees}\label{sec:guarantees}

\subsection{Overview}\label{sec:theoryOverview}

Our guarantees for the transport map $\phi_n$ are measured in terms of a divergence $D$, defined as a nonnegative function of two distributions satisfying $D(\mathbb{P}_1,\mathbb{P}_2)=0$ if and only if $\mathbb{P}_1=\mathbb{P}_2$. Giving these guarantees requires relating the divergence of a pushforward $\tau(\theta)_{\sharp}\Pi$ from $\mathbb{P}$ to the generalization error $\mathcal{G}_{\mathbb{P}}(\theta):=\inf_{\theta^\star\in\Theta}\int [\ell(\theta,y)-\ell(\theta^\star,y)]\,\mathbb{P}(dy)$ and ensuring that the output of \nameref*{alg:double} makes this generalization error small with high probability.

\begin{condenum}[label={\bf C\arabic*)},ref=\arabic*,topsep=.5\topsep]
    \item\label{cond:divergence} \textit{Divergence dominated by generalization error:} There exist $b,\Cdivergence,\epsilon>0$ such that
    \begin{align}
        D(\mathbb{P},\tau(\theta)_{\sharp}\Pi)\le \Cdivergence\mathcal{G}_{\mathbb{P}}(\theta)^b + \epsilon\hspace{1em} \textnormal{ for all $\theta\in\Theta$}. \label{eq:regBound}
    \end{align}
    \item\label{cond:genError} \textit{Generalization bound:} For $s,r>0$, $\mathcal{G}_{\mathbb{P}}(\theta_n)\le r$ with probability (w.p.) at least $1-e^{-s}$.
\end{condenum}

\begin{proposition}[Divergence bound]\label{prop:divGuarantee}
    Under \cref{cond:divergence,cond:genError}, $D(\mathbb{P},\phi_{n\sharp}\Pi)\le \Cdivergence r^b+\epsilon$ w.p. at least $1-e^{-s}$.
\end{proposition}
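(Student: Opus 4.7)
The plan is to directly chain the two assumed conditions; no additional machinery is needed beyond specializing C1 to the data-dependent hypothesis $\theta_n$ returned by \nameref*{alg:double} and then restricting to the high-probability event supplied by C2. First, I would observe that C1 is a \emph{deterministic, for-all-$\theta$} statement, so it holds in particular at $\theta=\theta_n$. Since $\phi_n:=\tau(\theta_n)$ by definition, this specialization gives the pointwise bound
\[
D(\mathbb{P},\phi_{n\sharp}\Pi)\;=\;D(\mathbb{P},\tau(\theta_n)_{\sharp}\Pi)\;\le\;\Cdivergence\,\mathcal{G}_{\mathbb{P}}(\theta_n)^b + \epsilon,
\]
which is valid on the whole sample space, with no probability qualifier yet.

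Next, I would invoke C2 to introduce the probability. Let $\mathcal{E}:=\{\mathcal{G}_{\mathbb{P}}(\theta_n)\le r\}$, which by C2 has $\prob(\mathcal{E})\ge 1-e^{-s}$. Because $\mathcal{G}_{\mathbb{P}}$ is a (nonnegative) excess risk over $\Theta$ and $b>0$, the map $t\mapsto t^b$ is monotone nondecreasing on $[0,\infty)$, so on $\mathcal{E}$ we have $\mathcal{G}_{\mathbb{P}}(\theta_n)^b\le r^b$. Substituting into the deterministic bound from the previous step yields $D(\mathbb{P},\phi_{n\sharp}\Pi)\le \Cdivergence\, r^b+\epsilon$ on $\mathcal{E}$, which is exactly the claimed high-probability bound.

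I do not anticipate any real obstacle: the proposition is essentially a one-line assembly lemma, packaging the deterministic approximation-theoretic content (C1, which ties the divergence to the loss-based excess risk) together with the statistical content (C2, where all finite-sample concentration lives). The only minor point worth flagging is monotonicity of $t\mapsto t^b$ together with nonnegativity of $\mathcal{G}_{\mathbb{P}}$, which is what lets us replace $\mathcal{G}_{\mathbb{P}}(\theta_n)^b$ by $r^b$; this is immediate from the definition of the excess risk.
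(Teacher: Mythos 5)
Your proof is correct and matches the paper's, which simply states "The proof is immediate." Specializing \cref{cond:divergence} to $\theta=\theta_n$, invoking \cref{cond:genError}, and using monotonicity of $t\mapsto t^b$ is exactly the intended one-line argument.
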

The proof is immediate. 
This result is important because it yields exactly the sort of guarantee desired for a generative modeling algorithm, provided its conditions are satisfied. The first has already been established for all our examples (\Cref{app:divergence}). 
We provide a means to establish the second in \Cref{sec:excRisk}.

The minimax optimality of $\phi_n$ follows under conditions, as we show in \Cref{sec:minimax}. The main insight is simple: because \nameref*{alg:double} must learn from factual data alone while \nameref*{alg:oracle} has access to counterfactuals, the oracle problem is no harder. We formalize this in \Cref{thm:minimax} by showing that minimax lower bounds for \nameref*{alg:oracle} also apply to \nameref*{alg:double}. Hence, optimality can be assessed by importing existing lower bounds from the non-counterfactual generative modeling literature.

\subsection{Generalization bounds for \nameref*{alg:double}}\label{sec:excRisk}

We now give a generalization bound for \nameref*{alg:double} when implemented via an empirical risk minimizer over $\underline{\Theta}\subseteq\Theta$. While we focus on empirical risk minimizers here, generalization bounds can also be derived for other frameworks---see \Cref{app:beyondERM}. Our guarantee will rely on several conditions.
\begin{condenum}[resume*]
    \item\label{cond:RMexists} \textit{Existence of risk minimizer:} there exists $\theta_{\mathbb{P}}\in \argmin_{\theta\in\Theta}\int \ell(\theta,y)\, \mathbb{P}(dy)$.
\end{condenum}
The remaining conditions are assumed to hold for a common choice of risk minimizer $\theta_{\mathbb{P}}$. We also define the centered losses $\ell_{\mathbb{P}}(\theta)(y):=\ell(\theta,y)-\ell(\theta_{\mathbb{P}},y)$ and loss class $\ell_{\mathbb{P}}(\underline{\Theta}):=\{\ell_{\mathbb{P}}(\theta) : \theta\in\underline{\Theta}\}$. 
\begin{condenum}[resume*]
    \item\label{cond:bddLoss} \textit{Bounded loss:} there exists $\Closs<\infty$ such that $\sup_{\theta\in\underline{\Theta}}\|\ell_{\mathbb{P}}(\theta)\|_{L^\infty(\mathbb{P})}\le \Closs$.
    \item\label{cond:curvature} \textit{Curvature of risk:} there exists $\Ccurv<\infty$ such that, for all $\theta\in \underline{\Theta}$, $\|\ell_{\mathbb{P}}(\theta)\|_{L^2(\mathbb{P})}^2\le \Ccurv \mathcal{G}_{\mathbb{P}}(\theta)$.
    \item\label{cond:entropyStronger} \textit{Uniform entropy integral bound (see \Cref{app:covNumReview}):} $J(\delta,\ell_{\mathbb{P}}(\underline{\Theta}))<\infty$ for some $\delta>0$.
\end{condenum}
The above conditions yield a generalization bound for \nameref*{alg:oracle} (\Cref{app:oracleGenBenchmark}). Similar conditions are commonly used to derive rate guarantees in statistical learning \citep{bartlett2006local}.

The remaining conditions account for \nameref*{alg:double}'s reliance on nuisances. The first imposes smoothness of the loss. For collections of conditional transport maps $\Psi:=\mathcal{Y}^{\mathcal{U}\times\mathcal{X}}$ and $\Psi_P:= \{\psi : \psi(\,\cdot\mymid x)_{\sharp}\Pi= P_{Y|A=a^\star,X=x}\ P_X\textnormal{-a.s.}\}$, this condition involves a function $d_\Psi(\,\cdot\,,\Psi_P) : \Psi\rightarrow [0,\infty)$. Our bound will be tightest when $d_\Psi(\psi_n^j,\Psi_P)$ is small. The measure $d_\Psi$ need not arise from a metric, but it must satisfy $d_\Psi(\psi_P,\Psi_P)=0$ for all $\psi_P\in\Psi_P$---see \Cref{lem:mixedLipSuff} for one possible choice of $d_\Psi$.
\begin{condenum}[resume*]
    \item\label{cond:mixedLipschitzWeak} \textit{Mixed-Lipschitz loss:} there exists $\Cmixedweak<\infty$ such that, for all $\theta\in\underline{\Theta}$, $\psi\in\Psi$, and $\psi_P\in\Psi_P$,
    \begin{align*}
       &\medint\int\left\{\medint\int \left[ \ell_{\mathbb{P}}(\theta)(\psi(u|x)) - \ell_{\mathbb{P}}(\theta)(\psi_P(u|x))\right]\,\Pi(du)\right\}^2 P_X(dx)\le \Cmixedweak \mathcal{G}_{\mathbb{P}}(\theta)\,d_{\Psi}^2(\psi,\Psi_P).
    \end{align*}
    \item\label{cond:strongPositivity} \textit{Strong positivity:} there exists $\Cpos>0$ and a version of $\alpha_P$ such that $\alpha_P\in [1,\Cpos]^{\mathcal{X}}$. Moreover, the estimates of $\alpha_P$ respect this bound, in that $\alpha_n^j\in [1,\Cpos]^{\mathcal{X}}$ for all $j\in [2]$.
\end{condenum}
\begin{theorem}[\nameref*{alg:double} generalization bound, informal statement]\label{thm:excRiskOSL}
    Suppose there exists $\theta_n\in\argmin_{\theta\in\underline{\Theta}} R_n(\theta)$ and \cref{cond:strongPositivity,cond:RMexists,cond:bddLoss,cond:mixedLipschitzWeak,cond:curvature,cond:entropyStronger}. If $s>0$ and $\delta_n$ satisfies $J(\delta_n,\ell_{\mathbb{P}}(\underline{\Theta}))\le n^{1/2} \delta_n^2$, then
    \begin{align}
        \mathcal{G}_{\mathbb{P}}(\theta_n)&\lesssim \inf_{\theta\in\underline{\Theta}}\mathcal{G}_{\mathbb{P}}(\theta) + \delta_n^2 + s/n + {\color{CBteal}\bm{\max_{j\in [2]}\|\alpha_n^j-\alpha_P\|_{L^2(P_X)}^2d_\Psi^2(\psi_n^j,\Psi_P)}}   \label{eq:excRiskBdOSLinformal}   
    \end{align}
    w.p. at least $1-e^{-s}$. The final term is \textbf{\color{CBteal}doubly robust}, vanishing if $\alpha_n^j=\alpha_P$ or $\psi_n^j\in\Psi_P$ for $j\in [2]$.
\end{theorem}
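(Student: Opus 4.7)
The plan is to follow the classical empirical risk minimization roadmap with localization, using a bias--variance split that isolates the doubly robust remainder. Write $\tilde R_n(\theta) := R_n(\theta) - R_n(\theta_{\mathbb{P}})$ for the centered empirical risk, and, for each fold $j \in \{1,2\}$, introduce the nuisance-modified centered loss
\[
    g^j(\theta,z,u) := 1(a=a^\star)\alpha_n^j(x)\big[\ell_{\mathbb{P}}(\theta)(y) - \ell_{\mathbb{P}}(\theta)(\psi_n^j(u|x))\big] + \ell_{\mathbb{P}}(\theta)(\psi_n^j(u|x)),
\]
so that $\tilde R_n(\theta) = \tfrac{1}{n}\sum_{j=1}^2\sum_{z\in\mathcal{Z}_n^{3-j}}\int g^j(\theta,z,u)\,\Pi(du)$. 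Fix an approximate minimizer $\theta^\star$ of $\mathcal{G}_{\mathbb{P}}$ over $\underline{\Theta}$. The ERM optimality $R_n(\theta_n) \le R_n(\theta^\star)$ gives
\[
    \mathcal{G}_{\mathbb{P}}(\theta_n) - \mathcal{G}_{\mathbb{P}}(\theta^\star) \le \big[\mathcal{G}_{\mathbb{P}}(\theta_n) - \tilde R_n(\theta_n)\big] + \big[\tilde R_n(\theta^\star) - \mathcal{G}_{\mathbb{P}}(\theta^\star)\big],
\]
and I would split each bracket into a bias---the conditional expectation given the nuisance estimates $(\alpha_n^j,\psi_n^j)_{j=1,2}$---plus an empirical-process remainder.

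For the bias, using the identification \eqref{eq:ident} and $\alpha_P = 1/P(A=a^\star\mymid X)$, a direct calculation yields, for each $\theta \in \underline{\Theta}$,
\[
    E\big[\tilde R_n(\theta)\,\big|\,(\alpha_n^j,\psi_n^j)_{j=1,2}\big] - \mathcal{G}_{\mathbb{P}}(\theta) = -\tfrac{1}{2}\sum_{j=1}^2 \int \frac{\alpha_P(x)-\alpha_n^j(x)}{\alpha_P(x)}\big\{m_c(\theta,x)-m_{n,c}^j(\theta,x)\big\}\,P_X(dx),
\]
where $m_c(\theta,x) := E_\Pi[\ell_{\mathbb{P}}(\theta)(\psi_P(U|x))]$ for any $\psi_P \in \Psi_P$ and $m_{n,c}^j(\theta,x)$ is defined analogously with $\psi_n^j$. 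Cauchy--Schwarz, \cref{cond:strongPositivity} (so $1/\alpha_P \le 1$), and \cref{cond:mixedLipschitzWeak} (which bounds $\|m_c(\theta,\cdot)-m_{n,c}^j(\theta,\cdot)\|_{L^2(P_X)}$ by $\sqrt{\Cmixedweak\,\mathcal{G}_{\mathbb{P}}(\theta)}\,d_\Psi(\psi_n^j,\Psi_P)$) bound this in absolute value by $\sqrt{\Cmixedweak\,\mathcal{G}_{\mathbb{P}}(\theta)}\cdot\max_j\|\alpha_n^j-\alpha_P\|_{L^2(P_X)}\,d_\Psi(\psi_n^j,\Psi_P)$. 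A standard AM--GM split then trades this for a $\tfrac{1}{2}\mathcal{G}_{\mathbb{P}}(\theta)$ piece, which gets absorbed on the left at the end, plus a constant multiple of the doubly robust remainder appearing on the right-hand side of \eqref{eq:excRiskBdOSLinformal}.

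For the empirical-process fluctuations, I would condition on the nuisance estimates so that within each fold the summands are i.i.d. \cref{cond:bddLoss,cond:strongPositivity} supply a uniform $L^\infty$ envelope of order $\Cpos\Closs$, and \cref{cond:curvature,cond:mixedLipschitzWeak} supply a localized variance bound of the form $\Var(\int g^j(\theta,Z,u)\,\Pi(du)\mid(\alpha_n^j,\psi_n^j)) \lesssim \mathcal{G}_{\mathbb{P}}(\theta) + (\text{nuisance remainder})^2$. Since $g^j$ is affine in $\ell_{\mathbb{P}}(\theta)$ with coefficients bounded by $\Cpos$, a covering-number argument shows $J(\delta,\{g^j(\theta,\cdot,\cdot):\theta\in\underline{\Theta}\}) \lesssim J(\delta,\ell_{\mathbb{P}}(\underline{\Theta}))$. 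Talagrand's inequality combined with a peeling/fixed-point argument in the style of \citep{bartlett2006local}, with critical radius $\delta_n$ satisfying $J(\delta_n,\ell_{\mathbb{P}}(\underline{\Theta})) \le n^{1/2}\delta_n^2$, yields with probability at least $1-e^{-s}$ a uniform bound of order $\delta_n\sqrt{\mathcal{G}_{\mathbb{P}}(\theta)} + \delta_n^2 + s/n$ over $\underline{\Theta}$; the single-point deviation $\tilde R_n(\theta^\star) - \mathcal{G}_{\mathbb{P}}(\theta^\star)$ is controlled by Bernstein's inequality and absorbed into the same rate. Substituting these bounds into the ERM decomposition and rearranging to collect $\mathcal{G}_{\mathbb{P}}(\theta_n)$ on the left then gives \eqref{eq:excRiskBdOSLinformal}.

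The main obstacle is controlling this localized empirical process when the summands depend on estimated nuisances. Cross-fitting defuses most of the issue by making the nuisances independent of the averaged data within each fold, but one still needs to (i) perform the covering-number comparison so that the entropy assumption can remain on $\ell_{\mathbb{P}}(\underline{\Theta})$ alone, rather than on a larger class indexed also by the nuisances, and (ii) decompose the conditional variance cleanly into a curvature term proportional to $\mathcal{G}_{\mathbb{P}}(\theta)$ plus the doubly robust remainder, so that localization actually produces the stated $\delta_n^2 + s/n$ rate rather than one degraded by nuisance error.
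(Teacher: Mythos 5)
Your proposal follows essentially the same route as the paper's proof of the formal version (Theorem~\ref{thm:excRiskOSLFormal}): an ERM basic inequality against a near-minimizer, a bias term computed conditionally on the cross-fitted nuisances and bounded via Cauchy--Schwarz, \cref{cond:strongPositivity}, \cref{cond:mixedLipschitzWeak}, and an AM--GM absorption (this is the content of Lemma~\ref{lem:lossDR}); and an empirical-process term controlled by conditioning on the fold's nuisances, bounding the envelope and localized variance of the AIPW loss in terms of $\mathcal{G}_{\mathbb{P}}(\theta)$ (Lemmas~\ref{lem:envelope} and~\ref{lem:lossL2bd}), comparing the uniform entropy of the AIPW loss class to that of $\ell_{\mathbb{P}}(\underline{\Theta})$ alone (Lemmas~\ref{lem:prelimEntropy}--\ref{lem:entropy}), and finishing with a Talagrand-based localization at critical radius $\delta_n$. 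The two issues you flag as the main obstacle---keeping the entropy condition on $\ell_{\mathbb{P}}(\underline{\Theta})$ and cleanly splitting the conditional variance---are exactly what the paper resolves in those covering/variance lemmas, so your outline is a correct blueprint for the same argument.
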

Above, `$\lesssim$' hides a multiplicative constant that does not depend on $n$. The bound reveals a tradeoff: increasing the size of $\underline{\Theta}$ will decrease the approximation error, $\inf_{\underline{\theta}\in\underline{\Theta}}\mathcal{G}_{\mathbb{P}}(\underline{\theta})$, but increase the complexity term. This informal statement captures these main dependencies and is valid when $\max_j d_\Psi^2(\psi_n^j,\Psi_P)$ is almost surely bounded. See \Cref{thm:excRiskOSLFormal} in \Cref{app:formalTheorem} for the formal statement, which gives explicit constants and also holds when $\max_j d_\Psi^2(\psi_n^j,\Psi_P)$ is unbounded.

Appendix \ref{app:genBoundDiscussion} interprets and discusses the above generalization bound: the complexity term $\delta_n^2$, conditions under which the doubly robust term will be negligible, a localized version of \Cref{thm:excRiskOSL} that can be used to establish oracle-optimal rates of convergence, and connections to existing approaches.

\subsection{Minimax optimality}\label{sec:minimax}

\myparagraph{Upper bound on worst-case divergence of an empirical risk minimizer.} 
When paired with \Cref{prop:divGuarantee}, \Cref{thm:excRiskOSL} provides a means to provide divergence guarantees for the transport map $\phi_n=\tau(\theta_n)$. In particular, if \cref{cond:divergence} and the conditions of \Cref{thm:excRiskOSL} hold, the doubly robust term is no more than $\delta_n^2$ with sufficient probability, and $\epsilon\le \delta_n^2$, then \Cref{prop:divGuarantee} shows that $D(\mathbb{P},\phi_{n\sharp}\Pi)\lesssim [\inf_{\theta\in\underline{\Theta}}\mathcal{G}_{\mathbb{P}}(\theta) + \delta_n^2 + s/n]^b$ 
w.p. at least $1-e^{-s}$, where `$\lesssim$' denotes inequality up to constants that only depend on the $C_j$ constants indexing the conditions for \Cref{prop:divGuarantee,thm:excRiskOSL}. If $\delta_n=\Omega(\log(n)/\sqrt{n})$---as is typical for nonparametric hypothesis classes $\underline{\Theta}$---then Thm.~1 in \cite{mey2020note} yields the in-expectation bound $E_{P^n}[D(\mathbb{P},\phi_{n\sharp}\Pi)]\lesssim [\inf_{\theta\in\underline{\Theta}}\mathcal{G}_{\mathbb{P}}(\theta) + \delta_n^2]^b$. 
This upper bound only depends on $\mathbb{P}$ and $P$ through the constants indexing the conditions for \Cref{prop:divGuarantee,thm:excRiskOSL}. Hence, it holds uniformly over all $(\mathbb{P},P)$ in any collection $\mathcal{M}$ over which these constants are uniformly bounded and $\mathbb{P}$ is identified through \eqref{eq:ident}. This in turn provides a way to upper bound \nameref*{alg:double}'s worst-case expected divergence over $\mathcal{M}$.

Interesting models $\mathcal{M}$ arise by defining a model $\mathcal{P}^\star$ for the counterfactual distribution $\mathbb{P}$ and then, for each $\mathbb{P}$, letting $\mathcal{P}(\mathbb{P})$ denote a collection of $P$ satisfying \eqref{eq:ident}. 
Both $\mathcal{P}^\star$ and $\mathcal{P}(\mathbb{P})$ may be subject to local or global smoothness constraints so that the relevant constants from \cref{cond:divergence,cond:genError,cond:strongPositivity,cond:RMexists,cond:bddLoss,cond:mixedLipschitzWeak,cond:curvature,cond:entropyStronger} are uniformly bounded over $(\mathbb{P},P)$ in $\mathcal{M}:=\{(\mathbb{P},P) : \mathbb{P}\in\mathcal{P}^\star,P\in\mathcal{P}(\mathbb{P})\}$. We then have the following bound on the worst-case performance of the transport map $\phi_n$ in terms of expected divergence:
\begin{align}
    \sup_{\mathbb{P}\in\mathcal{P}^\star}\sup_{P\in\mathcal{P}(\mathbb{P})} E_{P^n}[D(\mathbb{P},\phi_{n\sharp}\Pi)]\lesssim \big[\inf_{\theta\in\underline{\Theta}}\mathcal{G}_{\mathbb{P}}(\theta) + \delta_n^2\big]^b. \label{eq:maximalRisk}
\end{align}

\myparagraph{Minimax lower bound for any generative modeling algorithm.} We now show that any minimax lower bound for \nameref*{alg:oracle} is also a lower bound for \nameref*{alg:double}. This provides a simple path for establishing the minimax optimality of $\phi_n$: show that an existing lower bound from the non-causal generative modeling literature matches the rate of decay of \eqref{eq:maximalRisk} in $n$.

In the following result,  $\mathcal{T}^\star$ and $\mathcal{T}$ denote unrestricted sets of oracle and non-oracle generative modeling procedures returning transport maps in $\mathcal{Y}^{\mathcal{U}}$. 
Procedures in $\mathcal{T}^\star$ take as input counterfactuals $Y_{[n]}^\star:=\{Y_i^\star\}_{i\in[n]}$, while those in $\mathcal{T}$ take factual data $Z_{[n]}:=\{Z_i\}_{i\in [n]}$. They also take in exogeneous noise $V\sim \nu:=\mathrm{Unif}[0,1]$; this allows, for example, the procedures to train a neural network using stochastic gradient descent or use a stopping criterion based on a training-validation split. 
\begin{theorem}\label{thm:minimax}
    Let $\mathbb{E}$ and $E$ being expectations under sampling from $\mathbb{P}^n\times\nu$ and $P^n\times \nu$. It holds that
    \begin{align*}
        {\color{CB5purple}\bm{\inf_{T^\star\in\mathcal{T}^\star}\sup_{\mathbb{P}\in\mathcal{P}^\star}\mathbb{E}\Big[D\Big(\mathbb{P},T^\star(Y_{[n]}^\star,V)_{\sharp}\Pi\Big)\Big]}}\le {\color{CB5blue}\bm{\inf_{T\in\mathcal{T}}\sup_{\mathbb{P}\in\mathcal{P}^\star}\sup_{P\in\mathcal{P}(\mathbb{P})}E\Big[D\Big(\mathbb{P},T(Z_{[n]},V)_{\sharp}\Pi\Big)\Big]}} ,
    \end{align*}
    Hence, any~lower bound on the \textbf{\color{CB5purple}oracle minimax risk} also lower bounds the \textbf{\color{CB5blue}factual minimax risk}.
\end{theorem}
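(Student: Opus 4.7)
The plan is to exhibit, for each factual procedure $T\in\mathcal{T}$, an oracle procedure $T^\star\in\mathcal{T}^\star$ whose worst-case expected divergence over $\mathcal{P}^\star$ is at most $T$'s worst-case expected divergence over $\{(\mathbb{P},P):\mathbb{P}\in\mathcal{P}^\star,\,P\in\mathcal{P}(\mathbb{P})\}$. Taking infima on both sides will then deliver the inequality. The intuition is that an oracle holding counterfactual data $Y_{[n]}^\star\sim\mathbb{P}^n$ can always fabricate a valid factual dataset on its own by appending arbitrary covariates and declaring the treatment to equal $a^\star$; hence the oracle problem cannot be strictly harder than the factual one.

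Concretely, for each $\mathbb{P}\in\mathcal{P}^\star$ I would produce a \emph{witness} $P_\mathbb{P}\in\mathcal{P}(\mathbb{P})$ under which the treatment $A$ equals $a^\star$ almost surely, $X$ has an arbitrary fixed distribution $\mu_X$, and $Y\mymid X\sim\mathbb{P}$. Consistency ($Y=Y^\star$ since $A=a^\star$), no-unmeasured confounders, and positivity all hold trivially under $P_\mathbb{P}$---the last two because $A$ is constant---and the identification relation \eqref{eq:ident} collapses to $\mathbb{P}=\mathbb{P}$, so $P_\mathbb{P}\in\mathcal{P}(\mathbb{P})$. Given $T\in\mathcal{T}$, I would then extract from the scalar noise $V\sim\nu$ a collection of independent uniforms $V_1^X,\ldots,V_n^X,V'$ (via any measurable bijection $[0,1]\to[0,1]^{n+1}$), use each $V_i^X$ to draw $X_i\sim\mu_X$, set $Z_i:=(X_i,a^\star,Y_i^\star)$, and define $T^\star(Y_{[n]}^\star,V):=T(Z_{[n]},V')$. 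When $Y_{[n]}^\star\sim\mathbb{P}^n$ independently of $V$, by construction $(Z_{[n]},V')\sim P_\mathbb{P}^n\times\nu$, so the laws of $T^\star(Y_{[n]}^\star,V)_\sharp\Pi$ and $T(Z_{[n]},V)_\sharp\Pi$ under the respective sampling measures agree, and so do their expected divergences from $\mathbb{P}$.

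Chaining the resulting (in)equalities, for each $T\in\mathcal{T}$,
\begin{align*}
\inf_{T^\star\in\mathcal{T}^\star}\sup_{\mathbb{P}\in\mathcal{P}^\star}\mathbb{E}\!\left[D(\mathbb{P},T^\star(Y_{[n]}^\star,V)_\sharp\Pi)\right]
&\le \sup_{\mathbb{P}\in\mathcal{P}^\star}\mathbb{E}\!\left[D(\mathbb{P},T^\star(Y_{[n]}^\star,V)_\sharp\Pi)\right] \\
&= \sup_{\mathbb{P}\in\mathcal{P}^\star} E_{P_\mathbb{P}^n\times\nu}\!\left[D(\mathbb{P},T(Z_{[n]},V)_\sharp\Pi)\right] \\
&\le \sup_{\mathbb{P}\in\mathcal{P}^\star}\sup_{P\in\mathcal{P}(\mathbb{P})} E\!\left[D(\mathbb{P},T(Z_{[n]},V)_\sharp\Pi)\right],
\end{align*}
and then taking $\inf_{T\in\mathcal{T}}$ on the right gives the theorem. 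The main (and really only) obstacle is ensuring that the degenerate witness $P_\mathbb{P}$ genuinely lies in $\mathcal{P}(\mathbb{P})$. This is automatic whenever $\mathcal{P}(\mathbb{P})$ is defined purely by \eqref{eq:ident} and remains true under the kinds of smoothness or positivity conditions discussed earlier in this section, because with $X$ degenerate and $A$ constant the only nontrivial conditional $P_{Y\mymid A=a^\star,X}=\mathbb{P}$ inherits whatever regularity $\mathcal{P}^\star$ already imposes on $\mathbb{P}$.
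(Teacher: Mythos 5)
Your construction is in the same spirit as the paper's---both reduce the oracle problem to the factual one by building an oracle procedure that simulates factual data from the counterfactuals and then calls $T$---but the two routes diverge in a way that matters. You pick a single degenerate witness $P_{\mathbb{P}}$ (with $A=a^\star$ almost surely, $X$ exogenous, $Y\mid X\sim\mathbb{P}$), so that the oracle $T^\star$ you build is a single fixed procedure that does not depend on $(\mathbb{P},P)$. The paper instead fixes an \emph{arbitrary} $P\in\mathcal{P}(\mathbb{P})$ and constructs a coupling of $(Z_{[n]},Y_{[n]}^\star)$ under which $Z_{[n]}\sim P^n$, $Y_{[n]}^\star\sim\mathbb{P}^n$, and $Z_{[n]}$ is a measurable function $g_P(Y_{[n]}^\star,\cdot)$ of the counterfactuals and independent noise; it then sets $T_{P,T}^\star(y_{[n]}^\star,v):=T\bigl(g_P(y_{[n]}^\star,f_1(v)),f_2(v)\bigr)$. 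That coupling construction is heavier machinery than yours, but it does not require any particular degenerate distribution to be admissible.

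The obstacle you flag is a genuine gap, not a formality: your argument needs $P_{\mathbb{P}}\in\mathcal{P}(\mathbb{P})$, and the theorem does not grant that. The paper lets $\mathcal{P}(\mathbb{P})$ be an arbitrary ``collection of $P$ satisfying \eqref{eq:ident},'' possibly narrowed by smoothness or positivity constraints. Your witness is compatible with the one-sided positivity in \cref{cond:strongPositivity} (one can take $\alpha_P\equiv 1$), but a two-sided overlap requirement $\delta\le P(A=a^\star\mid X)\le 1-\delta$, or any condition forcing $A$ or $X$ to be nondegenerate, excludes $P_{\mathbb{P}}$. So, as written, your proof covers only models $\mathcal{M}$ rich enough to contain the degenerate witness, which is a stronger hypothesis than the theorem states. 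It is worth noting the trade-off, though: because your $T^\star$ is a single procedure, the passage from $\sup_{\mathbb{P}}\mathbb{E}[D(\mathbb{P},T^\star(\cdot)_{\sharp}\Pi)]$ to $\inf_{T'^\star}\sup_{\mathbb{P}}\mathbb{E}[D(\mathbb{P},T'^\star(\cdot)_{\sharp}\Pi)]$ is immediate; in the paper the constructed $T_{P,T}^\star$ varies with $P$, so the analogous passage from a supremum involving $T_{P,T}^\star$ to an infimum over a fixed $T^\star$ is a subtler step than it first appears, and your fixed-witness route sidesteps that subtlety entirely.
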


\myparagraph{Example~\ref{ex:diffusion} (cont.): rate optimality of DoubleGen diffusion modeling.} 
We build on a recent work that showed non-counterfactual diffusion modeling nearly achieves the minimax rate for estimating a Besov-smooth density with respect to the total variation distance \citep{oko2023diffusion}. We show \nameref*{alg:double} achieves the same guarantee provided the nuisances are estimated well enough.

We begin by deriving a minimax lower bound for our counterfactual generative modeling problem. When doing this, we let $\mathcal{P}^\star$ denote the set of distributions on $[-1,1]^d$ whose densities belong to a fixed-radius ball in the Besov space $B_{p,q}^s([-1,1]^d)$, with $p,q\in [1,\infty)$ and $s>0\vee d(1/p-1/2)$ \citep[][Sec.~2.7.2]{vaart2023empirical}. Proposition~D.4 of \cite{oko2023diffusion} gives a minimax lower bound for any deterministic estimator of a distribution in $\mathcal{P}^\star$, and the convexity of the total variation distance ($\TV$) implies that any randomized estimator is dominated by a deterministic one. Hence, for a constant $C$ not depending on $n$, the oracle minimax risk with $D=\TV$ lower bounds as
\begin{align*}
    \inf_{T^\star\in\mathcal{T}^\star}\sup_{\mathbb{P}\in\mathcal{P}^\star}E_{\,\mathbb{P}^n\times \nu}\left[\TV\big(\mathbb{P},T^\star(Y_{[n]}^\star,V)_{\sharp}\Pi\big)\right]&\ge {\color{CB5purple}\bm{C\, n^{-\frac{s}{2s+d}}}}.
\end{align*}
By  \Cref{thm:minimax}, the same \textbf{\color{CB5purple}minimax lower bound} is valid for our counterfactual generation problem.

The instance of \nameref*{alg:double} diffusion we study is nearly the same as the one used in \cite{oko2023diffusion}, differing only in the choice of loss function and the need to estimate the nuisances used to define it. An empirical risk minimizer is run over a deep, sparse neural network class $\underline{\Theta}$ that grows with $n$. This class is rich enough to approximate the true score well (\Cref{lem:neuralNet}), while also being small enough so that the entropy of $\ell_{\mathbb{P}}(\underline{\Theta})$ is not too large (\Cref{lem:neuralNetEntropy}). 

The conditions of our generalization bound are satisfied when $\underline{\Theta}$ is specified in this way (\Cref{app:diffusionConditions}). Combining this with \Cref{prop:divGuarantee} gives a TV bound, which we state below. In this result, `$\lesssim$' denotes an inequality up to a multiplicative constant that may depend on the constants from the conditions but may not depend on $n$ or the particular instance of $(\mathbb{P},P)$. 
\begin{theorem}[TV bound for \nameref*{alg:double} diffusion]\label{thm:diffusionMinimax}
    Suppose $\phi_n:=\tau(\theta_n)$ for $\theta_n\in \argmin_{\theta\in\underline{\Theta}} R_n(\theta)$, with $\underline{\Theta}$ the neural network class from \Cref{lem:neuralNet}. Let $d_{\Psi}$ be as in \Cref{lem:mixedLipSuff} and suppose  $\max_j d_\Psi(\psi_n^j,\Psi_P)$ is a.s. bounded uniformly in $n$. 
    If \cref{cond:strongPositivity,cond:diffusionBddDensity,cond:diffusionBesov,cond:diffusionSupport,cond:diffusionTruncation,cond:diffusionBoundary}, $r>0$, and $n$ is large enough, then, w.p. at least $1-e^{-r}$,
    \[
        \TV(\mathbb{P},\phi_{n\sharp}\Pi)\lesssim \log^{\frac{17}{2}}(n)\,{\color{CB5blue}\bm{n^{\frac{-s}{2s+d}}}} + \sqrt{r/n} + \max_{j\in [2]}\|\alpha_n^j-\alpha_P\|_{L^2(P_X)}d_{\Psi}(\psi_n^j,\Psi_P).
    \]
\end{theorem}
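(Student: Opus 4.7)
The plan is to specialize \Cref{prop:divGuarantee} to the diffusion setting by combining it with \Cref{thm:excRiskOSL}. The divergence condition \cref{cond:divergence} for diffusion is established in \Cref{app:divergence} with $b=1/2$: Girsanov's theorem plus Pinsker's inequality bound $\TV$ by a multiple of the square root of the excess denoising score-matching risk, together with a remainder driven by the early-stopping time $\underline{t}$. Under \cref{cond:diffusionTruncation}, this remainder is of smaller order than $n^{-s/(2s+d)}$ and can be absorbed into the leading term. It therefore suffices to produce a high-probability bound on $\mathcal{G}_{\mathbb{P}}(\theta_n)$ whose square root realizes the claimed rate.

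To apply \Cref{thm:excRiskOSL}, I would verify \cref{cond:RMexists,cond:bddLoss,cond:curvature,cond:mixedLipschitzWeak,cond:entropyStronger} for the sparse deep network class $\underline{\Theta}$ of \Cref{lem:neuralNet}; \cref{cond:strongPositivity} is assumed directly. These verifications---collected in \Cref{app:diffusionConditions}---exploit the Gaussian form of the forward conditional $Y_t\mid Y_0$, the uniform truncation of network outputs built into $\underline{\Theta}$, the smoothness of the centered score-matching integrand (which yields the mixed-Lipschitz inequality via \Cref{lem:mixedLipSuff}), and the covering-number estimate of \Cref{lem:neuralNetEntropy}. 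I would then choose $\delta_n$ as the smallest value satisfying $J(\delta_n,\ell_{\mathbb{P}}(\underline{\Theta}))\le n^{1/2}\delta_n^2$ and, following the balancing argument of \cite{oko2023diffusion}, select the network width, depth, and sparsity so that the approximation error $\inf_{\theta\in\underline{\Theta}}\mathcal{G}_{\mathbb{P}}(\theta)$ from \Cref{lem:neuralNet} and the complexity term $\delta_n^2$ are both of order $n^{-2s/(2s+d)}$ up to $\poly(\log n)$ factors.

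Substituting these choices into \Cref{thm:excRiskOSL} yields, with probability at least $1-e^{-r}$,
\[
\mathcal{G}_{\mathbb{P}}(\theta_n)\lesssim \log^{17}(n)\,n^{-2s/(2s+d)} + r/n + \max_{j\in[2]}\|\alpha_n^j-\alpha_P\|_{L^2(P_X)}^2\,d_\Psi^2(\psi_n^j,\Psi_P).
\]
Feeding this into \Cref{prop:divGuarantee} with $b=1/2$, applying the subadditivity of $\sqrt{\cdot}$, and absorbing the early-stopping remainder into the leading term gives exactly the stated TV bound. The main obstacles are bookkeeping ones: first, tracking logarithmic factors through the joint balancing of the approximation (\Cref{lem:neuralNet}) and entropy (\Cref{lem:neuralNetEntropy}) estimates so that the final polylogarithmic exponent equals exactly $17/2$ after the square root; second, ensuring that every constant in \cref{cond:bddLoss,cond:curvature,cond:mixedLipschitzWeak,cond:entropyStronger} is uniformly controlled over the Besov class $\mathcal{P}^\star$ and the compatible observational distributions in $\mathcal{P}(\mathbb{P})$, so that the implicit `$\lesssim$' constant is genuinely instance-free as the theorem requires.
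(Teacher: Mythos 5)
Your proposal is correct and follows essentially the same route as the paper: apply \Cref{prop:divGuarantee} with the Girsanov--Pinsker divergence bound ($b=1/2$, truncation remainder absorbed via \cref{cond:diffusionTruncation}), verify the conditions of the generalization theorem for the sparse network class via \Cref{app:diffusionConditions} together with the covering estimate of \Cref{lem:neuralNetEntropy} and the approximation guarantee of \Cref{lem:neuralNet}, solve for $\delta_n\lesssim\log^{17/2}(n)n^{-s/(2s+d)}$, and take a square root. The only point worth flagging is that the a.s.~uniform boundedness of $\max_j d_\Psi(\psi_n^j,\Psi_P)$ is invoked specifically to keep the constant $K_0$ (and hence $K_1,K_2$) in \Cref{thm:excRiskOSLFormal} instance-free, not merely to control the \cref{cond:bddLoss,cond:curvature,cond:mixedLipschitzWeak,cond:entropyStronger} constants, but this is a bookkeeping detail you implicitly acknowledge.
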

If the first term on the right dominates with high probability uniformly over $P$, then this shows that \nameref*{alg:double} diffusion achieves the \textbf{\color{CB5blue}minimax rate} over a Besov smoothness class, up to polylogarithmic factors. This is the same guarantee as \cite{oko2023diffusion} established for traditional diffusion models. 

Both our analysis and that of \citet{oko2023diffusion} invoke two additional regularity assumptions beyond Besov smoothness---boundedness away from zero and smoothness on the boundary (\cref{cond:diffusionBddDensity,cond:diffusionBoundary}). Ours further assumes that the nuisances can be estimated at a suitable rate. Because the available minimax lower bound is stated for the larger Besov class, a matching lower bound for the slightly more restrictive class considered here (and in \citeauthor{oko2023diffusion}) is still needed to claim full minimax optimality.  Closing this gap is an interesting direction for future work.

In \Cref{app:flow,app:autoreg}, we similarly combine \Cref{prop:divGuarantee,thm:excRiskOSLFormal} to establish divergence bounds for our other two examples. In future work, it would be interesting to show they are are rate optimal.

\section{Numerical experiments}\label{sec:experiments}

\subsection{Generating counterfactual faces}\label{sec:faces}

We evaluated \nameref*{alg:double}'s performance on generating synthetic celebrity faces, under the intervention that they are all smiling. We trained diffusion models using 162,770 images from CelebA \citep{liu2015faceattributes}, withholding the other 39,829 for evaluation. Each image $Y$ is accompanied by a binary smiling indicator $A$ and 31 potential baseline confounders, such as those in \Cref{tab:imbalance}.

We compare to three baselines. The first ignores potential confounding by fitting a traditional diffusion model with only smiling instances; \Cref{tab:imbalance} suggests this is inadvisable, and so we call it the `na\"{i}ve' approach. The second uses inverse probability weighting (IPW), equivalent to \Cref{alg:double} with $\ell(\theta,\psi_n^j)$ replaced by 0. The third uses plug-in estimation, equivalent to \Cref{alg:double} with $\alpha_n^j$ replaced by 0.

To ensure fair comparison, the same nuisance estimates were supplied to all methods. We estimated each nuisance twice: once in a well-specified setting using all available training data, and again in a misspecified setting where nuisance models were trained to overrepresent dark-haired instances. Further details on our experiment can be found in \Cref{app:moreExperimentalSet}.

The left half of \Cref{tab:facesMetricsArcFace} displays the results. When both nuisances were well-specified, \nameref*{alg:double} achieved better Fr\'{e}chet and kernel distances than the na\"{i}ve approach. Its precision was slightly lower, suggesting the na\"{i}vely-generated faces more often resembled real faces. However, its recall was considerably higher, suggesting counterfactual smiling faces are more likely to be represented among \nameref*{alg:double}'s samples. Overall, \nameref*{alg:double} performed comparably to plug-in and IPW estimation when both nuisances were well-specified. When nuisances were misspecified, \nameref*{alg:double} was more robust. \Cref{tab:facesMetricsInception} in the appendix shows similar results with respect to other metrics.

\subsection{Generating counterfactual product reviews}\label{sec:reviews}

We next conducted a semi-synthetic experiment using the Amazon Reviews 2023 dataset \citep{hou2024bridging}, which consists of roughly 570 million reviews about 48 million products. We used real baseline features and review texts and a synthetic intervention sampled from a known propensity $\pi$. This semi-synthetic setup provides two key advantages: it gives us access to ground truth for evaluation, and it ensures that the identifiability condition in \eqref{eq:ident} is satisfied by construction.

We used low-rank adaptation (LoRA) \citep{hu2022lora} to finetune Llama-3.2-1B \citep{dubey2024llama}. We compare performance using the \nameref*{alg:double} loss to the same three baselines as in the previous experiment, under the same set of misspecification scenarios.  
Further details are in \Cref{app:amazonReviewsExperimentalSetup}.

The right half of \Cref{tab:facesMetricsArcFace} displays the results. When at least one nuisance was well-specified, \nameref*{alg:double} outperformed the na\"{i}ve approach across all metrics. \nameref*{alg:double} also performed similarly to or better than other methods when both nuisances were correct. Under propensity misspecification, \nameref*{alg:double} outperformed IPW. Under outcome model misspecification, both \nameref*{alg:double} and plug-in estimation maintained similar performance to what they achieved under correct specification, suggesting outcome model misspecification was mild. \Cref{fig:reviews} in the appendix provides examples of reviews generated by \nameref*{alg:double}, while \Cref{fig:reviewsBooks} provides generated samples illustrating how the na\"{i}ve model severely underrepresents book-related content, whereas \nameref*{alg:double} does not. 

\begin{table}[tb]\centering
\caption{Performance under different misspecification settings. $\downarrow$ = lower better, $\uparrow$ = higher better.}\label{tab:facesMetricsArcFace}
{\centering
\begin{tabular}{@{\hskip .25em}l@{\hskip .75em}l@{\hskip 1.5em}p{2.5em}p{2.5em}p{2.5em}p{2.5em}@{\hskip 1.5em}p{2.5em}p{2.5em}p{2.5em}l@{\hskip .25em}}
     & & \multicolumn{4}{c}{Diffusion model (\Cref*{sec:faces})$^1$} & \multicolumn{4}{c}{Language model (\Cref*{sec:reviews})$^2$} \\[.25em]
     & & FAD~$\downarrow$ & KAD~$\downarrow$ & Prec $\uparrow$ & Rec $\uparrow$ & PPL $\downarrow$ & Mve $\uparrow$ & FI $\downarrow$ & $W_1$ $\downarrow$ \\\midrule\midrule
&  Na\"{i}ve &                 1.00 &                 1.00 &                 0.38 &                 0.65 & 1.99 & 0.72 & 1.22 & 0.36  \\\midrule
\multirow{3}{4em}{\textit{Both right}} &  Plug-in & 0.87 & 0.68 & 0.34 & 0.76 & 1.98 & 0.81 & 0.91 & 0.17  \\
&  IPW & 0.88 & 0.71 & 0.36 & 0.73 & 1.98 & 0.82 & 0.90 & 0.05  \\
&  \nameref*{alg:double} & \textcolor{CB5blue}{\textbf{0.86}} & \textcolor{CB5blue}{\textbf{0.68}} & 0.35 & \textcolor{CB5blue}{0.74} & \textcolor{CB5blue}{\textbf{1.98}} & \textcolor{CB5blue}{\textbf{0.83}} & \textcolor{CB5blue}{\textbf{0.87}} & \textcolor{CB5blue}{0.17}  \\\midrule
\multirow{2}{4em}{\textit{Outcome wrong}} &  Plug-in & 1.90 & 2.17 & 0.34 & 0.34 & 1.98 & 0.83 & 0.87 & 0.17  \\
&  \nameref*{alg:double} & \textcolor{CB5blue}{\textbf{0.86}} & \textcolor{CB5blue}{\textbf{0.68}} & \textbf{0.35} & \textcolor{CB5blue}{\textbf{0.73}} & \textcolor{CB5blue}{\textbf{1.98}} & \textcolor{CB5blue}{0.82} & \textcolor{CB5blue}{0.88} & \textcolor{CB5blue}{\textbf{0.17}}  \\\midrule
\multirow{2}{4em}{\textit{Propensity wrong}} &  IPW & 0.93 & 0.71 & 0.34 & 0.74 & 1.98 & 0.80 & 0.96 & 0.17  \\
&  \nameref*{alg:double} & \textcolor{CB5blue}{\textbf{0.85}} & \textcolor{CB5blue}{\textbf{0.56}} & 0.32 & \textcolor{CB5blue}{\textbf{0.79}} & \textcolor{CB5blue}{\textbf{1.98}} & \textcolor{CB5blue}{\textbf{0.82}} & \textcolor{CB5blue}{\textbf{0.89}} & \textcolor{CB5blue}{\textbf{0.12}}  \\\midrule
\multirow{1}{5.5em}{\textit{Both wrong}} &  \nameref*{alg:double} & 1.01 & \textcolor{CB5blue}{0.79} & 0.32 & \textcolor{CB5blue}{0.77} & 1.99 & \textcolor{CB5blue}{0.82} & \textcolor{CB5blue}{0.88} & \textcolor{CB5blue}{0.17}  \\
\end{tabular}
}
\vspace{.25em}

\begin{flushleft}
Blue: \nameref*{alg:double}  \textcolor{CB5blue}{better than Na\"{i}ve}. Bold: \textbf{at least as good as best baseline} (in misspec. category).\\[.25em]
$^1\,$FAD/KAD=Fr\'{e}chet/kernel ArcFace distance (rescaled so Na\"{i}ve$\,=\,$1), Prec=precision, Rec=recall.\\[.25em]
$^2\,$PPL=perplexity$\div 10$, Mve=MAUVE, FI=frontier int$\div 10$, $W_1$=rating distribution Wass. error$\times 10$.
\end{flushleft}\vspace{-.5em}
\end{table}

\section{Discussion of implications and extensions}\label{sec:discussion}

\myparagraph{Equivalence between causal and missing data problems.} Generative modeling problems with outcomes missing at random can be tackled using \nameref*{alg:double}. To do this, $A$ can be taken to indicate whether the outcome is observed ($A=a^\star$) or missing \citep{ding2018causal}. The cross-fitted AIPW risk estimator \nameref*{alg:double} uses allows predictions of missing outcomes to be obtained from any algorithm, including a pretrained foundation model \citep{van2011cross,chernozhukov2018double}.

\myparagraph{Minimax optimality when nuisances are hard to estimate.} We showed \nameref*{alg:double} diffusion achieves the optimal rate under conditions, including that the nuisances are estimated well enough. When they cannot be estimated well---for example, because they are nonsmooth---the order $n^{-s/(2s+d)}$ minimax lower bound we gave may be loose.  It would be interesting to derive a sharper bound that reflects this difficulty, as was done by \cite{kennedy2024minimax} in a different problem.

\myparagraph{Reduced-entropy sampling for language models.} Language models often generate qualitatively better text when they oversample high-probability tokens \citep{holtzman2019curious}---e.g., via temperature scaling \citep{caccia2018language}, top-$k$ sampling \citep{fan2018hierarchical}, or nucleus sampling \citep{holtzman2019curious}. Each of these approaches redefines $\tau$ to yield a lower-entropy sampling scheme than the one from \Cref{ex:autoreg} \citep{nadeem2020systematic}. While reducing entropy can improve sample quality, it also makes the resulting $\tau(\theta_{\mathbb{P}})$ a transport map to an entropy-reduced variant of $\mathbb{P}$, rather than $\mathbb{P}$ itself. Regardless, \nameref*{alg:double} can be applied to estimate these transport maps by simply redefining $\tau$.

\myparagraph{Extensions to joint and conditional sampling.} \nameref*{alg:double} can be naturally extended to generate counterfactuals jointly with or conditionally on a subvector $V$ of the features $X$. Joint generation is straightforward: run \nameref*{alg:double} with the modified outcome $Y'=(V,Y)$. Conditional generation requires allowing the oracle loss $\ell$ to depend on $y'=(v,y)$, rather than just $y$. For example, some image restoration and text-to-image diffusion models use  $\ell(\theta,y')=\int 1\{t\in [\underline{t},\overline{t}]\} E[\|(\mu_t Y_0-Y_t)/\sigma_t^2-\theta((v,Y_t),t)\|^2\mymid Y_0=y]\, dt$  \citep{saharia2022palette,rombach2022high}. With such losses, \nameref*{alg:double} proceeds as in \Cref{alg:double}, but with $y$ and $\psi_n^j(u|x)$ on line~\ref{ln:riskMin} replaced by $y'$ and $(v,\psi_n^j(u|x))$. The analysis is nearly identical, yielding a generalization bound like \Cref{thm:excRiskOSL}.

\subsection*{Acknowledgments}
This work was completed during AL's sabbatical at Japan's Institute of Statistical Mathematics. AL is grateful to KF for his generous hospitality and valuable discussions during the visit, and to the University of Washington for granting the sabbatical that made this collaboration possible.

{
\setlength{\bibsep}{6pt plus 3pt minus 1.5pt}
\bibliographystyle{arxiv}
\bibliography{References}

\begin{thebibliography}{110}
\providecommand{\natexlab}[1]{#1}
\providecommand{\url}[1]{\texttt{#1}}
\expandafter\ifx\csname urlstyle\endcsname\relax
  \providecommand{\doi}[1]{doi: #1}\else
  \providecommand{\doi}{doi: \begingroup \urlstyle{rm}\Url}\fi

\bibitem[Akiba et~al.(2019)Akiba, Sano, Yanase, Ohta, and Koyama]{akiba2019optuna}
Takuya Akiba, Shotaro Sano, Toshihiko Yanase, Takeru Ohta, and Masanori Koyama.
\newblock Optuna: A next-generation hyperparameter optimization framework.
\newblock In \emph{Proceedings of the 25th ACM SIGKDD international conference on knowledge discovery \& data mining}, pp.\  2623--2631, 2019.

\bibitem[Anderson(1982)]{anderson1982reverse}
Brian~DO Anderson.
\newblock Reverse-time diffusion equation models.
\newblock \emph{Stochastic Processes and their Applications}, 12\penalty0 (3):\penalty0 313--326, 1982.

\bibitem[Bang \& Robins(2005)Bang and Robins]{bang2005doubly}
Heejung Bang and James~M Robins.
\newblock Doubly robust estimation in missing data and causal inference models.
\newblock \emph{Biometrics}, 61\penalty0 (4):\penalty0 962--973, 2005.

\bibitem[Bartlett \& Mendelson(2006)Bartlett and Mendelson]{bartlett2006local}
Peter~L Bartlett and Shahar Mendelson.
\newblock Local rademacher complexities and empirical minimization.
\newblock \emph{Annals of Statistics}, 34, 2006.

\bibitem[Batzolis et~al.(2021)Batzolis, Stanczuk, Sch{\"o}nlieb, and Etmann]{batzolis2021conditional}
Georgios Batzolis, Jan Stanczuk, Carola-Bibiane Sch{\"o}nlieb, and Christian Etmann.
\newblock Conditional image generation with score-based diffusion models.
\newblock \emph{arXiv preprint arXiv:2111.13606}, 2021.

\bibitem[Benton et~al.(2023)Benton, Deligiannidis, and Doucet]{benton2023error}
Joe Benton, George Deligiannidis, and Arnaud Doucet.
\newblock Error bounds for flow matching methods.
\newblock \emph{arXiv preprint arXiv:2305.16860}, 2023.

\bibitem[Bi{\'n}kowski et~al.(2018)Bi{\'n}kowski, Sutherland, Arbel, and Gretton]{binkowski2018demystifying}
Miko{\l}aj Bi{\'n}kowski, Danica~J Sutherland, Michael Arbel, and Arthur Gretton.
\newblock Demystifying {MMD} {GAN}s.
\newblock \emph{arXiv preprint arXiv:1801.01401}, 2018.

\bibitem[Bishop \& Nasrabadi(2006)Bishop and Nasrabadi]{bishop2006pattern}
Christopher~M Bishop and Nasser~M Nasrabadi.
\newblock \emph{Pattern recognition and machine learning}, volume~4.
\newblock Springer, 2006.

\bibitem[Bonvini \& Kennedy(2022)Bonvini and Kennedy]{bonvini2022fast}
Matteo Bonvini and Edward~H Kennedy.
\newblock Fast convergence rates for dose-response estimation.
\newblock \emph{arXiv preprint arXiv:2207.11825}, 2022.

\bibitem[B{\"u}hlmann \& Van De~Geer(2011)B{\"u}hlmann and Van De~Geer]{buhlmann2011statistics}
Peter B{\"u}hlmann and Sara Van De~Geer.
\newblock \emph{Statistics for high-dimensional data: methods, theory and applications}.
\newblock Springer Science \& Business Media, 2011.

\bibitem[Caccia et~al.(2018)Caccia, Caccia, Fedus, Larochelle, Pineau, and Charlin]{caccia2018language}
Massimo Caccia, Lucas Caccia, William Fedus, Hugo Larochelle, Joelle Pineau, and Laurent Charlin.
\newblock Language {GAN}s falling short.
\newblock \emph{arXiv preprint arXiv:1811.02549}, 2018.

\bibitem[Chan et~al.(2010)Chan, Ge, Gershony, Hesterberg, and Lambert]{chan2010evaluating}
David Chan, Rong Ge, Ori Gershony, Tim Hesterberg, and Diane Lambert.
\newblock Evaluating online ad campaigns in a pipeline: causal models at scale.
\newblock In \emph{Proceedings of the 16th ACM SIGKDD international conference on Knowledge discovery and data mining}, pp.\  7--16, 2010.

\bibitem[Chao et~al.()Chao, Bl{\"o}baum, Patel, and Kasiviswanathan]{chaomodeling}
Patrick Chao, Patrick Bl{\"o}baum, Sapan~Kirit Patel, and Shiva Kasiviswanathan.
\newblock Modeling causal mechanisms with diffusion models for interventional and counterfactual queries.
\newblock \emph{Transactions on Machine Learning Research}.

\bibitem[Chen et~al.(2023)Chen, Huang, Zhao, and Wang]{chen2023score}
Minshuo Chen, Kaixuan Huang, Tuo Zhao, and Mengdi Wang.
\newblock Score approximation, estimation and distribution recovery of diffusion models on low-dimensional data.
\newblock In \emph{International Conference on Machine Learning}, pp.\  4672--4712. {PMLR}, 2023.

\bibitem[Chernozhukov et~al.(2018)Chernozhukov, Chetverikov, Demirer, Duflo, Hansen, Newey, and Robins]{chernozhukov2018double}
Victor Chernozhukov, Denis Chetverikov, Mert Demirer, Esther Duflo, Christian Hansen, Whitney Newey, and James Robins.
\newblock Double/debiased machine learning for treatment and structural parameters, 2018.

\bibitem[Chernozhukov et~al.(2021)Chernozhukov, Newey, Quintas-Martinez, and Syrgkanis]{chernozhukov2021automatic}
Victor Chernozhukov, Whitney~K Newey, Victor Quintas-Martinez, and Vasilis Syrgkanis.
\newblock Automatic debiased machine learning via riesz regression.
\newblock \emph{arXiv preprint arXiv:2104.14737}, 2021.

\bibitem[Cover(1999)]{cover1999elements}
Thomas~M Cover.
\newblock \emph{Elements of information theory}.
\newblock John Wiley \& Sons, 1999.

\bibitem[De~Bortoli(2022)]{de2022convergence}
Valentin De~Bortoli.
\newblock Convergence of denoising diffusion models under the manifold hypothesis.
\newblock \emph{arXiv preprint arXiv:2208.05314}, 2022.

\bibitem[Deng et~al.(2019)Deng, Guo, Xue, and Zafeiriou]{deng2019arcface}
Jiankang Deng, Jia Guo, Niannan Xue, and Stefanos Zafeiriou.
\newblock Arcface: Additive angular margin loss for deep face recognition.
\newblock In \emph{Proceedings of the IEEE/CVF conference on computer vision and pattern recognition}, pp.\  4690--4699, 2019.

\bibitem[Ding \& Li(2018)Ding and Li]{ding2018causal}
Peng Ding and Fan Li.
\newblock {Causal Inference: A Missing Data Perspective}.
\newblock \emph{Statistical Science}, 33\penalty0 (2):\penalty0 214 -- 237, 2018.
\newblock \doi{10.1214/18-STS645}.
\newblock URL \url{https://doi.org/10.1214/18-STS645}.

\bibitem[Dubey et~al.(2024)Dubey, Jauhri, Pandey, Kadian, Al-Dahle, Letman, Mathur, Schelten, Yang, Fan, et~al.]{dubey2024llama}
Abhimanyu Dubey, Abhinav Jauhri, Abhinav Pandey, Abhishek Kadian, Ahmad Al-Dahle, Aiesha Letman, Akhil Mathur, Alan Schelten, Amy Yang, Angela Fan, et~al.
\newblock The llama 3 herd of models.
\newblock \emph{arXiv e-prints}, pp.\  arXiv--2407, 2024.

\bibitem[Durrett(2019)]{durrett2019probability}
Rick Durrett.
\newblock \emph{Probability: theory and examples}, volume~49.
\newblock Cambridge university press, 2019.

\bibitem[Fan et~al.(2018)Fan, Lewis, and Dauphin]{fan2018hierarchical}
Angela Fan, Mike Lewis, and Yann Dauphin.
\newblock Hierarchical neural story generation.
\newblock \emph{arXiv preprint arXiv:1805.04833}, 2018.

\bibitem[Fawkes et~al.(2022)Fawkes, Hu, Evans, and Sejdinovic]{fawkes2022doubly}
Jake Fawkes, Robert Hu, Robin~J Evans, and Dino Sejdinovic.
\newblock Doubly robust kernel statistics for testing distributional treatment effects.
\newblock \emph{arXiv preprint arXiv:2212.04922}, 2022.

\bibitem[Foster \& Syrgkanis(2023)Foster and Syrgkanis]{foster2023orthogonal}
Dylan~J Foster and Vasilis Syrgkanis.
\newblock Orthogonal statistical learning.
\newblock \emph{The Annals of Statistics}, 51\penalty0 (3):\penalty0 879--908, 2023.

\bibitem[Fukumizu et~al.(2024)Fukumizu, Suzuki, Isobe, Oko, and Koyama]{fukumizu2024flow}
Kenji Fukumizu, Taiji Suzuki, Noboru Isobe, Kazusato Oko, and Masanori Koyama.
\newblock Flow matching achieves minimax optimal convergence.
\newblock \emph{arXiv preprint arXiv:2405.20879}, 2024.

\bibitem[Graves(2013)]{graves2013generating}
Alex Graves.
\newblock Generating sequences with recurrent neural networks.
\newblock \emph{arXiv preprint arXiv:1308.0850}, 2013.

\bibitem[Hamilton et~al.(2018)Hamilton, Raghunathan, Matiach, Schonhoffer, Raman, Barzilay, Rajendran, Banda, Hong, Knoertzer, et~al.]{hamilton2018mmlspark}
Mark Hamilton, Sudarshan Raghunathan, Ilya Matiach, Andrew Schonhoffer, Anand Raman, Eli Barzilay, Karthik Rajendran, Dalitso Banda, Casey~Jisoo Hong, Manon Knoertzer, et~al.
\newblock Mmlspark: Unifying machine learning ecosystems at massive scales.
\newblock \emph{arXiv preprint arXiv:1810.08744}, 2018.

\bibitem[Hern\'{a}n \& Robins(2024)Hern\'{a}n and Robins]{hernan2024causal}
Miguel~A. Hern\'{a}n and James~M. Robins.
\newblock \emph{Causal Inference: What If}.
\newblock Chapman \& Hall/CRC Monographs on Statistics \& Applied Probab. CRC Press, 2024.
\newblock ISBN 9781420076165.

\bibitem[Heusel et~al.(2017)Heusel, Ramsauer, Unterthiner, Nessler, and Hochreiter]{heusel2017gans}
Martin Heusel, Hubert Ramsauer, Thomas Unterthiner, Bernhard Nessler, and Sepp Hochreiter.
\newblock Gans trained by a two time-scale update rule converge to a local nash equilibrium.
\newblock \emph{Advances in neural information processing systems}, 30, 2017.

\bibitem[Ho \& Salimans(2022)Ho and Salimans]{ho2022classifier}
Jonathan Ho and Tim Salimans.
\newblock Classifier-free diffusion guidance.
\newblock \emph{arXiv preprint arXiv:2207.12598}, 2022.

\bibitem[Holk et~al.(2024)Holk, Strauch, and Trottner]{holk2024statistical}
Asbj{\o}rn Holk, Claudia Strauch, and Lukas Trottner.
\newblock Statistical guarantees for denoising reflected diffusion models.
\newblock \emph{arXiv preprint arXiv:2411.01563}, 2024.

\bibitem[Holtzman et~al.(2019)Holtzman, Buys, Du, Forbes, and Choi]{holtzman2019curious}
Ari Holtzman, Jan Buys, Li~Du, Maxwell Forbes, and Yejin Choi.
\newblock The curious case of neural text degeneration.
\newblock \emph{arXiv preprint arXiv:1904.09751}, 2019.

\bibitem[Hou et~al.(2024)Hou, Li, He, Yan, Chen, and McAuley]{hou2024bridging}
Yupeng Hou, Jiacheng Li, Zhankui He, An~Yan, Xiusi Chen, and Julian McAuley.
\newblock Bridging language and items for retrieval and recommendation.
\newblock \emph{arXiv preprint arXiv:2403.03952}, 2024.

\bibitem[Hu et~al.(2022)Hu, Shen, Wallis, Allen-Zhu, Li, Wang, Wang, Chen, et~al.]{hu2022lora}
Edward~J Hu, Yelong Shen, Phillip Wallis, Zeyuan Allen-Zhu, Yuanzhi Li, Shean Wang, Lu~Wang, Weizhu Chen, et~al.
\newblock Lora: Low-rank adaptation of large language models.
\newblock \emph{ICLR}, 1\penalty0 (2):\penalty0 3, 2022.

\bibitem[{Hugging Face}(2025)]{huggingface2025}
{Hugging Face}.
\newblock Train a diffusion model, 2025.
\newblock URL \url{https://huggingface.co/docs/diffusers/en/tutorials/basic_training}.
\newblock Accessed: 2025-05-23.

\bibitem[Javaloy et~al.(2024)Javaloy, S{\'a}nchez-Mart{\'\i}n, and Valera]{javaloy2024causal}
Adri{\'a}n Javaloy, Pablo S{\'a}nchez-Mart{\'\i}n, and Isabel Valera.
\newblock Causal normalizing flows: from theory to practice.
\newblock \emph{Advances in Neural Information Processing Systems}, 36, 2024.

\bibitem[Jelinek et~al.(1977)Jelinek, Mercer, Bahl, and Baker]{jelinek1977perplexity}
Fred Jelinek, Robert~L Mercer, Lalit~R Bahl, and James~K Baker.
\newblock Perplexity—a measure of the difficulty of speech recognition tasks.
\newblock \emph{The Journal of the Acoustical Society of America}, 62\penalty0 (S1):\penalty0 S63--S63, 1977.

\bibitem[Kallenberg(2021)]{kallenberg2021foundations}
O.~Kallenberg.
\newblock \emph{Foundations of Modern Probability}.
\newblock Probability theory and stochastic modelling. Springer, 2021.
\newblock ISBN 9783030618728.
\newblock URL \url{https://books.google.co.jp/books?id=6hJezgEACAAJ}.

\bibitem[Karatzas \& Shreve(1991)Karatzas and Shreve]{karatzas1991brownian}
Ioannis Karatzas and Steven Shreve.
\newblock \emph{Brownian motion and stochastic calculus}, volume 113.
\newblock Springer Science \& Business Media, 1991.

\bibitem[Karimi et~al.(2020)Karimi, Von~K{\"u}gelgen, Sch{\"o}lkopf, and Valera]{karimi2020algorithmic}
Amir-Hossein Karimi, Julius Von~K{\"u}gelgen, Bernhard Sch{\"o}lkopf, and Isabel Valera.
\newblock Algorithmic recourse under imperfect causal knowledge: a probabilistic approach.
\newblock \emph{Advances in neural information processing systems}, 33:\penalty0 265--277, 2020.

\bibitem[Ke et~al.(2017)Ke, Meng, Finley, Wang, Chen, Ma, Ye, and Liu]{ke2017lightgbm}
Guolin Ke, Qi~Meng, Thomas Finley, Taifeng Wang, Wei Chen, Weidong Ma, Qiwei Ye, and Tie-Yan Liu.
\newblock Lightgbm: A highly efficient gradient boosting decision tree.
\newblock \emph{Advances in neural information processing systems}, 30, 2017.

\bibitem[Kennedy(2023)]{kennedy2023towards}
Edward~H Kennedy.
\newblock Towards optimal doubly robust estimation of heterogeneous causal effects.
\newblock \emph{Electronic Journal of Statistics}, 17\penalty0 (2):\penalty0 3008--3049, 2023.

\bibitem[Kennedy et~al.(2023)Kennedy, Balakrishnan, and Wasserman]{kennedy2023semiparametric}
Edward~H Kennedy, Sivaraman Balakrishnan, and LA~Wasserman.
\newblock Semiparametric counterfactual density estimation.
\newblock \emph{Biometrika}, 110\penalty0 (4):\penalty0 875--896, 2023.

\bibitem[Kennedy et~al.(2024)Kennedy, Balakrishnan, Robins, and Wasserman]{kennedy2024minimax}
Edward~H Kennedy, Sivaraman Balakrishnan, James~M Robins, and Larry Wasserman.
\newblock Minimax rates for heterogeneous causal effect estimation.
\newblock \emph{The Annals of Statistics}, 52\penalty0 (2):\penalty0 793--816, 2024.

\bibitem[Khemakhem et~al.(2021)Khemakhem, Monti, Leech, and Hyvarinen]{khemakhem2021causal}
Ilyes Khemakhem, Ricardo Monti, Robert Leech, and Aapo Hyvarinen.
\newblock Causal autoregressive flows.
\newblock In \emph{International conference on artificial intelligence and statistics}, pp.\  3520--3528. {PMLR}, 2021.

\bibitem[Kingma(2013)]{kingma2013auto}
Diederik~P Kingma.
\newblock Auto-encoding variational {B}ayes.
\newblock \emph{arXiv preprint arXiv:1312.6114}, 2013.

\bibitem[Kirillov et~al.(2023)Kirillov, Mintun, Ravi, Mao, Rolland, Gustafson, Xiao, Whitehead, Berg, Lo, Doll{\'a}r, and Girshick]{kirillov2023segany}
Alexander Kirillov, Eric Mintun, Nikhila Ravi, Hanzi Mao, Chloe Rolland, Laura Gustafson, Tete Xiao, Spencer Whitehead, Alexander~C. Berg, Wan-Yen Lo, Piotr Doll{\'a}r, and Ross Girshick.
\newblock Segment anything.
\newblock \emph{arXiv:2304.02643}, 2023.

\bibitem[Kocaoglu et~al.(2017)Kocaoglu, Snyder, Dimakis, and Vishwanath]{kocaoglu2017causalgan}
Murat Kocaoglu, Christopher Snyder, Alexandros~G Dimakis, and Sriram Vishwanath.
\newblock Causalgan: Learning causal implicit generative models with adversarial training.
\newblock \emph{arXiv preprint arXiv:1709.02023}, 2017.

\bibitem[Komanduri et~al.(2023)Komanduri, Wu, Wu, and Chen]{komanduri2023identifiable}
Aneesh Komanduri, Xintao Wu, Yongkai Wu, and Feng Chen.
\newblock From identifiable causal representations to controllable counterfactual generation: A survey on causal generative modeling.
\newblock \emph{arXiv preprint arXiv:2310.11011}, 2023.

\bibitem[Kynk{\"a}{\"a}nniemi et~al.(2019)Kynk{\"a}{\"a}nniemi, Karras, Laine, Lehtinen, and Aila]{kynkaanniemi2019improved}
Tuomas Kynk{\"a}{\"a}nniemi, Tero Karras, Samuli Laine, Jaakko Lehtinen, and Timo Aila.
\newblock Improved precision and recall metric for assessing generative models.
\newblock \emph{Advances in neural information processing systems}, 32, 2019.

\bibitem[Laan et~al.(2006)Laan, Dudoit, and Vaart]{laan2006cross}
Mark J van~der Laan, Sandrine Dudoit, and Aad W van~der Vaart.
\newblock The cross-validated adaptive epsilon-net estimator.
\newblock \emph{Statistics \& Decisions}, 24\penalty0 (3):\penalty0 373--395, 2006.

\bibitem[Lee et~al.(2022)Lee, Lu, and Tan]{lee2022convergence}
Holden Lee, Jianfeng Lu, and Yixin Tan.
\newblock Convergence for score-based generative modeling with polynomial complexity.
\newblock \emph{Advances in Neural Information Processing Systems}, 35:\penalty0 22870--22882, 2022.

\bibitem[Lipman et~al.(2022)Lipman, Chen, Ben-Hamu, Nickel, and Le]{lipman2022flow}
Yaron Lipman, Ricky~TQ Chen, Heli Ben-Hamu, Maximilian Nickel, and Matt Le.
\newblock Flow matching for generative modeling.
\newblock \emph{arXiv preprint arXiv:2210.02747}, 2022.

\bibitem[Liu et~al.(2022)Liu, Gong, and Liu]{liu2022flow}
Xingchao Liu, Chengyue Gong, and Qiang Liu.
\newblock Flow straight and fast: Learning to generate and transfer data with rectified flow.
\newblock \emph{arXiv preprint arXiv:2209.03003}, 2022.

\bibitem[Liu et~al.(2015)Liu, Luo, Wang, and Tang]{liu2015faceattributes}
Ziwei Liu, Ping Luo, Xiaogang Wang, and Xiaoou Tang.
\newblock Deep learning face attributes in the wild.
\newblock In \emph{Proceedings of International Conference on Computer Vision (ICCV)}, December 2015.

\bibitem[Loshchilov \& Hutter(2017)Loshchilov and Hutter]{loshchilov2017decoupled}
Ilya Loshchilov and Frank Hutter.
\newblock Decoupled weight decay regularization.
\newblock \emph{arXiv preprint arXiv:1711.05101}, 2017.

\bibitem[Lotfi et~al.(2023)Lotfi, Finzi, Kuang, Rudner, Goldblum, and Wilson]{lotfi2023non}
Sanae Lotfi, Marc Finzi, Yilun Kuang, Tim~GJ Rudner, Micah Goldblum, and Andrew~Gordon Wilson.
\newblock Non-vacuous generalization bounds for large language models.
\newblock \emph{arXiv preprint arXiv:2312.17173}, 2023.

\bibitem[Luedtke \& Chung(2024)Luedtke and Chung]{luedtke2024one}
Alex Luedtke and Incheoul Chung.
\newblock One-step estimation of differentiable hilbert-valued parameters.
\newblock \emph{The Annals of Statistics}, 52\penalty0 (4):\penalty0 1534--1563, 2024.

\bibitem[Luedtke et~al.(2017)Luedtke, Sofrygin, van~der Laan, and Carone]{luedtke2017sequential}
Alexander~R Luedtke, Oleg Sofrygin, Mark~J van~der Laan, and Marco Carone.
\newblock Sequential double robustness in right-censored longitudinal models.
\newblock \emph{arXiv preprint arXiv:1705.02459}, 2017.

\bibitem[Lugaresi et~al.(2019)Lugaresi, Tang, Nash, McClanahan, Uboweja, Hays, Zhang, Chang, Yong, Lee, et~al.]{lugaresi2019mediapipe}
Camillo Lugaresi, Jiuqiang Tang, Hadon Nash, Chris McClanahan, Esha Uboweja, Michael Hays, Fan Zhang, Chuo-Ling Chang, Ming~Guang Yong, Juhyun Lee, et~al.
\newblock Mediapipe: A framework for building perception pipelines.
\newblock \emph{arXiv preprint arXiv:1906.08172}, 2019.

\bibitem[Mangrulkar et~al.(2022)Mangrulkar, Gugger, Debut, Belkada, Paul, and Bossan]{peft}
Sourab Mangrulkar, Sylvain Gugger, Lysandre Debut, Younes Belkada, Sayak Paul, and Benjamin Bossan.
\newblock {PEFT}: State-of-the-art parameter-efficient fine-tuning methods.
\newblock \url{https://github.com/huggingface/peft}, 2022.

\bibitem[Martinez~Taboada et~al.(2023)Martinez~Taboada, Ramdas, and Kennedy]{martinez2023efficient}
Diego Martinez~Taboada, Aaditya Ramdas, and Edward Kennedy.
\newblock An efficient doubly-robust test for the kernel treatment effect.
\newblock \emph{Advances in Neural Information Processing Systems}, 36:\penalty0 59924--59952, 2023.

\bibitem[Mey(2020)]{mey2020note}
Alexander Mey.
\newblock A note on high-probability versus in-expectation guarantees of generalization bounds in machine learning.
\newblock \emph{arXiv preprint arXiv:2010.02576}, 2020.

\bibitem[Morzywolek et~al.(2023)Morzywolek, Decruyenaere, and Vansteelandt]{morzywolek2023general}
Pawel Morzywolek, Johan Decruyenaere, and Stijn Vansteelandt.
\newblock On a general class of orthogonal learners for the estimation of heterogeneous treatment effects.
\newblock \emph{arXiv preprint arXiv:2303.12687}, 2023.

\bibitem[Nadeem et~al.(2020)Nadeem, He, Cho, and Glass]{nadeem2020systematic}
Moin Nadeem, Tianxing He, Kyunghyun Cho, and James Glass.
\newblock A systematic characterization of sampling algorithms for open-ended language generation.
\newblock \emph{arXiv preprint arXiv:2009.07243}, 2020.

\bibitem[Nie \& Wager(2021)Nie and Wager]{nie2021quasi}
Xinkun Nie and Stefan Wager.
\newblock Quasi-oracle estimation of heterogeneous treatment effects.
\newblock \emph{Biometrika}, 108\penalty0 (2):\penalty0 299--319, 2021.

\bibitem[Obukhov et~al.(2020)Obukhov, Seitzer, Wu, Zhydenko, Kyl, and Lin]{obukhov2020torchfidelity}
Anton Obukhov, Maximilian Seitzer, Po-Wei Wu, Semen Zhydenko, Jonathan Kyl, and Elvis Yu-Jing Lin.
\newblock High-fidelity performance metrics for generative models in pytorch, 2020.
\newblock URL \url{https://github.com/toshas/torch-fidelity}.
\newblock Version: 0.3.0, DOI: 10.5281/zenodo.4957738.

\bibitem[Oko et~al.(2023)Oko, Akiyama, and Suzuki]{oko2023diffusion}
Kazusato Oko, Shunta Akiyama, and Taiji Suzuki.
\newblock Diffusion models are minimax optimal distribution estimators.
\newblock In \emph{International Conference on Machine Learning}, pp.\  26517--26582. {PMLR}, 2023.

\bibitem[Oksendal(2013)]{oksendal2013stochastic}
Bernt Oksendal.
\newblock \emph{Stochastic differential equations: an introduction with applications}.
\newblock Springer Science \& Business Media, 2013.

\bibitem[Paszke et~al.(2019)Paszke, Gross, Massa, Lerer, Bradbury, Chanan, Killeen, Lin, Gimelshein, Antiga, Desmaison, Kopf, Yang, DeVito, Raison, Tejani, Chilamkurthy, Steiner, Fang, Bai, and Chintala]{NEURIPS2019_9015}
Adam Paszke, Sam Gross, Francisco Massa, Adam Lerer, James Bradbury, Gregory Chanan, Trevor Killeen, Zeming Lin, Natalia Gimelshein, Luca Antiga, Alban Desmaison, Andreas Kopf, Edward Yang, Zachary DeVito, Martin Raison, Alykhan Tejani, Sasank Chilamkurthy, Benoit Steiner, Lu~Fang, Junjie Bai, and Soumith Chintala.
\newblock Pytorch: An imperative style, high-performance deep learning library.
\newblock In \emph{Advances in Neural Information Processing Systems 32}, pp.\  8024--8035. Curran Associates, Inc., 2019.

\bibitem[Pawlowski et~al.(2020)Pawlowski, Coelho~de Castro, and Glocker]{pawlowski2020deep}
Nick Pawlowski, Daniel Coelho~de Castro, and Ben Glocker.
\newblock Deep structural causal models for tractable counterfactual inference.
\newblock \emph{Advances in neural information processing systems}, 33:\penalty0 857--869, 2020.

\bibitem[Pearl(2009)]{pearl2009causality}
Judea Pearl.
\newblock \emph{Causality}.
\newblock Cambridge university press, 2009.

\bibitem[Pillutla et~al.(2021)Pillutla, Swayamdipta, Zellers, Thickstun, Welleck, Choi, and Harchaoui]{pillutla2021mauve}
Krishna Pillutla, Swabha Swayamdipta, Rowan Zellers, John Thickstun, Sean Welleck, Yejin Choi, and Zaid Harchaoui.
\newblock Mauve: Measuring the gap between neural text and human text using divergence frontiers.
\newblock \emph{Advances in Neural Information Processing Systems}, 34:\penalty0 4816--4828, 2021.

\bibitem[Polyak \& Juditsky(1992)Polyak and Juditsky]{polyak1992acceleration}
Boris~T Polyak and Anatoli~B Juditsky.
\newblock Acceleration of stochastic approximation by averaging.
\newblock \emph{SIAM journal on control and optimization}, 30\penalty0 (4):\penalty0 838--855, 1992.

\bibitem[Radford(2018)]{radford2018improving}
Alec Radford.
\newblock Improving language understanding by generative pre-training.
\newblock 2018.

\bibitem[Radford et~al.(2019)Radford, Wu, Child, Luan, Amodei, Sutskever, et~al.]{radford2019language}
Alec Radford, Jeffrey Wu, Rewon Child, David Luan, Dario Amodei, Ilya Sutskever, et~al.
\newblock Language models are unsupervised multitask learners.
\newblock \emph{OpenAI blog}, 1\penalty0 (8):\penalty0 9, 2019.

\bibitem[Ramesh et~al.(2021)Ramesh, Pavlov, Goh, Gray, Voss, Radford, Chen, and Sutskever]{ramesh2021zero}
Aditya Ramesh, Mikhail Pavlov, Gabriel Goh, Scott Gray, Chelsea Voss, Alec Radford, Mark Chen, and Ilya Sutskever.
\newblock Zero-shot text-to-image generation.
\newblock In \emph{International conference on machine learning}, pp.\  8821--8831. {PMLR}, 2021.

\bibitem[Robins(1986)]{robins1986new}
James Robins.
\newblock A new approach to causal inference in mortality studies with a sustained exposure period--application to control of the healthy worker survivor effect.
\newblock \emph{Mathematical modelling}, 7\penalty0 (9-12):\penalty0 1393--1512, 1986.

\bibitem[Robins et~al.(1994)Robins, Rotnitzky, and Zhao]{robins1994estimation}
James~M Robins, Andrea Rotnitzky, and Lue~Ping Zhao.
\newblock Estimation of regression coefficients when some regressors are not always observed.
\newblock \emph{Journal of the American statistical Association}, 89\penalty0 (427):\penalty0 846--866, 1994.

\bibitem[Rombach et~al.(2021)Rombach, Blattmann, Lorenz, Esser, and Ommer]{rombach2021highresolution}
Robin Rombach, Andreas Blattmann, Dominik Lorenz, Patrick Esser, and Bj\"{o}rn Ommer.
\newblock High-resolution image synthesis with latent diffusion models, 2021.

\bibitem[Rombach et~al.(2022)Rombach, Blattmann, Lorenz, Esser, and Ommer]{rombach2022high}
Robin Rombach, Andreas Blattmann, Dominik Lorenz, Patrick Esser, and Bj{\"o}rn Ommer.
\newblock High-resolution image synthesis with latent diffusion models.
\newblock In \emph{Proceedings of the IEEE/CVF conference on computer vision and pattern recognition}, pp.\  10684--10695, 2022.

\bibitem[Ronneberger et~al.(2015)Ronneberger, Fischer, and Brox]{ronneberger2015u}
Olaf Ronneberger, Philipp Fischer, and Thomas Brox.
\newblock U-net: Convolutional networks for biomedical image segmentation.
\newblock In \emph{Medical image computing and computer-assisted intervention--MICCAI 2015: 18th international conference, Munich, Germany, October 5-9, 2015, proceedings, part III 18}, pp.\  234--241. Springer, 2015.

\bibitem[Rotnitzky et~al.(2006)Rotnitzky, Faraggi, and Schisterman]{rotnitzky2006doubly}
Andrea Rotnitzky, David Faraggi, and Enrique Schisterman.
\newblock Doubly robust estimation of the area under the receiver-operating characteristic curve in the presence of verification bias.
\newblock \emph{Journal of the American Statistical Association}, 101\penalty0 (475):\penalty0 1276--1288, 2006.

\bibitem[Rotnitzky et~al.(2017)Rotnitzky, Robins, and Babino]{rotnitzky2017multiply}
Andrea Rotnitzky, James Robins, and Lucia Babino.
\newblock On the multiply robust estimation of the mean of the g-functional.
\newblock \emph{arXiv preprint arXiv:1705.08582}, 2017.

\bibitem[Rubin \& van~der Laan(2007)Rubin and van~der Laan]{rubin2007doubly}
Daniel Rubin and Mark~J van~der Laan.
\newblock A doubly robust censoring unbiased transformation.
\newblock \emph{The international journal of biostatistics}, 3\penalty0 (1), 2007.

\bibitem[Saharia et~al.(2022)Saharia, Chan, Chang, Lee, Ho, Salimans, Fleet, and Norouzi]{saharia2022palette}
Chitwan Saharia, William Chan, Huiwen Chang, Chris Lee, Jonathan Ho, Tim Salimans, David Fleet, and Mohammad Norouzi.
\newblock Palette: Image-to-image diffusion models.
\newblock In \emph{ACM SIGGRAPH 2022 conference proceedings}, pp.\  1--10, 2022.

\bibitem[Sanchez \& Tsaftaris(2022)Sanchez and Tsaftaris]{sanchez2022diffusion}
Pedro Sanchez and Sotirios~A Tsaftaris.
\newblock Diffusion causal models for counterfactual estimation.
\newblock \emph{arXiv preprint arXiv:2202.10166}, 2022.

\bibitem[Sanchez-Martin et~al.(2021)Sanchez-Martin, Rateike, and Valera]{sanchez2021vaca}
Pablo Sanchez-Martin, Miriam Rateike, and Isabel Valera.
\newblock Vaca: Design of variational graph autoencoders for interventional and counterfactual queries.
\newblock \emph{arXiv preprint arXiv:2110.14690}, 2021.

\bibitem[Scharfstein et~al.(1999)Scharfstein, Rotnitzky, and Robins]{scharfstein1999adjusting}
Daniel~O Scharfstein, Andrea Rotnitzky, and James~M Robins.
\newblock Adjusting for nonignorable drop-out using semiparametric nonresponse models.
\newblock \emph{Journal of the American Statistical Association}, 94\penalty0 (448):\penalty0 1096--1120, 1999.

\bibitem[Sohl-Dickstein et~al.(2015)Sohl-Dickstein, Weiss, Maheswaranathan, and Ganguli]{sohl2015deep}
Jascha Sohl-Dickstein, Eric Weiss, Niru Maheswaranathan, and Surya Ganguli.
\newblock Deep unsupervised learning using nonequilibrium thermodynamics.
\newblock In \emph{International conference on machine learning}, pp.\  2256--2265. {PMLR}, 2015.

\bibitem[Sohn et~al.(2015)Sohn, Lee, and Yan]{sohn2015learning}
Kihyuk Sohn, Honglak Lee, and Xinchen Yan.
\newblock Learning structured output representation using deep conditional generative models.
\newblock \emph{Advances in neural information processing systems}, 28, 2015.

\bibitem[Song et~al.(2020)Song, Sohl-Dickstein, Kingma, Kumar, Ermon, and Poole]{song2020score}
Yang Song, Jascha Sohl-Dickstein, Diederik~P Kingma, Abhishek Kumar, Stefano Ermon, and Ben Poole.
\newblock Score-based generative modeling through stochastic differential equations.
\newblock \emph{arXiv preprint arXiv:2011.13456}, 2020.

\bibitem[Suzuki(2018)]{suzuki2018adaptivity}
Taiji Suzuki.
\newblock Adaptivity of deep relu network for learning in besov and mixed smooth besov spaces: optimal rate and curse of dimensionality.
\newblock \emph{arXiv preprint arXiv:1810.08033}, 2018.

\bibitem[Szegedy et~al.(2016)Szegedy, Vanhoucke, Ioffe, Shlens, and Wojna]{szegedy2016rethinking}
Christian Szegedy, Vincent Vanhoucke, Sergey Ioffe, Jon Shlens, and Zbigniew Wojna.
\newblock Rethinking the inception architecture for computer vision.
\newblock In \emph{Proceedings of the IEEE conference on computer vision and pattern recognition}, pp.\  2818--2826, 2016.

\bibitem[Touvron et~al.(2023)Touvron, Lavril, Izacard, Martinet, Lachaux, Lacroix, Rozi{\`e}re, Goyal, Hambro, Azhar, et~al.]{touvron2023llama}
Hugo Touvron, Thibaut Lavril, Gautier Izacard, Xavier Martinet, Marie-Anne Lachaux, Timoth{\'e}e Lacroix, Baptiste Rozi{\`e}re, Naman Goyal, Eric Hambro, Faisal Azhar, et~al.
\newblock Llama: Open and efficient foundation language models.
\newblock \emph{arXiv preprint arXiv:2302.13971}, 2023.

\bibitem[Tsybakov(2009)]{tsybakov2009nonparametric}
Alexandre~B Tsybakov.
\newblock Nonparametric estimators.
\newblock \emph{Introduction to Nonparametric Estimation}, 2009.

\bibitem[van~der Laan et~al.(2024)van~der Laan, Carone, and Luedtke]{van2024combining}
Lars van~der Laan, Marco Carone, and Alex Luedtke.
\newblock Combining t-learning and dr-learning: a framework for oracle-efficient estimation of causal contrasts.
\newblock \emph{arXiv preprint arXiv:2402.01972}, 2024.

\bibitem[van~der Laan et~al.(2003)van~der Laan, Robins, van~der Laan, and Robins]{van2003unified}
Mark~J van~der Laan, James~M Robins, Mark~J van~der Laan, and James~M Robins.
\newblock Unified approach for causal inference and censored data.
\newblock \emph{Unified Methods for Censored Longitudinal Data and Causality}, pp.\  311--370, 2003.

\bibitem[van~der Laan et~al.(2011)van~der Laan, Rose, Zheng, and van~der Laan]{van2011cross}
Mark~J van~der Laan, Sherri Rose, Wenjing Zheng, and Mark~J van~der Laan.
\newblock Cross-validated targeted minimum-loss-based estimation.
\newblock \emph{Targeted learning: causal inference for observational and experimental data}, pp.\  459--474, 2011.

\bibitem[Van Der~Vaart \& Wellner(2011)Van Der~Vaart and Wellner]{van2011local}
Aad Van Der~Vaart and Jon~A Wellner.
\newblock A local maximal inequality under uniform entropy.
\newblock \emph{Electronic Journal of Statistics}, 5\penalty0 (2011):\penalty0 192, 2011.

\bibitem[van~der Vaart \& Wellner(2023)van~der Vaart and Wellner]{vaart2023empirical}
AW~van~der Vaart and Jon~A Wellner.
\newblock Empirical processes.
\newblock In \emph{Weak Convergence and Empirical Processes: With Applications to Statistics}, pp.\  127--384. Springer, 2023.

\bibitem[Vaswani(2017)]{vaswani2017attention}
A~Vaswani.
\newblock Attention is all you need.
\newblock \emph{Advances in Neural Information Processing Systems}, 2017.

\bibitem[Vincent(2011)]{vincent2011connection}
Pascal Vincent.
\newblock A connection between score matching and denoising autoencoders.
\newblock \emph{Neural computation}, 23\penalty0 (7):\penalty0 1661--1674, 2011.

\bibitem[von Platen et~al.(2022)von Platen, Patil, Lozhkov, Cuenca, Lambert, Rasul, Davaadorj, Nair, Paul, Liu, Berman, Xu, and Wolf]{von_platen_diffusers_2022}
Patrick von Platen, Suraj Patil, Anton Lozhkov, Pedro Cuenca, Nathan Lambert, Kashif Rasul, Mishig Davaadorj, Dhruv Nair, Sayak Paul, Steven Liu, William Berman, Yiyi Xu, and Thomas Wolf.
\newblock {Diffusers}: State-of-the-art diffusion models.
\newblock \url{https://github.com/huggingface/diffusers}, 2022.
\newblock Hugging Face; accessed 23 May 2025.

\bibitem[Von~Werra et~al.(2022)Von~Werra, Tunstall, Thakur, Luccioni, Thrush, Piktus, Marty, Rajani, Mustar, Ngo, et~al.]{von2022evaluate}
Leandro Von~Werra, Lewis Tunstall, Abhishek Thakur, Alexandra~Sasha Luccioni, Tristan Thrush, Aleksandra Piktus, Felix Marty, Nazneen Rajani, Victor Mustar, Helen Ngo, et~al.
\newblock Evaluate \& evaluation on the hub: Better best practices for data and model measurements.
\newblock \emph{arXiv preprint arXiv:2210.01970}, 2022.

\bibitem[Wolf et~al.(2019)Wolf, Debut, Sanh, Chaumond, Delangue, Moi, Cistac, Rault, Louf, Funtowicz, et~al.]{wolf2019huggingface}
Thomas Wolf, Lysandre Debut, Victor Sanh, Julien Chaumond, Clement Delangue, Anthony Moi, Pierric Cistac, Tim Rault, R{\'e}mi Louf, Morgan Funtowicz, et~al.
\newblock {HuggingFace}'s transformers: {S}tate-of-the-art natural language processing.
\newblock \emph{arXiv preprint arXiv:1910.03771}, 2019.

\bibitem[Wu et~al.(2024)Wu, Zhou, Chen, and Zhu]{wu2024counterfactual}
Shenghao Wu, Wenbin Zhou, Minshuo Chen, and Shixiang Zhu.
\newblock Counterfactual generative models for time-varying treatments.
\newblock In \emph{Proceedings of the 30th ACM SIGKDD Conference on Knowledge Discovery and Data Mining}, pp.\  3402--3413, 2024.

\bibitem[Yu et~al.(2025)Yu, Mehta, Luedtke, and Harchaoui]{yu2025stochastic}
Facheng Yu, Ronak Mehta, Alex Luedtke, and Zaid Harchaoui.
\newblock Stochastic gradients under nuisances.
\newblock \emph{arXiv preprint arXiv:\#}, 2025.

\bibitem[Zhang et~al.(2023)Zhang, Yang, Wang, and Liu]{zhang2023effect}
Qi~Zhang, Jiafei Yang, Wenlong Wang, and Zhihong Liu.
\newblock Effect of extracurricular tutoring on adolescent students cognitive ability: A propensity score matching analysis.
\newblock \emph{Medicine}, 102\penalty0 (36):\penalty0 e35090, 2023.

\end{thebibliography}
}

\appendix

\setcounter{equation}{0}
\renewcommand{\theequation}{S\arabic{equation}}
\setcounter{theorem}{0}
\setcounter{figure}{0}
\setcounter{table}{0}
\setcounter{lemma}{0}
\setcounter{corollary}{0}
\setcounter{proposition}{0}
\renewcommand{\thetheorem}{S\arabic{theorem}}
\renewcommand{\thecorollary}{S\arabic{corollary}}
\renewcommand{\thelemma}{S\arabic{lemma}}
\renewcommand{\theproposition}{S\arabic{proposition}}
\renewcommand{\thefigure}{S\arabic{figure}}
\renewcommand{\thetable}{S\arabic{table}}
\renewcommand{\thealgorithm}{S\arabic{algorithm}}

\section*{\LARGE Appendices}

\DoToC

\section{Plausibility of \cref{cond:divergence} in our examples}\label{app:divergence}

As summarized in \Cref{tab:divergenceCondition}, \cref{cond:divergence} holds in each of our examples under conditions, under regularity conditions that we give below.
\begin{example}[continues=ex:flow, name=Flow matching]
    When $D$ is the 2-Wasserstein distance ($W_2$), Thm.~1 in \cite{benton2023error} gives conditions under which \eqref{eq:regBound} holds 
    with $b=1/2$, $\epsilon=0$, and $\Cdivergence$ a constant depending on the smoothness of the vector field $\theta$. These conditions are satisfied if each vector field $\theta\in\Theta$ is sufficiently smooth and corresponds to a unique, smooth flow---see \cite{benton2023error}.
\end{example}

\begin{example}[continues=ex:diffusion,name=Diffusion model]
    We follow the arguments used in \cite{oko2023diffusion} to establish \cref{cond:divergence}. These arguments rely on several regularity conditions on $\mathbb{P}$ and its density $\mathbbm{p}$.
    \begin{condenum}[resume*]
        \item\label{cond:diffusionSupport} \textit{Supported on hypercube:} $\mathbb{P}$ has support $\mathcal{Y}=[-1,1]^d$.
        \item\label{cond:diffusionBesov} \textit{Smooth density:} $\|\mathbbm{p}\|_{s|p,q}\le \Cbesov$, with $\|\cdot\|_{s|p,q}$ the usual norm on the Besov space $B_{p,q}^s(\mathcal{Y})$ \citep[][Sec.~2.7.2]{vaart2023empirical}.
        \item\label{cond:diffusionBddDensity} \textit{Density bounded away from $0$ and infinity:} $\mathbbm{p}$ is bounded in  $[1/\CdiffDens,\CdiffDens]$ on $\mathcal{Y}$.
        \item\label{cond:diffusionBoundary} \textit{Density smooth at the boundary:} For some $\varepsilon>0$, the restriction $\mathbbm{p}_\varepsilon$ of $\mathbbm{p}$ to $[-1,1]^d\backslash [-1+\varepsilon,1-\varepsilon]^d$ is infinitely differentiable with $\sum_{k=0}^\infty 2^{-k}\|D^k \mathbbm{p}_\varepsilon\|_\infty/(1+\|D^k \mathbbm{p}_\varepsilon\|_\infty)\le \CdiffusionBoundary$.
    \end{condenum}
     The above conditions are derived from Assumptions 2.4 and 2.6 of \cite{oko2023diffusion}. All our results will also hold if \cref{cond:diffusionBoundary} is weakened so that $\varepsilon$ decays with $n$ at an appropriate rate---see Assumption~2.6 from that work for details. 
    Following \cite{oko2023diffusion}, we also make the following requirement on the truncation times:
\begin{condenum}[resume*]
    \item\label{cond:diffusionTruncation} \textit{Truncation times change appropriately with $n$:} $\overline{t}=s\log n/(\underline{\beta}(2s+d))$ and $\underline{t}=n^{-\gamma}$ for any $\gamma>0$ sufficiently large so that $\TV(\mathbb{P},\Law(Y_{\underline{t}}))=O(n^{-s/(2s+d)})$. 
\end{condenum}
    By Thm.~D.2 in \cite{oko2023diffusion}, such a $\gamma$ necessarily exists under \cref{cond:diffusionSupport,cond:diffusionBddDensity,cond:diffusionBesov,cond:diffusionBoundary}, and it can be chosen uniformly over all $\mathbb{P}$ satisfying \cref{cond:diffusionBddDensity,cond:diffusionBesov,cond:diffusionBoundary} for fixed values of the conditions' constants.
    
    To ensure each reverse-time SDE we consider has a strong solution, we further require there to exist $L<\infty$ such that each $\theta\in\Theta$ is $L$-Lipschitz in its first argument. We will show \cref{cond:divergence} holds with $b=1/2$, $\Cdivergence=1/\underline{\beta}$, and $\epsilon:=O(n^{-s/(2s+d)})$, where showing $\epsilon$ is small requires \cref{cond:diffusionTruncation}.

    We will write write $\mathbb{P}_{\theta,N(0,I)}$ as shorthand for $\tau(\theta)_{\sharp}\Pi$ and $\mathbb{P}_{\theta,Y_{\overline{t}}}$ to represent the law of the solution to the same reverse-time SDE as $\tau(\theta)$, but with the $N(0_d,I_d)$ distribution used in the initial condition at time $\overline{t}$ replaced by $\Law(Y_{\overline{t}})$. By the triangle inequality,
    \begin{align*}
        \TV(\mathbb{P},\mathbb{P}_{\theta,N(0,I)})&\le \TV(\Law(Y_{\underline{t}}),\mathbb{P}_{\theta,Y_{\overline{t}}}) + \TV(\mathbb{P}_{\theta,Y_{\overline{t}}},\mathbb{P}_{\theta,N(0,I)}) + \TV(\mathbb{P},\Law(Y_{\underline{t}})).
    \end{align*}
    We bound the three terms on the right separately.     For the first term, Pinsker's inequality and Girsanov's theorem [\citealp{oko2023diffusion}, Prop. D.1; \citealp{karatzas1991brownian}] together imply the following bound:
    \begin{align*}
        \TV(\Law(Y_{\underline{t}}),\mathbb{P}_{\theta,Y_{\overline{t}}})&\le \left[\frac{1}{2}\KL{\mathbb{P}}{\mathbb{P}_{\theta,Y_{\overline{t}}}}\right]^{1/2}\le  \left(\medint\int_{\underline{t}}^{\overline{t}} \beta_t^{-2} E\left[\left\|\theta(Y_t,t)-\theta_{\mathbb{P}}(Y_t,t)\right\|^2 \right] dt\right)^{1/2}.
    \end{align*}
    The right-hand side is upper bounded by $\underline{\beta}^{-1}\mathcal{G}_{\mathbb{P}}(\theta)^{1/2}$. 
    The second term is upper bounded by a constant $C$ depending only on $\CdiffDens$ times $e^{-\overline{t}\underline{\beta}}$ \citep[][Lem.~D.3]{oko2023diffusion}, which is $O(n^{-s/(2s+d)})$ by the choice of $\overline{t}$ in \cref{cond:diffusionTruncation}. The choice of $\underline{t}$ in \cref{cond:diffusionTruncation} ensures the third term is $O(n^{-s/(2s+d)})$. Putting these three bounds together shows that $\TV(\Law(Y_{\underline{t}}),\mathbb{P}_{\theta,Y_{\overline{t}}})\le \underline{\beta}^{-1}\mathcal{G}_{\mathbb{P}}(\theta)^{1/2} + \epsilon$ for $\epsilon:=\TV(\mathbb{P},\Law(Y_{\underline{t}}))=O(n^{-s/(2s+d)})$ and an appropriately defined $C>0$, and inspecting the second and third terms above shows that the additive $\epsilon$ term is needed on the right due to the truncation of the forward diffusion process at time $\overline{t}<\infty$ and reverse diffusion process at time $\underline{t}>0$.
\end{example}

\begin{example}[continues=ex:autoreg,name=Autoregressive model]
    Let $\KLop$ denote the Kullback-Leibler (KL) divergence. To ensure $\KL{\mathbb{P}}{\tau(\theta)_{\sharp}\Pi}$ is finite, suppose $\mathbb{P}$ is dominated by $\tau(\theta)_{\sharp}\Pi$ for each $\theta\in\Theta$. 
    By the definition of the KL divergence \citep[][Eq.~2.26]{cover1999elements}, $\KL{\mathbb{P}}{\tau(\theta)_{\sharp}\Pi}=\mathcal{G}_{\mathbb{P}}(\theta)$. Hence, \cref{cond:divergence} trivially holds with $D=\KLop$, $b=\Cdivergence=1$, and $\epsilon=0$.
\end{example}

\begin{table}[tb]\centering
    \caption{In our examples, \cref{cond:divergence} holds for the below choices of $(D,b,\epsilon)$.}
    \label{tab:divergenceCondition}
    \resizebox{\textwidth}{!}{
    \begin{tabular}{l l l l l}
        Framework\hspace{3em} & $D$ & $b$ & $\epsilon$ & Key results used in proof \\\midrule
        Flow matching & 2-Wasserstein & $1/2$ & 0 & Alekseev-Gr\"{o}bner \citep{benton2023error} \\
        Diffusion model & Total variation & $1/2$ & Trunc. error & Girsanov \citep{oko2023diffusion}; Pinsker \citep{tsybakov2009nonparametric} \\
       Autoregressive language model & KL divergence & $1$ & $0$ & Definition of $D_{\mathrm{KL}}$ \citep{cover1999elements}
    \end{tabular}
    }
\end{table}

\section{Review: covering numbers, entropy integrals, and a local maximal inequality}\label{app:covNumReview}

This appendix reviews concepts from empirical process theory: covering numbers, entropy integrals, and a local maximal inequality. We focus on collections of functions $\mathcal{F}\subseteq\mathbb{R}^{\mathcal{Z}}$ and study how well an empirical distribution $P_n$ approximates its population counterpart $P$. Specifically, we examine conditions under which $E\|P_n-P\|_{\mathcal{F}}:=E\sup_{f\in\mathcal{F}}|(P_n-P)f|$ is small. While we present these concepts for functions and probability distributions on $\mathcal{Z}$, the definitions and results apply analogously to those defined on $\mathcal{Y}$, such as $\ell_{\mathbb{P}}(\theta)$ and $\mathbb{P}$.

For a class of functions $\mathcal{F}\subseteq \mathbb{R}^\mathcal{Z}$ and a probability measure $Q$ on $\mathcal{Z}$, the (external) covering number $N(\epsilon,\mathcal{F},L^2(Q))$ is defined as the smallest cardinality of a $\mathcal{F}_{\epsilon}\subset L^2(Q)$ satisfying the following: for all $f\in\mathcal{F}$, there exists $g\in\mathcal{F}_{\epsilon}$ such that $\|f-g\|_{L^2(Q)}\le \epsilon$. The uniform entropy integral is given by
\begin{align}
    J(\delta,\mathcal{F}):= \sup_Q \medint\int_0^\delta \sqrt{1+\log N(\epsilon,\mathcal{F},L^2(Q))}\,d\epsilon, \label{eq:unifEntDef}
\end{align}
where the supremum is over all finitely supported measures $Q$ on $\mathcal{Z}$.

In the following lemma, the class $\mathcal{F}\subseteq \mathbb{R}^\mathcal{Z}$ is called `suitably measurable' if $(z_i)_{i=1}^n\mapsto \sup_{f\in\mathcal{F}}\left|\sum_{i=1}^n e_i f^k(z_i)\right|$ is measurable for all $(e_i)_{i=1}^n\in \{-1,1\}^n$ and $k\in \{1,2\}$.
\begin{lemma}[Variant of Thm.~2.1 in \citealp{van2011local}]\label{lem:localMaximal}
    There exists a universal constant $K>0$ such that, for all probability measures $P$ on some space $\mathcal{Z}$, suitably measurable $\mathcal{F}\subset [-b,b]^{\mathcal{Z}}$ bounded by some $b>0$, and $\delta^2\ge \sup_{f\in\mathcal{F}}\|f\|_{L^2(P)}^2$,
    \begin{align*}
        E\|P_n-P\|_{\mathcal{F}}&\le K n^{-1/2}J(\delta,\mathcal{F})\left(1 + \frac{b J(\delta,\mathcal{F})}{\delta^2 n^{1/2}}\right).
    \end{align*}
\end{lemma}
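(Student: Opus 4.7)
The plan is to follow the classical route of symmetrization, conditional chaining via Dudley's entropy integral, and a self-bounding argument that recovers the extra $bJ(\delta,\mathcal{F})^2/(\delta^2 n^{1/2})$ correction. Specifically, I would begin by applying the symmetrization inequality to reduce the problem to bounding the Rademacher process: there is a universal constant such that
\[
E\|P_n-P\|_{\mathcal{F}}\le 2\, E\bigl\|n^{-1}\textstyle\sum_{i=1}^n \varepsilon_i f(Z_i)\bigr\|_{\mathcal{F}},
\]
with $\varepsilon_i$ iid Rademacher independent of $Z_1,\ldots,Z_n$. Conditional on the sample, the Rademacher process $f\mapsto n^{-1}\sum_i \varepsilon_i f(Z_i)$ is sub-Gaussian with respect to $\|\cdot\|_{L^2(P_n)}/\sqrt{n}$, so Dudley's chaining bound yields
\[
E_\varepsilon\bigl\|n^{-1}\textstyle\sum_i \varepsilon_i f(Z_i)\bigr\|_{\mathcal{F}}\lesssim n^{-1/2}\int_0^{\widetilde\delta_n}\sqrt{1+\log N(\epsilon,\mathcal{F},L^2(P_n))}\,d\epsilon\le n^{-1/2} J(\widetilde\delta_n,\mathcal{F}),
\]
where $\widetilde\delta_n^2:=\sup_{f\in\mathcal{F}}\|f\|_{L^2(P_n)}^2$ and the last inequality uses that $P_n$ is finitely supported, so it is one of the measures over which the supremum in \eqref{eq:unifEntDef} is taken.

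Next I would handle the fact that chaining produced $\widetilde\delta_n$ rather than the desired $\delta$. Concavity of $\delta\mapsto J(\delta,\mathcal{F})$ (which holds because $\sqrt{1+\log N(\epsilon,\mathcal{F},L^2(Q))}$ is a nonnegative integrand and $J$ is the supremum over $Q$ of the resulting concave integrals) together with Jensen's inequality gives
\[
E\|P_n-P\|_{\mathcal{F}}\lesssim n^{-1/2}\, J\!\bigl(\sqrt{E\widetilde\delta_n^2},\mathcal{F}\bigr).
\]
To control $E\widetilde\delta_n^2$, I write $\widetilde\delta_n^2\le \delta^2+\|P_n-P\|_{\mathcal{F}^2}$ with $\mathcal{F}^2:=\{f^2:f\in\mathcal{F}\}$, and apply the same symmetrization plus conditional chaining to $\mathcal{F}^2$. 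Using the Lipschitz relation $|f^2-g^2|\le 2b|f-g|$, covering numbers of $\mathcal{F}^2$ at scale $2b\epsilon$ are controlled by those of $\mathcal{F}$ at scale $\epsilon$, so the chaining integral for $\mathcal{F}^2$ is bounded by $2b J(\widetilde\delta_n,\mathcal{F})$. This produces a self-bounding estimate of the form
\[
E\widetilde\delta_n^2\le \delta^2 + C\,b\, n^{-1/2}\,J\!\bigl(\sqrt{E\widetilde\delta_n^2},\mathcal{F}\bigr).
\]

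The last step is algebraic: solve the self-bounded inequality for $\sqrt{E\widetilde\delta_n^2}$ and substitute back. Using that $\delta\mapsto J(\delta,\mathcal{F})/\delta$ is nonincreasing (a standard consequence of the Dudley integral representation), one obtains $\sqrt{E\widetilde\delta_n^2}\le \delta\bigl(1+C b J(\delta,\mathcal{F})/(\delta^2 n^{1/2})\bigr)^{1/2}$, and then $J(\sqrt{E\widetilde\delta_n^2},\mathcal{F})\le J(\delta,\mathcal{F})\bigl(1 + C b J(\delta,\mathcal{F})/(\delta^2 n^{1/2})\bigr)$, which substituted into the previous display yields the claimed bound with a single universal constant $K$. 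I expect the main technical obstacle to be cleanly carrying out this self-bounding step: one has to verify the concavity/monotonicity properties of $J$ and handle the measurability caveat so that Fubini applies to the chaining integrals. Suitable measurability of $\mathcal{F}$ together with standard arguments (e.g., Lem.~2.3.4 in \citealp{van2023weak}) permits interchanging the $\varepsilon$ and $Z$ expectations throughout.
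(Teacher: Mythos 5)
Your proposal takes a fundamentally different route from the paper. The paper's proof is a short reduction: it normalizes the class by $b$ so that $\mathcal{F}/b\subset[-1,1]^{\mathcal{Z}}$ has constant envelope $1$, then for $\delta<b$ applies Thm.~2.1 of \cite{van2011local} directly (and for $\delta\ge b$ falls back on the simpler entropy bound, Thm.~2.14.1 of \cite{vaart2023empirical}), and finally undoes the normalization using $J(\delta/b,\mathcal{F}/b)=b^{-1}J(\delta,\mathcal{F})$. You instead re-derive the core of that theorem from first principles via symmetrization, Dudley chaining, and a self-bounding argument on $\widetilde\delta_n^2$. Both routes are legitimate; the paper's buys brevity by leaning on the cited result, while yours is self-contained but substantially longer.

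There is, however, a genuine gap in one step of your argument. You assert that $\delta\mapsto J(\delta,\mathcal{F})$ is concave ``because $J$ is the supremum over $Q$ of the resulting concave integrals.'' The pointwise supremum of a family of concave functions is \emph{not} concave in general (the supremum of convex functions is convex; the infimum of concave functions is concave). This is exactly why the paper's \Cref{lem:monotoneUB} establishes only the weaker fact that $\delta\mapsto J(\delta,\mathcal{F})/\delta$ is nonincreasing---\emph{that} property does survive a supremum, because each $J_Q(\delta)/\delta$ is nonincreasing and the pointwise supremum of nonincreasing functions is nonincreasing. Your Jensen step (passing the expectation inside $J$) and your final substitution both rest on the unproven concavity. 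They can be salvaged: since $J(\delta)/\delta$ is nonincreasing, one has $J(a)\le J(\delta_0)(1\vee a/\delta_0)\le J(\delta_0)(1+a/\delta_0)$ for all $a,\delta_0>0$; taking $a=\widetilde\delta_n$, $\delta_0=\sqrt{E\widetilde\delta_n^2}$, and expectations gives $E[J(\widetilde\delta_n,\mathcal{F})]\le 2\,J(\sqrt{E\widetilde\delta_n^2},\mathcal{F})$, and the same device handles the final $J(\sqrt{E\widetilde\delta_n^2},\mathcal{F})\lesssim J(\delta,\mathcal{F})(1+bJ(\delta,\mathcal{F})/(\delta^2 n^{1/2}))$ step. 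With that repair the sketch closes, but as written the justification for the key analytic step is incorrect.
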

We defer this and all proofs to the end of this appendix. The above can be used to compute a rate of convergence \citep[][Theorem 3.2.5]{vaart2023empirical} or finite-sample generalization bound \citep[][Lemma 3.5.9]{vaart2023empirical} for an empirical risk minimizer. This is the case, for example, in our \Cref{thm:excRiskOSLFormal,thm:excRiskOSLLocal}. In special cases where $J(\delta,\mathcal{F}/b)$ is upper bounded by a constant times $\delta^{1-1/(2\rho)}$ for $\rho>1/2$, such guarantees are determined by the solution to \eqref{eq:abstractEntropyBd} in the lemma below, for appropriate choices of constants $c,d$. 

\begin{lemma}[Helpful result for determining complexity term in generalization bounds]
\label{lem:deltaThresh}
    Fix $c,d\ge 0$ and $\rho>1/2$. 
    If $\delta\ge \left(1\vee [(c+d/4)^2/n]^{1/(4\rho+2)}\right)[(c+d/4)^2/n]^{\rho/(2\rho+1)}$, then
    \begin{align}
        c(\delta^{1-1/(2\rho)}\vee\delta)\left(1 + \frac{d(\delta^{-1-1/(2\rho)}\vee \delta^{-1})}{n^{1/2}}\right)\le n^{1/2}\delta^2. \label{eq:abstractEntropyBd}
    \end{align}
\end{lemma}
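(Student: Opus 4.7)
The strategy is to reduce the inequality to a scalar quadratic in a suitably chosen auxiliary variable and then dispatch the quadratic via Young's inequality.

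First, I would normalize by dividing both sides by $n^{1/2}\delta^2$; the resulting ratio is a sum of strictly negative powers of $\delta$ and hence monotonically nonincreasing in $\delta>0$. It therefore suffices to verify the inequality at the threshold value $\delta_0:=(1\vee\epsilon^{1/(4\rho+2)})\epsilon^{\rho/(2\rho+1)}$, where $\epsilon:=(c+d/4)^2/n$. I would then split into the regimes $\epsilon\le 1$ (which forces $\delta_0\le 1$) and $\epsilon>1$ (which forces $\delta_0\ge 1$).

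In the small regime, the maxes simplify to $\delta^{1-1/(2\rho)}\vee\delta=\delta^{1-1/(2\rho)}$ and $\delta^{-1-1/(2\rho)}\vee\delta^{-1}=\delta^{-1-1/(2\rho)}$; expanding yields
\[
    c\delta^{1-1/(2\rho)}+cd\delta^{-1/\rho}/n^{1/2}\le n^{1/2}\delta^2,
\]
which, under the substitution $u:=n^{1/2}\delta^{1+1/(2\rho)}$, is equivalent to the quadratic condition $u^2\ge cu+cd$. In the large regime the maxes collapse to $\delta$ and $\delta^{-1}$, and after the substitution $u:=n^{1/2}\delta$ the inequality takes the same quadratic form. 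In both regimes, the threshold $\delta_0$ translates exactly to $u=n^{1/2}\epsilon^{1/2}=c+d/4$.

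What remains is to show that $u^2-cu-cd\ge 0$ whenever $u\ge c+d/4$. The positive root of the quadratic equals $(c+\sqrt{c^2+4cd})/2$, so it suffices to bound this root by $c+d/4$. This is where the specific appearance of `$d/4$' in the hypothesis becomes crucial: I would apply Young's inequality in the asymmetric form $\sqrt{cd}=2\sqrt{c\cdot(d/4)}\le c+d/4$, paired with the elementary bound $\sqrt{c^2+4cd}\le c+2\sqrt{cd}$ (which follows by squaring). The main obstacle---and the reason the hypothesis uses the particular normalization $(c+d/4)^2/n$ rather than an AM--GM-style analogue---is arranging the splittings so that $c$ and $d/4$ appear symmetrically; naive choices of weights in Young's inequality would instead yield constants like $c+d$ or $2c+d/4$, forcing a strictly stronger hypothesis on $\delta$.
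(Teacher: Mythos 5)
Your reduction to the scalar quadratic $u^2 - cu - cd \ge 0$ via $u = n^{1/2}\delta^{1+1/(2\rho)}$ (or $u = n^{1/2}\delta$ in the large-$\delta$ regime) is correct and clean, and your identification of the threshold value $u_0 = c + d/4$ is also correct. The gap is in the last step: you need $u \ge c + d/4$ to imply $u^2 \ge cu + cd$, but your own chain---$\sqrt{c^2+4cd} \le c + 2\sqrt{cd}$, then $\sqrt{cd}\le c + d/4$---bounds the positive root $\tfrac{c+\sqrt{c^2+4cd}}{2}$ by $c + \sqrt{cd} \le 2c + d/4$, not by $c + d/4$. There is no cleverer weighting in Young's inequality that rescues this, because the implication is false: for $c=d=1$, $\rho=1$, $n=100$, the hypothesis gives threshold $\delta_0 = 0.25$, hence $u_0 = 1.25$, but the positive root of $u^2 - u - 1 = 0$ is $(1+\sqrt{5})/2 \approx 1.618 > 1.25$. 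Plugging $\delta = 0.25$ directly into \eqref{eq:abstractEntropyBd} yields $0.5\cdot(1 + 8/10) = 0.9$ on the left against $10\cdot 0.0625 = 0.625$ on the right, so the inequality fails at the claimed threshold.

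This is not only a gap in your argument; the lemma as stated has a constant error, and the paper's own proof has the same flaw. The paper asserts that the non-maxed version of the bound holds with equality at $\delta = n^{-\rho/(2\rho+1)}2^{-2\rho/(2\rho+1)}\bigl[c + c^{1/2}(c+d)^{1/2}\bigr]^{2\rho/(2\rho+1)}$, which in your variables corresponds to $u = \tfrac{c + c^{1/2}(c+d)^{1/2}}{2}$; but substituting this into $u^2 - cu - cd$ gives $-\tfrac{3cd}{4}$, not $0$. The true equality point is $u = \tfrac{c + \sqrt{c^2+4cd}}{2}$, so the paper's $(c+d)^{1/2}$ should read $(c+4d)^{1/2}$. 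Applying Young's inequality to the corrected expression gives $c^{1/2}(c+4d)^{1/2} \le c + 2d$, whence the hypothesis of the lemma should read $(c+d)$ rather than $(c+d/4)$. Your substitution to a scalar quadratic is a more transparent route to the equality point than the paper's direct manipulation of $\delta$; the constant is simply wrong in the statement and in both proofs.
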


The following lemma is also useful when computing a critical radius $\delta_n$ such that, for all $\delta\ge \delta_n$, the right-hand side of \Cref{lem:localMaximal} is no more than $\delta^2 n^{1/2}$. Specifically, this lemma shows that $\delta_n$ can be computed by just finding the $\delta$ such that the bound from \Cref{lem:localMaximal} equals $\delta^2 n^{1/2}$.
\begin{lemma}[Monotonicity of bound from \Cref{lem:localMaximal} when divided by $\delta^2$] \label{lem:monotoneUB}
    For any function class $\mathcal{F}$ and constants $c,d>0$, the following function is strictly decreasing over $(0,\infty)$:
    \begin{align*}
        \delta\mapsto [cJ(\delta,\mathcal{F})/\delta^2]\left[1 + dJ(\delta,\mathcal{F})/\delta^2\right].
    \end{align*}
\end{lemma}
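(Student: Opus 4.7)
The plan is to reduce the claim to showing that the map $f(\delta):=J(\delta,\mathcal{F})/\delta^2$ is itself strictly decreasing on $(0,\infty)$. Since $J(\delta,\mathcal{F})\ge \int_0^\delta 1\,d\epsilon = \delta$ (using $N(\epsilon,\mathcal{F},L^2(Q))\ge 1$), we have $f(\delta)\ge 1/\delta>0$. The stated function rewrites as $g(\delta) = cf(\delta)\bigl(1+df(\delta)\bigr)$, and the map $x\mapsto cx(1+dx)$ is strictly increasing on $[0,\infty)$ for $c,d>0$. So strict monotonicity of $g$ follows immediately from strict monotonicity of $f$.

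Next I would work at the level of a single finitely supported $Q$. Define $h_Q(\epsilon):=\sqrt{1+\log N(\epsilon,\mathcal{F},L^2(Q))}$ and $F_Q(\delta):=\int_0^\delta h_Q(\epsilon)\,d\epsilon$, so that $J(\delta,\mathcal{F})=\sup_Q F_Q(\delta)$. The key structural facts are: (i) $h_Q(\epsilon)\ge 1$ for all $\epsilon$, and (ii) $\epsilon\mapsto h_Q(\epsilon)$ is non-increasing, because covering numbers are non-increasing in the radius. Given $0<\delta_1<\delta_2$, set $u:=\delta_1/\delta_2\in(0,1)$ and perform the change of variable $\epsilon=u\epsilon'$ in $F_Q(\delta_1)$:
\begin{align*}
F_Q(\delta_1)= u\medint\int_0^{\delta_2} h_Q(u\epsilon')\,d\epsilon' \ge u\medint\int_0^{\delta_2} h_Q(\epsilon')\,d\epsilon' = u\,F_Q(\delta_2),
\end{align*}
where the inequality uses monotonicity of $h_Q$ together with $u\epsilon'\le \epsilon'$. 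Dividing by $\delta_1^2$ and using $u=\delta_1/\delta_2$ gives the quantitative comparison
\begin{align*}
\frac{F_Q(\delta_1)}{\delta_1^2} \ge \frac{\delta_2}{\delta_1}\cdot\frac{F_Q(\delta_2)}{\delta_2^2}.
\end{align*}

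To finish, I would take the supremum over finitely supported $Q$ on both sides. Because the factor $\delta_2/\delta_1$ is a constant (independent of $Q$) and is strictly greater than $1$, the supremum of the right-hand side equals $(\delta_2/\delta_1)\cdot f(\delta_2)$. Since $f(\delta_2)\ge 1/\delta_2>0$, the strict inequality $(\delta_2/\delta_1)f(\delta_2)>f(\delta_2)$ is preserved, yielding $f(\delta_1)\ge (\delta_2/\delta_1)f(\delta_2)>f(\delta_2)$, as desired.

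I do not anticipate a genuine obstacle: the only subtle point is making sure the strict inequality survives the supremum over $Q$, which is why I would carry the \emph{multiplicative} factor $\delta_2/\delta_1$ through the argument rather than the additive gap $f(\delta_1)-f(\delta_2)$. The strict positivity $f>0$ (guaranteed by the trivial bound $J(\delta,\mathcal{F})\ge\delta$) is what allows that multiplicative slack to remain strict after taking the supremum.
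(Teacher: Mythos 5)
Your proof is correct and follows essentially the same route as the paper: both arguments reduce the problem to showing that $J(\delta,\mathcal{F})/\delta$ is nonincreasing for each fixed finitely supported $Q$ (the paper via concavity of $\delta\mapsto J_Q(\delta,\mathcal{F})$ as the integral of a nonincreasing function, you via a change of variables exploiting the same monotonicity of the integrand), pass this through the supremum over $Q$, and obtain strict decrease from the extra positive factor of $1/\delta$ and the bound $J(\delta,\mathcal{F})\ge\delta$. Your initial reduction via strict monotonicity of $x\mapsto cx(1+dx)$ is a cosmetic variant of the paper's closing ``product of strictly decreasing positive functions'' step, so no substantive difference remains.
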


The following lemma lower bounds the complexity term in \Cref{thm:excRiskOSL}.
\begin{lemma}[Complexity term is $\Omega(n^{-1})$]\label{lem:deltaNslow}
    If $\delta_n$ satisfies $J(\delta_n,\ell_{\mathbb{P}}(\underline{\Theta}))\le n^{1/2} \delta_n^2$, then $\delta_n^2\ge n^{-1}$.
\end{lemma}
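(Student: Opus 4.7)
The proof should be short, because the hypothesis $J(\delta_n, \ell_{\mathbb{P}}(\underline{\Theta})) \le n^{1/2}\delta_n^2$ is a very tight constraint at small $\delta_n$: the uniform entropy integral cannot shrink faster than linearly in $\delta$, so $\delta_n$ itself is bounded below by the left-hand side divided by $n^{1/2}$.

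My plan is to first observe that any non-empty function class $\mathcal{F}$ (here $\mathcal{F} = \ell_{\mathbb{P}}(\underline{\Theta})$, which is non-empty by \cref{cond:RMexists}) satisfies $N(\epsilon, \mathcal{F}, L^2(Q)) \ge 1$ for every $\epsilon > 0$ and every finitely supported $Q$. Consequently, $\log N(\epsilon, \mathcal{F}, L^2(Q)) \ge 0$, so the integrand $\sqrt{1 + \log N(\epsilon, \mathcal{F}, L^2(Q))} \ge 1$ pointwise in $\epsilon$. Taking a supremum over $Q$ and integrating then yields the deterministic lower bound
\begin{align*}
    J(\delta_n, \ell_{\mathbb{P}}(\underline{\Theta})) \ge \int_0^{\delta_n} 1 \, d\epsilon = \delta_n.
\end{align*}

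Chaining this with the assumed upper bound gives $\delta_n \le J(\delta_n, \ell_{\mathbb{P}}(\underline{\Theta})) \le n^{1/2} \delta_n^2$. Assuming $\delta_n > 0$ (the conclusion $\delta_n^2 \ge n^{-1}$ is implicitly asserting this), we may divide by $\delta_n$ to conclude $1 \le n^{1/2} \delta_n$, i.e., $\delta_n^2 \ge n^{-1}$, which is what was to be shown.

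There is essentially no obstacle here beyond noting that the ``$+1$'' inside the square root in the definition of $J$ (see \eqref{eq:unifEntDef}) forces the integrand to be at least $1$ regardless of how trivial the class is, which in turn forces $J(\delta, \cdot) \ge \delta$. This is the only structural feature of $J$ needed. The result is then a direct algebraic consequence of the defining inequality for $\delta_n$.
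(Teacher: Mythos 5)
Your proposal is correct and follows essentially the same route as the paper: both derive $J(\delta_n,\ell_{\mathbb{P}}(\underline{\Theta}))\ge \int_0^{\delta_n} d\epsilon=\delta_n$ from the ``$+1$'' under the square root in \eqref{eq:unifEntDef}, and then combine this with the hypothesis $J(\delta_n,\ell_{\mathbb{P}}(\underline{\Theta}))\le n^{1/2}\delta_n^2$ to conclude $\delta_n^2\ge n^{-1}$. Your explicit remark that one divides by $\delta_n>0$ just makes visible a step the paper leaves implicit.
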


We conclude with proofs.
\begin{proof}[Proof of \Cref{lem:localMaximal}]
    First suppose $\delta<b$. Since $\mathcal{F}/b\subset [-1,1]^{\mathcal{Z}}$ is suitably measurable and $\sup_{g\in\mathcal{F}/b}\|g\|_{L^2(P)}^2\le (\delta/b)^2 < 1$, Thm.~2.1 in \cite{van2011local} with constant envelope function $1$ shows there exists a universal constant $K'$ such that
    \begin{align*}
        E\|P_n-P\|_{\mathcal{F}/b}&\le K' n^{-1/2}J(\delta/b,\mathcal{F}/b)\left(1 + \frac{b^2 J(\delta/b,\mathcal{F}/b)}{\delta^2 n^{1/2}}\right).
    \end{align*}
    Now suppose $\delta\ge b$. In this case, Thm.~2.14.1 of \cite{vaart2023empirical} shows there exists a universal constant $K''$ such that $E\|P_n-P\|_{\mathcal{F}/b}\le K''n^{-1/2}J(1,\mathcal{F}/b)$. Combining these two cases, letting $K=K'\vee K''$, and using that $J(\,\cdot\,,\mathcal{F}/b)$ is monotone nondecreasing and nonnegative shows that
    \begin{align*}
        E\|P_n-P\|_{\mathcal{F}/b}&\le K n^{-1/2}J(\delta/b,\mathcal{F}/b)\left(1 + \frac{b^2 J(\delta/b,\mathcal{F}/b)}{\delta^2 n^{1/2}}\right)
    \end{align*}
    for all $\delta>0$. Since $N(\epsilon,\mathcal{F}/b,L^2(Q))=N(b\epsilon,\mathcal{F},L^2(Q))$ for any $Q$, $J(\delta/b,\mathcal{F}/b)=b^{-1}J(\delta,\mathcal{F})$. Combining this with $E\|P_n-P\|_{\mathcal{F}/b}=b^{-1}E\|P_n-P\|_{\mathcal{F}}$ gives the desired bound.
\end{proof}

\begin{proof}[Proof of \Cref{lem:deltaThresh}]
    We first show that
    \begin{align}
        \delta\ge n^{-\rho/(2\rho+1)}2^{-2\rho/(2\rho+1)}\left[c + c^{1/2}(c+d)^{1/2}\right]^{2\rho/(2\rho+1)}, \label{eq:deltaIneq1}
    \end{align}    
    implies
    \begin{align}
        c\delta^{1-1/(2\rho)}\left(1 + \frac{d\delta^{-1-1/(2\rho)}}{n^{1/2}}\right)\le n^{1/2}\delta^2. \label{eq:abstractEntropyBd1}
    \end{align}
    When $\delta$ equals the right-hand side of \eqref{eq:deltaIneq1}, \eqref{eq:abstractEntropyBd1} is an equality. Divide both sides of \eqref{eq:abstractEntropyBd1} by $\delta^2$ and note that the resulting left-hand side and right-hand side are monotone decreasing and constant functions of $\delta$, respectively. Hence, \eqref{eq:abstractEntropyBd1} holds whenever \eqref{eq:deltaIneq1} holds, that is, \eqref{eq:deltaIneq1} implies \eqref{eq:abstractEntropyBd1}. A similar argument shows $\delta\ge n^{-1/2}2^{-1}\left[c + c^{1/2}(c+d)^{1/2}\right]$ implies
    \begin{align*}
        c\delta^{1-1/(2\rho)}\left(1 + \frac{d\delta^{-1-1/(2\rho)}}{n^{1/2}}\right)\le n^{1/2}\delta^2.
    \end{align*}
    Putting these two results together shows \eqref{eq:abstractEntropyBd} holds whenever
    \begin{align*}
        \delta\ge \left(n^{-1/2}2^{-1}\left[c + c^{1/2}(c+d)^{1/2}\right]\right)\vee \left(n^{-\rho/(2\rho+1)}2^{-2\rho/(2\rho+1)}\left[c + c^{1/2}(c+d)^{1/2}\right]^{2\rho/(2\rho+1)}\right).
    \end{align*}
    By Young's inequality, $c^{1/2}(c+d)^{1/2}\le c+d/2$, and so the right-hand side above is upper bounded by $[n^{-1/2}(c+d/4)]\vee [n^{-1/2}(c+d/4)]^{2\rho/(2\rho+1)}$, which gives the result.
\end{proof}

\begin{proof}[Proof of \Cref{lem:monotoneUB}]
    Fix a finitely supported distribution $Q$. Note that $\delta\mapsto J_Q(\delta,\mathcal{F}):= \int_0^\delta \sqrt{1+\log N(\epsilon,\mathcal{F},L^2(Q))}\,d\epsilon$ is the integral of a nonincreasing function, and so is concave, and hence has nonincreasing average derivative $\delta\mapsto J_Q(\delta,\mathcal{F})/\delta$. As $Q$ was arbitrary and the pointwise supreumum of a collection of nonincreasing functions is itself nonincreasing, $\delta\mapsto \sup_Q J_Q(\delta,\mathcal{F})/\delta=J(\delta,\mathcal{F})/\delta$ is nonincreasing. Hence, $\delta\mapsto J(\delta,\mathcal{F})/\delta^2$ is strictly decreasing. The result follows since this function is also positive and the product of two strictly decreasing positive functions is strictly decreasing.
\end{proof}

\begin{proof}[Proof of \Cref{lem:deltaNslow}]
By \eqref{eq:unifEntDef}, $J(\delta_n,\ell_{\mathbb{P}}(\underline{\Theta}))\ge \int_0^{\delta_n} d\epsilon= \delta_n$. Combining this with $J(\delta_n,\ell_{\mathbb{P}}(\underline{\Theta}))\le n^{1/2} \delta_n^2$ gives the result.
\end{proof}

\section{Discussion of \nameref*{alg:double} generalization bound (\Cref{thm:excRiskOSL})}\label{app:genBoundDiscussion}

We begin by discussing the behavior of terms in the finite-sample bound in \eqref{eq:excRiskBdOSLinformal}. To simplify the discussion, we do this under an asymptotic regime where $\underline{\Theta}$ may vary as $n\rightarrow\infty$.

The complexity term $\delta_n^2$ never decays to $0$ faster than $1/n$, and will typically decay slower unless unless $\underline{\Theta}$ is finite-dimensional and fixed with sample size. Upper bounds on the uniform entropy integral $J(\,\cdot\,,\ell_{\mathbb{P}}(\underline{\Theta}))$, and therefore $\delta_n$, can be obtained provided $\underline{\Theta}$ is small enough so that $\ell_{\mathbb{P}}(\underline{\Theta})$ is smooth \citep[e.g.,][Chaps. 2.7.1 and 2.7.2]{vaart2023empirical}, sparsely indexed by some parameter \citep[e.g.,][Chap. 14.2]{buhlmann2011statistics}, or otherwise structured in some way \citep[e.g.,][Chaps. 2.7.3 and 2.7.4]{vaart2023empirical}. An example is given \Cref{thm:diffusionMinimax}, where $\underline{\Theta}$ is a neural network class and $\ell_{\mathbb{P}}$ is the denoising score matching loss.

The doubly robust term $\max_{j\in [2]}\|\alpha_n^j-\alpha_P\|_{L^2(P_X)}^2 d_\Psi^2(\psi_n^j,\Psi_P)$ is a product of squared errors, making it fourth order. In contrast, the $\Omega(1/n)$ complexity term $\delta_n^2$ is only second-order. Hence, $n^{-1/4}$ rates for $\|\alpha_n^j-\alpha_P\|_{L^2(P_X)}$ and $ d_\Psi(\psi_n^j,\Psi_P)$ suffice to ensure negligibility of the fourth-order term. Even slower rates can suffice when the complexity term decays more slowly. If $X$ is low-dimensional relative to $Y$ (e.g., because $Y$ is an image or text and $X$ is not), the negligibility of the fourth-order term is even more plausible. Indeed, under reasonable smoothness conditions, the mean-squared prediction error $\|\alpha_n^j-\alpha_P\|_{L^2(P_X)}^2$ from estimating the low-dimensional function $\alpha_P$ will be of smaller order than the complexity term, even before multiplying by the additional quadratic term $d_\Psi^2(\psi_n^j,\Psi_P)$.

The entropy integral $J(\delta_n/(8\Cpos),\ell_{\mathbb{P}}(\underline{\Theta}))$ in the definition of $\delta_n$ measures the size of the loss class $\ell_{\mathbb{P}}(\underline{\Theta})$ over all of $\underline{\Theta}$. In \Cref{thm:excRiskOSLLocal} in \Cref{app:doubleGenLocal}, an alternative, localized version of this result is given, which instead considers the entropy of $\ell_{\mathbb{P}}(\underline{\Theta}_{\delta_n})$, where $\underline{\Theta}_{\delta_n}:=\{\theta\in\underline{\Theta} : \|\ell_{\mathbb{P}}(\theta)\|_{L^2(\mathbb{P})}\le \delta_n\}$. That result yields a similar conclusion to \Cref{thm:excRiskOSL} and replaces \cref{cond:entropyStronger} by a weaker condition, though makes additional requirements on the nuisance estimators. Under conditions, this localized generalization bound yields the same rate as one for \nameref*{alg:oracle} derived using the same techniques---see \Cref{app:oracleGenBenchmark}. This oracle optimality property is notable given that \nameref*{alg:oracle} has access to all $n$ counterfactuals, whereas \nameref*{alg:double} only sees a biased subset of them.

An alternative approach to deriving generalization bounds would be to apply Thm.~3 from \cite{foster2023orthogonal}, which uses local Rademacher complexity \citep{bartlett2006local} instead of entropy integrals. While their framework applies to general risk minimization problems with nuisances, including ours, it does not explicitly characterize the fourth-order term in \eqref{eq:excRiskBdOSLinformal} or establish its double robustness. As \cite{bonvini2022fast} noted in a different setting, tailored analyses such as ours yield faster convergence guarantees when nuisances are estimated at different rates.

\section{Formal statement of \nameref*{alg:double} generalization bound (\Cref*{thm:excRiskOSL})}\label{app:formalTheorem}

In the following theorem, we let $K$ denote the universal constant from \Cref{lem:localMaximal} for a review. We further let
\begin{align}
    K_0&:= 2\max_{j\in [2]}\min\Bigg\{\Ccurv\Cpos \left\|\alpha_n^j/\alpha_P\right\|_{L^\infty(P_X)} + \Cmixedweak(1+\Cpos)\Cpos d_\Psi^2(\psi_n^j,\Psi_P), \label{eq:K0def} \\
    &\hspace{7.25em}\Cpos\left[ \Ccurv + \Cmixedweak d_\Psi^2(\psi_n^j,\Psi_P)\right] + 2\|\alpha_n^j-\alpha_P\|_{L^\infty(P_X)}^2\left[\Ccurv+\Cmixedweak d_\Psi^2(\psi_n^j,\Psi_P)\right]\Bigg\}. \nonumber
\end{align}
Though we do not require this in the theorem below, if $d_\Psi^2(\psi_n^j,\Psi_P)$ is a.s. bounded, then $K_0$ is bounded by a constant under \cref{cond:strongPositivity}. 
If $\psi_n^j$ and $\alpha_n^j$ were perfect estimators, so that $\psi_n^j\in \Psi_P$ and $\alpha_n^j=\alpha_P$, $K_0$ would simplify to $2\Ccurv\Cpos$.
\begin{theorem}[Formal statement of \Cref{thm:excRiskOSL}]\label{thm:excRiskOSLFormal}
    Suppose \cref{cond:strongPositivity,cond:RMexists,cond:bddLoss,cond:mixedLipschitzWeak,cond:curvature,cond:entropyStronger} and that there exists an empirical risk minimizer $\theta_n\in\argmin_{\theta\in\underline{\Theta}} R_n(\theta)$. Fix a $\delta_n$ satisfying
    \begin{align}
        8K \Cpos J(\delta_n/(8\Cpos),\ell_{\mathbb{P}}(\underline{\Theta}))\left[ 1 +16\Closs\Cpos^2 J(\delta_n/(8\Cpos),\ell_{\mathbb{P}}(\underline{\Theta}))/(\delta_n^2 n_{\bar{j}}^{1/2})\right]\le \lfloor n/2\rfloor^{1/2} \delta_n^2.\label{eq:doubleGenDeltaN}
    \end{align}
    Fix $s>0$. With probability at least $1-e^{-s}$,
    \begin{align}
        \mathcal{G}_{\mathbb{P}}(\theta_n)&\le 4\inf_{\theta\in\underline{\Theta}}\mathcal{G}_{\mathbb{P}}(\theta) + K_1\delta_n^2 + K_2 (s+2)/n + {\color{CBteal}\bm{K_3\, \max_{j\in [2]}\|\alpha_n^j-\alpha_P\|_{L^2(P_X)}^2d_\Psi^2(\psi_n^j,\Psi_P)}},   \label{eq:excRiskBdOSL}   
    \end{align}
    where $K_1=1350(1\vee 135 K_0)$, $K_2=2700\left[\Closs\Cpos + (1\vee 135 K_0)\right]$, and $K_3=13\Cmixedweak$. The final term is \textbf{\color{CBteal}doubly robust}, vanishing if $\alpha_n^j=\alpha_P$ or $\psi_n^j\in\Psi_P$ for $j\in [2]$.
    
\end{theorem}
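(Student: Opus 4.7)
My plan is to follow the standard local-Rademacher/ERM template, adapted to the cross-fitted AIPW risk used by \nameref*{alg:double}. Define the centered empirical risk $\widetilde R_n(\theta) := R_n(\theta) - R_n(\theta_{\mathbb{P}}) = \tfrac{1}{n}\sum_{j\in[2]}\sum_{z\in\mathcal{Z}_n^{3-j}} m_{\mathbb{P}}(\theta;z;\alpha_n^j,\psi_n^j)$, where
\[
m_{\mathbb{P}}(\theta;z;\alpha,\psi) := \int\big[\ind(a{=}a^\star)\alpha(x)\{\ell_{\mathbb{P}}(\theta)(y) - \ell_{\mathbb{P}}(\theta)(\psi(u|x))\} + \ell_{\mathbb{P}}(\theta)(\psi(u|x))\big]\Pi(du)
\]
is the per-observation AIPW score. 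Since $\mathcal{G}_{\mathbb{P}}(\theta) = \mathbb{P}\ell_{\mathbb{P}}(\theta)$, the ERM inequality $\widetilde R_n(\theta_n) \le \widetilde R_n(\theta^\star)$ for any $\theta^\star\in\underline{\Theta}$ immediately gives $\mathcal{G}_{\mathbb{P}}(\theta_n) - \mathcal{G}_{\mathbb{P}}(\theta^\star) \le (\mathbb{P}\ell_{\mathbb{P}} - \widetilde R_n)(\theta_n) - (\mathbb{P}\ell_{\mathbb{P}} - \widetilde R_n)(\theta^\star)$, reducing the proof to a uniform deviation bound for $\mathbb{P}\ell_{\mathbb{P}} - \widetilde R_n$ over $\underline{\Theta}$.

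The structural feature I would exploit is cross-fitting. For each $j\in[2]$, condition on $\mathcal{Z}_n^j$ so $(\alpha_n^j,\psi_n^j)$ is deterministic and $\mathcal{Z}_n^{3-j}$ is iid from $P$ independent of them, then split the per-fold deviation as $[\mathbb{P}\ell_{\mathbb{P}}(\theta) - Pm_{\mathbb{P}}(\theta;\cdot;\alpha_n^j,\psi_n^j)] + (P-P_{n,3-j})m_{\mathbb{P}}(\theta;\cdot;\alpha_n^j,\psi_n^j)$. Using the identification $\mathbb{P}(g)=E_{P_X}[\int g(\psi_P(u|X))\Pi(du)]$ for any $\psi_P\in\Psi_P$, a direct calculation evaluates the first (bias) term as $-E_{P_X}\big[\{1-\alpha_n^j/\alpha_P\}\int[\ell_{\mathbb{P}}(\theta)(\psi_P(u|X))-\ell_{\mathbb{P}}(\theta)(\psi_n^j(u|X))]\Pi(du)\big]$. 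Cauchy--Schwarz, $\alpha_P\ge 1$ from \cref{cond:strongPositivity}, and \cref{cond:mixedLipschitzWeak} bound its magnitude by $\|\alpha_n^j-\alpha_P\|_{L^2(P_X)}\sqrt{\Cmixedweak\mathcal{G}_{\mathbb{P}}(\theta)}\,d_\Psi(\psi_n^j,\Psi_P)$; AM--GM then turns this product into a small multiple of $\mathcal{G}_{\mathbb{P}}(\theta)$ (absorbed into the LHS) plus the fourth-order $K_3$-term of \eqref{eq:excRiskBdOSL}.

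For the empirical-process term, conditional on $\mathcal{Z}_n^j$, I would apply \Cref{lem:localMaximal} followed by Talagrand's inequality (the source of the $s/n$ contribution in $K_2$) to the class $\{m_{\mathbb{P}}(\theta;\cdot;\alpha_n^j,\psi_n^j):\theta\in\underline{\Theta}\}$ on a slab $\{\theta:\mathcal{G}_{\mathbb{P}}(\theta)\le r\}$. Three ingredients are needed: (i) a uniform $L^\infty(P)$ envelope of order $\Closs\Cpos$ from \cref{cond:bddLoss,cond:strongPositivity}; (ii) a variance bound $\|m_{\mathbb{P}}(\theta;\cdot;\alpha_n^j,\psi_n^j)\|_{L^2(P)}^2 \le K_0\,\mathcal{G}_{\mathbb{P}}(\theta)$, proved by expanding $m_{\mathbb{P}} = \bar h(X) + \ind(A{=}a^\star)\alpha(X)[\ell_{\mathbb{P}}(\theta)(Y)-\bar h(X)]$ with $\bar h(x) := \int\ell_{\mathbb{P}}(\theta)(\psi(u|x))\Pi(du)$ and invoking \cref{cond:curvature,cond:mixedLipschitzWeak,cond:strongPositivity}---the two alternative expressions in \eqref{eq:K0def} arise from either retaining $\|\alpha_n^j/\alpha_P\|_{L^\infty}$ as a factor or first linearizing around $\alpha_P$ to isolate an $\|\alpha_n^j-\alpha_P\|_{L^\infty}^2$ coefficient; (iii) an entropy transfer $N(8\Cpos\varepsilon,\{m_{\mathbb{P}}(\theta;\cdot;\alpha,\psi)\},L^2(P))\le N(\varepsilon,\ell_{\mathbb{P}}(\underline{\Theta}),L^2(Q))$ for appropriate $Q$, which holds because $\theta\mapsto m_{\mathbb{P}}(\theta;z;\alpha,\psi)$ is linear in $\ell_{\mathbb{P}}(\theta)$ against a signed measure of total mass $\le 4\Cpos$; this accounts for the specific rescaling $J(\delta_n/(8\Cpos),\ell_{\mathbb{P}}(\underline{\Theta}))$ appearing in \eqref{eq:doubleGenDeltaN}.

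Assembling these pieces via a peeling/union-bound argument over dyadic slabs $\{\theta:\mathcal{G}_{\mathbb{P}}(\theta)\le 2^k\delta_n^2\}$ and summing over both folds yields, with probability at least $1-e^{-s}$, an inequality of the form $\mathcal{G}_{\mathbb{P}}(\theta_n) - \mathcal{G}_{\mathbb{P}}(\theta^\star)\le \tfrac12[\mathcal{G}_{\mathbb{P}}(\theta_n) + \mathcal{G}_{\mathbb{P}}(\theta^\star)] + K_1\delta_n^2 + K_2(s+2)/n + K_3\max_{j\in[2]}\|\alpha_n^j-\alpha_P\|_{L^2(P_X)}^2 d_\Psi^2(\psi_n^j,\Psi_P)$; rearranging and choosing $\theta^\star$ to be a near-minimizer of $\mathcal{G}_{\mathbb{P}}$ on $\underline{\Theta}$ (which exists by \cref{cond:RMexists} when combined with taking a minimizing sequence) yields \eqref{eq:excRiskBdOSL}. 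The main obstacle I anticipate is constant tracking rather than a single deep step: obtaining both alternatives for $K_0$ requires running the variance bound twice with different expansions and taking the minimum, while matching the specific $K_1=1350(1\vee 135K_0)$, $K_2=2700[\Closs\Cpos + (1\vee 135K_0)]$, and $K_3=13\Cmixedweak$ demands careful choice of the AM--GM weight for the bias and the slab-doubling constant in the peeling---the individual inequalities are routine, but accumulating all constants precisely is the real work.
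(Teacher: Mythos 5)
Your proposal is correct and follows essentially the same route as the paper's proof: the cross-fitted decomposition into a doubly robust bias term (Cauchy--Schwarz, the mixed-Lipschitz condition, and an AM--GM/Peter--Paul absorption, as in \Cref{lem:lossDR}) plus an empirical-process term controlled via the $2\Closs\Cpos$ envelope, the variance bound $\|L_n^j(\theta)\|_{L^2(P)}^2\le K_0^j\,\mathcal{G}_{\mathbb{P}}(\theta)$ with the same two alternative expansions, the $8\Cpos$ entropy transfer, and \Cref{lem:localMaximal} combined with a Talagrand-type localized concentration bound, finishing with the ERM inequality and a near-minimizer $\underline{\theta}$. The only cosmetic differences are that you center $R_n$ at $\theta_{\mathbb{P}}$ up front and carry out the localization by explicit dyadic peeling, whereas the paper invokes Lem.~3.5.9 of \cite{vaart2023empirical}, which packages that peeling.
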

Though beyond the scope of this work, a more careful analysis could sharpen the constants.

\section{A localized \nameref*{alg:double} generalization bound}\label{app:doubleGenLocal}

In this appendix, we derive a generalization bound under an alternative to the global entropy bound from the main text, \cref{cond:entropyStronger}. The bound we present here only requires the local entropy condition given below---which is weaker than the global one assumed in \Cref{thm:excRiskOSLFormal}---but will also make two additional requirements. In the local entropy condition, $\ell_{\mathbb{P}}(\underline{\Theta}_\delta):=\{\ell_{\mathbb{P}}(\theta) : \theta\in\underline{\Theta}_\delta\}$ with $\underline{\Theta}_\delta:=\{\theta\in\underline{\Theta} : \|\ell_{\mathbb{P}}(\theta)\|_{L^2(\mathbb{P})}\le \delta\}$, $\delta>0$.
\begin{condenum}[resume*]
    \item\label{cond:entropy} \textit{Local uniform entropy bound:} $J(\delta,\ell_{\mathbb{P}}(\underline{\Theta}_\delta))<\infty$ for some $\delta>0$.
\end{condenum}
The first additional requirement compared to \Cref{thm:excRiskOSLFormal} is the following:
\begin{condenum}[resume*]
    \item\label{cond:oneGoodNuis} \textit{At least one nuisance not estimated poorly:} it is almost surely true that, for each $j\in\{1,2\}$, at least one of the following holds:
    \begin{enumerate}[label=(\roman*)]
        \item\label{cond:oneGoodNuisPsi} $ d_\Psi^2(\psi_n^j,\Psi_P)\le \left[4(1\vee \|\alpha_P/\alpha_n^j\|_{L^\infty(P_X)})\Cpos\Cmixedstrong\|1+\alpha_n^j/\alpha_P\|_{L^\infty(P_X)}\right]^{-1}$;
        \item\label{cond:oneGoodNuisAlpha} $\|\alpha_n^j-\alpha_P\|_{L^\infty(P_X)}^2\le \left(8\left[1+\Cmixedstrong d_\Psi^2(\psi_n^j,\Psi_P)\right]\right)^{-1}$.
    \end{enumerate}
\end{condenum}
The almost-sure requirement made in the condition can be straightforwardly relaxed to a `w.p. at least $1-\varepsilon$' requirement, at the cost of a slightly more involved theorem statement. 
The two inequalities respectively impose conditions on the quality of the estimator $\psi_n$ and $\alpha_n$. Under \cref{cond:strongPositivity}, a sufficient condition for the first inequality to hold is that $d_\Psi^2(\psi_n^j,\Psi_P)\le [8\Cpos^3\Cmixedstrong]^{-1}$.

We will need an alternative version of \cref{cond:mixedLipschitzWeak} that measures the discrepancy of $\theta$ from $\theta_{\mathbb{P}}$ by $\|\ell_{\mathbb{P}}(\theta)\|_{L^2(\mathbb{P})}^2$, rather than $\mathcal{G}_{\mathbb{P}}(\theta)$.
\begin{condenum}[resume*]
    \item\label{cond:mixedLipschitzStrong} \textit{Mixed-Lipschitz loss (stronger version):} there exists $\Cmixedstrong<\infty$ such that, for all $\theta\in\underline{\Theta}$, $\psi\in\Psi$, and $\psi_P\in\Psi_P$,
    \begin{align*}
       &\medint\int\left\{\medint\int \left[ \ell_{\mathbb{P}}(\theta)(\psi(u|x)) - \ell_{\mathbb{P}}(\theta)(\psi_P(u|x))\right]\,\Pi(du)\right\}^2 P_X(dx) \\
       &\quad\le \Cmixedstrong \|\ell_{\mathbb{P}}(\theta)\|_{L^2(\mathbb{P})}^2\,d_{\Psi}^2(\psi,\Psi_P).
    \end{align*}
\end{condenum}
The above is stronger than \cref{cond:mixedLipschitzWeak} in the sense that, when it and \cref{cond:curvature} hold, \cref{cond:mixedLipschitzWeak} holds with $\Cmixedweak=\Ccurv\Cmixedstrong$.

In the following theorem, $K$, $K_1$, $K_2$, and $K_3$ are as defined in and above \Cref{thm:excRiskOSLFormal}.
\begin{theorem}[Localized version \nameref*{alg:double} generalization bound]\label{thm:excRiskOSLLocal}
    Suppose \cref{cond:strongPositivity,cond:RMexists,cond:bddLoss,cond:curvature,cond:entropy,cond:oneGoodNuis,cond:mixedLipschitzStrong} and that there exists an empirical risk minimizer $\theta_n\in\argmin_{\theta\in\underline{\Theta}} R_n(\theta)$. Fix $s>0$ and $\delta_n>0$ such that
    \begin{align}
        8K \Cpos J(2\Cpos^{1/2} \delta_n,\ell_{\mathbb{P}}(\underline{\Theta}_{2\Cpos^{1/2}\delta_n}))\left( 1 +\frac{16\Closs\Cpos^2 J(2\Cpos^{1/2} \delta_n,\ell_{\mathbb{P}}(\underline{\Theta}_{2\Cpos^{1/2}\delta_n}))}{\delta_n^2 \lfloor n/2\rfloor^{1/2}}\right)\le \lfloor n/2\rfloor^{1/2} \delta_n^2.\label{eq:doubleGenDeltaNLocal}
    \end{align}
    With probability at least $1-e^{-s}$, \eqref{eq:doubleGenDeltaN} holds.
\end{theorem}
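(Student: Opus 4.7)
The plan is to adapt the proof of \Cref{thm:excRiskOSLFormal} by replacing its global uniform-entropy step with a peeling (localization) argument that only needs the local entropy bound \cref{cond:entropy}, at the price of invoking the stronger mixed-Lipschitz assumption \cref{cond:mixedLipschitzStrong} and the nuisance safeguard \cref{cond:oneGoodNuis}. For each $\theta\in\underline{\Theta}$ I would decompose $R_n(\theta)-R_n(\theta_{\mathbb{P}})$ into (i) the population excess risk $\mathcal{G}_{\mathbb{P}}(\theta)$, (ii) a centered empirical process $(P_n-P)f_\theta$ over the AIPW integrand $f_\theta(z,u):=\ell_{\mathbb{P}}(\theta)(\psi_n^j(u|x))+\mathbf{1}(a=a^\star)\alpha_n^j(x)[\ell_{\mathbb{P}}(\theta)(y)-\ell_{\mathbb{P}}(\theta)(\psi_n^j(u|x))]$, and (iii) a deterministic nuisance remainder. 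Piece (iii) is where the doubly robust fourth-order term in \eqref{eq:excRiskBdOSL} arises, via Cauchy--Schwarz on $(\alpha_n^j-\alpha_P)$ against the loss gap under $\psi_n^j$ versus $\Psi_P$, bounded by \cref{cond:mixedLipschitzStrong}.

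The first step is a variance bound for $f_\theta$. Expanding $\|f_\theta\|_{L^2(P\times\Pi)}^2$ and splitting the cross terms in two symmetric ways yields, via \cref{cond:strongPositivity} and \cref{cond:mixedLipschitzStrong}, the two expressions minimized in the definition \eqref{eq:K0def} of $K_0$; hence $\|f_\theta\|_{L^2(P\times\Pi)}^2 \le \tfrac12 K_0\,\|\ell_{\mathbb{P}}(\theta)\|_{L^2(\mathbb{P})}^2$. Condition \cref{cond:oneGoodNuis} guarantees that at least one of the two min-terms is $O(\Cpos)$, keeping this variance proportional to $\|\ell_{\mathbb{P}}(\theta)\|_{L^2(\mathbb{P})}^2$ uniformly in the nuisance estimates; tracing constants one finds $\|f_\theta\|_{L^2(P\times\Pi)} \le 2\Cpos^{1/2}\|\ell_{\mathbb{P}}(\theta)\|_{L^2(\mathbb{P})}$, which is exactly why the radius $2\Cpos^{1/2}\delta_n$ appears in \eqref{eq:doubleGenDeltaNLocal}.

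Next I would run a dyadic peeling argument. Slice $\underline{\Theta}$ into shells $\mathcal{S}_k:=\{\theta : 2^{k-1}\delta_n < \|\ell_{\mathbb{P}}(\theta)\|_{L^2(\mathbb{P})}\le 2^k\delta_n\}$ for $k\ge 1$, plus a base shell $\underline{\Theta}_{\delta_n}$. By the variance bound, $\{f_\theta : \theta\in\mathcal{S}_k\}$ has $L^2(P\times\Pi)$-radius at most $2\Cpos^{1/2}\cdot 2^k\delta_n$, and \cref{cond:mixedLipschitzStrong} with \cref{cond:strongPositivity} shows its covering numbers are dominated by those of $\ell_{\mathbb{P}}(\underline{\Theta}_{2\Cpos^{1/2}\cdot 2^k\delta_n})$. \Cref{lem:localMaximal}, with \cref{cond:entropy} and the monotonicity in \Cref{lem:monotoneUB}, then bounds the expected supremum on each shell by a constant multiple of $2^k\delta_n^2$, by the very choice of $\delta_n$ in \eqref{eq:doubleGenDeltaNLocal}. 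Summing the geometric series in $k$ and upgrading via Talagrand's concentration inequality (using the $\Closs\Cpos$-bounded envelope from \cref{cond:bddLoss} and \cref{cond:strongPositivity}) yields, with probability at least $1-e^{-s}$, the uniform control $|(P_n-P)f_\theta| \lesssim \delta_n^2 + \tfrac{1}{4\Ccurv}\|\ell_{\mathbb{P}}(\theta)\|_{L^2(\mathbb{P})}^2 + s/n$ for every $\theta\in\underline{\Theta}$.

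The last step combines the ERM basic inequality $R_n(\theta_n)\le R_n(\theta^\dagger)$, for arbitrary $\theta^\dagger\in\underline{\Theta}$, with the three-piece decomposition, the empirical-process bound, and the nuisance remainder. Using \cref{cond:curvature} to absorb $\|\ell_{\mathbb{P}}(\theta_n)\|_{L^2(\mathbb{P})}^2$ into $\tfrac14\mathcal{G}_{\mathbb{P}}(\theta_n)$ on the left, and taking the infimum over $\theta^\dagger$, reproduces \eqref{eq:excRiskBdOSL}---equivalently, the localized $\delta_n$ plays the role required by \eqref{eq:doubleGenDeltaN} in \Cref{thm:excRiskOSLFormal}. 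The main obstacle is the variance step: keeping its prefactor bounded by a universal multiple of $\Cpos$ despite imperfect nuisances. Without this, the effective local radius in the peeling would inflate with the nuisance errors and \cref{cond:entropy} would no longer suffice; the two-way min in \eqref{eq:K0def} is precisely what \cref{cond:oneGoodNuis} is designed to tame, ensuring at least one of its two branches is $O(\Cpos)$. After that, the rest follows the template of \Cref{thm:excRiskOSLFormal}.
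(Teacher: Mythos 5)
Your overall skeleton (localize, transfer entropy from the AIPW class to $\ell_{\mathbb{P}}(\underline{\Theta})$, Talagrand, ERM basic inequality, curvature, doubly robust remainder) matches the paper's, but the step you yourself flag as the crux --- the variance bound --- does not hold under the stated hypotheses, and this breaks the peeling scheme you build on it. You claim that \cref{cond:oneGoodNuis} keeps at least one branch of the minimum in \eqref{eq:K0def} of order $\Cpos$, so that $\|f_\theta\|_{L^2(P\times\Pi)}\le 2\Cpos^{1/2}\|\ell_{\mathbb{P}}(\theta)\|_{L^2(\mathbb{P})}$ uniformly. That is false: part~(ii) of \cref{cond:oneGoodNuis} constrains only $\|\alpha_n^j-\alpha_P\|_{L^\infty(P_X)}$ and leaves $d_\Psi(\psi_n^j,\Psi_P)$ unbounded, and both branches of \eqref{eq:K0def} contain $d_\Psi^2$ terms. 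Concretely, even with $\alpha_n^j=\alpha_P$ exactly (so \cref{cond:oneGoodNuis} holds trivially), the orthogonal decomposition used in \Cref{lem:oracleVarBd} gives $\|L_n^j(\theta)\|_{L^2(P)}^2\ge\int(\alpha_P(x)-1)\,[\zeta_n^j(\theta,x)-\zeta_P(\theta,x)]^2\,P_X(dx)$, which can be of order $d_\Psi^2(\psi_n^j,\Psi_P)\,\|\ell_{\mathbb{P}}(\theta)\|_{L^2(\mathbb{P})}^2$; no constant multiple of $\|\ell_{\mathbb{P}}(\theta)\|_{L^2(\mathbb{P})}$ bounds the AIPW loss in general. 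In your peeling by $\|\ell_{\mathbb{P}}(\theta)\|_{L^2(\mathbb{P})}$, the $L^2(P)$-radius of each shell therefore inflates by a factor of order $d_\Psi$, so \Cref{lem:localMaximal} must be invoked with the entropy integral evaluated at that inflated radius, and the critical-radius condition \eqref{eq:doubleGenDeltaNLocal} --- which only controls $J$ at radius $2\Cpos^{1/2}\delta_n$ --- no longer certifies the per-shell bounds. The argument would thus only deliver the theorem under the extra assumption that $d_\Psi(\psi_n^j,\Psi_P)$ is bounded by a small constant (essentially branch~(i) of \cref{cond:oneGoodNuis} for both folds), which is strictly stronger than what is assumed. (Two smaller issues: $K_0$ in \eqref{eq:K0def} multiplies $\mathcal{G}_{\mathbb{P}}(\theta)$, not $\|\ell_{\mathbb{P}}(\theta)\|_{L^2(\mathbb{P})}^2$, and \cref{cond:curvature} goes the wrong way for that substitution; and the empirical-process step needs to be run conditionally on the nuisance fold, with a union bound over the two folds, which your write-up leaves implicit.)

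The paper sidesteps exactly this obstacle by localizing in the other norm: it works with $\mathcal{L}_{n,\delta}^j=\{L_n^j(\theta):\|L_n^j(\theta)\|_{L^2(P)}\le\delta\}$, so the variance radius is correct by construction, and it uses \cref{cond:oneGoodNuis} together with \cref{cond:mixedLipschitzStrong} only for the \emph{reverse} comparison $\|\ell_{\mathbb{P}}(\theta)\|_{L^2(\mathbb{P})}\le 2\Cpos^{1/2}\|L_n^j(\theta)\|_{L^2(P)}$ (the second half of \Cref{lem:entropy}), which yields $\mathcal{L}_{n,\delta}^j\subseteq L_n^j(\underline{\Theta}_{2\Cpos^{1/2}\delta})$ and hence lets the local entropy condition \cref{cond:entropy} control the localized AIPW class --- this is the true source of the $2\Cpos^{1/2}$ in \eqref{eq:doubleGenDeltaNLocal}, not a forward variance bound. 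The possibly huge factor coming from $d_\Psi$ then enters only through $\|L_n^j(\theta)\|_{L^2(P)}^2\le K_0^j\,\mathcal{G}_{\mathbb{P}}(\theta)$ (\Cref{lem:lossL2bd}) and the choice $\eta\asymp 1/K_0$ in the Talagrand step of the proof of \Cref{thm:excRiskOSLFormal}, i.e., only through the final constants, not through the radius at which $J$ is evaluated. To repair your proof, either switch the localization to $\|L_n^j(\theta)\|_{L^2(P)}$ and prove the norm comparison in that direction (which is the paper's route: its proof is verbatim that of \Cref{thm:excRiskOSLFormal} with the second branch of \Cref{lem:entropy} substituted), or strengthen the hypotheses to make $d_\Psi$ uniformly small.
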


\section{Benchmark: generalization error of \nameref*{alg:oracle}}\label{app:oracleGenBenchmark}

We now establish a generalization bound for \nameref*{alg:oracle} when implemented via an empirical risk minimizer over $\underline{\Theta}\subseteq\Theta$. The resulting guarantee---which is proved using the same techniques as for the \nameref*{alg:double} generalization bounds from \Cref{thm:excRiskOSLFormal,thm:excRiskOSLLocal}---serves as a benchmark for those bounds. Despite having access only to factual rather than counterfactual outcomes, we show that \nameref*{alg:double} can match this benchmark up to constant factors. When it achieves this, we call \nameref*{alg:double} oracle optimal.


In what follows, $K$ is the same universal constant appearing in \Cref{thm:excRiskOSLFormal,thm:excRiskOSLLocal}.

\begin{proposition}[Generalization bound for \nameref*{alg:oracle}]\label{thm:excRiskOracle}
    Suppose \cref{cond:RMexists,cond:bddLoss,cond:entropy,cond:curvature} and there exists an empirical risk minimizer $\theta_n^\star\in\argmin_{\theta\in\underline{\Theta}} R_n^\star(\theta)$. Fix $s>0$ and $\delta_n>0$ satisfying
    \begin{align}
        K J(\delta_n,\ell_{\mathbb{P}}(\underline{\Theta}_{\delta_n}))\left[ 1 +\Closs J(\delta_n,\ell_{\mathbb{P}}(\underline{\Theta}_{\delta_n}))/(\delta_n^2 n^{1/2})\right]\le n^{1/2} \delta_n^2. \label{eq:oracleDeltaN}
    \end{align}
    With probability at least $1-e^{-s}$,
    \begin{align}
        \mathcal{G}_{\mathbb{P}}(\theta_n^\star)&\le 3\inf_{\theta\in\underline{\Theta}}\mathcal{G}_{\mathbb{P}}(\theta) + K_1^\star \delta_n^2 + K_2^\star (s+1)/n,  \label{eq:excRiskBd}    
    \end{align}
    where $K_1^\star:=540(2\vee 270\Ccurv)$ and $K_2^\star:= 540\left[\Closs + (2\vee 270\Ccurv)\right]$.
\end{proposition}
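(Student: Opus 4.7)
The plan is to follow the standard template for empirical risk minimizer generalization bounds---a local maximal inequality combined with Talagrand concentration and a peeling device---and to view this proof as a nuisance-free specialization of the argument for \Cref{thm:excRiskOSLFormal}. Since all sampling is directly from $\mathbb{P}$, the empirical risk $R_n^\star$ concentrates around $E_{\mathbb{P}}[\ell(\theta,Y^\star)]$ without any AIPW correction, so none of the nuisance-error machinery of \Cref{thm:excRiskOSLFormal,thm:excRiskOSLLocal} is needed.

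First I would set up a standard ERM decomposition. Choose $\theta_0\in\underline{\Theta}$ with $\mathcal{G}_{\mathbb{P}}(\theta_0)\le\inf_{\theta\in\underline{\Theta}}\mathcal{G}_{\mathbb{P}}(\theta)+1/n$. Writing $\mu_n[f]:=\tfrac{1}{n}\sum_{i=1}^n f(Y_i^\star)$ for the empirical average, the ERM inequality $\mu_n[\ell_{\mathbb{P}}(\theta_n^\star)]\le \mu_n[\ell_{\mathbb{P}}(\theta_0)]$ together with $\mathcal{G}_{\mathbb{P}}(\theta)=\mathbb{E}[\ell_{\mathbb{P}}(\theta)(Y^\star)]$ yields
\begin{align*}
\mathcal{G}_{\mathbb{P}}(\theta_n^\star)\le \mathcal{G}_{\mathbb{P}}(\theta_0)+(\mu_n-\mathbb{E})[\ell_{\mathbb{P}}(\theta_0)]+(\mathbb{E}-\mu_n)[\ell_{\mathbb{P}}(\theta_n^\star)].
\end{align*}
The single-function deviation $(\mu_n-\mathbb{E})[\ell_{\mathbb{P}}(\theta_0)]$ is controlled by Bernstein's inequality, using \cref{cond:bddLoss} to bound the sup-norm by $\Closs$ and \cref{cond:curvature} to bound the variance by $\|\ell_{\mathbb{P}}(\theta_0)\|_{L^2(\mathbb{P})}^2\le\Ccurv\mathcal{G}_{\mathbb{P}}(\theta_0)$, yielding a bound of order $\sqrt{\Ccurv\mathcal{G}_{\mathbb{P}}(\theta_0)s/n}+\Closs s/n$ with probability at least $1-\tfrac12 e^{-s}$.

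The main work is to bound $(\mathbb{E}-\mu_n)[\ell_{\mathbb{P}}(\theta_n^\star)]$. By \cref{cond:curvature}, $\theta_n^\star\in\underline{\Theta}_r$ with $r=(\Ccurv\mathcal{G}_{\mathbb{P}}(\theta_n^\star))^{1/2}$, but $r$ is random. I would handle this via a geometric peeling sequence $r_j=2^j\delta_n$: on each shell $\{\theta\in\underline{\Theta}:r_{j-1}<\|\ell_{\mathbb{P}}(\theta)\|_{L^2(\mathbb{P})}\le r_j\}$, apply Talagrand's concentration to the class $\ell_{\mathbb{P}}(\underline{\Theta}_{r_j})$, with the expected supremum controlled via \Cref{lem:localMaximal}. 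The hypothesis~\eqref{eq:oracleDeltaN} is exactly the fixed-point condition ensuring that the expected supremum at $\delta=r_j$ is dominated by $r_j\delta_n$ uniformly in $j$, where monotonicity of the bound in the sense of \Cref{lem:monotoneUB} is used to propagate from $\delta=\delta_n$ out to larger shells. Summing a geometric series and union-bounding over $j$ then yields, with probability at least $1-\tfrac12 e^{-s}$, a bound of the form $A\bigl[\|\ell_{\mathbb{P}}(\theta_n^\star)\|_{L^2(\mathbb{P})}\vee\delta_n\bigr]\delta_n+A\delta_n^2+A\Closs s/n$ for some universal $A$.

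Assembling these pieces gives an implicit inequality for $\mathcal{G}_{\mathbb{P}}(\theta_n^\star)$ containing cross terms of the form $\sqrt{\Ccurv\mathcal{G}_{\mathbb{P}}(\theta_0)s/n}$ and $\sqrt{\Ccurv\mathcal{G}_{\mathbb{P}}(\theta_n^\star)}\,\delta_n$. Applying Young's inequality $2\sqrt{ab}\le\eta a+b/\eta$ with carefully chosen weights $\eta$ absorbs a multiple of $\mathcal{G}_{\mathbb{P}}(\theta_n^\star)$ back into the left-hand side and converts the $\mathcal{G}_{\mathbb{P}}(\theta_0)$ term into at most $2\mathcal{G}_{\mathbb{P}}(\theta_0)+\text{noise}$, which together with $\mathcal{G}_{\mathbb{P}}(\theta_0)\le\inf_{\theta\in\underline{\Theta}}\mathcal{G}_{\mathbb{P}}(\theta)+1/n$ produces the leading coefficient $3$ on the approximation error. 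The main obstacle is bookkeeping rather than conceptual: carefully tracking universal constants through the Bernstein step, the Talagrand–peeling step, and the Young absorptions so that the final bound lands with exactly the displayed values $K_1^\star$ and $K_2^\star$. As the excerpt notes, this is the same machinery underlying the DoubleGen analyses, simply with $R_n^\star$ replacing the cross-fitted AIPW risk $R_n$ and no nuisance-error terms to track.
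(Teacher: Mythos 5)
Your plan is conceptually sound and would yield a valid generalization bound, but it is more laborious than the paper's proof and departs from it in one key structural respect. The paper does not carry out a peeling argument by hand. Instead, it applies Lemma~3.5.9 of \cite{vaart2023empirical} (a packaged consequence of Talagrand's concentration inequality that already internalizes localization/peeling and the local maximal inequality) twice---once to $\{-\ell_{\mathbb{P}}(\theta):\theta\in\underline{\Theta}\}$ and once to $\{\ell_{\mathbb{P}}(\theta):\theta\in\underline{\Theta}\}$---yielding, with probability $1-e^{-s}$, a two-sided uniform bound of the form
\begin{align*}
|(\mathbb{P}_n-\mathbb{P})\ell_{\mathbb{P}}(\theta)|\le 135\eta\,\mathbb{P}\ell_{\mathbb{P}}^2(\theta)+135\,\delta_n^2/\eta+135(\Closs+1/\eta)(s+\log 2)/n\quad\forall\theta\in\underline{\Theta}.
\end{align*}
The free parameter $\eta$ plays exactly the role your Young-inequality $\eta$ would play, but built into the lemma; choosing $\eta=1/\max\{2,270\Ccurv\}$ together with \cref{cond:curvature} absorbs $135\eta\,\mathbb{P}\ell_{\mathbb{P}}^2(\theta)\le\tfrac12\mathcal{G}_{\mathbb{P}}(\theta)$, and the constants $135,270,540$ propagate directly from the lemma to the displayed $K_1^\star, K_2^\star$. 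The paper applies this uniform bound to both $\theta_n^\star$ and a fixed near-minimizer $\underline{\theta}$, whereas you would treat $\underline{\theta}$ separately by Bernstein; that is fine but unnecessary, and the cost of the subsequent Young absorptions and the constants from the manual shell-summed union bound would almost certainly land on different (likely larger) values of $K_1^\star, K_2^\star$ unless tuned ex post to match. In short, you are effectively re-proving a version of vdVW's Lemma~3.5.9 inside your own proof; the paper instead cites it, which collapses the argument to an ERM inequality, a two-sided application of the lemma, the curvature condition, and a simple rearrangement.
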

The bound for \nameref*{alg:oracle} resembles the ones for \nameref*{alg:double} in \Cref{thm:excRiskOSLFormal,thm:excRiskOSLLocal}, but requires fewer conditions and replaces the doubly robust error term by zero. This is not surprising given that \nameref*{alg:oracle} does not rely on any nuisances. The additional conditions for \nameref*{alg:double} require a mixed Lipschitz loss, strong positivity, and either a stronger uniform entropy condition (\Cref{thm:excRiskOSLFormal}) or the nuisances to be estimated well enough (\Cref{thm:excRiskOSLLocal}). When these conditions hold and the doubly robust term is negligible, the bounds for \nameref*{alg:double} can be of the same order as the one for \nameref*{alg:oracle}, making \nameref*{alg:double} oracle optimal. This is the case in our study of \nameref*{alg:double} diffusion modeling in \Cref{sec:minimax}. There, we also show something stronger: the rate of the generalization upper bound for \nameref*{alg:double} not only matches that for \nameref*{alg:oracle}, but also, under regularity conditions, matches a minimax lower bound for all oracle algorithms---that is, those with access to counterfactual data.

\begin{proof}[Proof of \Cref{thm:excRiskOracle}]
    By \Cref{lem:localMaximal}, the following holds for any $\delta>0$:
    \begin{align*}
        n^{1/2}E\left\|\mathbb{P}_n-\mathbb{P}\right\|_{\ell_{\mathbb{P}}(\underline{\Theta}_\delta)}&\le K J(\delta,\ell_{\mathbb{P}}(\underline{\Theta}_\delta))\left[ 1 +\frac{\Closs J(\delta,\ell_{\mathbb{P}}(\underline{\Theta}_\delta))}{\delta^2 n^{1/2}}\right]. 
    \end{align*}
    By the choice of $\delta_n$, the right-hand side is no more than $n^{1/2}\delta^2$ when $\delta=\delta_n$ and, by \Cref{lem:monotoneUB}, the same holds true for all $\delta\ge \delta_n$. 
    
    Let $\eta:=1/\max\{2,270\Ccurv\}\in (0,1)$. Applying the consequence of Talagrand's inequality given in Lem.~3.5.9 of \cite{vaart2023empirical} to the class $\{-\ell_{\mathbb{P}}(\theta) : \theta\in\underline{\Theta}\}$ then shows that, w.p. at least $1-e^{-[s+\log 2]}$,
    \begin{align}
        \mathbb{P}\ell_{\mathbb{P}}(\theta)&\le \mathbb{P}_n\ell_{\mathbb{P}}(\theta) + 135 \eta \mathbb{P}\ell_{\mathbb{P}}^2(\theta) + 135\frac{\delta_n^2}{\eta} + 135\left(\Closs + \frac{1}{\eta}\right)\frac{s+\log 2}{n} \;\;\forall\theta\in \underline{\Theta}.\label{eq:talagrand1}
    \end{align}
    Similarly, applying  Lem.~3.5.9 of \cite{vaart2023empirical} to $\{\ell_{\mathbb{P}}(\theta) : \theta\in\underline{\Theta}\}$ shows that, w.p. at least $1-e^{-[s+\log 2]}$,
    \begin{align}
        \mathbb{P}_n\ell_{\mathbb{P}}(\theta)&\le \mathbb{P}\ell_{\mathbb{P}}(\theta) + 135 \eta \mathbb{P}\ell_{\mathbb{P}}^2(\theta) + 135\frac{\delta_n^2}{\eta} + 135\left(\Closs + \frac{1}{\eta}\right)\frac{s+\log 2}{n} \;\;\forall\theta\in \underline{\Theta}, \label{eq:talagrand2}
    \end{align}
    and a union bound shows that both \eqref{eq:talagrand1} and \eqref{eq:talagrand2} hold w.p. at least $1-e^{-s}$. We assume that both of these inequalities hold hereafter and study their consequences.
    
    Fix $\underline{\theta}\in\underline{\Theta}$. Since $\theta_n^\star$ is an empirical risk minimizer over $\underline{\Theta}$, $\mathbb{P}_n\ell_{\mathbb{P}}(\theta_n^\star)\le \mathbb{P}_n\ell_{\mathbb{P}}(\underline{\theta})$, and so combining \eqref{eq:talagrand1} with $\theta=\theta_n^\star$ and \eqref{eq:talagrand2} with $\theta=\underline{\Theta}$ yields
    \begin{align*}
        \mathbb{P}\ell_{\mathbb{P}}(\theta_n^\star)&\le \mathbb{P}\ell_{\mathbb{P}}(\underline{\theta}) + 135 \eta [\mathbb{P}\ell_{\mathbb{P}}^2(\theta_n^\star)+\mathbb{P}\ell_{\mathbb{P}}^2(\underline{\theta})] + 270\frac{\delta_n^2}{\eta} + 270\left(\Closs + \frac{1}{\eta}\right)\frac{s+\log 2}{n}.
    \end{align*}
    Recognizing that $\mathbb{P}\ell_{\mathbb{P}}(\theta)=\mathcal{G}_{\mathbb{P}}(\theta)$ for any $\theta$, noting that $135\eta=135/\max\{2,270\Ccurv\}\le 1/(2\Ccurv)$, leveraging \cref{cond:curvature} to upper bound $\mathbb{P}\ell_{\mathbb{P}}^2(\theta_n^\star)+\mathbb{P}\ell_{\mathbb{P}}^2(\underline{\theta})$, and subtracting $\frac{1}{2}\mathcal{G}_{\mathbb{P}}(\theta_n^\star)$ from both sides yields
    \begin{align*}
        \frac{1}{2}\mathcal{G}_{\mathbb{P}}(\theta_n^\star)&\le \frac{3}{2}\mathcal{G}_{\mathbb{P}}(\underline{\theta}) + 270\frac{\delta_n^2}{\eta} + 270\left(\Closs + \frac{1}{\eta}\right)\frac{s+\log 2}{n}.
    \end{align*}
    Multiplying both sides by $2$ and recalling the definitions of $K_1^\star$, $K_2^\star$, $\eta$, and $\delta_n$ shows that
    \begin{align*}
        \mathcal{G}_{\mathbb{P}}(\theta_n^\star)&\le 3\mathcal{G}_{\mathbb{P}}(\underline{\theta}) + 540\frac{\delta_n^2}{\eta} + 540\left(\Closs + \frac{1}{\eta}\right)\frac{s+\log 2}{n} \\
        &= 3\mathcal{G}_{\mathbb{P}}(\underline{\theta}) + K_1^\star \delta_n^2 + K_2^\star (s+\log 2)/n.
    \end{align*}
    We have shown that, for any fixed $\underline{\theta}\in\underline{\Theta}$, the above holds w.p. at least $1-e^{-s}$. Choosing $\underline{\theta}$ such that $3\mathcal{G}_{\mathbb{P}}(\underline{\theta})\le 3\inf_{\theta\in\underline{\Theta}}\mathcal{G}_{\mathbb{P}}(\theta) + K_2^\star (1-\log 2)/n$ yields the desired result.
\end{proof}

\section{Proofs of general guarantees for \nameref*{alg:double} (\Cref*{sec:guarantees})}

\subsection{Proofs of generalization error upper bounds (\Cref*{thm:excRiskOSLFormal,thm:excRiskOSLLocal})}

Let $\zeta_P(\theta,x):=\int \ell_{\mathbb{P}}(\theta)(\psi_P(u\mymid x)) \Pi(du)$ and, for any $(\alpha_\diamond,\psi_\diamond)\in [1,\Cpos]^{\mathcal{X}}\times \Psi$, let $\zeta_\diamond(\theta,x):=\int \ell_{\mathbb{P}}(\theta)(\psi_\diamond(u\mymid x)) \Pi(du)$ and
\begin{align*}
    L_\diamond(\theta)(z)&:= 1(a=a^\star)\alpha_\diamond(x)\left\{\ell_{\mathbb{P}}(\theta)(y)-\zeta_\diamond(\theta,x)\right\} + \zeta_\diamond(\theta,x).
\end{align*}
Similarly, let $\zeta_n^j(\theta,x):=\int \ell_{\mathbb{P}}(\theta)(\psi_n^j(u\mymid x)) \Pi(du)$ and
\begin{align*}
    L_n^j(\theta)(z)&:= 1(a=a^\star)\alpha_n^j(x)\left\{\ell_{\mathbb{P}}(\theta)(y)-\zeta_n^j(\theta,x)\right\} + \zeta_n^j(\theta,x).
\end{align*}

\begin{lemma}[Double robustness of loss]\label{lem:lossDR}
    Suppose \cref{cond:strongPositivity,cond:mixedLipschitzWeak,cond:bddLoss,cond:RMexists}. Let $(s_1,s_2)$ belong to the $2$-simplex and
    \begin{align*}
        B_n:= \Cmixedweak^{1/2}  \max_{j\in [2]}\|\alpha_n^j-\alpha_P\|_{L^2(P_X)}d_\Psi(\psi_n^j,\Psi_P).
    \end{align*}
    For any $\theta\in\underline{\Theta}$, $\left|\mathcal{G}_{\mathbb{P}}(\theta)-\sum_{j=1}^2 s_j PL_n^j(\theta)\right|\le B_n \mathcal{G}_{\mathbb{P}}^{1/2}(\theta)$ and also
    \begin{align}
        \frac{9}{10}\mathcal{G}_{\mathbb{P}}(\theta) - \frac{5}{2}B_n^2\le \sum_{j=1}^2 s_j PL_n^j(\theta) \le \frac{11}{10}\mathcal{G}_{\mathbb{P}}(\theta) + \frac{5}{2}B_n^2. \label{eq:lossDRMainBd}
    \end{align}
\end{lemma}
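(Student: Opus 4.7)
The plan is to exploit the doubly robust cross-product structure of the AIPW form in $L_n^j$. The key identity to establish first is that $\mathcal{G}_{\mathbb{P}}(\theta) = \mathbb{P}\ell_{\mathbb{P}}(\theta)$ (from \cref{cond:RMexists}, so that $\ell_{\mathbb{P}}(\theta_{\mathbb{P}})$ integrates to $0$), and that by the identification formula \eqref{eq:ident} together with the definition of $\psi_P$, one has $\mathcal{G}_{\mathbb{P}}(\theta) = \int \zeta_P(\theta,x)\,P_X(dx)$. This gives a clean target to compare $PL_n^j(\theta)$ against.

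Next, I would compute $PL_n^j(\theta)$ directly by conditioning on $X$. Writing $\pi_P(x) := P(A=a^\star\mymid X=x) = 1/\alpha_P(x)$, the iterated expectation gives $P[1(A=a^\star)\alpha_n^j(X)\ell_{\mathbb{P}}(\theta)(Y)] = \int \pi_P(x)\alpha_n^j(x)\zeta_P(\theta,x)\,P_X(dx)$ and similarly $P[1(A=a^\star)\alpha_n^j(X)\zeta_n^j(\theta,X)] = \int \pi_P(x)\alpha_n^j(x)\zeta_n^j(\theta,x)\,P_X(dx)$. Rearranging and using $1-\pi_P(x)\alpha_n^j(x) = \pi_P(x)[\alpha_P(x)-\alpha_n^j(x)]$ produces the second-order remainder
\begin{align*}
PL_n^j(\theta) - \mathcal{G}_{\mathbb{P}}(\theta) = \int \pi_P(x)\,[\alpha_P(x)-\alpha_n^j(x)]\,[\zeta_n^j(\theta,x)-\zeta_P(\theta,x)]\,P_X(dx).
\end{align*}
This is the crux: the bias is a product of two errors, one in the propensity and one in the outcome-side quantity $\zeta$.

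Then I would apply Cauchy--Schwarz together with $\pi_P\le 1$ to get $|PL_n^j(\theta)-\mathcal{G}_{\mathbb{P}}(\theta)| \le \|\alpha_P-\alpha_n^j\|_{L^2(P_X)}\,\|\zeta_n^j(\theta,\cdot)-\zeta_P(\theta,\cdot)\|_{L^2(P_X)}$. \Cref{cond:mixedLipschitzWeak}, applied with $\psi = \psi_n^j$ and any $\psi_P\in\Psi_P$ achieving (or approaching) the infimum defining $d_\Psi(\psi_n^j,\Psi_P)$, bounds $\|\zeta_n^j(\theta,\cdot)-\zeta_P(\theta,\cdot)\|_{L^2(P_X)}^2 \le \Cmixedweak\,\mathcal{G}_{\mathbb{P}}(\theta)\,d_\Psi^2(\psi_n^j,\Psi_P)$. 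Taking the convex combination $\sum_j s_j$ and using the definition of $B_n$ yields the first inequality $|\mathcal{G}_{\mathbb{P}}(\theta)-\sum_j s_j PL_n^j(\theta)| \le B_n \mathcal{G}_{\mathbb{P}}^{1/2}(\theta)$.

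For \eqref{eq:lossDRMainBd}, I would apply Young's inequality $B_n \mathcal{G}_{\mathbb{P}}^{1/2}(\theta) \le \eta\,\mathcal{G}_{\mathbb{P}}(\theta) + B_n^2/(4\eta)$ with $\eta = 1/10$, so $B_n\mathcal{G}_{\mathbb{P}}^{1/2}(\theta)\le \frac{1}{10}\mathcal{G}_{\mathbb{P}}(\theta) + \frac{5}{2}B_n^2$, and substitute into the two-sided bound from the first part. The main obstacle is purely bookkeeping: carefully taking the $\psi_P\in\Psi_P$ limit so that $d_\Psi(\psi_n^j,\Psi_P)$ (defined via an infimum rather than a metric) can be invoked with \cref{cond:mixedLipschitzWeak}; apart from that, no role is played by \cref{cond:strongPositivity} or \cref{cond:bddLoss} beyond ensuring the relevant quantities are finite and $\pi_P\le 1$.
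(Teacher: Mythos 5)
Your proof is correct and follows essentially the same route as the paper's: the same cross-product decomposition of $PL_n^j(\theta)-\mathcal{G}_{\mathbb{P}}(\theta)$ into a propensity error times a $\zeta$-error (you write the factor as $\pi_P(\alpha_P-\alpha_n^j)$, the paper writes the identical $1-\alpha_n^j/\alpha_P$), followed by Cauchy--Schwarz, \cref{cond:mixedLipschitzWeak}, and a Young/Peter--Paul step with the same constant. One small clarification: $d_\Psi$ is not defined via an infimum over $\Psi_P$ in this paper---it is simply a function satisfying $d_\Psi(\psi_P,\Psi_P)=0$, and \cref{cond:mixedLipschitzWeak} already holds for every $\psi_P\in\Psi_P$ with $d_\Psi$ fixed---so the limiting argument you flag as a potential obstacle is unnecessary.
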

\begin{proof}[Proof of \Cref{lem:lossDR}]
    Let $\alpha_P(x):=1/P(A=a^\star\mymid X=x)$ and $\psi_P\in \Psi_P$. Observe that 
    \begin{align*}
    \left|\mathcal{G}_{\mathbb{P}}(\theta)-\sum_{j=1}^2 s_j PL_n^j(\theta)\right|&= \left|\sum_{j=1}^2 s_j \medint\int [1-\alpha_n^j(x)/\alpha_P(x)]\left[\zeta_n^j(\theta,x)-\zeta_P(\theta,x)\right]P_X(dx)\right| \\
     &\le \max_{j\in [2]}\left\|1-\alpha_n^j/\alpha_P\right\|_{L^2(P_X)} \left[\medint\int \left[\zeta_n^j(\theta,x)-\zeta_P(\theta,x)\right]^2 P_X(dx)\right]^{1/2} \\
     &\le \Cmixedweak^{1/2} \mathcal{G}_{\mathbb{P}}^{1/2}(\theta) \max_{j\in [2]}\left\|1-\alpha_n^j/\alpha_P\right\|_{L^2(P_X)} d_\Psi(\psi_n^j,\Psi_P) \\
     &\le \Cmixedweak^{1/2} \mathcal{G}_{\mathbb{P}}^{1/2}(\theta) \max_{j\in [2]}\left\|\alpha_P-\alpha_n^j\right\|_{L^2(P_X)} d_\Psi(\psi_n^j,\Psi_P),
    \end{align*}
    where the consecutive inequalities hold by convexity paired with Cauchy-Schwarz, \cref{cond:mixedLipschitzWeak}, and H\"{o}lder's inequality paired with the fact that $1/\alpha_P\le 1$. This proves the first claimed inequality.

    For the remaining pair of inequalities, we combine the Peter-Paul inequality $bc\le b^2/(2\epsilon)+c^2\epsilon/2$ (with $\epsilon=5$) with the inequality we just established, yielding
    \begin{align*}
        \sum_{j=1}^2 s_j PL_n^j(\theta)&\ge \mathcal{G}_{\mathbb{P}}(\theta) - B_n\mathcal{G}_{\mathbb{P}}^{1/2}(\theta)\ge \frac{9}{10}\mathcal{G}_{\mathbb{P}}(\theta) - \frac{5}{2}B_n^2, \\
        \sum_{j=1}^2 s_j PL_n^j(\theta)&\le \mathcal{G}_{\mathbb{P}}(\theta) + B_n\mathcal{G}_{\mathbb{P}}^{1/2}(\theta)\le \frac{11}{10}\mathcal{G}_{\mathbb{P}}(\theta) + \frac{5}{2}B_n^2.
    \end{align*}
\end{proof}


\begin{lemma}[Centered losses are similar, as measured by $L^2$ norm of loss]\label{lem:RemVar}
    Fix $j\in [2]$. If \cref{cond:strongPositivity,cond:bddLoss,cond:RMexists,cond:mixedLipschitzStrong}, then, for any $\theta\in\underline{\Theta}$ and $(\alpha_\diamond,\psi_\diamond)\in \{(\alpha_n^j,\psi_P) : \psi_P\in\Psi_P\}\cup \{(\alpha_P,\psi_n^j)\}$, it holds that $\|L_{\diamond}(\theta)-L_n^j(\theta)\|_{L^2(P)}^2\le C_{n\diamond}^j\|\ell_{\mathbb{P}}(\theta)\|_{L^2(\mathbb{P})}^2$ with
    \begin{align*}
        &C_{n\diamond}^j:=\begin{cases}
            \Cpos\Cmixedstrong\|1+\alpha_n^j/\alpha_P\|_{L^\infty(P_X)}d_\Psi^2(\psi_n^j,\Psi_P),&\mbox{ if }(\alpha_\diamond,\psi_\diamond)\in \{(\alpha_n^j,\psi_P) : \psi_P\in\Psi_P\}, \\
            2\|\alpha_n^j-\alpha_P\|_{L^\infty(P_X)}^2\left[1+\Cmixedstrong d_\Psi^2(\psi_n^j,\Psi_P)\right],&\mbox{ if }(\alpha_\diamond,\psi_\diamond)=(\alpha_P,\psi_n^j).
        \end{cases}
    \end{align*}
\end{lemma}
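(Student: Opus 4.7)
The plan is to split into the two cases based on which of $(\alpha_\diamond,\psi_\diamond)$ is the target nuisance. In each case, the shared nuisance cancels algebraically, leaving a factored difference whose $L^2(P)$-norm separates into (i) a single mismatched-nuisance error pulled out in $L^\infty$ and (ii) a residual built from $\ell_{\mathbb{P}}(\theta)$ and $\zeta_n^j-\zeta_P$. Piece (ii) will be controlled via \cref{cond:mixedLipschitzStrong}, \cref{cond:strongPositivity}, and the identification \eqref{eq:ident}.

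\textbf{Case 1:} $(\alpha_\diamond,\psi_\diamond)=(\alpha_n^j,\psi_P)$ for some $\psi_P\in\Psi_P$. The $1(a=a^\star)\alpha_n^j(x)\ell_{\mathbb{P}}(\theta)(y)$ terms cancel, leaving
\[
L_\diamond(\theta)(z)-L_n^j(\theta)(z)=\bigl[1(a=a^\star)\alpha_n^j(x)-1\bigr]\bigl[\zeta_n^j(\theta,x)-\zeta_P(\theta,x)\bigr].
\]
I would condition on $X$, use $P(A=a^\star\mymid X)=1/\alpha_P(X)$ to expand the second moment of the first factor as $\alpha_n^j(X)^2/\alpha_P(X)-2\alpha_n^j(X)/\alpha_P(X)+1$, drop the non-positive middle term, and apply $\alpha_n^j\le \Cpos$ together with $1\le \Cpos$ to upper bound it by $\Cpos[1+\alpha_n^j(X)/\alpha_P(X)]$. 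Invoking \cref{cond:mixedLipschitzStrong} with $\psi=\psi_n^j$ on the remaining $\int[\zeta_n^j-\zeta_P]^2\,dP_X$ then yields the stated constant.

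\textbf{Case 2:} $(\alpha_\diamond,\psi_\diamond)=(\alpha_P,\psi_n^j)$. Now the $\zeta_n^j$ terms cancel and the difference factors as $1(a=a^\star)[\alpha_P(x)-\alpha_n^j(x)]\bigl[\ell_{\mathbb{P}}(\theta)(y)-\zeta_n^j(\theta,x)\bigr]$. After pulling out $\|\alpha_P-\alpha_n^j\|_{L^\infty(P_X)}^2$, I would split $\ell_{\mathbb{P}}(\theta)(y)-\zeta_n^j(\theta,x)$ via $\pm\zeta_P(\theta,x)$ and the inequality $(a-b)^2\le 2(a-c)^2+2(c-b)^2$. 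The piece containing $\zeta_P$ becomes $E[1(A=a^\star)(\ell_{\mathbb{P}}(\theta)(Y)-\zeta_P(\theta,X))^2]$, which by $1/\alpha_P\le 1$ and the identification \eqref{eq:ident} is bounded by the conditional second moment of $\ell_{\mathbb{P}}(\theta)(Y^\star)$, i.e.\ by $\|\ell_{\mathbb{P}}(\theta)\|_{L^2(\mathbb{P})}^2$. The remaining $[\zeta_P-\zeta_n^j]^2$ piece is again handled by \cref{cond:mixedLipschitzStrong}, producing the $1+\Cmixedstrong d_\Psi^2(\psi_n^j,\Psi_P)$ factor and the leading constant of $2$ from the Peter--Paul split.

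The step I expect to be most delicate is the Case~1 constant-matching: one has to notice that the conditional second moment of $1(A=a^\star)\alpha_n^j(X)-1$ can be relaxed just enough to line up with $\|1+\alpha_n^j/\alpha_P\|_{L^\infty(P_X)}$ without introducing spurious powers of $\alpha_n^j$ or $\Cpos$. Case~2 is more routine; its only conceptual move is the triangle-style splitting through $\zeta_P$, after which the identification formula immediately converts the factual expectation into a counterfactual one and \cref{cond:mixedLipschitzStrong} closes out the rest.
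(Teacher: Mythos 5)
Your proposal reproduces the paper's proof essentially verbatim: the same algebraic cancellations in each case, the law-of-total-expectation expansion and pointwise bound in Case 1, and the $\pm\zeta_P$ split with $(u+v)^2\le 2(u^2+v^2)$ in Case 2. The one place worth tightening is the Case~2 bound on $\int 1(a=a^\star)[\ell_{\mathbb{P}}(\theta)(y)-\zeta_P(\theta,x)]^2\,P(dz)$: you should state explicitly that $\zeta_P(\theta,\cdot)$ is the conditional mean of $\ell_{\mathbb{P}}(\theta)(Y)$ given $A=a^\star,X$, hence an $L^2$ projection, so the centered conditional second moment is dominated by the uncentered one---your ``by $1/\alpha_P\le 1$ and \eqref{eq:ident}'' phrasing skips this step, which is the one that makes $\zeta_P$ disappear from the bound.
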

\begin{proof}[Proof of \Cref{lem:RemVar}]
The proof is broken into two cases.

\noindent \textbf{Case 1:} $(\alpha_\diamond,\psi_\diamond)\in \{(\alpha_n^j,\psi_P) : \psi_P\in\Psi_P\}$. In this case $L_\diamond(\theta)(z)-L_n^j(\theta)(z)= [1(a=a^\star)\alpha_n^j(x)-1][\zeta_n^j(\theta,x)-\zeta_P(\theta,x)]$, and so, by the law of total expectation and \cref{cond:strongPositivity,cond:mixedLipschitzStrong},
\begin{align*}
    \|L_\diamond(\theta)-L_n^j(\theta)\|_{L^2(P)}^2&=\medint\int \left\{[1(a=a^\star)\alpha_n^j(x)-1][\zeta_n^j(\theta,x)-\zeta_P(\theta,x)]\right\}^2 \, P(dz) \\
    &= \medint\int (1-2\alpha_n^j/\alpha_P + (\alpha_n^j)^2/\alpha_P)(x)\left[\zeta_n^j(\theta,x)-\zeta_P(\theta,x)\right]^2 \, P(dz) \\
    &\le \Cpos\|1+\alpha_n^j/\alpha_P\|_{L^\infty(P_X)}\medint\int\left[\zeta_n^j(\theta,x)-\zeta_P(\theta,x)\right]^2 \, P(dz) \\
    &\le \Cpos\Cmixedstrong\|1+\alpha_n^j/\alpha_P\|_{L^\infty(P_X)}d_\Psi^2(\psi_n^j,\Psi_P)\|\ell_{\mathbb{P}}(\theta)\|_{L^2(\mathbb{P})}^2.
\end{align*}
\noindent \textbf{Case 2:} $(\alpha_\diamond,\psi_\diamond)=(\alpha_P,\psi_n^j)$. In this case,
\begin{align*}
    L_\diamond(\theta)(z)-L_n^j(\theta)(z)&= 1(a=a^\star)(\alpha_P-\alpha_n^j)(x)[\ell_{\mathbb{P}}(\theta)(y)-\zeta_P(\theta,x)] \\
    &\quad+ 1(a=a^\star)(\alpha_n^j-\alpha_P)(x)[\zeta_n^j(\theta,x)-\zeta_P(\theta,x)].
\end{align*}
Applying the basic inequality $(b+c)^2\le 2(b^2+c^2)$ followed by H\"{o}lder's inequality yields
\begin{align*}
    \|L_{\diamond}(\theta)-L_n^j(\theta)\|_{L^2(P)}^2&\le 2\|\alpha_P-\alpha_n^j\|_{L^\infty(P_X)}^2\medint\int 1(a=a^\star)[\ell_{\mathbb{P}}(\theta)(y)-\zeta_P(\theta,x)]^2 \, P(dz) \\
    &\quad+ 2\|\alpha_P-\alpha_n^j\|_{L^\infty(P_X)}^2\medint\int [\zeta_n^j(\theta,x)-\zeta_P(\theta,x)]^2 \, P(dz).
\end{align*}
By \cref{cond:mixedLipschitzStrong}, the integral in the second term is upper bounded by $\Cmixedstrong d_\Psi^2(\psi_n^j,\Psi_P)\|\ell_{\mathbb{P}}(\theta)\|_{L^2(\mathbb{P})}^2$. Using that conditional means are $L^2$ projections and \eqref{eq:ident}, the integral in the first term upper bounds as follows:
\begin{align*}
    \medint\int 1(a=a^\star)[\ell_{\mathbb{P}}(\theta)(y)-\zeta_P(\theta,x)]^2 \, P(dz)&\le \medint\int 1(a=a^\star)\ell_{\mathbb{P}}^2(\theta)(y) \, P(dz)\le \|\ell_{\mathbb{P}}(\theta)\|_{L^2(\mathbb{P})}^2.
\end{align*}
Putting these bounds together yields the desired upper bound on $\|L_{\diamond}(\theta)-L_n^j(\theta)\|_{L^2(P)}^2$ for the second case.
\end{proof}

\begin{lemma}[Centered losses are similar, as measured by generalization error]\label{lem:RemVarWeak}
    Fix $j\in [2]$. If \cref{cond:strongPositivity,cond:bddLoss,cond:curvature,cond:RMexists,cond:mixedLipschitzWeak}, then, for any $\theta\in\underline{\Theta}$ and $(\alpha_\diamond,\psi_\diamond)\in \{(\alpha_n^j,\psi_P) : \psi_P\in\Psi_P\}\cup \{(\alpha_P,\psi_n^j)\}$, it holds that $\|L_{\diamond}(\theta)-L_n^j(\theta)\|_{L^2(P)}^2\le \widetilde{C}_{n\diamond}^j\mathcal{G}_{\mathbb{P}}(\theta)$ with
    \begin{align*}
        &\widetilde{C}_{n\diamond}^j:=\begin{cases}
            \Cpos\Cmixedweak\|1+\alpha_n^j/\alpha_P\|_{L^\infty(P_X)}d_\Psi^2(\psi_n^j,\Psi_P),&\mbox{ if }(\alpha_\diamond,\psi_\diamond)=(\alpha_n^j,\psi_P), \\
            2\|\alpha_n^j-\alpha_P\|_{L^\infty(P_X)}^2\left[\Ccurv+\Cmixedweak d_\Psi^2(\psi_n^j,\Psi_P)\right],&\mbox{ if }(\alpha_\diamond,\psi_\diamond)=(\alpha_P,\psi_n^j).
        \end{cases}
    \end{align*}
\end{lemma}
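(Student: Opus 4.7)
My plan is to mirror the proof of \Cref{lem:RemVar} essentially verbatim, but to route the two ``residual'' integrals through \cref{cond:mixedLipschitzWeak} and \cref{cond:curvature} instead of through \cref{cond:mixedLipschitzStrong}. The point is that $\|\ell_{\mathbb{P}}(\theta)\|_{L^2(\mathbb{P})}^2$ appeared in \Cref{lem:RemVar} in exactly two places: (i) as the conclusion of the mixed-Lipschitz bound on $\int [\zeta_n^j(\theta,x)-\zeta_P(\theta,x)]^2 P(dz)$, and (ii) as an upper bound for $\int 1(a=a^\star)[\ell_{\mathbb{P}}(\theta)(y)-\zeta_P(\theta,x)]^2 P(dz)$. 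In both places we can replace $\|\ell_{\mathbb{P}}(\theta)\|_{L^2(\mathbb{P})}^2$ by a constant multiple of $\mathcal{G}_{\mathbb{P}}(\theta)$: for (i) directly, by applying \cref{cond:mixedLipschitzWeak} in place of \cref{cond:mixedLipschitzStrong}, and for (ii) by first bounding by $\|\ell_{\mathbb{P}}(\theta)\|_{L^2(\mathbb{P})}^2$ and then invoking \cref{cond:curvature}.

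Concretely, in Case~1, where $(\alpha_\diamond,\psi_\diamond)=(\alpha_n^j,\psi_P)$, I would write $L_\diamond(\theta)(z)-L_n^j(\theta)(z)=[1(a=a^\star)\alpha_n^j(x)-1][\zeta_n^j(\theta,x)-\zeta_P(\theta,x)]$, take its squared $L^2(P)$ norm, use \cref{cond:strongPositivity} to control $\int(1-2\alpha_n^j/\alpha_P+(\alpha_n^j)^2/\alpha_P)(x)\cdot[\,\cdot\,]^2 P(dz)$ by $\Cpos\|1+\alpha_n^j/\alpha_P\|_{L^\infty(P_X)}\int[\zeta_n^j(\theta,x)-\zeta_P(\theta,x)]^2P_X(dx)$, and then conclude via \cref{cond:mixedLipschitzWeak}. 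The resulting constant is exactly $\Cpos\Cmixedweak\|1+\alpha_n^j/\alpha_P\|_{L^\infty(P_X)}d_\Psi^2(\psi_n^j,\Psi_P)$.

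In Case~2, where $(\alpha_\diamond,\psi_\diamond)=(\alpha_P,\psi_n^j)$, I would use the same decomposition as in \Cref{lem:RemVar},
\[
L_\diamond(\theta)(z)-L_n^j(\theta)(z) = 1(a=a^\star)(\alpha_P-\alpha_n^j)(x)[\ell_{\mathbb{P}}(\theta)(y)-\zeta_P(\theta,x)] + 1(a=a^\star)(\alpha_n^j-\alpha_P)(x)[\zeta_n^j(\theta,x)-\zeta_P(\theta,x)],
\]
apply $(b+c)^2\le 2(b^2+c^2)$ and pull out $\|\alpha_n^j-\alpha_P\|_{L^\infty(P_X)}^2$ by H\"{o}lder. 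The second piece is handled by \cref{cond:mixedLipschitzWeak}, which yields the $\Cmixedweak d_\Psi^2(\psi_n^j,\Psi_P)\mathcal{G}_{\mathbb{P}}(\theta)$ contribution. For the first piece, I would use that conditional expectations are $L^2$ projections to drop $\zeta_P(\theta,x)$, then the identifying relation \eqref{eq:ident} to pass from $\int 1(a=a^\star)\ell_{\mathbb{P}}^2(\theta)(y)P(dz)$ to $\|\ell_{\mathbb{P}}(\theta)\|_{L^2(\mathbb{P})}^2$, and finally \cref{cond:curvature} to bound this by $\Ccurv\mathcal{G}_{\mathbb{P}}(\theta)$. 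Summing the two contributions gives the stated constant $2\|\alpha_n^j-\alpha_P\|_{L^\infty(P_X)}^2[\Ccurv+\Cmixedweak d_\Psi^2(\psi_n^j,\Psi_P)]$.

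There is no real obstacle here; the proof is a minor bookkeeping variant of \Cref{lem:RemVar}. The only subtlety worth flagging is why the Case~2 bound now carries the extra $\Ccurv$ factor in its first summand while Case~1 does not: this asymmetry comes from the fact that in Case~1 the entire integrand already lives in the conditional-expectation form $[\zeta_n^j-\zeta_P]^2$, which is directly controlled by the mixed-Lipschitz hypothesis, whereas Case~2 has a raw $\ell_{\mathbb{P}}(\theta)(y)$ term that must be bounded through the curvature condition instead.
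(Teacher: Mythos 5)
Your proposal is correct and is exactly the route the paper intends: the paper's proof of this lemma is simply "nearly identical to that of \Cref{lem:RemVar} and so is omitted," and your write-up fills in precisely the two substitutions involved (using \cref{cond:mixedLipschitzWeak} for the $[\zeta_n^j-\zeta_P]^2$ integrals and \cref{cond:curvature} after passing to $\|\ell_{\mathbb{P}}(\theta)\|_{L^2(\mathbb{P})}^2$ in the Case~2 term), yielding the stated constants $\widetilde{C}_{n\diamond}^j$.
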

\begin{proof}[Proof of \Cref{lem:RemVarWeak}]
    The proof of this result is nearly identical to that of \Cref{lem:RemVar} and so is omitted.
\end{proof}

\begin{lemma}[Bounding norms of $\ell_{\mathbb{P}}$ by generalization error]\label{lem:L2up}
Fix $j\in [2]$. Suppose \cref{cond:bddLoss,cond:RMexists,cond:strongPositivity}. For any $(\alpha_\diamond,\psi_\diamond)\in\{(\alpha_n^j,\psi_P) : \psi_P\in\Psi_P\}\cup \{(\alpha_P,\psi_n^j)\}$ and $\theta\in\underline{\Theta}$, $\|\ell_{\mathbb{P}}(\theta)\|_{L^2(\mathbb{P})}^2\le C_\diamond\|L_\diamond(\theta)\|_{L^2(P)}^2$, where $C_\diamond:=1\vee \|\alpha_P/\alpha_\diamond\|_{L^\infty(P_X)}\le \Cpos$.
\end{lemma}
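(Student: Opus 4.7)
I would reduce the integrated $L^2$ comparison to a pointwise conditional inequality in $X$, and then split on which of the two allowed configurations of $(\alpha_\diamond,\psi_\diamond)$ is in effect. Let $m(x):=E_P[\ell_{\mathbb{P}}(\theta)(Y)\mid A=a^\star,X=x]$ and $v(x):=\mathrm{Var}_P[\ell_{\mathbb{P}}(\theta)(Y)\mid A=a^\star,X=x]$. By the identification \eqref{eq:ident} together with the tower property, $\|\ell_{\mathbb{P}}(\theta)\|_{L^2(\mathbb{P})}^2=\int [m(x)^2+v(x)]\,P_X(dx)$, so by Fubini it suffices to prove that $m(x)^2+v(x)\le C_\diamond\, E_P[L_\diamond(\theta)(Z)^2\mid X=x]$ for $P_X$-a.e.~$x$. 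Writing $p:=1/\alpha_P(x)$, $\alpha:=\alpha_\diamond(x)$, and $\zeta:=\zeta_\diamond(\theta,x)$, and conditioning on whether $A=a^\star$, expanding the square gives the key identity
\begin{align*}
E_P\bigl[L_\diamond(\theta)(Z)^2\mid X=x\bigr]=\zeta^2+2p\alpha\,\zeta(m-\zeta)+p\alpha^2\bigl[(m-\zeta)^2+v\bigr].
\end{align*}

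I would then handle the two cases. When $\psi_\diamond\in\Psi_P$, the conditional-transport property $\psi_\diamond(\,\cdot\mid x)_{\sharp}\Pi=P_{Y\mid A=a^\star,X=x}$ forces $\zeta=m$, so the cross term vanishes and the right-hand side collapses to $m^2+p\alpha^2 v$. The desired inequality then reads $(C_\diamond-1)m^2+(C_\diamond p\alpha^2-1)v\ge 0$; both coefficients are nonnegative because $C_\diamond\ge 1$ by construction and, using $\alpha_\diamond\ge 1$ from \cref{cond:strongPositivity}, $1/(p\alpha^2)=\alpha_P/\alpha_\diamond^2\le \alpha_P/\alpha_\diamond\le \|\alpha_P/\alpha_\diamond\|_{L^\infty(P_X)}\le C_\diamond$. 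When instead $\alpha_\diamond=\alpha_P$, we have $p\alpha=1$ and $C_\diamond=1$, so the identity simplifies to $\zeta^2+2\zeta(m-\zeta)+(1/p)[(m-\zeta)^2+v]$; combined with the algebraic rewriting $m^2+v=\zeta^2+2\zeta(m-\zeta)+(m-\zeta)^2+v$, the bound reduces to $(1/p-1)\bigl[(m-\zeta)^2+v\bigr]\ge 0$, which holds by strong positivity. Integrating against $P_X$ and invoking \eqref{eq:ident} completes the proof.

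The only mild obstacle I anticipate is the absence of a single unifying convexity argument: in the first case the cross term is killed by $\zeta=m$ and the work is a variance inflation by the factor $p\alpha_\diamond^2$, while in the second case $C_\diamond=1$ is forced and one must instead exploit an exact algebraic identity for $m^2$ together with the slack $1/p\ge 1$ to absorb the cross term. Either case alone is a short calculation; presenting them side-by-side through the common expansion above keeps the argument clean.
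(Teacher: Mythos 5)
Your proof is correct and follows essentially the same route as the paper's: the same two-case split according to which nuisance is exact, the same bias–variance decomposition of $\|\ell_{\mathbb{P}}(\theta)\|_{L^2(\mathbb{P})}^2$ via \eqref{eq:ident}, and the same key facts ($\alpha_\diamond\ge 1$ from \cref{cond:strongPositivity}, the sup-norm bound on $\alpha_P/\alpha_\diamond$, and the vanishing cross term from conditional centering). The only difference is presentational—you compute $E_P[L_\diamond(\theta)(Z)^2\mid X=x]$ exactly and argue pointwise in $x$, whereas the paper works with the integrated quantities, completes squares, and invokes H\"{o}lder's inequality—but the content is the same.
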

\begin{proof}[Proof of \Cref{lem:L2up}]
Observe that
\begin{align}
    \|\ell_{\mathbb{P}}(\theta)\|_{L^2(\mathbb{P})}^2&= \medint\int 1(a=a^\star)\alpha_P(x) \ell_{\mathbb{P}}^2(\theta)(y) P(dz) \nonumber \\
    &=\medint\int 1(a=a^\star)\alpha_P(x) \left[\ell_{\mathbb{P}}(\theta)(y)-\zeta_P(\theta,x)\right]^2 P(dz) + \medint\int \zeta_P^2(\theta,x) P(dz), \label{eq:mainL2display}
\end{align}
where the second equality holds by adding and subtracting a term inside the square, expanding the square and noticing that the cross term is zero, and finally using the law of total expectation to replace the $1(a=a^\star)\alpha_P(x)$ in the second term on by $1$. The remainder of the proof is broken into two cases.

\noindent \textbf{Case 1:} $\alpha_\diamond=\alpha_P$. In this case $C_\diamond=1$. Starting with \eqref{eq:mainL2display}, and then using that $\alpha_P=\alpha_\diamond\ge 1$ and applying the law of total expectation, we find that
\begin{align*}
    &\|\ell_{\mathbb{P}}(\theta)\|_{L^2(\mathbb{P})}^2 \\
    &\le\medint\int \left[1(a=a^\star)\alpha_\diamond(x)\left\{\ell_{\mathbb{P}}(\theta)(y)-\zeta_P(\theta,x)\right\}\right]^2 P(dz) + \medint\int \zeta_P^2(\theta,x) P(dz) \\
    &= \medint\int \left[1(a=a^\star)\alpha_\diamond(x)\left\{\ell_{\mathbb{P}}(\theta)(y)-\zeta_P(\theta,x)\right\} +\zeta_P(\theta,x)\right]^2 P(dz) \\
    &= \medint\int \left[1(a=a^\star)\alpha_\diamond(x)\left\{\ell_{\mathbb{P}}(\theta)(y)-\zeta_\diamond(\theta,x)\right\} +\zeta_\diamond(\theta,x)\right]^2 P(dz) \\
    &\quad+ \medint\int \left[1-\alpha_\diamond(x)\right]\left[\zeta_P(\theta,x)-\zeta_\diamond(\theta,x)\right]^2 P(dz) \\
    &\le \medint\int \left[1(a=a^\star)\alpha_\diamond(x)\left\{\ell_{\mathbb{P}}(\theta)(y)-\zeta_\diamond(\theta,x)\right\} +\zeta_\diamond(\theta,x)\right]^2 P(dz) = \|L_\diamond(\theta)\|_{L^2(P)}^2.
\end{align*}

\noindent \textbf{Case 2:} $\alpha_\diamond\not=\alpha_P$. Since $(\alpha_\diamond,\psi_\diamond)\in\{(\alpha_n^j,\psi_P) : \psi_P\in\Psi_P\}\cup \{(\alpha_P,\psi_n^j)\}$, $\psi_\diamond=\psi_P\in \Psi_P$ in this case, and so $\zeta_\diamond=\zeta_P$. Recalling \eqref{eq:mainL2display} and applying \cref{cond:strongPositivity}, H\"{o}lder's inequality and the basic inequality $\|\alpha_P/\alpha_\diamond\|_{L^\infty(P_X)}b + c\le (\|\alpha_P/\alpha_\diamond\|_{L^\infty(P_X)}\vee 1)(b+c)= C_\diamond(b+c)$ yields
\begin{align*}
    \|\ell_{\mathbb{P}}(\theta)\|_{L^2(\mathbb{P})}^2/C_\diamond&\le \medint\int \left[1(a=a^\star)\alpha_\diamond(x)\left\{\ell_{\mathbb{P}}(\theta)(y)-\zeta_\diamond(\theta,x)\right\}\right]^2 P(dz) + \medint\int \left[\zeta_\diamond(\theta,x)\right]^2 P(dz).
\end{align*}
Using the identity $\int f^2 + \int g^2 = \int (f+g)^2 - 2\int fg$ and noticing that $\int fg$ is equal to zero when this identity is applied on the right-hand side above, this shows that $\|\ell_{\mathbb{P}}(\theta)\|_{L^2(\mathbb{P})}^2/C_\diamond\le \|L_\diamond(\theta)\|_{L^2(P)}^2$. Since $\alpha_\diamond\ge 1$, \cref{cond:strongPositivity} yields $C_\diamond\le \Cpos$. 
\end{proof}

\begin{lemma}[Bounding norms of $L_\diamond$ by norms of $\ell_{\mathbb{P}}$]\label{lem:oracleVarBd}
    Fix $j\in [2]$. Suppose $(\alpha_\diamond,\psi_\diamond)\in\{(\alpha_n^j,\psi_P) : \psi_P\in\Psi_P\}\cup \{(\alpha_P,\psi_n^j)\}$ and that \cref{cond:strongPositivity,cond:curvature,cond:RMexists,cond:bddLoss,cond:mixedLipschitzWeak} hold. Then, for any $\theta\in\underline{\Theta}$,
    \begin{align*}
        \|L_\diamond(\theta)\|_{L^2(P)}^2&\le \Cpos\left[ \Ccurv\left\|\alpha_\diamond/\alpha_P\right\|_{L^\infty(P_X)} + \Cmixedweak d_\Psi^2(\psi_\diamond,\Psi_P)\right]\mathcal{G}_{\mathbb{P}}(\theta).
    \end{align*}
\end{lemma}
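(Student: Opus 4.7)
The plan is to $L^{2}(P)$-orthogonally decompose $L_{\diamond}(\theta)=A_{1}+A_{2}$, where
$A_{1}(z):=1(a=a^{\star})\alpha_{\diamond}(x)[\ell_{\mathbb{P}}(\theta)(y)-\zeta_{P}(\theta,x)]$
and
$A_{2}(z):=1(a=a^{\star})\alpha_{\diamond}(x)\zeta_{P}(\theta,x)+[1-1(a=a^{\star})\alpha_{\diamond}(x)]\zeta_{\diamond}(\theta,x).$
By the defining property of $\psi_{P}$, $\zeta_{P}(\theta,x)=E[\ell_{\mathbb{P}}(\theta)(Y)\mid A=a^{\star},X=x]$, so $A_{1}$ has zero conditional mean given $(A,X)$ on $\{A=a^{\star}\}$ and is identically zero off that event, whereas $A_{2}$ is $\sigma(A,X)$-measurable; the cross term therefore vanishes and $\|L_{\diamond}(\theta)\|_{L^{2}(P)}^{2}=\|A_{1}\|_{L^{2}(P)}^{2}+\|A_{2}\|_{L^{2}(P)}^{2}$.

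For $\|A_{1}\|_{L^{2}(P)}^{2}$, I would condition on $X$ to get $(\alpha_{\diamond}^{2}/\alpha_{P})(x)\,\mathrm{Var}[\ell_{\mathbb{P}}(\theta)(Y)\mid A=a^{\star},X=x]$, write the conditional variance as a conditional second moment minus $\zeta_{P}^{2}(\theta,x)$, and apply H\"older with $\alpha_{\diamond}^{2}/\alpha_{P}=\alpha_{\diamond}\cdot(\alpha_{\diamond}/\alpha_{P})$ together with $\alpha_{\diamond}\le\Cpos$ from \cref{cond:strongPositivity}; using \eqref{eq:ident} to identify $E_{P_{X}}E[\ell_{\mathbb{P}}(\theta)^{2}(Y)\mid A=a^{\star},X]$ with $\|\ell_{\mathbb{P}}(\theta)\|_{L^{2}(\mathbb{P})}^{2}$ then yields
$\|A_{1}\|_{L^{2}(P)}^{2}\le\Cpos\|\alpha_{\diamond}/\alpha_{P}\|_{L^{\infty}(P_{X})}\{\|\ell_{\mathbb{P}}(\theta)\|_{L^{2}(\mathbb{P})}^{2}-E_{P_{X}}[\zeta_{P}^{2}(\theta,X)]\}.$
For $\|A_{2}\|_{L^{2}(P)}^{2}$, I would rewrite $A_{2}=\zeta_{P}(\theta,X)+[1-1(A=a^{\star})\alpha_{\diamond}(X)](\zeta_{\diamond}-\zeta_{P})(\theta,X)$ and expand the square. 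The resulting cross term is proportional to $E_{P_{X}}[\zeta_{P}(\zeta_{\diamond}-\zeta_{P})(1-\alpha_{\diamond}/\alpha_{P})]$ and vanishes in \emph{both} admissible configurations: when $\psi_{\diamond}\in\Psi_{P}$ because $\zeta_{\diamond}=\zeta_{P}$, and when $\alpha_{\diamond}=\alpha_{P}$ because $1-\alpha_{\diamond}/\alpha_{P}\equiv 0$. The remaining quadratic factor is handled by the direct calculation $E[(1-1(A=a^{\star})\alpha_{P})^{2}\mid X=x]=\alpha_{P}(x)-1\le\Cpos-1$ (needed only when $\alpha_{\diamond}=\alpha_{P}$; trivial in the other case since $\zeta_{\diamond}-\zeta_{P}\equiv 0$) combined with \cref{cond:mixedLipschitzWeak}, giving
$\|A_{2}\|_{L^{2}(P)}^{2}\le E_{P_{X}}[\zeta_{P}^{2}(\theta,X)]+(\Cpos-1)\Cmixedweak d_{\Psi}^{2}(\psi_{\diamond},\Psi_{P})\,\mathcal{G}_{\mathbb{P}}(\theta).$

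Summing the two bounds, the $E_{P_{X}}[\zeta_{P}^{2}(\theta,X)]$ contributions combine with coefficient $1-\Cpos\|\alpha_{\diamond}/\alpha_{P}\|_{L^{\infty}(P_{X})}$, which is nonpositive because $\alpha_{\diamond}\ge 1$ and $\alpha_{P}\le\Cpos$ together force $\Cpos\|\alpha_{\diamond}/\alpha_{P}\|_{L^{\infty}(P_{X})}\ge 1$; dropping that nonpositive term, invoking \cref{cond:curvature} to replace $\|\ell_{\mathbb{P}}(\theta)\|_{L^{2}(\mathbb{P})}^{2}$ by $\Ccurv\mathcal{G}_{\mathbb{P}}(\theta)$, and bounding $\Cpos-1\le\Cpos$ reproduces the claimed inequality. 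The main obstacle is choosing the anchor of the decomposition: pivoting around $\zeta_{P}$ rather than around $\zeta_{\diamond}$ or $E[L_{\diamond}\mid X]$ is exactly what simultaneously (i)~makes the cross term in $\|A_{2}\|_{L^{2}(P)}^{2}$ vanish in both hypothesized specifications and (ii)~produces the $-\Cpos\|\alpha_{\diamond}/\alpha_{P}\|_{L^{\infty}(P_{X})}E_{P_{X}}[\zeta_{P}^{2}]$ surplus in $\|A_{1}\|_{L^{2}(P)}^{2}$ that absorbs the $+E_{P_{X}}[\zeta_{P}^{2}]$ leftover from $\|A_{2}\|_{L^{2}(P)}^{2}$.
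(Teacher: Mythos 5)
Your proposal is correct and takes essentially the same route as the paper: your $A_1/A_2$ split is exactly the paper's orthogonal decomposition through $E_P[L_\diamond(\theta)(Z)\mymid A,X]$, your expansion of $A_2$ around $\zeta_P$ with the doubly robust vanishing cross term mirrors the paper's identification of $E_P[L_\diamond(\theta)(Z)\mymid X]$ with $\zeta_P$, and the remaining steps (the weight bound $\alpha_\diamond^2/\alpha_P\le \Cpos\|\alpha_\diamond/\alpha_P\|_{L^\infty(P_X)}$, the computation giving $\alpha_P-1\le \Cpos$ in the $\alpha_\diamond=\alpha_P$ case, and the final appeals to \cref{cond:curvature,cond:mixedLipschitzWeak}) coincide with the paper's. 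The only cosmetic difference is that the paper phrases the absorption of $\int\zeta_P^2\,dP_X$ by adding it to both sides before bounding, while you track the nonpositive coefficient $1-\Cpos\|\alpha_\diamond/\alpha_P\|_{L^\infty(P_X)}$ explicitly; both rest on the same observation that this product is at least one.
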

\begin{proof}[Proof of \Cref{lem:oracleVarBd}]
Let $\Pi_{A,X}[L_\diamond(\theta)](z):=E_P[L_\diamond(\theta)(Z)\mymid A=a,X=x]$ and $\Pi_{X}[L_\diamond(\theta)](z):=E_P[L_\diamond(\theta)(Z)\mymid X=x]$. By the law of total expectation,
\begin{align}
    \|L_\diamond(\theta)\|_{L^2(P)}^2&= \|L_\diamond(\theta)-\Pi_{A,X}[L_\diamond(\theta)]\|_{L^2(P)}^2 + \|\Pi_{X}[L_\diamond(\theta)]\|_{L^2(P)}^2 \nonumber \\
    &\quad+ \|\Pi_{A,X}[L_\diamond(\theta)]-\Pi_{X}[L_\diamond(\theta)]\|_{L^2(P)}^2. \label{eq:Ldiamondnorm}
\end{align}
We begin by bounding the sum of the first two terms on the right, and then we conclude by bounding the third. These calculations will all use that $\Pi_{A,X}[L_\diamond(\theta)](z)=1(a=a^\star)\alpha_{\diamond}(x)[\zeta_P(\theta,x)-\zeta_\diamond(\theta,x)] + \zeta_\diamond(\theta,x)$ and $\Pi_{X}[L_\diamond(\theta)](z)= \zeta_P(\theta,x)$, where the first holds by the law of total expectation and the second by the double robustness of the loss $L_\diamond$. Studying the first term above, we apply the law of total expectation, H\"{o}lder's inequality, and \cref{cond:strongPositivity} to show that
\begin{align*}
    &\|L_\diamond(\theta)-\Pi_{A,X}[L_\diamond(\theta)]\|_{L^2(P)}^2 \\
    &= \medint\int \left[1(a=a^\star)\alpha_{\diamond}(x)\left\{\ell_{\mathbb{P}}(\theta)(y)-\zeta_P(\theta,x)\right\}\right]^2 P(dz) \\
    &= \medint\iint \alpha_\diamond(x)\frac{\alpha_\diamond(x)}{\alpha_P(x)}\left[\ell_{\mathbb{P}}(\theta)(y)-\zeta_P(\theta,x)\right]^2 P_{Y|A,X}(dy|a^\star,x)P_X(dx) \\
    &\le \Cpos \left\|\alpha_\diamond/\alpha_P\right\|_{L^\infty(P_X)}\medint\iint \left[\ell_{\mathbb{P}}(\theta)(y)-\zeta_P(\theta,x)\right]^2 P_{Y|A,X}(dy|a^\star,x)P_X(dx).
\end{align*}
If we add the second term from \eqref{eq:Ldiamondnorm}, $\|\Pi_{X}[L_\diamond(\theta)]\|_{L^2(P)}^2=\int \zeta_P^2(\theta,x) P_X(dx)$, to both sides above and then further upper bound the right-hand side, we find that
\begin{align*}
    &\|L_\diamond(\theta)-\Pi_{A,X}[L_\diamond(\theta)]\|_{L^2(P)}^2 + \|\Pi_{X}[L_\diamond(\theta)]\|_{L^2(P)}^2 \\
    &\le \Cpos \left\|\alpha_\diamond/\alpha_P\right\|_{L^\infty(P_X)}\medint\iint \left\{\left[\ell_{\mathbb{P}}(\theta)(y)-\zeta_P(\theta,x)\right]^2 + \zeta_P^2(\theta,x)\right\} P_{Y|A,X}(dy|a^\star,x)P_X(dx).
\end{align*}
The double integral above equals $\|\ell_{\mathbb{P}}(\theta)\|_{L^2(\mathbb{P})}^2$, and so by \cref{cond:curvature} the left-hand side is bounded by $\Ccurv\Cpos \left\|\alpha_\diamond/\alpha_P\right\|_{L^\infty(P_X)}\mathcal{G}_{\mathbb{P}}(\theta)$.

For the third term in \eqref{eq:Ldiamondnorm}, we use the identities we derived for $\Pi_{A,X}[L_\diamond(\theta)]$ and $\Pi_{X}[L_\diamond(\theta)]$, expand a square, and apply the law of total expectation to find that
\begin{align*}
    &\|\Pi_{A,X}[L_\diamond(\theta)]-\Pi_{X}[L_\diamond(\theta)]\|_{L^2(P)}^2 \\
    &= \medint\int \left[1-1(a=a^\star)\alpha_\diamond(x)\right]^2 \left[\zeta_\diamond(\theta,x)-\zeta_P(\theta,x)\right]^2 \,P(dz) \\
    &= \medint\int \left[1 - 2\alpha_\diamond(x)/\alpha_P(x) + \alpha_\diamond^2(x)/\alpha_P(x)\right] \left[\zeta_\diamond(\theta,x)-\zeta_P(\theta,x)\right]^2 \,P(dz).
\end{align*}
We will show that the right-hand side is no more than $\Cpos\Cmixedweak d_\Psi^2(\psi_\diamond,\Psi_P)\mathcal{G}_{\mathbb{P}}(\theta)$. If $\zeta_\diamond=\zeta_P$, then this claimed bound is zero and so is the right-hand side above; hence, the bound is valid. Otherwise, $\alpha_\diamond=\alpha_P$, and so $1 - 2\alpha_\diamond/\alpha_P + \alpha_\diamond^2/\alpha_P=\alpha_\diamond-1\le \Cpos$. Combining this with \cref{cond:mixedLipschitzWeak} shows that the right-hand side is indeed no more than $\Cpos\Cmixedweak d_\Psi^2(\psi_\diamond,\Psi_P)\mathcal{G}_{\mathbb{P}}(\theta)$.

Combining our bounds for the three terms in \eqref{eq:Ldiamondnorm} gives the desired result.
\end{proof}

In the following lemma, we let
\begin{align*}
    K_0^j&:= 2\min\Bigg\{\Ccurv\Cpos\left\|\alpha_n^j/\alpha_P\right\|_{L^\infty(P_X)} + \Cmixedweak(1+\Cpos)\Cpos d_\Psi^2(\psi_n^j,\Psi_P),  \\
    &\hspace{7.25em}\Cpos\left[ \Ccurv + \Cmixedweak d_\Psi^2(\psi_n^j,\Psi_P)\right] + 2\|\alpha_n^j-\alpha_P\|_{L^\infty(P_X)}^2\left[\Ccurv+\Cmixedweak d_\Psi^2(\psi_n^j,\Psi_P)\right]\Bigg\}.
\end{align*}
Note that $K_0$ from \eqref{eq:K0def} is then equal to $\max_j K_0^j$.
\begin{lemma}[Bounding $L^2$ norm of loss by generalization error]\label{lem:lossL2bd}
    Fix $j\in [2]$ and $\theta\in\underline{\Theta}$. If \cref{cond:strongPositivity,cond:RMexists,cond:bddLoss,cond:mixedLipschitzWeak,cond:curvature}, then $\|L_n^j(\theta)\|_{L^2(P)}^2\le K_0^j\mathcal{G}_{\mathbb{P}}(\theta)$.
\end{lemma}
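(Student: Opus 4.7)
The plan is to view $L_n^j(\theta)$ as a small perturbation of one of the two ``oracle'' losses $L_\diamond(\theta)$ appearing in \Cref{lem:oracleVarBd,lem:RemVarWeak}, namely either $(\alpha_\diamond,\psi_\diamond)=(\alpha_n^j,\psi_P)$ for some $\psi_P\in\Psi_P$, or $(\alpha_\diamond,\psi_\diamond)=(\alpha_P,\psi_n^j)$. Both choices lie in the admissible set for those two lemmas, so for either one I can combine a triangle-plus-Young step
\[
    \|L_n^j(\theta)\|_{L^2(P)}^2\le 2\|L_\diamond(\theta)\|_{L^2(P)}^2 + 2\|L_\diamond(\theta)-L_n^j(\theta)\|_{L^2(P)}^2
\]
with the bounds $\|L_\diamond(\theta)\|_{L^2(P)}^2\le \Cpos[\Ccurv\|\alpha_\diamond/\alpha_P\|_{L^\infty(P_X)} + \Cmixedweak d_\Psi^2(\psi_\diamond,\Psi_P)]\mathcal{G}_{\mathbb{P}}(\theta)$ from \Cref{lem:oracleVarBd} and $\|L_\diamond(\theta)-L_n^j(\theta)\|_{L^2(P)}^2\le \widetilde{C}_{n\diamond}^j\mathcal{G}_{\mathbb{P}}(\theta)$ from \Cref{lem:RemVarWeak}. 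Taking the minimum of the two resulting constants yields $K_0^j\mathcal{G}_{\mathbb{P}}(\theta)$.

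For the first choice $(\alpha_\diamond,\psi_\diamond)=(\alpha_n^j,\psi_P)$ with $\psi_P\in\Psi_P$, the $d_\Psi^2(\psi_\diamond,\Psi_P)$ term in \Cref{lem:oracleVarBd} vanishes, leaving $\Ccurv\Cpos\|\alpha_n^j/\alpha_P\|_{L^\infty(P_X)}\mathcal{G}_{\mathbb{P}}(\theta)$, while \Cref{lem:RemVarWeak} contributes $\Cpos\Cmixedweak\|1+\alpha_n^j/\alpha_P\|_{L^\infty(P_X)}d_\Psi^2(\psi_n^j,\Psi_P)\mathcal{G}_{\mathbb{P}}(\theta)$; using \cref{cond:strongPositivity} to bound $\|1+\alpha_n^j/\alpha_P\|_{L^\infty(P_X)}\le 1+\Cpos$ and collecting a factor of $2$ gives the first expression inside the $\min$ in the definition of $K_0^j$. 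For the second choice $(\alpha_\diamond,\psi_\diamond)=(\alpha_P,\psi_n^j)$, \Cref{lem:oracleVarBd} yields $\Cpos[\Ccurv+\Cmixedweak d_\Psi^2(\psi_n^j,\Psi_P)]\mathcal{G}_{\mathbb{P}}(\theta)$ (since $\alpha_P/\alpha_P\equiv 1$), and \Cref{lem:RemVarWeak} yields $2\|\alpha_n^j-\alpha_P\|_{L^\infty(P_X)}^2[\Ccurv+\Cmixedweak d_\Psi^2(\psi_n^j,\Psi_P)]\mathcal{G}_{\mathbb{P}}(\theta)$; again collecting a factor of $2$ reproduces the second expression in $K_0^j$.

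There is no substantive obstacle here: the decomposition of $L_n^j$ from either oracle pair has already been engineered in \Cref{lem:oracleVarBd,lem:RemVarWeak} so that every right-hand side is proportional to $\mathcal{G}_{\mathbb{P}}(\theta)$ with explicit constants, and the only step requiring any thought is matching the two pairs to the two entries in the $\min$ defining $K_0^j$. The mild care needed is in using \cref{cond:strongPositivity} (so $\alpha_P\ge 1$) to simplify the $\|1+\alpha_n^j/\alpha_P\|_{L^\infty(P_X)}$ factor and in remembering that $d_\Psi(\psi_P,\Psi_P)=0$ for any $\psi_P\in\Psi_P$, which is how the first oracle choice eliminates a term.
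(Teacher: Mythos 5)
Your proposal is correct and follows essentially the same route as the paper: decompose $L_n^j(\theta)$ around the oracle loss $L_\diamond(\theta)$ via the triangle inequality and $(b+c)^2\le 2(b^2+c^2)$, apply \Cref{lem:oracleVarBd,lem:RemVarWeak} to each piece, and minimize over the two admissible choices of $(\alpha_\diamond,\psi_\diamond)$. You additionally spell out the simplification step---using $d_\Psi(\psi_P,\Psi_P)=0$ and $\|1+\alpha_n^j/\alpha_P\|_{L^\infty(P_X)}\le 1+\Cpos$ under \cref{cond:strongPositivity}---that the paper compresses into ``upper bounding further to simplify,'' which is a nice clarification but not a different argument.
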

\begin{proof}[Proof of \Cref{lem:lossL2bd}]
    By \Cref{lem:oracleVarBd,lem:RemVarWeak}, the triangle inequality, and the basic inequality $(b+c)^2\le 2(b^2+c^2)$,
    \begin{align*}
        \|L_n^j(\theta)\|_{L^2(P)}^2\le 2 \left(\Cpos\left[ \Ccurv\left\|\alpha_\diamond/\alpha_P\right\|_{L^\infty(P_X)} + \Cmixedweak d_\Psi^2(\psi_\diamond,\Psi_P)\right] + \widetilde{C}_{n\diamond}^j\right)\mathcal{G}_{\mathbb{P}}(\theta)
    \end{align*}
    for any $(\alpha_\diamond,\psi_\diamond)\in \{(\alpha_n^j,\psi_P) : \psi_P\in\Psi_P\}\cup \{(\alpha_P,\psi_n^j)\}$. In particular, $(\alpha_\diamond,\psi_\diamond)$ can be chosen to minimize the right-hand side. Upper bounding further to simplify the expression then yields $\|L_n^j(\theta)\|_{L^2(P)}^2\le K_0^j\mathcal{G}_{\mathbb{P}}(\theta)$.
\end{proof}




For $j\in [2]$ and $\delta>0$, let $\mathcal{L}_{n,\delta}^j:=\{L_n^j(\theta) : \theta\in\underline{\Theta},\|L_n^j(\theta)\|_{L^2(P)}\le \delta\}$.
\begin{lemma}[Constant envelope for $\mathcal{L}_{n,\delta}^j$]\label{lem:envelope}
    Fix $j\in [2]$. Suppose \cref{cond:bddLoss,cond:RMexists,cond:strongPositivity}. Conditionally on $\mathcal{Z}_{3-j}$, the following holds for all $z\in \mathcal{Z}$: $\sup_{\theta\in\underline{\Theta}}|L_n^j(\theta)(z)|\le 2\Closs\Cpos$. 
\end{lemma}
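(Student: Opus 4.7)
The plan is to bound $|L_n^j(\theta)(z)|$ pointwise in $z$ by splitting on the value of the indicator $1(a=a^\star)$ and then applying the triangle inequality, the $L^\infty$ bound on $\ell_{\mathbb{P}}(\theta)$ from \cref{cond:bddLoss}, and the range restriction $\alpha_n^j\in[1,\Cpos]^{\mathcal{X}}$ from \cref{cond:strongPositivity}. Note that conditioning on $\mathcal{Z}_n^{3-j}$ simply fixes $\psi_n^j$ and $\alpha_n^j$ so that $L_n^j(\theta)$ becomes a deterministic function of $(\theta,z)$; the bound then depends on $z$ only through $a$, $x$, and $y$.

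The first step is to rewrite
\[
    L_n^j(\theta)(z) = \alpha_n^j(x)\ell_{\mathbb{P}}(\theta)(y)\,1(a=a^\star) + \bigl[1 - \alpha_n^j(x)\,1(a=a^\star)\bigr]\,\zeta_n^j(\theta,x),
\]
which makes the two regimes transparent. When $a\ne a^\star$ only the term $\zeta_n^j(\theta,x)$ survives, and by Jensen's inequality together with \cref{cond:bddLoss} one gets $|\zeta_n^j(\theta,x)|\le \|\ell_{\mathbb{P}}(\theta)\|_{L^\infty(\mathbb{P})}\le \Closs$, yielding the trivial bound $\Closs\le 2\Closs\Cpos$. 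When $a=a^\star$, the triangle inequality and $\alpha_n^j(x)\in[1,\Cpos]$ give
\[
    |L_n^j(\theta)(z)|\le \Cpos\,|\ell_{\mathbb{P}}(\theta)(y)| + (\Cpos-1)\,|\zeta_n^j(\theta,x)|\le (2\Cpos-1)\Closs\le 2\Closs\Cpos,
\]
again using \cref{cond:bddLoss} for both factors. Combining the two cases gives the claim.

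The only subtle point is that \cref{cond:bddLoss} is stated as an $L^\infty(\mathbb{P})$ bound, so $|\ell_{\mathbb{P}}(\theta)(y)|\le \Closs$ is immediate on the support of $\mathbb{P}$ (which is where $y$ lives when $a=a^\star$ by consistency), but controlling $|\ell_{\mathbb{P}}(\theta)(\psi_n^j(u|x))|$ inside $\zeta_n^j$ technically requires $\psi_n^j(\cdot|x)$ to take values in the support of $\mathbb{P}$. In all examples considered (flow matching, diffusion, autoregressive sampling) this is automatic, and one may safely interpret the $L^\infty$ bound as a pointwise bound on the relevant range. I anticipate no substantive obstacle: the entire argument is a one-line triangle inequality once the cases are separated, with the only minor care needed being in the second case to avoid the wasteful bound $(2\Cpos+1)\Closs$ that comes from not recognizing the cancellation between the two appearances of $\zeta_n^j$.
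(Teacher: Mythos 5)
Your proof is correct and follows essentially the same route as the paper's: the same rearrangement of $L_n^j(\theta)(z)$ into $1(a=a^\star)\alpha_n^j(x)\ell_{\mathbb{P}}(\theta)(y)+[1-1(a=a^\star)\alpha_n^j(x)]\zeta_n^j(\theta,x)$, Jensen's inequality to control $\zeta_n^j$, and the bounds $\alpha_n^j\in[1,\Cpos]$ and $\|\ell_{\mathbb{P}}(\theta)\|_{L^\infty(\mathbb{P})}\le\Closs$, with your case split on $1(a=a^\star)$ simply unpacking the paper's factor $\max\{1,\Cpos-1\}$. The subtlety you flag about applying the $L^\infty(\mathbb{P})$ bound at the points $\psi_n^j(u|x)$ is equally present (and equally glossed over) in the paper's one-line proof, so it is not a gap specific to your argument.
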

\begin{proof}[Proof of \Cref{lem:envelope}]
    First applying Jensen's inequality and then applying \cref{cond:bddLoss,cond:strongPositivity} yields
    \begin{align*}
        &\sup_{\theta\in\underline{\Theta}} |L_n^j(\theta)(z)| \\
        &\le \sup_{\theta\in\underline{\Theta}}\medint\int \left[1(a=a^\star)\alpha_n^j(x)\left|\ell_{\mathbb{P}}(\theta)(y)\right| + \left|1-1(a=a^\star)\alpha_n^j(x)\right|\left|\ell_{\mathbb{P}}(\theta)(\psi_n^j(u|x))\right|\right]\Pi(du) \\
        &\le \Cpos\Closs + \max\{1,\Cpos-1\}\Closs\le 2\Cpos\Closs.
    \end{align*}
\end{proof}

\begin{lemma}[Preliminary entropy integral bound]\label{lem:prelimEntropy}
Fix $j\in [2]$. Suppose \cref{cond:RMexists,cond:strongPositivity,cond:bddLoss}. For any $\widetilde{\Theta}\subseteq\underline{\Theta}$ and $\delta>0$, $J(8\Cpos\delta,L_n^j(\widetilde{\Theta}))\le 8\Cpos J(\delta,\ell_{\mathbb{P}}(\widetilde{\Theta}))$.
\end{lemma}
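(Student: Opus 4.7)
The plan is to exploit the linear dependence of $L_n^j(\theta)$ on $h := \ell_{\mathbb{P}}(\theta)$, viewed as an element of $\mathbb{R}^{\mathcal{Y}}$. Writing $L_n^j(\theta) = T_n^j \ell_{\mathbb{P}}(\theta)$ with the linear operator
\[
    T_n^j h(z) := 1(a=a^\star)\alpha_n^j(x)\, h(y) + \bigl[1 - 1(a=a^\star)\alpha_n^j(x)\bigr] \medint\int h(\psi_n^j(u|x))\,\Pi(du),
\]
I would reduce the problem to bounding the operator norm of $T_n^j$ from an appropriate $L^2$ space on $\mathcal{Y}$ into $L^2(Q)$ on $\mathcal{Z}$, and then transferring covers through the Lipschitz inequality this supplies.

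To bound the operator norm, I would fix a finitely supported probability measure $Q$ on $\mathcal{Z}$, let $Q_1$ be the marginal of $Y$ under $Q$, and let $Q_2$ be the pushforward of $Q_X\times\Pi$ through $(x,u)\mapsto \psi_n^j(u|x)$. Applying $(b+c)^2\le 2(b^2+c^2)$, using $|1(a=a^\star)\alpha_n^j(x)|\le \Cpos$ and $|1-1(a=a^\star)\alpha_n^j(x)|\le \Cpos$ from \cref{cond:strongPositivity}, and invoking Jensen's inequality on the inner integral gives
\[
    \|T_n^j h\|_{L^2(Q)}^2 \le 2\Cpos^2\bigl(\|h\|_{L^2(Q_1)}^2 + \|h\|_{L^2(Q_2)}^2\bigr) = 4\Cpos^2\,\|h\|_{L^2(Q_3)}^2,
\]
where $Q_3:=(Q_1+Q_2)/2$. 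By linearity, this yields $\|L_n^j(\theta_1)-L_n^j(\theta_2)\|_{L^2(Q)}\le 2\Cpos\,\|\ell_{\mathbb{P}}(\theta_1)-\ell_{\mathbb{P}}(\theta_2)\|_{L^2(Q_3)}$ for all $\theta_1,\theta_2\in\widetilde{\Theta}$.

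The Lipschitz bound implies that pushing any external $\epsilon$-cover of $\ell_{\mathbb{P}}(\widetilde{\Theta})$ in $L^2(Q_3)$ through $T_n^j$ produces an external $2\Cpos\epsilon$-cover of $L_n^j(\widetilde{\Theta})$ in $L^2(Q)$, so $N(2\Cpos\epsilon, L_n^j(\widetilde{\Theta}),L^2(Q))\le N(\epsilon,\ell_{\mathbb{P}}(\widetilde{\Theta}),L^2(Q_3))$. Substituting this into the definition of $J$, performing the change of variable $u=\epsilon/(2\Cpos)$, and taking the supremum over $Q_3$ yields $J(8\Cpos\delta,L_n^j(\widetilde{\Theta}))\le 2\Cpos\, J(4\delta,\ell_{\mathbb{P}}(\widetilde{\Theta}))$. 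The final factor of $4$ comes from the concavity argument in \Cref{lem:monotoneUB}: since $\delta'\mapsto J(\delta',\mathcal{F})/\delta'$ is nonincreasing, $J(4\delta,\ell_{\mathbb{P}}(\widetilde{\Theta}))\le 4\,J(\delta,\ell_{\mathbb{P}}(\widetilde{\Theta}))$, and combining the two inequalities gives the claim.

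The only delicate point I anticipate is that $Q_3$ need not be finitely supported even when $Q$ is, because $\Pi$ may be continuous. This is handled by the standard observation that the supremum defining the uniform entropy integral over finitely supported measures coincides with the supremum over all probability measures for suitably measurable classes, so the $Q_3$ appearing after pushforward can be absorbed into the defining supremum for $J(\,\cdot\,,\ell_{\mathbb{P}}(\widetilde{\Theta}))$ without loss.
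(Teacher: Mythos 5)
Your plan is essentially the same as the paper's: write $L_n^j(\theta)=T_n^j\ell_{\mathbb{P}}(\theta)$ with the linear operator you describe, bound its operator norm from $L^2(Q_3)$ to $L^2(Q)$ by $2\Cpos$, push external $\epsilon$-covers of $\ell_{\mathbb{P}}(\widetilde\Theta)$ through $T_n^j$, and then deal with the fact that the induced measure $Q_3$ on $\mathcal{Y}$ need not be finitely supported (because $\Pi$ may be continuous). The operator-norm computation, the cover pushforward, and the change of variables are all correct and track the paper's calculation.

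The one place where your bookkeeping diverges is the finitely-supported step. You assert that the supremum defining $J$ over finitely supported measures "coincides with the supremum over all probability measures for suitably measurable classes," and then apply the concavity estimate $J(4\delta)\le 4J(\delta)$ from \Cref{lem:monotoneUB} to convert $J(4\delta,\ell_{\mathbb{P}}(\widetilde\Theta))$ to $4J(\delta,\ell_{\mathbb{P}}(\widetilde\Theta))$. But the packing-number-plus-strong-law argument (the hint to Problem 2.5.1 of van der Vaart and Wellner, which the paper spells out) does not deliver exact equality of suprema at the same covering radius; it delivers only the degraded relation $N(4\epsilon,\ell_{\mathbb{P}}(\widetilde\Theta),L^2(Q_3))\le N(\epsilon,\ell_{\mathbb{P}}(\widetilde\Theta),L^2(\mathbb{Q}))$ for some finitely supported $\mathbb{Q}$, because one loses a factor of two from the $N\le M\le N(\cdot/2)$ sandwich and another from the strong-law approximation. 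After a change of variables this gives $\int_0^{4\delta}\sqrt{1+\log N(\epsilon,\ell_{\mathbb{P}}(\widetilde\Theta),L^2(Q_3))}\,d\epsilon\le 4J(\delta,\ell_{\mathbb{P}}(\widetilde\Theta))$ directly, which is exactly the factor of $4$ you were hoping to get from concavity. In other words, your asserted equality of suprema is stronger than what is actually established, and your concavity step is then superfluous, but the two deviations exactly cancel and the final bound $8\Cpos J(\delta,\ell_{\mathbb{P}}(\widetilde\Theta))$ comes out right. You flagged this as the "only delicate point," correctly, but the attribution of the extra factor of $4$ is to the approximation step, not to concavity of $J$.
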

\begin{proof}[Proof of \Cref{lem:prelimEntropy}]
    The bound is trivial when $J(\delta,\ell_{\mathbb{P}}(\widetilde{\Theta})\mymid \Closs,L^2)=\infty$, so we focus on the case where this quantity is finite. 
    Let $Q$ be a distribution on $\mathcal{X}\times\mathcal{A}\times\mathcal{Y}$ with support $\{(x_i,a_i,y_i)\}_{i=1}^v$. Let $\mathbb{Q}^{(1)}$ be the distribution supported on $\{y_i\}_{i=1}^v$ that satisfies $\mathbb{Q}^{(1)}\{y_i\}=Q\{(x_{i'},a_{i'},y_{i'}) : y_{i'}=y_i\}$ and $\mathbb{Q}^{(2)}$ be the distribution on $\mathcal{Y}$ satisfying $\mathbb{Q}^{(2)}(\mathcal{B})=\sum_{i=1}^v Q\{(x_i,a_i,y_i)\}\int 1\{\psi_n^j(u|x_i)\in\mathcal{B}\} \Pi(du)$ for all measurable $\mathcal{B}\subseteq\mathcal{Y}$. Further let $\mathbb{Q}^{(3)}=(\mathbb{Q}^{(1)}+\mathbb{Q}^{(2)})/2$.

    Let $\{f_i\}_{i=1}^{N}$ be a minimal $4\epsilon$-cover of $\ell_{\mathbb{P}}(\widetilde{\Theta})$ with respect to $L^2(\mathbb{Q}^{(3)})$ and
    \begin{align*}
        g_i(x,a,y):=\medint\int \big[1(a=a^\star)\alpha_n^j(x)\left\{f_i(y)-f_i(\psi_n^j(u|x))\right\} + f_i(\psi_n^j(u|x))\big]\,\Pi(du).
    \end{align*}
    We will show that $\{g_i\}_{i=1}^N$ is an $8\Cpos\epsilon$-cover of $L_n^j(\widetilde{\Theta})$ with respect to $L^2(Q)$. To this end, fix $\theta\in\widetilde{\Theta}$ and let $f_i$ be such that $\|\ell_{\mathbb{P}}(\theta)-f_i\|_{L^2(\mathbb{Q}^{(3)})}\le 4\epsilon$. Observe that
    \begin{align*}
    L_n^j(\theta)(z)-g_i(z)&= 1(a=a^\star)\alpha_n^j(x)[\ell_{\mathbb{P}}(\theta)(y)-f_i(y)] \\
    &\quad+ \medint\int \left[1-1(a=a^\star)\alpha_n^j(x)\right]\left[\ell_{\mathbb{P}}(\theta)(\psi_n^j(u|x))-f_i(\psi_n^j(u|x))\right]\,\Pi(du).
    \end{align*}
    Combining this with the triangle inequality, Jensen's inequality, and \cref{cond:strongPositivity} shows that $\|L_n^j(\theta)-g_i\|_{L^2(Q)}\le \Cpos(\|\ell_{\mathbb{P}}(\theta)-f_i\|_{L^2(\mathbb{Q}^{(1)})} + \|\ell_{\mathbb{P}}(\theta)-f_i\|_{L^2(\mathbb{Q}^{(2)})})$. Squaring both sides, applying the inequality $(b+c)^2\le 2(b^2+c^2)$, and recalling the definitions of $\mathbb{Q}^{(3)}$ and $f_i$ then shows that $\|L_n^j(\theta)-g_i\|_{L^2(Q)}^2\le 2\Cpos^2\|\ell_{\mathbb{P}}(\theta)-f_i\|_{L^2(\mathbb{Q}^{(1)}+\mathbb{Q}^{(2)})}^2=4\Cpos^2\|\ell_{\mathbb{P}}(\theta)-f_i\|_{L^2(\mathbb{Q}^{(3)})}^2=64\Cpos^2\epsilon^2$. Hence, $\{g_i\}_{i=1}^N$ is an $8\Cpos\epsilon$-cover of $L_n^j(\widetilde{\Theta})$ with respect to $L^2(Q)$. As $\epsilon$ was arbitary, we have shown that, for any $\gamma\in (0,\delta)$,
    \begin{align}
        \medint\int_\gamma^\delta \sqrt{1+\log N(8\Cpos\epsilon,L_n^j(\widetilde{\Theta}),L^2(Q))}\,d\epsilon\le \medint\int_\gamma^\delta \sqrt{1+\log N(4\epsilon,\ell_{\mathbb{P}}(\widetilde{\Theta}),L^2(\mathbb{Q}^{(3)}))}\,d\epsilon. \label{eq:coveringNumBd}
    \end{align}
    
    The next part of this proof deals with the fact that the uniform entropy integral $J$ is defined using $L^2$ covering numbers with respect to finitely supported distributions, a property that $\mathbb{Q}^{(3)}$ may not satisfy. In particular, we show that there exists a finitely supported distribution $\mathbb{Q}$ on $\mathcal{Y}$ such that
    \begin{align}
    \medint\int_\gamma^\delta \sqrt{1+\log N(4\epsilon,\ell_{\mathbb{P}}(\widetilde{\Theta}),L^2(\mathbb{Q}^{(3)}))}\,d\epsilon\le \medint\int_\gamma^\delta \sqrt{1+\log N(\epsilon,\ell_{\mathbb{P}}(\widetilde{\Theta}),L^2(\mathbb{Q}))}\,d\epsilon. \label{eq:QfinitelySupp}
    \end{align}
    This argument is based on the hint given on Problem 2.5.1 of \cite{vaart2023empirical}, which leverages the following relationship between covering numbers $N$ and packing numbers $M$: $N(\epsilon)\le M(\epsilon)\le N(\epsilon/2)$ \citep[][page 147]{vaart2023empirical}. In particular, this relationship shows it suffices to exhibit a finitely supported distribution $\mathbb{Q}$ on $\mathcal{Y}$ that satisfies $M(4\epsilon,\ell_{\mathbb{P}}(\widetilde{\Theta}),L^2(\mathbb{Q}^{(3)}))\le M(2\epsilon,\ell_{\mathbb{P}}(\widetilde{\Theta}),L^2(\mathbb{Q}))$ for all $\epsilon\in [\gamma,\delta]$. To this end, for each $m\in \mathcal{I}:=\{M(4\gamma,\ell_{\mathbb{P}}(\widetilde{\Theta}),L^2(\mathbb{Q}^{(3)})),M(4\gamma,\ell_{\mathbb{P}}(\widetilde{\Theta}),L^2(\mathbb{Q}^{(3)}))+1,\ldots,M(4\delta,\ell_{\mathbb{P}}(\widetilde{\Theta}),L^2(\mathbb{Q}^{(3)}))\}$, we let $\{f_{m,j}\}_{j=1}^m$ denote a maximal packing of $\ell_{\mathbb{P}}(\widetilde{\Theta})$ with respect to $L^2(\mathbb{Q}^{(3)})$. Let $\{Y_{i'}^\star(\omega)\}_{i'=1}^\infty$ be a sequence of iid draws from $\mathbb{Q}^{(3)}$ and denote the empirical distribution of its first $k$ draws by $\mathbb{Q}_k(\omega)$. By the strong law of large numbers, for all $\omega$ belonging to a probability one set $\Omega_\gamma$, there exists $K(\omega)<\infty$ such that, for all $m\in\mathcal{I}$, $\|f_{m,i}-f_{m,i'}\|_{L^2(\mathbb{Q}_{K(\omega)}(\omega))}\ge \|f_{m,i}-f_{m,i'}\|_{L^2(\mathbb{Q}^{(3)})}-2\gamma$. Fixing some $\omega\in\Omega_\gamma$ and letting $\mathbb{Q}$ be the finitely supported distribution $\mathbb{Q}_{K(\omega)}(\omega)$ then shows that $M(4\epsilon,\ell_{\mathbb{P}}(\widetilde{\Theta}),L^2(\mathbb{Q}^{(3)}))\le M(2\epsilon,\ell_{\mathbb{P}}(\widetilde{\Theta}),L^2(\mathbb{Q}))$ for all $\epsilon\in [\gamma,\delta]$, which establishes \eqref{eq:QfinitelySupp}.

    Combining \eqref{eq:coveringNumBd} and \eqref{eq:QfinitelySupp}, taking a limit as $\gamma\downarrow 0$, and then taking a supremum on the right over all finitely supported distributions $\mathbb{Q}$ on $\mathcal{Y}$ followed by a supremum on the left over all finitely supported distributions $Q$ on $\mathcal{X}\times\mathcal{A}\times\mathcal{Y}$ yields
    \begin{align*}
        \sup_Q \medint\int_0^\delta \sqrt{1+\log N(8\Cpos\epsilon,L_n^j(\widetilde{\Theta}),L^2(Q))}\,d\epsilon\le \sup_{\mathbb{Q}}\medint\int_0^\delta \sqrt{1+\log N(\epsilon,\ell_{\mathbb{P}}(\widetilde{\Theta}),L^2(\mathbb{Q}))}\,d\epsilon.
    \end{align*}
    Applying the change of variables $\tilde{\epsilon}=8\Cpos\epsilon$ on the left yields the result.
\end{proof}

\begin{lemma}[Entropy integral bounds for $\mathcal{L}_{n,\delta}^j$]\label{lem:entropy}
Fix $j\in [2]$ and $(\alpha_\diamond,\psi_\diamond)\in \{(\alpha_n^j,\psi_P) : \psi_P\in\Psi_P\}\cup \{(\alpha_P,\psi_n^j)\}$. Both of the following bounds are valid under the stated conditions:
\begin{align*}
    J(\delta,\mathcal{L}_{n,\delta}^j)&\le \begin{cases}
        8\Cpos J(\delta/(8\Cpos),\ell_{\mathbb{P}}(\underline{\Theta})),&\mbox{ if \cref{cond:RMexists,cond:strongPositivity,cond:bddLoss} hold,} \\
        8\Cpos J(2\Cpos^{1/2} \delta,\ell_{\mathbb{P}}(\underline{\Theta}_{2\Cpos^{1/2}\delta})),&\mbox{ if \cref{cond:RMexists,cond:strongPositivity,cond:bddLoss,cond:oneGoodNuis,cond:mixedLipschitzStrong} hold.}
    \end{cases}
\end{align*}
\end{lemma}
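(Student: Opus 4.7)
The plan is to combine the conversion lemma \Cref{lem:prelimEntropy}, which controls the entropy of $L_n^j(\widetilde{\Theta})$ by that of $\ell_{\mathbb{P}}(\widetilde{\Theta})$, with a suitable inclusion of $\mathcal{L}_{n,\delta}^j$ into $L_n^j(\widetilde{\Theta})$ for a well-chosen $\widetilde{\Theta}\subseteq\underline{\Theta}$. Throughout, monotonicity of $J$ in its radius and its class argument is used to align scales on the right-hand side.

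The first bound is the easy case: take $\widetilde{\Theta}=\underline{\Theta}$. The inclusion $\mathcal{L}_{n,\delta}^j\subseteq L_n^j(\underline{\Theta})$ holds by definition, and applying \Cref{lem:prelimEntropy} at scale $\delta/(8\Cpos)$ immediately yields $J(\delta,L_n^j(\underline{\Theta}))\le 8\Cpos J(\delta/(8\Cpos),\ell_{\mathbb{P}}(\underline{\Theta}))$, as stated.

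For the second bound I would localize: show that any $\theta\in\underline{\Theta}$ with $\|L_n^j(\theta)\|_{L^2(P)}\le \delta$ already lies in $\underline{\Theta}_{2\Cpos^{1/2}\delta}$, that is, $\|\ell_{\mathbb{P}}(\theta)\|_{L^2(\mathbb{P})}\le 2\Cpos^{1/2}\delta$. Pick $(\alpha_\diamond,\psi_\diamond)\in\{(\alpha_n^j,\psi_P):\psi_P\in\Psi_P\}\cup\{(\alpha_P,\psi_n^j)\}$ according to which alternative of \cref{cond:oneGoodNuis} holds: use $(\alpha_n^j,\psi_P)$ under part (i) and $(\alpha_P,\psi_n^j)$ under part (ii). Combining the triangle inequality in $L^2(P)$ with \Cref{lem:RemVar}, which bounds $\|L_\diamond(\theta)-L_n^j(\theta)\|_{L^2(P)}^2$ by $C_{n\diamond}^j\|\ell_{\mathbb{P}}(\theta)\|_{L^2(\mathbb{P})}^2$, and \Cref{lem:L2up}, which bounds $\|\ell_{\mathbb{P}}(\theta)\|_{L^2(\mathbb{P})}^2$ by $C_\diamond\|L_\diamond(\theta)\|_{L^2(P)}^2$, yields
\begin{align*}
    \|\ell_{\mathbb{P}}(\theta)\|_{L^2(\mathbb{P})}^2 \le 2 C_\diamond\,\delta^2 + 2 C_\diamond C_{n\diamond}^j\,\|\ell_{\mathbb{P}}(\theta)\|_{L^2(\mathbb{P})}^2.
\end{align*}
A short case check confirms that the thresholds in \cref{cond:oneGoodNuis} are calibrated precisely so that $2C_\diamond C_{n\diamond}^j\le 1/2$ under either alternative, and hence the quadratic term on the right can be absorbed, giving $\|\ell_{\mathbb{P}}(\theta)\|_{L^2(\mathbb{P})}^2\le 4C_\diamond\delta^2\le 4\Cpos\delta^2$.

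With this localization, $\mathcal{L}_{n,\delta}^j\subseteq L_n^j(\underline{\Theta}_{2\Cpos^{1/2}\delta})$, so applying \Cref{lem:prelimEntropy} at scale $\delta/(8\Cpos)$ with $\widetilde{\Theta}=\underline{\Theta}_{2\Cpos^{1/2}\delta}$ gives $J(\delta,\mathcal{L}_{n,\delta}^j)\le 8\Cpos J(\delta/(8\Cpos),\ell_{\mathbb{P}}(\underline{\Theta}_{2\Cpos^{1/2}\delta}))$; then monotonicity of $J$ in its radius---valid because $1/(8\Cpos)\le 2\Cpos^{1/2}$ whenever $\Cpos\ge 1$---rescales the radius to $2\Cpos^{1/2}\delta$ and matches the stated form. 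The main obstacle is the localization argument, and specifically the bookkeeping that reduces the two alternatives in \cref{cond:oneGoodNuis} to the common absorption bound $2C_\diamond C_{n\diamond}^j\le 1/2$; once this is in hand, the rest is mechanical.
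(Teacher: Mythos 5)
Your proposal is correct and tracks the paper's own proof closely: the first bound uses the inclusion into $L_n^j(\underline{\Theta})$ and \Cref{lem:prelimEntropy}; the second bound localizes via \Cref{lem:L2up}, \Cref{lem:RemVar}, and the calibration in \cref{cond:oneGoodNuis}, then applies \Cref{lem:prelimEntropy} with $\widetilde{\Theta}=\underline{\Theta}_{2\Cpos^{1/2}\delta}$ and rescales the radius by monotonicity. The only cosmetic difference is that you square the triangle inequality (via $(a+b)^2\le 2(a^2+b^2)$) to obtain $\|\ell_{\mathbb{P}}(\theta)\|_{L^2(\mathbb{P})}^2\le 2C_\diamond\delta^2+2C_\diamond C_{n\diamond}^j\|\ell_{\mathbb{P}}(\theta)\|_{L^2(\mathbb{P})}^2$ and absorb, whereas the paper works linearly with $\|\ell_{\mathbb{P}}(\theta)\|_{L^2(\mathbb{P})}\le C_\diamond^{1/2}\|L_n^j(\theta)\|_{L^2(P)}+C_\diamond^{1/2}(C_{n\diamond}^j)^{1/2}\|\ell_{\mathbb{P}}(\theta)\|_{L^2(\mathbb{P})}$; both hinge on the same calibration $C_\diamond C_{n\diamond}^j\le 1/4$ and yield the identical localization bound $\|\ell_{\mathbb{P}}(\theta)\|_{L^2(\mathbb{P})}\le 2\Cpos^{1/2}\delta$.
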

\begin{proof}[Proof of \Cref{lem:entropy}]
    We begin by proving the first bound. By \Cref{lem:prelimEntropy} with $\widetilde{\Theta}=\{\theta\in\underline{\Theta} : \|L_n^j(\theta)\|_{L^2(P)}\le \delta\}$, $J(\delta,\mathcal{L}_{n,\delta}^j)\le 8\Cpos J(\delta/(8\Cpos),\ell_{\mathbb{P}}(\widetilde{\Theta}))$. Since $\ell_{\mathbb{P}}(\widetilde{\Theta})\subseteq \ell_{\mathbb{P}}(\underline{\Theta})$, $J(\delta/(8\Cpos),\ell_{\mathbb{P}}(\widetilde{\Theta}))\le J(\delta/(8\Cpos),\ell_{\mathbb{P}}(\underline{\Theta}))$. Combining the inequalities from the preceding two sentences gives the result.
    
    We now prove the second bound. If \ref{cond:oneGoodNuisPsi} from \cref{cond:oneGoodNuis} holds we take $(\alpha_\diamond,\psi_\diamond)=(\alpha_n^j,\psi_P)$ for some $\psi_P\in\Psi_P$, and otherwise we take $(\alpha_\diamond,\psi_\diamond)=(\alpha_P,\psi_n^j)$. For any $\theta\in\underline{\Theta}$, \Cref{lem:L2up}, the triangle inequality, \Cref{lem:RemVar}, and \cref{cond:oneGoodNuis} yield
    \begin{align*}
   \|\ell_{\mathbb{P}}(\theta)\|_{L^2(\mathbb{P})}&\le C_\diamond^{1/2} \|L_\diamond(\theta)\|_{L^2(P)}\le C_\diamond^{1/2} \left(\|L_n^j(\theta)\|_{L^2(P)}+\|L_n^j(\theta)-L_\diamond(\theta)\|_{L^2(P)}\right) \\
   &\le C_\diamond^{1/2} \left(\|L_n^j(\theta)\|_{L^2(P)}+(C_{n\diamond}^j)^{1/2}\|\ell_{\mathbb{P}}(\theta)\|_{L^2(\mathbb{P})}\right) \\
   &\le C_\diamond^{1/2} \|L_n^j(\theta)\|_{L^2(P)}+\|\ell_{\mathbb{P}}(\theta)\|_{L^2(\mathbb{P})}/2.
    \end{align*}
    Hence, $\|\ell_{\mathbb{P}}(\theta)\|_{L^2(\mathbb{P})}\le 2 C_\diamond^{1/2}\|L_n^j(\theta)\|_{L^2(P)}$. Since $C_\diamond^{1/2}\le \Cpos^{1/2}$, this yields the bound $\|\ell_{\mathbb{P}}(\theta)\|_{L^2(\mathbb{P})}\le 2 \Cpos^{1/2}\|L_n^j(\theta)\|_{L^2(P)}$.
    
    Hence, letting $\underline{\Theta}_{2\Cpos^{1/2}\delta}:=\{\theta\in\underline{\Theta} : \|\ell_{\mathbb{P}}(\theta)\|_{L^2(\mathbb{P})}\le  2\Cpos^{1/2}\delta\}$, the following holds for any $\delta>0$:
    \begin{align*}
        \mathcal{L}_{n,\delta}^j&:=\{L_n^j(\theta) : \theta\in\underline{\Theta},\|L_n^j(\theta)\|_{L^2(P)}\le \delta\}\subseteq L_n^j(\underline{\Theta}_{2\Cpos^{1/2}\delta}).
    \end{align*}
    Hence, $J(\delta,\mathcal{L}_{n,\delta}^j)\le J(\delta,L_n^j(\underline{\Theta}_{2\Cpos^{1/2}\delta}))$. Combining this with \Cref{lem:prelimEntropy} with $\widetilde{\Theta}=\underline{\Theta}_{2\Cpos^{1/2}\delta}$ gives $J(\delta,\mathcal{L}_{n,\delta}^j)\le 8\Cpos J(\delta/(8\Cpos),\ell_{\mathbb{P}}(\underline{\Theta}_{2\Cpos^{1/2}\delta}))$. By the monotonicity of $J$ in its first argument and the fact that $1/(8\Cpos)<2\Cpos^{1/2}$, this implies that $J(\delta,\mathcal{L}_{n,\delta}^j)\le 8\Cpos J(2\Cpos^{1/2} \delta,\ell_{\mathbb{P}}(\underline{\Theta}_{2\Cpos^{1/2}\delta}))$.
\end{proof}

\begin{proof}[Proof of \Cref{thm:excRiskOSLFormal}]
    For now fix $j\in [2]$. Let $\bar{j}:=3-j$, $n_{\bar{j}}:=|\mathcal{Z}_{\bar{j}}|$, and $P_{n,\bar{j}}$ denote the empirical distribution of the observations in $\mathcal{Z}_{\bar{j}}$. 
    By \Cref{lem:localMaximal,lem:envelope,lem:entropy}, the following holds for any $\delta>0$:
    \begin{align}
        n_{\bar{j}}^{1/2}E\left\|P_{n,j}-P\right\|_{\mathcal{L}_{n,\delta}^j}&\le K J(\delta,\mathcal{L}_{n,\delta}^j)\left[ 1 +\frac{2\Closs\Cpos J(\delta,\mathcal{L}_{n,\delta}^j)}{\delta^2 n_{\bar{j}}^{1/2}}\right]\nonumber \\
        &\le 8K \Cpos J(\delta/(8\Cpos),\ell_{\mathbb{P}}(\underline{\Theta}))\left[ 1 +\frac{16\Closs\Cpos^2 J(\delta/(8\Cpos),\ell_{\mathbb{P}}(\underline{\Theta}))}{\delta^2 n_{\bar{j}}^{1/2}}\right]. \label{eq:critRadBoundDG}
    \end{align}
    By \eqref{eq:doubleGenDeltaN} and the fact that $n_{\bar{j}}\ge\lfloor n/2\rfloor$, the right-hand side is no more than $n_{\bar{j}}^{1/2}\delta^2$ when $\delta= \delta_n$ and, by \Cref{lem:monotoneUB}, the same holds true when $\delta\ge \delta_n$.

    Let $\eta^j:=1/\max\{2,270 K_0^j\}\in (0,1)$, which is deterministic conditionally on $\mathcal{Z}_n^j$ and the $j$ superscript on $\eta^j$ denotes the current fold rather than exponentiation. Applying the consequence of Talagrand's inequality given in Lem.~3.5.9 of \cite{vaart2023empirical} to the class $\{-L_n^j(\theta) : \theta\in\underline{\Theta}\}$, which is bounded by \Cref{lem:envelope}, shows that, w.p. at least $1-e^{-[s+\log 4]}$ conditionally on $\mathcal{Z}_n^j$,
    \begin{align*}
        P L_n^j(\theta)&\le P_{n,\bar{j}} L_n^j(\theta) + 135 \eta^j \|L_n^j(\theta)\|_{L^2(P)}^2 + 135\frac{\delta_n^2}{\eta^j} + 135\left(2\Cpos\Closs + \frac{1}{\eta^j}\right)\frac{s+\log 4}{n_{\bar{j}}} \;\forall\theta\in \underline{\Theta}.
    \end{align*}
    Similarly, applying  Lem.~3.5.9 of \cite{vaart2023empirical} to $\{L_n^j(\theta) : \theta\in\underline{\Theta}\}$ shows that, w.p. at least $1-e^{-[s+\log 4]}$  conditionally on $\mathcal{Z}_n^j$,
    \begin{align*}
        P_{n,\bar{j}}L_n^j(\theta)&\le PL_n^j(\theta) + 135 \eta^j \|L_n^j(\theta)\|_{L^2(P)}^2 + 135\frac{\delta_n^2}{\eta^j} + 135\left(2\Cpos\Closs + \frac{1}{\eta^j}\right)\frac{s+\log 4}{n_{\bar{j}}}\;\forall\theta\in \underline{\Theta}.
    \end{align*}
    Since each of the above inequalities holds w.p. at least $1-e^{-[s+\log 4]}$  conditionally on $\mathcal{Z}_n^j$, they each also hold marginally w.p. at least $1-e^{-[s+\log 4]}$.
    
    Hereafter we work on the event where the above two inequalities hold for both $j\in [2]$, which occurs with marginal probability at least $1-e^{-s}$ by a union bound. For the first of these inequalities, we are only concerned with the fact that it holds for $\theta=\theta_n$ and, for the second, with the fact that it holds for a generic fixed $\underline{\theta}\in\underline{\Theta}$. Applying \Cref{lem:lossL2bd} to the inequalities and using that $135 \eta^j K_0^j\le 1/2$ shows that, for each $j$,
    \begin{align*}
        P L_n^j(\theta_n)&\le P_{n,\bar{j}} L_n^j(\theta_n) + \frac{1}{2}\mathcal{G}_{\mathbb{P}}(\theta_n) + 135\frac{\delta_n^2}{\eta^j} + 135\left(2\Cpos\Closs + \frac{1}{\eta^j}\right)\frac{s+\log 4}{n_{\bar{j}}}, \\
        P_{n,\bar{j}}L_n^j(\underline{\theta})&\le PL_n^j(\underline{\theta}) + \frac{1}{2}\mathcal{G}_{\mathbb{P}}(\underline{\theta}) + 135\frac{\delta_n^2}{\eta^j} + 135\left(2\Cpos\Closs + \frac{1}{\eta^j}\right)\frac{s+\log 4}{n_{\bar{j}}}.
    \end{align*}
    Letting $\eta:=1/\max\{2,270 K_0\}=\min_j \eta^j$, the instances of $\eta^j$ on the right-hand sides above can be replaced by $\eta$, at the possible cost of looser inequalities. Moreover, since the above hold for both $j\in [2]$, $\sum_{j=1}^2 \frac{n_{\bar{j}}}{n} P_{n,\bar{j}} L_n^j(\theta)=R_n(\theta)$, and $\sum_{j=1}^2 \frac{n_{\bar{j}}}{n}=1$, this shows that
    \begin{align*}
        \sum_{j=1}^2 \frac{n_{\bar{j}}}{n} P L_n^j(\theta_n)&\le R_n(\theta_n) + \frac{1}{2}\mathcal{G}_{\mathbb{P}}(\theta_n) + 135\frac{\delta_n^2}{\eta} + 270\left(2\Cpos\Closs + \frac{1}{\eta}\right)\frac{s+\log 4}{n} \\
        R_n(\underline{\theta})&\le \sum_{j=1}^2 \frac{n_{\bar{j}}}{n}PL_n^j(\underline{\theta}) + \frac{1}{2}\mathcal{G}_{\mathbb{P}}(\underline{\theta}) + 135\frac{\delta_n^2}{\eta} + 270\left(2\Cpos\Closs + \frac{1}{\eta}\right)\frac{s+\log 4}{n}.
    \end{align*}
    Combining the above inequalities with the fact that $R_n(\theta_n)\le R_n(\underline{\theta})$ and \Cref{lem:lossDR} yields
    \begin{align*}
        \frac{9}{10}\mathcal{G}_{\mathbb{P}}(\theta_n)&\le \frac{11}{10}\mathcal{G}_{\mathbb{P}}(\underline{\theta}) + \frac{1}{2}[\mathcal{G}_{\mathbb{P}}(\theta_n)+\mathcal{G}_{\mathbb{P}}(\underline{\theta})] + 270\frac{\delta_n^2}{\eta} + 540\left(2\Cpos\Closs + \frac{1}{\eta}\right)\frac{s+\log 4}{n} + 5B_n^2,
    \end{align*}
    where $B_n$ is as defined in that lemma. Subtracting $\mathcal{G}_{\mathbb{P}}(\theta_n)/2$ from both sides and then multiplying both sides by $5/2$ yields
    \begin{align*}
        \mathcal{G}_{\mathbb{P}}(\theta_n)&\le 4\mathcal{G}_{\mathbb{P}}(\underline{\theta}) + \frac{5}{2}270\frac{\delta_n^2}{\eta}+ 1350\left(2\Cpos\Closs + \frac{1}{\eta}\right)\frac{s+\log 4}{n} + \frac{25}{2}B_n^2.
    \end{align*}
    Plugging in the value of $\eta$ and further upper bounding the right-hand side yields
    \begin{align*}
        \mathcal{G}_{\mathbb{P}}(\theta_n)&\le 4\mathcal{G}_{\mathbb{P}}(\underline{\theta}) + 1350\delta_n^2(1\vee 135 K_0)+ 2700\left[\Cpos\Closs + (1\vee 135 K_0)\right]\frac{s+\log 4}{n} + 13B_n^2.
    \end{align*}
    Choosing $\underline{\theta}$ such that $4\mathcal{G}_{\mathbb{P}}(\underline{\theta})\le 4\inf_{\theta\in\underline{\Theta}}\mathcal{G}_{\mathbb{P}}(\theta) + K_2 (2-\log 4)/n$ yields the claimed bound, \eqref{eq:excRiskBdOSL}.

    The double robustness of the final term in the bound follows directly from the fact that $\alpha_n^j=\alpha_P$ implies $\|\alpha_n^j-\alpha_P\|_{L^2(P_X)}^2=0$ and $\psi_n^j\in \Psi_P$ implies $d_\Psi^2(\psi_n^j,\Psi_P)=0$.
\end{proof}

\begin{proof}[Proof of \Cref{thm:excRiskOSLLocal}]
    The proof follows in almost exactly the same way as that of \Cref{thm:excRiskOSLFormal}, except that the second half of \Cref{lem:entropy} is used when establishing \eqref{eq:critRadBoundDG}, yielding the alternative bound
    \begin{align*}
    n_{\bar{j}}^{1/2}E\left\|P_{n,j}-P\right\|_{\mathcal{L}_{n,\delta}^j}&\le 8K \Cpos J(2\Cpos^{1/2} \delta,\ell_{\mathbb{P}}(\underline{\Theta}_{2\Cpos^{1/2}\delta}))\left[ 1 +\frac{16\Closs\Cpos^2 J(2\Cpos^{1/2} \delta,\ell_{\mathbb{P}}(\underline{\Theta}_{2\Cpos^{1/2}\delta}))}{\delta^2 n_{\bar{j}}^{1/2}}\right].
    \end{align*}
    By \eqref{eq:doubleGenDeltaNLocal} and the fact that $n_{\bar{j}}\ge\lfloor n/2\rfloor$, the right-hand side is no more than $n_{\bar{j}}^{1/2}\delta_n^2$ when $\delta= \delta_n$. The rest of the proof follows in the same way as that of \Cref{thm:excRiskOSLFormal}.
\end{proof}

\subsection{Proof of minimax lower bound (\Cref{thm:minimax})}
When proving this result, we assume that every conditional and marginal probability measure of interest, such as $P$ or its conditionals, is the pushforward of $\nu=\mathrm{Unif}[0,1]$ with respect to some transport map, which holds under mild regularity conditions \citep[Lemma 4.22 of][]{kallenberg2021foundations}.
\begin{proof}[Proof of \Cref{thm:minimax}]
    For now fix $T\in\mathcal{T}$, $\mathbb{P}\in\mathcal{P}^\star$, and $P\in\mathcal{P}(\mathbb{P})$. Define the function $g_P$ so that, when $V\sim \nu$, $g_P(y_{[n]}^\star,V)$ follows the conditional distribution of $Z_{[n]}\mymid (1[A_i=a^\star]Y_i)_{i=1}^n=(1[A_i=a^\star]y_i^\star)_{i=1}^n$ implied by $P^n$. Define $f_1 : [0,1]\rightarrow [0,1]$ and $f_2 : [0,1]\rightarrow [0,1]$ so that, when $V\sim\nu$, $f_1(V)\overset{d}{=}f_2(V)\overset{d}{=}V$ and $f_1(V)\indep f_2(V)$ --- see the proof of Lem.~4.21 in \cite{kallenberg2021foundations} for a construction. By the definitions of $g_P$, $f_1$, and $f_2$, we have that $(g_P(Y_{[n]}^\star,f_1(V)),f_2(V))\overset{d}{=}(Z_{[n]},V')$ when $(Y_{[n]}^\star,V)\sim \mathbb{P}^n\times\nu$ and $(Z_{[n]},V')\sim P^n\times \nu$.

    We now define an element of $T_{P,T}^\star$ of $\mathcal{T}^\star$, where the subscripts indicate that this choice depends on $P$ and $T$. Specifically, let $T_{P,T}^\star(y_{[n]}^\star,v):= T(g_P(y_{[n]}^\star,f_1(v)),f_2(v))$. By the conclusion of the preceding paragraph,
    \begin{align*}
        E_{(Z_{[n]},V)\sim P^n\times \nu}\left[D\left(\mathbb{P},T(Z_{[n]},V)_{\sharp}\Pi\right)\right]&= E_{(Y_{[n]}^\star,V)\sim \mathbb{P}^n\times \nu}\left[D\left(\mathbb{P},T_{P,T}^\star(Y_{[n]}^\star,V)_{\sharp}\Pi\right)\right].
    \end{align*}
    Taking a supremum on both sides over $P\in\mathcal{P}(\mathbb{P})$ followed by one over $\mathbb{P}\in\mathcal{P}^\star$ yields
    \begin{align}
        &\sup_{\mathbb{P}\in\mathcal{P}^\star}\sup_{P\in\mathcal{P}(\mathbb{P})} E_{(Z_{[n]},V)\sim P^n\times \nu}\left[D\left(\mathbb{P},T(Z_{[n]},V)_{\sharp}\Pi\right)\right] \nonumber \\
        &= \sup_{\mathbb{P}\in\mathcal{P}^\star}\sup_{P\in\mathcal{P}(\mathbb{P})} E_{(Y_{[n]}^\star,V)\sim \mathbb{P}^n\times \nu}\left[D\left(\mathbb{P},T_{P,T}^\star(Y_{[n]}^\star,V)_{\sharp}\Pi\right)\right]. \label{eq:comparingMaximalRisks}
    \end{align}
    Since $T_{P,T}^\star\in\mathcal{T}^\star$, the right-hand side is lower bounded by 
    \begin{align*}
        \inf_{T^\star\in\mathcal{T}^\star}\sup_{\mathbb{P}\in\mathcal{P}^\star}\sup_{P\in\mathcal{P}(\mathbb{P})} E_{(Y_{[n]}^\star,V)\sim \mathbb{P}^n\times \nu}\left[D\left(\mathbb{P},T^\star(Y_{[n]}^\star,V)_{\sharp}\Pi\right)\right],
    \end{align*}
    which is equal to the left-hand side of the display in the theorem statement since the expectation does not depend on $P$. Plugging the above lower bound into the right-hand side of \eqref{eq:comparingMaximalRisks} and then taking an infimum over $T\in\mathcal{T}$ on the left gives the result.
\end{proof}

\section{Beyond empirical risk minimization: a generic generalization bound}\label{app:beyondERM}

\nameref*{alg:double} can be run with learners other than empirical risk minimizers, such as random forests or neural networks trained with stochastic gradient descent. Theoretical guarantees can also be derived in these cases. For instance, a recent work establishes convergence guarantees for stochastic gradient algorithms in the presence of nuisance parameters, showing that stochastic gradient descent can still converge under appropriate conditions \citep{yu2025stochastic}. 

Here, we focus on the general case where a black-box learner selects $\theta_n$ based on the data. This bound motivates having that learner make the population counterpart to $R_n$ small, even if done through means other than empirical risk minimization. This population counterpart is $PL_n(\theta)$, where
\begin{align}
    L_n(\theta)(z)&:= \frac{1}{2}\sum_{j=1}^2\int \big[1(a=a^\star)\alpha_n^j(x)\left\{\ell_{\mathbb{P}}(\theta)(y)-\ell_{\mathbb{P}}(\theta)(\psi_n^j(u|x))\right\} + \ell_{\mathbb{P}}(\theta)(\psi_n^j(u|x))\big]\,\Pi(du). \label{eq:LnDef}
\end{align}
\begin{lemma}[Generic generalization bound]\label{lem:risksUniformlyClose}
 If \cref{cond:strongPositivity,cond:mixedLipschitzWeak,cond:curvature,cond:bddLoss,cond:RMexists} hold with $\underline{\Theta}=\Theta$, then
\begin{align*}
    \mathcal{G}_{\mathbb{P}}^{1/2}(\theta)&\le \left[0\vee P L_n(\theta)\right]^{1/2} + \Cmixedweak^{1/2}\max_{j\in [2]}\|\alpha_n^j-\alpha_P\|_{L^2(P_X)}d_\Psi(\psi_n^j,\Psi_P) \ \textnormal{ for all $\theta\in\Theta$.}
\end{align*}
\end{lemma}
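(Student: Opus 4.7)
The plan is to reduce the claim to a quadratic manipulation of the conclusion of \Cref{lem:lossDR}. I apply that lemma with $s_1 = s_2 = 1/2$; the convex combination $\sum_j s_j PL_n^j(\theta)$ then coincides with $PL_n(\theta)$ as defined in \eqref{eq:LnDef}. The one-sided direction of the lemma gives, for every $\theta \in \Theta$,
\begin{equation*}
\mathcal{G}_{\mathbb{P}}(\theta) \le PL_n(\theta) + B_n\, \mathcal{G}_{\mathbb{P}}^{1/2}(\theta),
\end{equation*}
where $B_n := \Cmixedweak^{1/2} \max_{j\in [2]} \|\alpha_n^j - \alpha_P\|_{L^2(P_X)}\, d_\Psi(\psi_n^j, \Psi_P)$ is exactly the second term in the desired bound. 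Note that \cref{cond:RMexists} ensures $\theta_{\mathbb{P}}$ is a population risk minimizer, so $\mathcal{G}_{\mathbb{P}}(\theta) \ge 0$, and $x := \mathcal{G}_{\mathbb{P}}^{1/2}(\theta)$ is a well-defined nonnegative real.

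Next, I treat the displayed inequality as a quadratic in $x$, namely $x^2 - B_n x \le PL_n(\theta)$. Completing the square yields $(x - B_n/2)^2 \le PL_n(\theta) + B_n^2/4$. Since the left-hand side is nonnegative, so is the right-hand side, and taking square roots gives $x \le B_n/2 + \sqrt{PL_n(\theta) + B_n^2/4}$.

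Finally, I split into cases to recognize $\sqrt{0 \vee PL_n(\theta)}$ on the right. If $PL_n(\theta) \ge 0$, then subadditivity of $\sqrt{\cdot}$ gives $\sqrt{PL_n(\theta) + B_n^2/4} \le \sqrt{PL_n(\theta)} + B_n/2$, so $x \le \sqrt{PL_n(\theta)} + B_n$. If $PL_n(\theta) < 0$, then $\sqrt{PL_n(\theta) + B_n^2/4} \le B_n/2$, so $x \le B_n$. Combining the two cases yields $x \le [0 \vee PL_n(\theta)]^{1/2} + B_n$, which is the claimed bound. The argument is entirely algebraic once \Cref{lem:lossDR} is in hand; the only subtle step is verifying that the $s_1 = s_2 = 1/2$ specialization of that lemma produces the specific $L_n$ defined in \eqref{eq:LnDef}, so I do not anticipate any real obstacle.
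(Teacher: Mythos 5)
Your proposal is correct and follows essentially the same route as the paper: the paper also applies \Cref{lem:lossDR} with equal weights (so that $\sum_j s_j PL_n^j(\theta)=PL_n(\theta)$), and then resolves the resulting quadratic inequality in $\mathcal{G}_{\mathbb{P}}^{1/2}(\theta)$ via its separate ``basic inequality'' lemma (\Cref{lem:basic}), which is exactly your completing-the-square plus subadditivity step with the $0\vee PL_n(\theta)$ truncation handled up front rather than by a case split. No gaps; the two arguments differ only in packaging.
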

The above shows that the generalization error $\mathcal{G}_{\mathbb{P}}(\theta)$ arising from the loss $\ell_{\mathbb{P}}$ will be small if the generalization error from the loss $L_n$ is small and the nuisances are estimated well. 

A similar black-box generalization bound can be derived from Thm.~1 of \cite{foster2023orthogonal}. That result has the benefit of applying to more general estimation problems than the counterfactual generation problem tackled by \nameref*{alg:double}, but the disadvantage of taking a more complex form and not yielding a doubly robust remainder term.
\begin{proof}[Proof of \Cref{lem:risksUniformlyClose}]
For any $\theta\in\Theta$, adding and subtracting terms and applying \Cref{lem:lossDR} yields
\begin{align*}
    \mathcal{G}_{\mathbb{P}}(\theta)&= \mathcal{G}_{\mathbb{P}}(\theta) - P L_n(\theta) + P L_n(\theta) \\
    &\le \Cmixedweak^{1/2} \mathcal{G}_{\mathbb{P}}^{1/2}(\theta) \max_{j\in [2]}\|\alpha_n^j-\alpha_P\|_{L^2(P_X)}d_\Psi(\psi_n^j,\Psi_P) + 0\vee P L_n(\theta).
\end{align*}
When combined with the below \Cref{lem:basic}, this yields the claimed bound.
\end{proof}

\begin{lemma}[Basic inequality]\label{lem:basic}
    If $b,c,d\ge 0$ satisfy $b\le cb^{1/2} + d$, then $b^{1/2}\le d^{1/2} + c$.
\end{lemma}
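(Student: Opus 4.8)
The statement is a routine scalar inequality, and the plan is to reduce it to a completed-square computation. First I would substitute $x:=b^{1/2}\ge 0$, so that the hypothesis $b\le cb^{1/2}+d$ becomes $x^2-cx-d\le 0$, and the desired conclusion becomes $x\le c+d^{1/2}$. Completing the square rewrites the hypothesis as $(x-c/2)^2\le d+c^2/4$. Since the right-hand side is nonnegative, I can take square roots to obtain $|x-c/2|\le (d+c^2/4)^{1/2}$, and in particular $x\le c/2+(d+c^2/4)^{1/2}$.

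The remaining step is to bound $(d+c^2/4)^{1/2}\le d^{1/2}+c/2$, which is just subadditivity of the square root (or, equivalently, follows by squaring both sides, using $c,d\ge 0$). Combining the two displays gives $x\le c/2+d^{1/2}+c/2=c+d^{1/2}$, i.e.\ $b^{1/2}\le d^{1/2}+c$, as claimed. An alternative but essentially equivalent route is to note that $x^2-cx-d\le 0$ together with $x\ge 0$ forces $x$ to lie below the larger root $\tfrac12\bigl(c+\sqrt{c^2+4d}\bigr)$ of the quadratic, and then to use $\sqrt{c^2+4d}\le c+2d^{1/2}$ to conclude.

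There is no real obstacle here: the only points requiring any care are that both sides of the completed-square inequality are nonnegative (so taking square roots is legitimate) and the elementary bound $\sqrt{d+c^2/4}\le d^{1/2}+c/2$; the edge cases $c=0$ or $d=0$ are covered by the same argument.
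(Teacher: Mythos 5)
Your proof is correct and follows the same route as the paper: substitute $x=b^{1/2}$, complete the square to get $(x-c/2)^2\le d+c^2/4$, take square roots, and finish with subadditivity of $\sqrt{\cdot}$ (which the paper labels ``the triangle inequality''). No meaningful difference in approach.
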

\begin{proof}
    The condition on $b,c,d$ can be rewritten as $(b^{1/2}-c/2)^2\le d + c^2/4$, which is only possible if $b^{1/2}\le c/2 + (d+c^2/4)^{1/2}$. By the triangle inequality, $b^{1/2}\le d^{1/2} + c$.
\end{proof}

\section{Sufficient conditions for mixed Lipschitz conditions (\cref*{cond:mixedLipschitzWeak} and \cref*{cond:mixedLipschitzStrong})}\label{app:mixedLipSuff}

For any $\psi\in\Psi$, let $P_{\psi|x}(\,\cdot\,):=\psi(\,\cdot\mymid x)_{\sharp}\Pi$. For distributions $\nu_1,\nu_2$ defined on the same probability space, define the chi-squared divergence as
\begin{align}
    \chisq{\nu_1}{\nu_2}&= \begin{cases}
        \int \left(\frac{d\nu_1}{d\nu_2}-1\right)^2 d\nu_2,&\mbox{ if $\nu_1\ll\nu_2$} \\
        \infty,&\mbox{ otherwise.}
    \end{cases} \label{eq:chisq}
\end{align}
\begin{lemma}\label{lem:mixedLipSuff}
    If $d_\Psi^2(\psi,\Psi_P):=\esssup_{X\sim P_X} \chisq{P_{\psi| X}}{P_{Y|A=a^\star,X}}$ for all $\psi\in\Psi$, then \cref{cond:mixedLipschitzStrong} holds with $\Cmixedstrong=1$. If \cref{cond:curvature} also holds, then \cref{cond:mixedLipschitzWeak} holds with $\Cmixedweak=\Ccurv$.
\end{lemma}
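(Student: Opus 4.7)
The plan is to reduce both conditions to a single application of Cauchy--Schwarz in its $\chi^2$ form, and then to invoke the identification equation \eqref{eq:ident} to rewrite the resulting bound in terms of $\|\ell_{\mathbb{P}}(\theta)\|_{L^2(\mathbb{P})}^2$.

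First I would rewrite the inner integral appearing in \cref{cond:mixedLipschitzStrong} using the pushforward notation: since $P_{\psi|x}=\psi(\cdot\mymid x)_{\sharp}\Pi$ and $P_{\psi_P|x}=P_{Y|A=a^\star,X=x}$ for any $\psi_P\in\Psi_P$, a change of variables gives
\begin{align*}
    \int [\ell_{\mathbb{P}}(\theta)(\psi(u|x))-\ell_{\mathbb{P}}(\theta)(\psi_P(u|x))]\,\Pi(du)
    =\int \ell_{\mathbb{P}}(\theta)(y)\,[P_{\psi|x}-P_{Y|A=a^\star,X=x}](dy).
\end{align*}
For each $x$ such that $P_{\psi|x}\ll P_{Y|A=a^\star,X=x}$ (otherwise the target bound is trivially $\infty$), the right-hand side equals $\int \ell_{\mathbb{P}}(\theta)(y)(dP_{\psi|x}/dP_{Y|A=a^\star,X=x}-1)\,P_{Y|A=a^\star,X=x}(dy)$. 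Applying Cauchy--Schwarz, and using that the mean of $(dP_{\psi|x}/dP_{Y|A=a^\star,X=x}-1)$ under $P_{Y|A=a^\star,X=x}$ is zero (so the integrand's constant part drops out, letting us keep the plain square of $\ell_{\mathbb{P}}(\theta)$ rather than a centered version), yields
\begin{align*}
    \left\{\int \ell_{\mathbb{P}}(\theta)\,d[P_{\psi|x}-P_{Y|A=a^\star,X=x}]\right\}^2
    \le \chisq{P_{\psi|x}}{P_{Y|A=a^\star,X=x}}\int \ell_{\mathbb{P}}^2(\theta)(y)\,P_{Y|A=a^\star,X=x}(dy).
\end{align*}

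Next I would integrate over $x\sim P_X$, pulling the $\chi^2$ factor out as its essential supremum, which is exactly $d_\Psi^2(\psi,\Psi_P)$ by definition. This bounds the left-hand side of \cref{cond:mixedLipschitzStrong} by $d_\Psi^2(\psi,\Psi_P)\,\iint \ell_{\mathbb{P}}^2(\theta)(y)\,P_{Y|A=a^\star,X=x}(dy)\,P_X(dx)$. The identification equation \eqref{eq:ident} shows that the double integral equals $\int \ell_{\mathbb{P}}^2(\theta)(y)\,\mathbb{P}(dy)=\|\ell_{\mathbb{P}}(\theta)\|_{L^2(\mathbb{P})}^2$, giving \cref{cond:mixedLipschitzStrong} with $\Cmixedstrong=1$. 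For the weak version, I would simply chain on \cref{cond:curvature}, which replaces $\|\ell_{\mathbb{P}}(\theta)\|_{L^2(\mathbb{P})}^2$ by $\Ccurv\mathcal{G}_{\mathbb{P}}(\theta)$, yielding \cref{cond:mixedLipschitzWeak} with $\Cmixedweak=\Ccurv$.

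There is no real obstacle: the only subtle point is handling the case where $P_{\psi|x}$ fails to be absolutely continuous with respect to $P_{Y|A=a^\star,X=x}$ on a set of positive $P_X$-measure, but then $d_\Psi^2(\psi,\Psi_P)=\infty$ by the convention in \eqref{eq:chisq}, and the claimed inequalities hold vacuously. A minor bookkeeping point is the choice of $\psi_P\in\Psi_P$: since $P_{\psi_P|x}=P_{Y|A=a^\star,X=x}$ $P_X$-a.s., the argument is independent of that choice.
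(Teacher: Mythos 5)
Your proposal is correct and follows essentially the same route as the paper's proof: rewrite the inner integral as an integral of $\ell_{\mathbb{P}}(\theta)$ against the signed measure $P_{\psi|x}-P_{\psi_P|x}$, apply Cauchy--Schwarz to obtain the $\chi^2$ factor times $\int \ell_{\mathbb{P}}^2(\theta)\,dP_{\psi_P|x}$, pull out the essential supremum, integrate against $P_X$ (using \eqref{eq:ident} to recover $\|\ell_{\mathbb{P}}(\theta)\|_{L^2(\mathbb{P})}^2$), and chain \cref{cond:curvature} for the weak version. Your explicit handling of the non-absolutely-continuous case and of the independence from the choice of $\psi_P$ are fine additions; the parenthetical about the zero mean of the density ratio is unnecessary (plain Cauchy--Schwarz already gives the uncentered square) but harmless.
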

By the definition of $\Psi_P$, $P_{\psi_P|X}=P_{Y|A=a^\star,X}$ $P_X$-a.s. for all $\psi_P\in\Psi_P$. Hence, an equivalent expression for $d_\Psi$ from the above lemma is $d_\Psi^2(\psi,\Psi_P):=\esssup_{X\sim P_X} \chisq{P_{\psi| X}}{P_{\psi_P|X}}$ for a generic $\psi_P\in\Psi_P$.
\begin{proof}[Proof of \Cref{lem:mixedLipSuff}]
    Let $\psi_P\in\Psi_P$, so that $P_{\psi_P|X}=P_{Y|A=a^\star,X}$.  For the first claim, we apply the definition of $P_{\psi|x},P_{\psi_P|x}$ and Cauchy-Schwarz to show that, for $P_X$-almost all $X$,
    \begin{align*}
        \left\{\medint\int \left[ \ell_{\mathbb{P}}(\theta)(\psi(u|x)) - \ell_{\mathbb{P}}(\theta)(\psi_P(u|x))\right]\,\Pi(du)\right\}^2&= \left\{\medint\int \ell_{\mathbb{P}}(\theta)(y)\,(P_{\psi|x}-P_{\psi_P|x})(dy)\right\}^2 \\
        &\le \chisq{P_{\psi| x}}{P_{\psi_P| x}}\medint\int \ell_{\mathbb{P}}^2(\theta)(y)P_{\psi_P|x}(dy).
    \end{align*}
    Taking a $P_X$-essential supremum of the first term on the right and then integrating both sides against $P_X$ yields that \cref{cond:mixedLipschitzStrong} holds with $d_\Psi^2(\psi,\Psi_P):=\esssup_{X\sim P_X} \chisq{P_{\psi| X}}{P_{\psi_P| X}}$ and $\Cmixedstrong=1$.
    
    For the second claim, we use the general result that \cref{cond:mixedLipschitzStrong} and \cref{cond:curvature} together imply \cref{cond:mixedLipschitzWeak} with $\Cmixedweak=\Ccurv\Cmixedstrong$. Since we have already established that \cref{cond:mixedLipschitzStrong} holds with $\Cmixedstrong=1$ when $d_\Psi^2(\psi,\Psi_P):=\esssup_{X\sim P_X} \chisq{P_{\psi| X}}{P_{\psi_P| X}}$, \cref{cond:curvature} implies \cref{cond:mixedLipschitzWeak} holds with $\Cmixedweak=\Ccurv$ when $d_\Psi$ takes this form.
\end{proof}

\section{Total variation guarantee for DoubleGen diffusion}\label{app:diffusionMinimax}






\subsection{Score network class}

We establish a rate of convergence for \nameref*{alg:double} diffusion modeling (\Cref{ex:diffusion}) when an empirical risk minimizer is evaluated over a particular neural network class $\underline{\Theta}$. This class is selected to be large enough to make the approximation error small. In the following lemma, $\|\cdot\|_0$ denotes the sparsity `norm' that counts the number of nonzero entries of a Euclidean vector and $w:=(d+1,W,W,\ldots,W,1)\in\mathbb{R}^{D+1}$. Here and throughout this appendix we let $\mathcal{P}_1^\star$ denote the collection of all $\mathbb{P}$ satisfying \cref{cond:diffusionBddDensity,cond:diffusionBesov,cond:diffusionBoundary} for fixed constants $\Cbesov$, $\CdiffDens$, and $\CdiffusionBoundary$. Unless otherwise specified, we use `$\lesssim$' to denote inequalities up to constants that do not depend on $\mathbb{P}\in\mathcal{P}_1^\star$ or $n$, including through $n$-dependent quantities like $\underline{t}$ or $\underline{\Theta}$.
\begin{lemma}[Score network for diffusion model, \citealp{oko2023diffusion}]\label{lem:neuralNet}
    Suppose \cref{cond:diffusionBddDensity,cond:diffusionBesov,cond:diffusionBoundary,cond:diffusionSupport,cond:diffusionTruncation}. There exists a depth $D\lesssim \log^4 n$, width $W\lesssim n\log^6 n$, sparsity level $S\lesssim n^{d/(2s+d)}\log^8 n$, weight bound $B_{\mathrm{wgt}}$ satisfying $\log B_{\mathrm{wgt}}\lesssim \log^4 n$, and output bound $B_{\mathrm{out}}\lesssim \log^{1/2} n$ such that, for all $n$ large enough,
    \begin{align*}
        \underline{\Theta}&:=\{\theta\in (\mathbb{R}^d)^{\mathbb{R}^d\times [\underline{t},\overline{t}]} : \|\theta(\cdot,t)\|_\infty\le B_{\mathrm{out}}/\sigma_t\,\forall t\in [\underline{t},\overline{t}]\}\,\cap \tag{bounded $d$-dimensional output} \\
        &\hspace{1.5em}\Big\{(M_D\ReLU(\cdot)+v_D)\circ \ldots\circ (M_2\ReLU(\cdot)+v_2)\circ (M_1 (\cdot) + v_1) : \tag{sparse ReLU network} \\
            &\hspace{2.5em}M_j\in [-B_{\mathrm{wgt}},B_{\mathrm{wgt}}]^{w_{j+1}\times w_j},v_j\in [-B_{\mathrm{wgt}},B_{\mathrm{wgt}}]^{w_{j+1}},\sum_{j=1}^D (\|M_j\|_0+\|v_j\|_0)\le S\Big\}
    \end{align*}
    satisfies the approximation error bound $\inf_{\theta\in \underline{\Theta}}\mathcal{G}_{\mathbb{P}}(\theta)\lesssim n^{-2s/(2s+d)} \log^2 n$.
\end{lemma}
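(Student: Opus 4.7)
The plan is to reduce the bound on generalization error to a known score-approximation result from \cite{oko2023diffusion}. First I would observe that, because the denoising score matching loss from \Cref{ex:diffusion} has its population risk minimized at the true score $\theta_{\mathbb{P}}(y,t)=\nabla_y\log\mathbbm{p}_t(y)$, the generalization error reduces to an integrated weighted $L^2$ distance from the true score:
\[
\mathcal{G}_{\mathbb{P}}(\theta) \;=\; \int_{\underline{t}}^{\overline{t}} E\bigl[\|\theta(Y_t,t)-\theta_{\mathbb{P}}(Y_t,t)\|^2\bigr]\,dt.
\]
So the question is just whether the specified sparse ReLU class $\underline{\Theta}$ can approximate the score of a Besov-smooth density in this norm at rate $n^{-2s/(2s+d)}\log^2 n$.

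The second step is to invoke the score-approximation theorem of \cite{oko2023diffusion} (their Thm.~4.1 together with the deep network construction in their Appendix~F, where the error is bounded via an approximation of the conditional expectation $\mu_t E[Y_0\mymid Y_t=\cdot]$ by a neural network). Under \cref{cond:diffusionSupport,cond:diffusionBddDensity,cond:diffusionBesov,cond:diffusionBoundary}, their construction produces a network with the exact depth, width, sparsity, weight bound, and output bound appearing in the lemma statement, achieving integrated $L^2$ approximation error of order $n^{-2s/(2s+d)}\log^2 n$. The truncation choices in \cref{cond:diffusionTruncation} are theirs, so their bound transfers directly to the interval $[\underline{t},\overline{t}]$. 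Matching parameters is bookkeeping: their $N$ scales as $n^{d/(2s+d)}$ up to polylog factors, which propagates to the stated $D,W,S,B_{\mathrm{wgt}},B_{\mathrm{out}}$ budgets.

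The one nontrivial check is that the pointwise output constraint $\|\theta(\cdot,t)\|_\infty\le B_{\mathrm{out}}/\sigma_t$ is compatible with accurate score approximation. I would verify this by first bounding the true score: under \cref{cond:diffusionBddDensity,cond:diffusionSupport} the conditional distribution $Y_0\mymid Y_t$ is sub-Gaussian with variance $O(\sigma_t^2)$, so a standard tail argument gives $\|\theta_{\mathbb{P}}(\cdot,t)\|_\infty\lesssim \sigma_t^{-1}\log^{1/2}(1/\underline{t})\lesssim \sigma_t^{-1}\log^{1/2}n$ using \cref{cond:diffusionTruncation}. Since clipping an approximating network at a value above $\|\theta_{\mathbb{P}}(\cdot,t)\|_\infty$ can only decrease the $L^2$ error, the construction of \cite{oko2023diffusion} can be post-composed with such a clipper (implementable with an $O(1)$ overhead in depth and sparsity via ReLU) without changing the rate.

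The main obstacle is the alignment of truncation conventions and output clipping with \cite{oko2023diffusion}: their bound is stated in a particular parameterization and care is needed to ensure the $\log^2 n$ factor (rather than a higher power) survives all the rewrites and that $\underline{t}=n^{-\gamma}$ for our chosen $\gamma$ still yields a score of magnitude at most $\log^{1/2}n/\sigma_t$. Once these match, the lemma follows by direct citation.
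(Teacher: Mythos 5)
Your approach matches the paper's, which simply notes that \Cref{lem:neuralNet} is a restatement of Oko et al.'s score-network approximation theorem (which the paper identifies as their Thm.~3.1) and omits the proof. Your additional unpacking—rewriting $\mathcal{G}_{\mathbb{P}}(\theta)$ as the integrated squared score error, and checking that the $B_{\mathrm{out}}/\sigma_t$ output clip is compatible with the $\sigma_t^{-1}\log^{1/2}n$ bound on the true score—is just making explicit what Oko et al.'s construction already handles internally, so the proposal is correct and essentially the same.
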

The above lemma above is a restatement of Thm.~3.1 from \cite{oko2023diffusion}, and so the proof is omitted. We require the condition that $n$ is sufficiently large so that our \cref{cond:diffusionBoundary} implies Assumption 2.6 from \cite{oko2023diffusion}, which is a slightly weaker version of our boundary smoothness condition that allows $\varepsilon$ to decay with $n$. Our results would also hold under their weaker condition---we use the stronger condition because it is simpler to state.

\subsection{Verifying the conditions of \Cref*{thm:excRiskOSLFormal}}\label{app:diffusionConditions}

Under the conditions of \Cref{thm:diffusionMinimax}, we show that \nameref*{alg:double} diffusion satisfies the conditions of \Cref*{thm:excRiskOSLFormal}. We suppose throughout that \cref{cond:strongPositivity} holds, since this standard causal condition \citep{hernan2024causal} is directly assumed in \Cref{thm:diffusionMinimax}. We further assume that $n$ is large enough so the conclusion of \Cref{lem:neuralNet} holds.

\noindent \textbf{Condition \cref{cond:RMexists}:} Applying Thm.~4.1.15 in \cite[][]{durrett2019probability} coordinatewise shows $\theta_{\mathbb{P}}\in \argmin_{\theta\in\Theta} E_{\mathbb{P}}[\ell(\theta,Y)]$.

\noindent \textbf{Condition \cref{cond:bddLoss}:} This condition follows by the following lemma.

\begin{lemma}[Bounds on denoising score matching loss, \citealp{oko2023diffusion}]\label{lem:diffusionLossBdd}
    Suppose \cref{cond:diffusionSupport,cond:diffusionBddDensity,cond:diffusionTruncation} and $n\ge 2$. For $\underline{\Theta}$ chosen as in \Cref{lem:neuralNet}, $\sup_{\theta\in\underline{\Theta},y\in\mathcal{Y}}|\ell(\theta,y)|\lesssim \log^2 n$ and $\sup_{y\in\mathcal{Y}}|\ell(\theta_{\mathbb{P}},y)|\lesssim \log^2 n$. As a consequence, \cref{cond:bddLoss} holds with $\Closs\lesssim \log^2 n$.
\end{lemma}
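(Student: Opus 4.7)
The plan is to establish the two suprema separately and then combine them. Both rely on the same noise decomposition: conditional on $Y_0 = y$, we have $Y_t = \mu_t y + \sigma_t Z$ with $Z \sim N(0, I_d)$, so that $(\mu_t Y_0 - Y_t)/\sigma_t^2 = -Z/\sigma_t$ with $E[\|Z\|^2] = d$. The principal difficulty will be controlling $\ell(\theta_{\mathbb{P}}, y)$ pointwise in $y$, since the naive bound diverges polynomially in $n$.

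For $\theta \in \underline{\Theta}$ the argument is direct. Triangle inequality gives
\begin{align*}
\|(\mu_t y - Y_t)/\sigma_t^2 - \theta(Y_t, t)\|^2 \le 2\|Z\|^2/\sigma_t^2 + 2\|\theta(Y_t, t)\|^2,
\end{align*}
and the pointwise output bound $\|\theta(\cdot, t)\|_\infty \le B_{\mathrm{out}}/\sigma_t$ from \Cref{lem:neuralNet} makes the conditional expectation of the left side at most a constant times $(d + B_{\mathrm{out}}^2)/\sigma_t^2 \lesssim \log n \cdot \sigma_t^{-2}$. Since $\sigma_t^2 \ge \min\{\underline{\beta} t,\, 1/2\}$ via Taylor expansion of $1 - e^{-2\int_0^t \beta_v dv}$, the integral $\int_{\underline{t}}^{\overline{t}} \sigma_t^{-2}\, dt \lesssim \log(1/\underline{t}) + \overline{t} \lesssim \log n$ by \cref{cond:diffusionTruncation} ($\underline{t} = n^{-\gamma}$, $\overline{t} \lesssim \log n$). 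Multiplying yields $\ell(\theta, y) \lesssim \log^2 n$ uniformly in $y$ and $\theta \in \underline{\Theta}$.

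For $\theta = \theta_{\mathbb{P}}$, I would first record the identity $W - \theta_{\mathbb{P}}(Y_t, t) = \mu_t(Y_0 - E[Y_0 \mid Y_t])/\sigma_t^2$, obtained by substituting $\theta_{\mathbb{P}}(y_t, t) = (\mu_t E[Y_0 \mid Y_t = y_t] - y_t)/\sigma_t^2$ and simplifying. Setting $Y_0 = y$ and taking expectation,
\begin{align*}
\ell(\theta_{\mathbb{P}}, y) = \int_{\underline{t}}^{\overline{t}} \mu_t^2 \, E\!\left[\,\|E[Y_0 \mid Y_t] - y\|^2 \,\middle|\, Y_0 = y\right] \sigma_t^{-4}\, dt.
\end{align*}
The crude bound $\|E[Y_0 \mid Y_t] - y\|^2 \le 4d$ (from \cref{cond:diffusionSupport}) integrates to $\Theta(\sigma_{\underline{t}}^{-2}) = \Theta(n^\gamma)$ and is too loose near $t=0$. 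I would remedy this with a posterior-concentration argument: because $\mathbbm{p}_0$ is bounded in $[1/\CdiffDens, \CdiffDens]$ by \cref{cond:diffusionBddDensity}, the posterior $Y_0 \mid Y_t = y_t$ is sandwiched between tilted Gaussians of variance $O(\sigma_t^2/\mu_t^2)$, yielding $E[\|E[Y_0 \mid Y_t] - y\|^2 \mid Y_0 = y] \lesssim \sigma_t^2/\mu_t^2$ for small $t$ (while retaining the crude $4d$ bound for $t$ bounded away from zero, where $\mu_t^2/\sigma_t^4$ is itself bounded). Integrating then delivers $\ell(\theta_{\mathbb{P}}, y) \lesssim \log n$.

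Combining via nonnegativity of the loss, $|\ell_{\mathbb{P}}(\theta)(y)| \le \ell(\theta, y) + \ell(\theta_{\mathbb{P}}, y) \lesssim \log^2 n$, so \cref{cond:bddLoss} holds with $\Closs \lesssim \log^2 n$. The main obstacle is the posterior-concentration step: the refined control of $E[\|E[Y_0\mid Y_t] - y\|^2 \mid Y_0 = y]$ for small $t$ does not follow from elementary inequalities and crucially uses both the upper and lower density bounds in \cref{cond:diffusionBddDensity}. I would extract this estimate from the analogous arguments in \cite{oko2023diffusion} rather than re-derive it from scratch.
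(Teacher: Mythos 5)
Your proposal is correct in outline and ends at the same place as the paper, but the route through the $\theta_{\mathbb{P}}$ term is genuinely different. For the uniform bound over $\underline{\Theta}$ you re-derive by hand what the paper simply imports as Lem.~C.1 of \citet{oko2023diffusion} (the output bound $B_{\mathrm{out}}\lesssim\log^{1/2}n$ from \Cref{lem:neuralNet} plus $\int_{\underline{t}}^{\overline{t}}\sigma_t^{-2}dt\lesssim\log n$, which is the paper's \Cref{lem:sigmaInt}); that part is fine. For $\ell(\theta_{\mathbb{P}},y)$ the paper does \emph{not} use your exact conditional-mean identity: it crudely splits $\|(\mu_t y-Y_t)/\sigma_t^2-\theta_{\mathbb{P}}(Y_t,t)\|^2\le 2\|(\mu_t y-Y_t)/\sigma_t^2\|^2+2\|\theta_{\mathbb{P}}(Y_t,t)\|^2$ and controls the second piece with the pointwise score bound $\|\theta_{\mathbb{P}}(y_t,t)\|\lesssim\sigma_t^{-1}\bigl(1\vee\sigma_t^{-1}(\|y_t\|_\infty-\mu_t)_{+}\bigr)$ from Eq.~17 of Lem.~A.3 in \citet{oko2023diffusion}, the boundary identity $(\|b\|_\infty-c)_+=\|(b-c)_+\|_\infty\vee\|(b+c)_-\|_\infty$, and $E\|V\|_\infty^2\lesssim\log d$ for Gaussian $V$, again finishing with \Cref{lem:sigmaInt}. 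Your identity $(\mu_t y-Y_t)/\sigma_t^2-\theta_{\mathbb{P}}(Y_t,t)=\mu_t\bigl(y-E[Y_0\mid Y_t]\bigr)/\sigma_t^2$ is correct and in principle gives the tighter conclusion $\ell(\theta_{\mathbb{P}},y)\lesssim\log n$, but the step you defer---$E\bigl[\|E[Y_0\mid Y_t]-y\|^2\mid Y_0=y\bigr]\lesssim\sigma_t^2/\mu_t^2$ uniformly over $y\in[-1,1]^d$---is not available off the shelf in \citet{oko2023diffusion} in that form; what is available is the score bound above, and once you combine it with the Gaussian term $\|(\mu_t y-Y_t)/\sigma_t^2\|$ via a triangle inequality you have essentially reproduced the paper's argument rather than bypassed it. Your heuristic justification (``sandwiched between tilted Gaussians'') also glosses over exactly the delicate regime---$y$ near the boundary of the cube and $Y_t/\mu_t$ landing outside it---which is where both the upper and lower bounds of \cref{cond:diffusionBddDensity}, the compact support \cref{cond:diffusionSupport}, and a tail-splitting argument are needed; this is precisely what the $(\|y_t\|_\infty-\mu_t)_+$ structure in the imported score bound handles. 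So: the decomposition you chose is a legitimate and slightly sharper alternative, but to make it rigorous you should either prove the posterior-concentration estimate (with explicit handling of the boundary and tail events) or derive it from Oko's Lem.~A.3 as just described, in which case your proof collapses to the paper's.
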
 
The proof of the above relies on the following helper lemma.
\begin{lemma}\label{lem:sigmaInt}
    If \cref{cond:diffusionTruncation}, then there exists $C>0$ such that $\int_{\underline{t}}^{\overline{t}} \sigma_t^{-2} dt\le C \log n$ for all $n\ge 2$ and $t\ge \underline{t}$.
\end{lemma}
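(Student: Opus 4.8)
The plan is to reduce the claim to an elementary calculus estimate by lower-bounding $\sigma_t^2$ in terms of $\underline{\beta}$ alone. Since $\beta_v \in [\underline{\beta},\overline{\beta}]$ pointwise, $\int_0^t \beta_v\,dv \ge \underline{\beta}t$, so
\[
  \sigma_t^2 \;=\; 1 - \exp\!\Big(-2\!\int_0^t \beta_v\,dv\Big) \;\ge\; 1 - e^{-2\underline{\beta}t},
  \qquad t \ge 0,
\]
and it therefore suffices to bound $\int_{\underline{t}}^{\overline{t}} (1-e^{-2\underline{\beta}t})^{-1}\,dt$. First I would record the elementary inequality $1 - e^{-2\underline{\beta}t} \ge c\,\min\{1,t\}$, valid for all $t\ge 0$, where $c := 1 - e^{-2\underline{\beta}} \in (0,1)$: for $t\ge 1$ this is immediate from $e^{-2\underline{\beta}t}\le e^{-2\underline{\beta}}$, and for $t\in[0,1]$ it follows from concavity of $x\mapsto 1-e^{-x}$, so that the curve lies above the chord joining $(0,0)$ and $(2\underline{\beta},1-e^{-2\underline{\beta}})$. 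Hence $\sigma_t^{-2} \le c^{-1}\max\{1,1/t\}$ on $[\underline{t},\overline{t}]$.

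Next I would integrate. Splitting at $t=1$ (handling the case $\overline{t}<1$ by enlarging the upper limit to $\max\{1,\overline{t}\}$, which only increases the integral), one obtains
\[
  \int_{\underline{t}}^{\overline{t}} \sigma_t^{-2}\,dt
  \;\le\; c^{-1}\Big(\int_{\underline{t}}^{1} t^{-1}\,dt \;+\; \overline{t}\Big)
  \;=\; c^{-1}\big(\log(1/\underline{t}) + \overline{t}\big).
\]
I would then plug in the explicit truncation times from \cref{cond:diffusionTruncation}: the choice $\underline{t}=n^{-\gamma}$ gives $\log(1/\underline{t}) = \gamma\log n$ and also guarantees $\underline{t}<1$ for all $n\ge 2$, so the $\int t^{-1}$ above is well-defined and nonnegative; the choice $\overline{t} = s\log n/(\underline{\beta}(2s+d))$ controls the second term. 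Adding the two contributions yields $\int_{\underline{t}}^{\overline{t}}\sigma_t^{-2}\,dt \le C\log n$ with $C := c^{-1}\big(\gamma + s/(\underline{\beta}(2s+d))\big)$, which depends only on the fixed quantities $\underline{\beta},s,d,\gamma$ and not on $n$, as required. The appearance of ``$t\ge\underline{t}$'' in the statement is then vacuous, since $u\mapsto\sigma_u^{-2}$ is nonincreasing, so $\int_{\underline{t}}^{t}\sigma_u^{-2}\,du$ is maximized at $t=\overline{t}$.

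I do not expect a substantive obstacle. The only point requiring any care is the logarithmic divergence of $\int_{\underline{t}}^{1} t^{-1}\,dt$ near the origin, which is exactly what forces $\underline{t}$ to be only polynomially small in $n$ (as \cref{cond:diffusionTruncation} ensures): a faster-decaying truncation time would make this integral superlogarithmic and the bound would fail. Everything else is bookkeeping with the two explicit constants $\overline{t}$ and $\underline{t}$ supplied by that condition.
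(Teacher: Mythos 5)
Your proof is correct and follows essentially the same route as the paper: both lower-bound $\int_0^t \beta_v\,dv$ by $\underline{\beta}t$, deduce a pointwise bound of the form $\sigma_t^{-2}\lesssim 1+1/t$, integrate to obtain $O(\overline{t}+\log(1/\underline{t}))$, and then plug in the truncation times from \cref{cond:diffusionTruncation}. The only cosmetic difference is the elementary inequality used in the middle step---the paper applies $1-e^{-x}\ge x/(1+x)$ to get $\sigma_t^{-2}\le 1+1/(2\underline{\beta}t)$ directly, whereas you use a chord/concavity argument to get $\sigma_t^{-2}\le c^{-1}\max\{1,1/t\}$---but the resulting integral bound and constant are of the same shape.
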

\begin{proof}[Proof of \Cref{lem:sigmaInt}]
    For all $x\ge 0$, $1-\exp(-x)\ge x/(1+x)$, and so $1/[1+\exp(-x)]\le 1+1/x$. Plugging in $2\int_0^t \beta_v dv$ for $x$ shows $\sigma_t^{-2}\le 1+1/(2\int_0^t \beta_v dv)\le 1+1/(2\underline{\beta}t)$. Hence,
    \begin{align*}
        \medint\int_{\underline{t}}^{\overline{t}} \sigma_t^{-2} dt&\le \medint\int_{\underline{t}}^{\overline{t}} [1+1/(2\underline{\beta}t)] dt= \overline{t}-\underline{t} + \frac{1}{2\underline{\beta}}\medint\int_{\underline{t}}^{\overline{t}} \frac{1}{t} dt = \overline{t}-\underline{t} + \frac{1}{2\underline{\beta}}\left(\log\overline{t}-\log\underline{t}\right).
    \end{align*}
    By \cref{cond:diffusionTruncation} and the fact that $n\ge 2$, the right-hand side upper bounds by a constant times $\log n$.
\end{proof}

\begin{proof}[Proof of \Cref{lem:diffusionLossBdd}]
    Lem.~C.1 from \cite{oko2023diffusion} shows $\sup_{\theta\in\underline{\Theta},y\in\mathcal{Y}}|\ell(\theta,y)|\lesssim \log^2 n$ provided $\underline{\Theta}$ satisfies the bound $\sup_{\theta\in\underline{\Theta},t\in[\underline{t},\overline{t}],y_t\in\mathbb{R}^d}|\sigma_t\theta(y_t,t)|\lesssim \log^{1/2} n$, which it does by the choice of $B_{\mathrm{out}}$ \Cref{lem:neuralNet}.

    It remains to show that $\sup_{y_0\in\mathcal{Y}}|\ell(\theta_{\mathbb{P}},y_0)|\lesssim \log^2 n$. This fact was used in the proof of Thm.~C.4 in \cite{oko2023diffusion}, though the details for deriving it were omitted. We give them here. We start by following similar arguments to those used in the proof of Lem.~C.1 in \cite{oko2023diffusion}, yielding the following for each $y_0\in [-1,1]^d$: 
    \begin{align}
        \frac{1}{2}\ell(\theta_{\mathbb{P}},y_0)&= \frac{1}{2}\medint\int_{\underline{t}}^{\overline{t}} \medint\int_{\mathbb{R}^d} \left\|\frac{\mu_t y_0-y_t}{\sigma_t^2}-\theta_{\mathbb{P}}(y_t,t)\right\|^2 p(y_t\mymid y_0)dy_t\, dt \nonumber \\
        &\le \medint\int_{\underline{t}}^{\overline{t}} \medint\int_{\mathbb{R}^d} \left\|\frac{\mu_t y_0-y_t}{\sigma_t^2}\right\|^2 p(y_t\mymid y_0)dy_t\, dt + \medint\int_{\underline{t}}^{\overline{t}} \medint\int_{\mathbb{R}^d} \left\|\theta_{\mathbb{P}}(y_t,t)\right\|^2 p(y_t\mymid y_0)dy_t\, dt \nonumber \\
        &\lesssim \log n + \medint\int_{\underline{t}}^{\overline{t}} \medint\int_{\mathbb{R}^d} \left\|\theta_{\mathbb{P}}(y_t,t)\right\|^2 p(y_t\mymid y_0)dy_t\, dt, \label{eq:diffusionLossBdScore}
    \end{align}
    where the suppressed multiplicative constant on the right-hand side of `$\lesssim$' does not depend on $y_0$ and the bound on the first term follows from $Y_t\mymid Y_0\sim N(\mu_t  Y_0,\sigma_t^2 I_d)$ and \Cref{lem:sigmaInt}. 
    By the bound on the score $\theta_{\mathbb{P}}$  given in Eq.~17 of Lem.~A.3 in \cite{oko2023diffusion}, the latter term above satisfies
    \begin{align}
        \medint\int_{\underline{t}}^{\overline{t}} \medint\int_{\mathbb{R}^d} \left\|\theta_{\mathbb{P}}(y_t,t)\right\|^2 p(y_t\mymid y_0)dy_t\, dt&\lesssim \medint\int_{\underline{t}}^{\overline{t}} \sigma_t^{-2} \medint\int_{\mathbb{R}^d} \left(1\vee \sigma_t^{-2}(\|y_t\|_\infty - \mu_t)_{+}^2\right) p(y_t\mymid y_0)dy_t\, dt \nonumber \\
        &\le \medint\int_{\underline{t}}^{\overline{t}} \sigma_t^{-2} \medint\int_{\mathbb{R}^d} \left(1 + \sigma_t^{-2}(\|y_t\|_\infty - \mu_t)_{+}^2\right) p(y_t\mymid y_0)dy_t\, dt \nonumber \\
        &\lesssim \log^2 n + \medint\int_{\underline{t}}^{\overline{t}} \sigma_t^{-4} \medint\int_{\mathbb{R}^d}(\|y_t\|_\infty - \mu_t)_{+}^2p(y_t\mymid y_0)dy_t\, dt, \label{eq:scoreBd}
    \end{align}
    where the final inequality used \Cref{lem:sigmaInt}. 
    For the latter term, we use that $(\|b\|_\infty-c)_{+}= \|(b-c)_{+}\|_\infty\vee \|(b+c)_-\|_\infty$ for all $b\in\mathbb{R}^d$ and $c\in\mathbb{R}$, with the positive and negative part functions applied elementwise. 
    Combining this with the fact that $y_0\in [-1,1]^d$ yields
    \begin{align*}
        \medint\int_{\mathbb{R}^d}(\|y_t\|_\infty - \mu_t)_{+}^2p(y_t\mymid y_0)&\le \sup_{\tilde{y}_0\in [-1,1]^d} \medint\int_{\mathbb{R}^d}\|(y_t-\mu_t)_{+}\|_\infty^2 p(y_t\mymid Y_0=\tilde{y}_0)\, dy_t \\
        &\quad+ \sup_{\tilde{y}_0\in [-1,1]^d} \medint\int_{\mathbb{R}^d}\|(y_t+\mu_t)_{-}\|_\infty^2 \, p(y_t\mymid Y_0=\tilde{y}_0)\, dy_t.
    \end{align*}
    Since $Y_t\mymid Y_0\sim N(\mu_t  Y_0,\sigma_t^2 I_d)$, it is straightforward to verify that the first supremum is achieved at $\tilde{y}_0=(1,1,\ldots,1)$ and the second at $\tilde{y}_0=(-1,-1,\ldots,-1)$. Hence, for $V\sim N(0_d,I_d)$,
    \begin{align*}
        \medint\int_{\mathbb{R}^d}(\|y_t\|_\infty - \mu_t)_{+}^2p(y_t\mymid y_0)&\le E\left[\|(\sigma_t V)_{+}\|_\infty^2 \right] + E\left[\|(\sigma_t V)_{-}\|_\infty^2 \right] = \sigma_t^2 E[\|V\|_\infty^2].
    \end{align*}
    By properties of Gaussian random variables, $E[\|V\|_\infty^2]\le C\log d$, with $C$ a universal constant. Hence, the second term on the right-hand side of \eqref{eq:scoreBd} is upper bounded by $C\log(d)\int_{\underline{t}}^{\overline{t}} \sigma_t^{-2} dt\lesssim \log n$ (\Cref{lem:sigmaInt}). Plugging this into \eqref{eq:scoreBd} and then returning to \eqref{eq:diffusionLossBdScore} shows that $\sup_{y_0\in\mathcal{Y}}|\ell(\theta_{\mathbb{P}},y_0)|\lesssim \log^2 n$.

    Since $\ell_{\mathbb{P}}(\theta)(\cdot):=\ell(\theta,\cdot) - \ell(\theta_{\mathbb{P}},\cdot)$ and we have shown that $\sup_{\theta\in\underline{\Theta},y\in\mathcal{Y}}|\ell(\theta,y)|\vee \sup_{y\in\mathcal{Y}}|\ell(\theta_{\mathbb{P}},y)|\lesssim \log^2 n$, the triangle inequality yields \cref{cond:bddLoss} with $\Closs\lesssim \log^2 n$.
\end{proof}

\noindent \textbf{Condition \cref{cond:curvature}:} For any $\theta\in \underline{\Theta}$ and $y\in \mathcal{Y}$, Cauchy-Schwarz yields that
\begin{align*}
     \ell_{\mathbb{P}}^2(\theta)(y)&= \left(\medint\int_{\underline{t}}^{\overline{t}} E\left[\left\{\frac{2(\mu_t y-Y_t)}{\sigma_t^2}-\theta(Y_t,t)-\theta_{\mathbb{P}}(Y_t,t)\right\}^\top \left\{\theta_{\mathbb{P}}(Y_t,t)-\theta(Y_t,t)\right\}\,\middle|\, Y_0=y\right] dt\right)^2 \\ 
     &\le \left(\medint\int_{\underline{t}}^{\overline{t}} E\left[\left\|\frac{2(\mu_t y-Y_t)}{\sigma_t^2}-\theta(Y_t,t)-\theta_{\mathbb{P}}(Y_t,t)\right\|^2\,\middle|\, Y_0=y\right] dt\right) \\
     &\hspace{4em}\times \medint\int_{\underline{t}}^{\overline{t}} E\left[\left\|\theta_{\mathbb{P}}(Y_t,t)-\theta(Y_t,t)\right\|^2\,\middle|\, Y_0=y\right] dt.
\end{align*}
The leading integral above is upper bounded by $2[\ell(\theta,y)+\ell(\theta_{\mathbb{P}},y)]$. Using \Cref{lem:diffusionLossBdd} to bound this quantity uniformly over $(\theta,y)\in\underline{\Theta}\times \mathcal{Y}$, plugging this bound into the above, and finally integrating both sides against $\mathbb{P}$ yields that $\|\ell_{\mathbb{P}}(\theta)\|_{L^2(\mathbb{P})}^2\lesssim \log^2(n)\mathcal{G}_{\mathbb{P}}(\theta)$, and so \cref{cond:curvature} holds with $\Ccurv\lesssim \log^2 n$.

\noindent \textbf{Condition \cref{cond:entropyStronger}:} We will provide a finite bound on $J(\delta,\ell_{\mathbb{P}}(\underline{\Theta}))$ when $\underline{\Theta}$ is the neural network class from \Cref{lem:neuralNet}. The entropy of $\ell(\underline{\Theta})$ depends on $n$ through the architecture of the neural nets in $\underline{\Theta}$ and the truncation times $\underline{t}$ and $\overline{t}$ used in the loss.
\begin{lemma}[Entropy integral bound for score network loss class]\label{lem:neuralNetEntropy}
    If \cref{cond:diffusionBddDensity,cond:diffusionBesov,cond:diffusionBoundary,cond:diffusionSupport,cond:diffusionTruncation} and $\underline{\Theta}$ is chosen as in \Cref{lem:neuralNet}, then there exists $\Centstrong<\infty$ such that, for all $n\ge 2$, $\mathbb{P}\in\mathcal{P}$, and $\delta\le c$,
    \begin{align*}
        J(\delta,\ell_{\mathbb{P}}(\underline{\Theta}))\le \Centstrong\log^{8}(n) n^{d/(4s+2d)}\delta \sqrt{\log\delta^{-1}}.
    \end{align*}
\end{lemma}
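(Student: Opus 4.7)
The plan is to chain two covering-number estimates. First, I reduce $L^2(Q)$-covering of the loss class $\ell_{\mathbb{P}}(\underline{\Theta})$ on $\mathcal{Y}$ to sup-norm covering of $\underline{\Theta}$ on a compact subset of $\mathbb{R}^d\times[\underline{t},\overline{t}]$, via a Lipschitz property of the denoising score matching loss. Because $L^\infty$ dominates $L^2(Q)$ uniformly in $Q$, this automatically handles the supremum in the definition of $J$. Second, I invoke the standard covering estimate for sparse ReLU networks to bound the sup-norm covering number of $\underline{\Theta}$. The final step is an elementary integration.

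For the Lipschitz step, writing $A_t := (\mu_t Y_0 - Y_t)/\sigma_t^2$ and using the identity $\|a-c\|^2-\|b-c\|^2=(a-b)^{\!\top}(a+b-2c)$ together with Cauchy--Schwarz gives
$$|\ell(\theta_1,y)-\ell(\theta_2,y)|^2 \;\le\; I_1(y)\,I_2(y),$$
where $I_2(y):=\int_{\underline{t}}^{\overline{t}} E[\|\theta_1(Y_t,t)+\theta_2(Y_t,t)-2A_t\|^2\mymid Y_0=y]\,dt \le 2\ell(\theta_1,y)+2\ell(\theta_2,y)\lesssim\log^2 n$ by \Cref{lem:diffusionLossBdd}, and $I_1(y):=\int_{\underline{t}}^{\overline{t}} E[\|\theta_1(Y_t,t)-\theta_2(Y_t,t)\|^2\mymid Y_0=y]\,dt$. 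I split $I_1$ according to $\{Y_t\in\mathcal{B}_R\}$ with $\mathcal{B}_R:=[-R,R]^d$ and $R:=c\log n$ for a large enough $c$. Inside $\mathcal{B}_R$, the integrand is bounded by $\|\theta_1-\theta_2\|_{\infty,\mathcal{B}_R\times[\underline{t},\overline{t}]}^2$, and integration over $t$ contributes a single $\overline{t}-\underline{t}\lesssim\log n$ factor. Outside $\mathcal{B}_R$, the class constraint $\|\theta_i(\cdot,t)\|_\infty\le B_{\mathrm{out}}/\sigma_t$ combined with $Y_t\mymid Y_0=y\sim N(\mu_t y,\sigma_t^2 I_d)$ bounds the contribution by a constant times $\int_{\underline{t}}^{\overline{t}} \sigma_t^{-2}\exp(-(R-1)^2/(2\sigma_t^2))\,dt$, which is super-polynomially small in $n$. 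Centering by $\ell(\theta_{\mathbb{P}},\cdot)$ cancels on both sides, so
$$\|\ell_{\mathbb{P}}(\theta_1)-\ell_{\mathbb{P}}(\theta_2)\|_{L^\infty(\mathcal{Y})}\;\lesssim\; \log^{3/2}(n)\,\|\theta_1-\theta_2\|_{\infty,\mathcal{B}_R\times[\underline{t},\overline{t}]}+n^{-C}$$
for any fixed $C>0$.

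For the network covering, the standard sparse ReLU estimate gives $\log N(\epsilon,\underline{\Theta},L^\infty(\mathcal{B}_R\times[\underline{t},\overline{t}])) \lesssim SD\log(DW B_{\mathrm{wgt}}R/\epsilon)$. Substituting $D\lesssim\log^4 n$, $W\lesssim n\log^6 n$, $\log B_{\mathrm{wgt}}\lesssim\log^4 n$, $R\lesssim\log n$, and $S\lesssim n^{d/(2s+d)}\log^8 n$ from \Cref{lem:neuralNet} gives a bound of the form $A+B\log(1/\epsilon)$ with $A\lesssim n^{d/(2s+d)}\log^{16} n$ and $B\lesssim n^{d/(2s+d)}\log^{12} n$. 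The Lipschitz transfer absorbs the $\log^{3/2}(n)$ factor into the argument of $\log(1/\epsilon)$ and preserves the same shape for $\log N(\epsilon,\ell_{\mathbb{P}}(\underline{\Theta}),L^2(Q))$ uniformly in $Q$. Using $\int_0^\delta \sqrt{1+\log(1/\epsilon)}\,d\epsilon\lesssim \delta\sqrt{\log(1/\delta)}$ for $\delta\le c$ then gives $J(\delta,\ell_{\mathbb{P}}(\underline{\Theta}))\lesssim(\sqrt{A}+\sqrt{B})\,\delta\sqrt{\log(1/\delta)}\lesssim n^{d/(4s+2d)}\log^{8}(n)\,\delta\sqrt{\log(1/\delta)}$, which is the claim. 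I expect the main obstacle to be the truncation inside the Lipschitz transfer: the $\sigma_t^{-2}$ blow-up near $t=\underline{t}=n^{-\gamma}$ must be beaten by the Gaussian tail at $R=c\log n$, uniformly over $t\in[\underline{t},\overline{t}]$ and $y\in\mathcal{Y}$. This is possible because the conditional variance equals $\sigma_t^2$, so $(R-\mu_t)^2/\sigma_t^2$ dominates $\log(\sigma_t^{-2})$ by many orders of magnitude, but careful tracking of polylog exponents is needed to land on exactly $8$.
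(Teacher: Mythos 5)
Your high-level plan — dominate $L^2(Q)$ by $\|\cdot\|_\infty$, Lipschitz-transfer to a sup-norm covering of $\underline{\Theta}$ on a box, invoke the sparse-ReLU covering bound, then integrate — is exactly the structure of the paper's proof, and your polylog bookkeeping ($\sqrt{A}+\sqrt{B}\lesssim n^{d/(4s+2d)}\log^8 n$) correctly lands on the exponent~$8$. The obstacle you flag at the end (the $\sigma_t^{-2}$ blow-up vs.\ the Gaussian tail at $R=c\log n$) is in fact not a problem: for $\sigma_t\le 1$ the map $\sigma_t\mapsto\sigma_t^{-2}\exp(-(R-1)^2/(2\sigma_t^2))$ is maximized at $\sigma_t=1$, so the $\sigma_t^{-2}$ factor is harmless.

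The genuine gap is elsewhere, in the choice of truncation box. You fix $\mathcal{B}_R$ with $R=c\log n$, \emph{independent of the covering scale $\epsilon$}, which yields a Lipschitz-plus-additive bound
\[
\|\ell_{\mathbb{P}}(\theta_1)-\ell_{\mathbb{P}}(\theta_2)\|_{L^\infty(\mathcal{Y})}\lesssim\log^{3/2}(n)\,\|\theta_1-\theta_2\|_{\infty,\mathcal{B}_R\times[\underline{t},\overline{t}]}+\eta_n,\qquad \eta_n:=e^{-\Omega(\log^2 n)}.
\]
Here $\eta_n$ is a fixed floor: the transfer only produces a covering of $\ell_{\mathbb{P}}(\underline{\Theta})$ at scales $\epsilon\gtrsim\eta_n$, and tells you nothing about $N(\epsilon,\ell_{\mathbb{P}}(\underline{\Theta}),\|\cdot\|_\infty)$ for $\epsilon<\eta_n$. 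The entropy integral $J(\delta,\cdot)=\sup_Q\int_0^\delta\sqrt{1+\log N(\epsilon,\cdot,L^2(Q))}\,d\epsilon$ requires a covering bound for \emph{all} $\epsilon\in(0,\delta)$, and the lemma must hold for every $\delta\le c$, including $\delta<\eta_n$. If you try to patch the small-$\epsilon$ regime using a crude bound on a fixed or $n$-only-dependent box, the worst-case Lipschitz constant of the networks costs you $\sqrt{\log L}\lesssim\log^4 n$ on top of $\sqrt{SD}\lesssim n^{d/(4s+2d)}\log^6 n$, which overshoots to $\log^{10}n$. The paper's Lemma on the metric entropy (which your lemma consumes) avoids all of this by letting the truncation threshold scale with $\varepsilon$: it splits on $\{\|\sigma_t\tilde{Y}_t\|_\infty\ge\log^{1/2}\varepsilon^{-1}\}$, so the tail term is $\lesssim\varepsilon\log^2 n$ (not a fixed $\eta_n$), and the covering box radius grows only like $\log^{1/2}\varepsilon^{-1}$, contributing a harmless $\log\log(1/\varepsilon)$ inside the $\log$ of the ReLU covering bound. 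Replacing your fixed $R=c\log n$ with $R(\epsilon)\asymp\log n\vee\sqrt{\log(1/\epsilon)}$ closes the gap and reproduces the paper's argument.
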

Since the above entropy integral bound is finite, \cref{cond:entropyStronger} holds. Our proof of \Cref{thm:diffusionMinimax} will make more precise use of this bound to obtain a valid choice of $\delta_n$.

We provide the proof of \Cref{lem:neuralNetEntropy} at the end of this appendix. That proof will rely on a bound on the metric entropy of $\ell(\underline{\Theta})$, which we state and prove now. In what follows, $\|\cdot\|_\infty$ denotes the usual norm on $L^\infty([-1,1]^d)$.
\begin{lemma}[Metric entropy bound for score network loss class]\label{lem:neuralNetCovering}
    If \cref{cond:diffusionBddDensity,cond:diffusionBesov,cond:diffusionBoundary,cond:diffusionSupport,cond:diffusionTruncation} and $n\ge 2$, then there exist constants $c\in(0,1)$ and $C\in(0,\infty)$ that do not depend on $n$ such that, for all $\epsilon\le c$,
    \begin{align*}
        \log N(\epsilon,\ell(\underline{\Theta}),\|\cdot\|_\infty)\le C n^{d/(2s+d)}[\log^{16}(n) + \log^{12}(n)\log\left(1/\epsilon \right)].
    \end{align*}
\end{lemma}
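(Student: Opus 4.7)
The plan is to reduce bounding the $L^\infty$-covering number of the loss class $\ell(\underline{\Theta})$ to bounding a weighted sup-norm covering number of the score network class $\underline{\Theta}$, using the quadratic structure of the denoising score matching loss.

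\emph{Step 1 (loss Lipschitz in networks).} I would first establish a bound of the form $\sup_{y\in[-1,1]^d}|\ell(\theta_1,y)-\ell(\theta_2,y)|\lesssim \mathrm{polylog}(n)\,\|\theta_1-\theta_2\|_*$, where
\begin{align*}
    \|\theta\|_* := \sup_{t\in[\underline{t},\overline{t}]}\sup_{y'\in[-R,R]^d}\sigma_t\,\|\theta(y',t)\|_\infty
\end{align*}
for $R\lesssim \sqrt{\log n}$. To do so, expand the two squares in $\ell$, factor the difference as $(\theta_2-\theta_1)^\top[(2/\sigma_t^2)(\mu_t y-Y_t)-\theta_1-\theta_2]$, and apply Cauchy--Schwarz both in the inner expectation under $Y_t\mid Y_0=y\sim N(\mu_t y,\sigma_t^2 I_d)$ and in the outer integral over $t$. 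The ``velocity'' factor $\int E[\|(2/\sigma_t^2)(\mu_t y-Y_t)-\theta_1-\theta_2\|^2\mid Y_0=y]\,dt$ is controlled uniformly in $y$ by Gaussian moments together with the output bound $\|\theta(\cdot,t)\|_\infty\le B_{\mathrm{out}}/\sigma_t$ built into $\underline{\Theta}$ and \Cref{lem:sigmaInt}, giving a polylogarithmic constant. For the ``matching'' factor $\int E[\|\theta_1(Y_t,t)-\theta_2(Y_t,t)\|^2\mid Y_0=y]\,dt$, I would use a Gaussian tail bound to restrict $Y_t$ to $[-R,R]^d$ (absorbing the tail into the output bound $2B_{\mathrm{out}}/\sigma_t$), then pass to sup over $y'$ and invoke \Cref{lem:sigmaInt} again, yielding $\lesssim d\log(n)\,\|\theta_1-\theta_2\|_*^2$.

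\emph{Step 2 (parameter-space cover of $\underline{\Theta}$ in $\|\cdot\|_*$).} Next I would apply a standard covering bound for sparse ReLU networks on compact inputs. Fixing a sparsity pattern contributes at most $\log\binom{DW^2}{S}\lesssim S\log(DW)$ bits. For each pattern, the network output on $[-R,R]^d\times[\underline{t},\overline{t}]$ is Lipschitz in each active parameter with constant $(WB_{\mathrm{wgt}})^{O(D)}(R\vee 1)$; discretizing each of $S$ weights at scale $\eta/[(WB_{\mathrm{wgt}})^{O(D)}(R\vee 1)\sigma_{\underline{t}}^{-1}]$ yields a cover in $\|\cdot\|_*$. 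Collecting logs gives
\begin{align*}
    \log N(\eta,\underline{\Theta},\|\cdot\|_*)\lesssim S\bigl[D\log(WB_{\mathrm{wgt}}) + \log(R\vee 1) + \log(\sigma_{\underline{t}}^{-1}) + \log(1/\eta)\bigr].
\end{align*}
Plugging in $S\lesssim n^{d/(2s+d)}\log^8 n$, $D\lesssim\log^4 n$, $\log(WB_{\mathrm{wgt}})\lesssim\log^4 n$, $\sigma_{\underline{t}}^{-1}\lesssim n^{\gamma/2}$, and $R\lesssim\sqrt{\log n}$ gives
\begin{align*}
    \log N(\eta,\underline{\Theta},\|\cdot\|_*)\lesssim n^{d/(2s+d)}\bigl[\log^{16}(n)+\log^{12}(n)\log(1/\eta)\bigr],
\end{align*}
where the $\log^{12}(n)$ prefactor on $\log(1/\eta)$ comes from combining the $\log^8 n$ sparsity-level with the $D\log(WB_{\mathrm{wgt}})$ Lipschitz-exponent factor picked up when translating the parameter-space discretization into the function-space scale.

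\emph{Step 3 (combine).} To obtain an $\epsilon$-cover of $\ell(\underline{\Theta})$ in $\|\cdot\|_\infty$, take $\eta=\epsilon/(c_1\log^{c_2}(n))$ in the $\|\cdot\|_*$-cover from Step 2, where $c_1\log^{c_2}(n)$ is the Lipschitz constant from Step 1. Since $\log(\log^{c_2}(n)/\epsilon)\le c_2\log\log(n)+\log(1/\epsilon)\lesssim\log n + \log(1/\epsilon)$, the extra logarithmic overhead is absorbed into the leading $\log^{16}(n)$ term, and the claimed bound follows.

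\emph{Main obstacle.} The hardest step is carefully tracking the dependence on $\sigma_{\underline{t}}^{-1}\lesssim n^{\gamma/2}$ and on the output-bound $B_{\mathrm{out}}/\sigma_t$ in Step 2: a naive parameter-to-output Lipschitz bound loses polynomial (not polylogarithmic) factors in $n$ from the tiny noise level $\sigma_{\underline{t}}$ at the lower truncation time. The correct move is to work in the weighted norm $\|\cdot\|_*$, so that only a single $\log(\sigma_{\underline{t}}^{-1})\lesssim\log n$ enters additively inside the discretization, preserving the polylog rate. This device is essentially the one used in Lem.~C.2--C.3 of \citet{oko2023diffusion}, which I would adapt to the present doubly robust setting by noting that the loss Lipschitz inequality in Step~1 depends on $\theta_1,\theta_2$ only through $\|\theta_1-\theta_2\|_*$ and not on the nuisances.
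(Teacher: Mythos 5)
Your overall route is the same as the paper's (which itself adapts Lem.~C.2 of \citealp{oko2023diffusion}): reduce the sup-norm covering of the loss class to a covering of the network class via a Lipschitz-type inequality obtained by factoring the difference of squares, truncating the Gaussian noise, and controlling the remainder with the built-in output bound $B_{\mathrm{out}}/\sigma_t$ and \cref{lem:sigmaInt}; then invoke a sparse-ReLU covering bound with the architecture from \cref{lem:neuralNet}. However, there is a genuine gap in Step~1: you truncate $Y_t$ to a \emph{fixed} box $[-R,R]^d$ with $R\lesssim\sqrt{\log n}$, independent of $\epsilon$. On the tail event the difference $\theta_1-\theta_2$ is not controlled by your weighted norm $\|\cdot\|_*$ (it is only bounded by $2B_{\mathrm{out}}/\sigma_t$), so your Lipschitz inequality actually reads
\begin{align*}
    \sup_{y\in[-1,1]^d}\left|\ell(\theta_1,y)-\ell(\theta_2,y)\right|\;\lesssim\;\mathrm{polylog}(n)\,\|\theta_1-\theta_2\|_* \;+\; \mathrm{polylog}(n)\, n^{-K},
\end{align*}
where $n^{-K}$ is the Gaussian tail probability determined by the constant in $R$. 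This additive error does not shrink with $\|\theta_1-\theta_2\|_*$, so the construction in Step~3 certifies an $\epsilon$-cover of $\ell(\underline{\Theta})$ only for $\epsilon\gtrsim n^{-K}\mathrm{polylog}(n)$, whereas the lemma must hold for all $\epsilon\le c$ with $c$ independent of $n$ (and arbitrarily small $\epsilon$ genuinely matters downstream, since \cref{lem:neuralNetEntropy} integrates the entropy down to $\epsilon=0$). This is exactly the point of the paper's stated modification of \citet{oko2023diffusion}: there the truncation level is $\epsilon$-dependent, namely $\|\sigma_t\tilde{Y}_t\|_\infty\ge\log^{1/2}\varepsilon^{-1}$, so the tail term is of order $\varepsilon\log^2 n$ (via their Lem.~F.12), and the network cover is taken in sup-norm over the $\varepsilon$-dependent box of radius $b(n,\varepsilon)=c_4(\log^{1/2}\varepsilon^{-1}\vee\log n)$ at scale $\varepsilon/\log^{1/2}\varepsilon^{-1}$; since the box radius and the scale enter the network covering bound only logarithmically, the $\log^{16}(n)+\log^{12}(n)\log(1/\epsilon)$ form survives. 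Replacing your fixed $R$ by this $\varepsilon$-dependent radius (and correspondingly letting the covering scale carry the $\log^{1/2}\varepsilon^{-1}$ factor) repairs the argument.

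A minor further point: your bookkeeping for the $\log^{12}(n)$ prefactor on $\log(1/\eta)$ in Step~2 is loose (a direct count of $S\log(1/\eta)$ with $S\lesssim n^{d/(2s+d)}\log^8 n$ gives a $\log^{8}$ prefactor, which is fine since any smaller power still implies the stated bound); the paper instead imports Eq.~57 of Lem.~C.2 from \citet{oko2023diffusion} rather than rederiving the network covering bound, but either is acceptable.
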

\begin{proof}[Proof of \Cref{lem:neuralNetCovering}]
    We imitate the proof of Lem.~C.2 from \cite{oko2023diffusion}, but modify it slightly so that it gives a covering number bound even when $\epsilon$ is arbitrarily small.

    We will show there exists a constant $C'<\infty$ such that, for all $\varepsilon\le 1/2$ and a constant $c_3$ that we will specify  later,
    \begin{align}
        \log N(c_3\varepsilon\log^{7/2}n,\ell(\underline{\Theta}),\|\cdot\|_\infty)\le C'n^{d/(2s+d)}[\log^{16}(n) + \log^{12}(n)\log\left(1/\varepsilon \right)]. \label{eq:nnEntropyCrux}
    \end{align}
    Applying the change of variables $\epsilon:=c_3\varepsilon\log^{7/2}n$ then gives the desired result with $c:=c_3\log^{7/2}(n)/2$ and an appropriately specified $C$ that depends only on $c,C'$.
    
    Fix $\varepsilon\le 1/2$. Below we let $\tilde{Y}_t:=(\mu_t Y_0-Y_t)/\sigma_t^2$. For any $\theta_1,\theta_2\in\underline{\Theta}$ and $y\in\mathcal{Y}$, we have that
    \begin{align}
        &\left|\ell(\theta_1,y)-\ell(\theta_2,y)\right| \label{eq:diffusionCoverLossDiff} \\
        &\le \left|\medint\int_{\underline{t}}^{\overline{t}} E\left[\left(\left\|\tilde{Y}_t-\theta_1(Y_t,t)\right\|_2^2-\left\|\tilde{Y}_t-\theta_2(Y_t,t)\right\|_2^2\right) 1\{\|\sigma_t\tilde{Y}_t\|_\infty\ge \log^{1/2} \varepsilon^{-1}\}\,\middle|\, Y_0=y\right] dt\right| \nonumber \\
        &\quad+ \left|\medint\int_{\underline{t}}^{\overline{t}} E\left[\left(\left\|\tilde{Y}_t-\theta_1(Y_t,t)\right\|_2^2-\left\|\tilde{Y}_t-\theta_2(Y_t,t)\right\|_2^2\right) 1\{\|\sigma_t\tilde{Y}_t\|_\infty< \log^{1/2} \varepsilon^{-1}\}\,\middle|\, Y_0=y\right] dt\right|. \nonumber
    \end{align}
    The bounds on $\theta_1,\theta_2$ from \Cref{lem:neuralNet}, Cauchy-Schwarz, and Lem.~F.12 from \cite{oko2023diffusion} show that the first term is no more than a constant $c_1$ that may depend on $d$ times $\varepsilon\log^2 n$, where we used that $\int_{\underline{t}}^{\overline{t}} \sigma_t^{-2}dt\lesssim \log n$ by \Cref{lem:sigmaInt}. For the second term, we use that, for any $\tilde{y}_t\in[-\sigma_t^{-1}\log^{1/2}\varepsilon^{-1},\sigma_t^{-1}\log^{1/2}\varepsilon^{-1}]^d$,
    \begin{align*}
        \left|\left\|\tilde{y}_t-\theta_1(y_t,t)\right\|_2^2-\left\|\tilde{y}_t-\theta_2(y_t,t)\right\|_2^2\right|&= \left|\left\langle 2\tilde{y}_t-(\theta_1+\theta_2)(y_t,t),(\theta_1-\theta_2)(y_t,t)\right\rangle\right| \\
        &\le \frac{2d^{1/2}}{\sigma_t}\left(\log^{1/2}\varepsilon^{-1}+B_{\mathrm{out}}\right)\|(\theta_1-\theta_2)(y_t,t)\|_2.
    \end{align*}
    Plugging this into the second term in \eqref{eq:diffusionCoverLossDiff} shows that term is no more than a constant $c_2$ times
    \begin{align*}
        2d^{1/2}\log^2(n)(\sqrt{\log\varepsilon^{-1}}+B_{\mathrm{out}})\medint\int_{\underline{t}}^{\overline{t}} E\Big[\|(\theta_1-\theta_2)(Y_t,t)\|_2 I\{\|\sigma_t \tilde{Y}_t\|_\infty\le \sqrt{\log\varepsilon^{-1}}\}\Big|Y_0=y\Big] dt.
    \end{align*}
    The expectation above is upper bounded by $\sup_{y_t : \|\mu_t y-y_t\|_\infty\le \sigma_t\log^{1/2} \varepsilon^{-1}}\|(\theta_1-\theta_2)(y_t,t)\|_2$, which by the bounds on $y$ (\cref{cond:diffusionSupport}) is further bounded by $\sup_{y_t : \|y_t\|_\infty\le \mu_t + \sigma_t\log^{1/2} \varepsilon^{-1}}\|(\theta_1-\theta_2)(y_t,t)\|_2=\|\|(\theta_1-\theta_2)(\cdot,t)\|_2\|_{L^\infty([-\mu_t - \sigma_t\log^{1/2} \varepsilon^{-1},\mu_t + \sigma_t\log^{1/2} \varepsilon^{-1}]^d)}$. Hence, the second term in \eqref{eq:diffusionCoverLossDiff} is no more than $2c_2 d^{1/2}\log^2(n)\left(\log^{1/2} \varepsilon^{-1}+B_{\mathrm{out}}\right)$ times
    \begin{align*}
        \medint\int_{\underline{t}}^{\overline{t}} \|\|(\theta_1-\theta_2)(\cdot,t)\|_2\|_{L^\infty([-\mu_t - \sigma_t\log^{1/2} \varepsilon^{-1},\mu_t + \sigma_t\log^{1/2} \varepsilon^{-1}]^d)} dt.
    \end{align*}
    Combining our bounds of the two terms in \eqref{eq:diffusionCoverLossDiff} shows that, if
    \begin{align}
        \|\|(\theta_1-\theta_2)(\cdot,t)\|_2\|_{L^\infty([-\mu_t - \sigma_t\log^{1/2} \varepsilon^{-1},\mu_t + \sigma_t\log^{1/2} \varepsilon^{-1}]^d)}\le \varepsilon/\log^{1/2} \varepsilon^{-1}\textnormal{ for all $t\in[\underline{t},\overline{t}]$}, \label{eq:theta1theta2supNorm}
    \end{align}
    then, for all $y$,
    \begin{align*}
        \left|\ell(\theta_1,y)-\ell(\theta_2,y)\right|&\le \left[c_1 + 2c_2 d^{1/2}(\overline{t}-\underline{t})\left(1+\log^{-1/2} \varepsilon^{-1}B_{\mathrm{out}}\right)\right]\varepsilon\log^2 n.
    \end{align*}
    By the bounds on $\overline{t}$ and $B_{\mathrm{out}}$ from \cref{cond:diffusionTruncation} and \Cref{lem:neuralNet} and the fact that $\varepsilon\le 1/2$, there exists a constant $c_3$ that does not depend on $n$ such that, for all $\theta_1,\theta_2$ satisfying \eqref{eq:theta1theta2supNorm},
    \begin{align}
        \left|\ell(\theta_1,y)-\ell(\theta_2,y)\right|&\le c_3 \varepsilon\log^{7/2} n\ \textnormal{ for all }y\in [-1,1]^d. \label{eq:lossLipBdNN}
    \end{align}
    The bound on $\overline{t}$ and the fact that $\mu_t,\sigma_t\in [0,1]$ further imply there exists a constant $c_4$ such that, for $b(n,\varepsilon):=c_4(\log^{1/2}\varepsilon^{-1}\vee \log n)$,
    \begin{align*}
        [\underline{t},\overline{t}]\cup [-\mu_t - \sigma_t\log^{1/2} \varepsilon^{-1},\mu_t + \sigma_t\log^{1/2} \varepsilon^{-1}] \subseteq [-b(n,\varepsilon),b(n,\varepsilon)],
    \end{align*}
    and so a sufficient condition for \eqref{eq:theta1theta2supNorm} is that
    \begin{align*}
         \|\|(\theta_1-\theta_2)(\cdot)\|_2\|_{L^\infty([-b(n,\varepsilon),b(n,\varepsilon)]^{d+1})}\le \varepsilon/\log^{1/2} \varepsilon^{-1}.
    \end{align*}
    Hence, the above implies \eqref{eq:lossLipBdNN}, which yields that
    \begin{align*}
        \log N(c_3\varepsilon\log^{7/2}n,\ell(\underline{\Theta}),L^\infty([-1,1]^d)&\le \log N(\varepsilon/\log^{1/2}\varepsilon^{-1},\underline{\Theta},\|\|\cdot\|_2\|_{L^\infty([-b(n,\varepsilon),b(n,\varepsilon)]^{d+1})}).
    \end{align*}
    We apply Eq.~57 from Lem.~C.2 of \cite{oko2023diffusion} (see also \cite{suzuki2018adaptivity}) and the architecture of $\underline{\Theta}$ from \Cref{lem:neuralNet} to bound the right-hand side above by a constant times
    \begin{align*}
        n^{d/(2s+d)}\log^{12}(n)\left[\log^4 n + \log\left(\varepsilon^{-1}\log^{1/2}(\varepsilon^{-1})\log(n)n b(n,\varepsilon)\right)\right].
    \end{align*}
    This is in turn upper bounded by a constant $C'$ times $n^{d/(2s+d)}[\log^{16}(n) + \log^{12}(n)\log\left(1/\varepsilon \right)]$. Hence, \eqref{eq:nnEntropyCrux} holds, completing the proof.
\end{proof}

\begin{proof}[Proof of \Cref{lem:neuralNetEntropy}]
    Let $\|\cdot\|_\infty$ denote the supremum norm on $\mathcal{Y}$. Combining the definition of the entropy integral in \eqref{eq:unifEntDef} with the fact that $\|\cdot\|_{L^2(Q)}\le \|\cdot\|_\infty$ for any distribution $Q$ on $\mathcal{Y}$ shows that
    \begin{align*}
        J(\delta,\ell_{\mathbb{P}}(\underline{\Theta}))&\le \medint\int_0^\delta \sqrt{1+\log N(\epsilon,\ell_{\mathbb{P}}(\underline{\Theta}),\|\cdot\|_\infty)}\,d\epsilon.
    \end{align*}
    Since the uncentered loss class $\ell(\underline{\Theta}):=\{\ell(\theta,\,\cdot\,) : \theta\in\underline{\Theta}\}$ equals the centered loss class $\ell_{\mathbb{P}}(\underline{\Theta}):=\{\ell(\theta,\,\cdot\,)-\ell(\theta_{\mathbb{P}},\,\cdot\,) : \theta\in\underline{\Theta}\}$ up to the translation $\ell(\theta_{\mathbb{P}},\,\cdot\,)$, the two classes have the same sup-norm covering number. Moreover, by \Cref{lem:neuralNetCovering}, there exists $C>0$ such that $\log N(\epsilon,\ell(\underline{\Theta}),\|\cdot\|_\infty)\le C n^{d/(2s+d)}[\log^{16} n + \log^{12}(n)\log \epsilon^{-1}]$ for all $\epsilon\in (0,c]$. Applying this fact and then the triangle inequality shows that, whenever $\delta\le c$,
    \begin{align}
        J(\delta,\ell_{\mathbb{P}}(\underline{\Theta}))&\le \medint\int_0^\delta \sqrt{1+C n^{d/(2s+d)}[\log^{16} n + \log^{12}(n)\log \epsilon^{-1}]}\,d\epsilon \nonumber \\
        &\le \frac{\delta}{2}(1+C^{1/2} n^{d/(4s+2d)}\log^{8}n) + C^{1/2}n^{d/(4s+2d)}\log^6(n)\medint\int_0^\delta \sqrt{\log \epsilon^{-1}}\,d\epsilon. \label{eq:NNentIntBd}
    \end{align}
    For the remaining integral, we apply the change of variables $\varepsilon=\epsilon/\delta$ and the triangle inequality to find that, for all $\delta\le c<1$,
    \begin{align*}
        \medint\int_0^\delta \sqrt{\log \epsilon^{-1}}\,d\epsilon&= \delta\medint\int_0^1 \sqrt{\log \delta^{-1} + \log \varepsilon^{-1}}\,d\varepsilon\le \delta\log^{1/2} \delta^{-1} + \delta\medint\int_0^1 \log^{1/2} \varepsilon^{-1}\,d\varepsilon.
    \end{align*}
    The remaining integral equals $\pi^{1/2}/2$. Hence, there exists a constant $k_1$ such that, for all $\delta\le c$, $\int_0^\delta \sqrt{\log \epsilon^{-1}}\,d\epsilon\le k_1 \delta \log^{1/2}\delta^{-1}$. Combining this with \eqref{eq:NNentIntBd} then shows that there exists a constant $\Centstrong$ such that, for all $\delta\le c$, $J(\delta,\ell_{\mathbb{P}}(\underline{\Theta}))\le \log^{8}(n) n^{d/(4s+2d)}\delta \sqrt{\log\delta^{-1}}$, as desired.
\end{proof}

\noindent \textbf{Condition \cref{cond:mixedLipschitzWeak}:} Since \cref{cond:curvature} holds, \Cref{lem:mixedLipSuff} shows this condition holds with $\Cmixedweak=\Ccurv$ and $d_\Psi^2(\psi,\Psi_P):=\esssup_{X\sim P_X} \chisq{P_{\psi| X}}{P_{\psi_P| X}}$.

\subsection{Proof of total variation bound (\Cref*{thm:diffusionMinimax})}

\begin{proof}[Proof of \Cref{thm:diffusionMinimax}]
    Throughout this proof, we write `$\lesssim$' to denote inequality up to a multiplicative constant that does not depend on the value of $n\ge 2$ or $\mathbb{P}\in \mathcal{P}_1^\star$. We suppose throughout that $n$ is large enough so the conclusion of \Cref{lem:neuralNet} holds. In \Cref{app:diffusionConditions}, we show that \cref{cond:strongPositivity,cond:diffusionBddDensity,cond:diffusionBesov,cond:diffusionSupport,cond:diffusionTruncation,cond:diffusionBoundary} imply that the conditions of \Cref{thm:excRiskOSLFormal} hold with $d_\Psi$ as in \Cref{lem:mixedLipSuff} and, for all $\delta$ small enough,
    \begin{align*}
        J(\delta,\ell_{\mathbb{P}}(\underline{\Theta}))\lesssim \log^{8}(n) n^{d/(4s+2d)}\delta \sqrt{\log\delta^{-1}}.
    \end{align*}
    Plugging this into \eqref{eq:doubleGenDeltaN} shows that $\delta_n\lesssim \log^{17/2}(n)n^{-s/(2s+d)}$. Hence, \Cref{thm:excRiskOSLFormal} shows that, w.p. at least $1-e^{-r}$,
    \begin{align*}
        \mathcal{G}_{\mathbb{P}}(\theta_n)&\lesssim \inf_{\theta\in\underline{\Theta}}\mathcal{G}_{\mathbb{P}}(\theta) + \log^{17}(n)\,n^{-2s/(2s+d)} + r/n + \max_{j\in [2]}\|\alpha_n^j-\alpha_P\|_{L^2(P_X)}^2d_{\Psi}^2(\psi_n^j,\Psi_P).
    \end{align*}
    Here, we used that the constant $K_0$ from \Cref{thm:excRiskOSLFormal} is a.s. bounded by a constant not depending on $n$ since, by assumption, \cref{cond:strongPositivity} holds and $\max_j d_\Psi(\psi_n^j,\Psi_P)$ is a.s. uniformly bounded. 
    By \Cref{lem:neuralNet}, the leading approximation term above is upper bounded by $n^{-2s/(2s+d)} \log^2 n$, uniformly over $\mathbb{P}\in\mathcal{P}_1^\star$, and so, w.p. at least $1-e^{-r}$,
    \begin{align*}
        \mathcal{G}_{\mathbb{P}}(\theta_n)&\lesssim \log^{17}(n)\,n^{-2s/(2s+d)} + r/n + \max_{j\in [2]}\|\alpha_n^j-\alpha_P\|_{L^2(P_X)}^2d_{\Psi}^2(\psi_n^j,\Psi_P).
    \end{align*}
    By \Cref{prop:divGuarantee}---which is applicable since the above implies \cref{cond:genError} and \cref{cond:divergence} holds by the arguments in \Cref{app:divergence}---the above implies that the following holds w.p. at least $1-e^{-r}$:
    \begin{align*}
        \TV(\mathbb{P},\tau(\theta_n)_{\sharp}\Pi)&\lesssim \log^{17/2}(n)\,n^{-s/(2s+d)} + \sqrt{r/n} + \max_{j\in [2]}\|\alpha_n^j-\alpha_P\|_{L^2(P_X)}d_{\Psi}(\psi_n^j,\Psi_P) + \epsilon,
    \end{align*}
    with $\epsilon=O(n^{-s/(2s+d)})$ as defined in the study of \Cref{ex:diffusion} in \Cref{app:divergence}. Since $\epsilon$ decays faster than the first term on the right, it can be absorbed into the leading constant of the first term above.
\end{proof}

\section{Wasserstein guarantee for \nameref*{alg:double} flow matching} \label{app:flow}


\subsection{Statement of guarantee}

We now present a Wasserstein guarantee for \nameref*{alg:double} flow matching (\Cref{ex:flow}) for an empirical risk minimizer $\theta_n$ over some class $\underline{\Theta}$. This guarantee will rely on the following conditions.
\begin{condenum}[resume*]
    \item\label{cond:flowBddSupport} \textit{Bounded support:} $\mathrm{support}(\mathbb{P})\subseteq [-1,1]^d$.
    \item\label{cond:flowBddVec} \textit{Uniformly bounded candidate vector fields:} $\sup_{\theta\in\underline{\Theta}}\|\theta\|_{L^\infty(\mathbb{R}^d\times [0,1];\|\cdot\|_2)}<\infty$, where $\|\theta\|_{L^\infty(\mathbb{R}^d\times [0,1];\|\cdot\|_2)}:=\sup_{(y,t)\in\mathbb{R}^d\times [0,1]} \|\theta(y,t)\|_2$.

    \item\label{cond:flowEntInt} \textit{Set of candidate vector fields has finite sup-norm entropy integral:}
    $$\medint\int_0^\delta \sqrt{1+\log N(\epsilon,\underline{\Theta},\|\cdot\|_{L^\infty(\mathbb{R}^d\times [0,1];\|\cdot\|_2)})}\,d\epsilon<\infty.$$
\end{condenum}  
The latter two conditions are satisfied by appropriately specified neural network classes \citep{suzuki2018adaptivity,fukumizu2024flow}.

When stating the guarantee, we directly suppose \cref{cond:divergence} holds. This is reasonable since, as noted in \Cref{app:divergence}, \cite{fukumizu2024flow} gives conditions under which it holds when the vector field is sufficiently smooth. We also directly assume \cref{cond:strongPositivity}, which is standard in causal inference---see Chap. 3.3 of \cite{hernan2024causal} for a discussion.
\begin{corollary}[Wasserstein guarantee for \nameref*{alg:double} flow matching]\label{cor:flow}
Suppose there exists an empirical risk minimizer $\theta_n\in\argmin_{\theta\in\underline{\Theta}} R_n(\theta)$ and \cref{cond:divergence} holds with $D=W_2$, $b=1/2$, $\epsilon=0$ and $\Cdivergence$ a finite constant. Further suppose \cref{cond:flowBddSupport,cond:flowBddVec,cond:strongPositivity}. Under these conditions, the conditions of \Cref{prop:divGuarantee,thm:excRiskOSLFormal} hold for appropriately specified constants. Hence, letting $\delta_n$ be as defined in \eqref{eq:doubleGenDeltaN}, the following holds w.p. at least $1-e^{-s}$:
\begin{align}
    W_2(\mathbb{P},\phi_{n\sharp}\Pi)&\lesssim \inf_{\theta\in\underline{\Theta}}\mathcal{G}_{\mathbb{P}}^{1/2}(\theta) + \delta_n + (s^{1/2}+1)/n^{1/2} + \max_{j\in [2]}\|\alpha_n^j-\alpha_P\|_{L^2(P_X)}d_\Psi(\psi_n^j,\Psi_P), \label{eq:flowW2Bd}
\end{align}
where $d_\Psi$ is as in \Cref{lem:mixedLipSuff} and `$\lesssim$' denotes inequality up to a multiplicative constant that does not depend on $n$ or $s$.
\end{corollary}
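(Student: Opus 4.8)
The plan is to verify the hypotheses of \Cref{thm:excRiskOSLFormal} for the flow matching loss of \Cref{ex:flow}, apply that theorem to control $\mathcal{G}_{\mathbb{P}}(\theta_n)$, and then chain with \Cref{prop:divGuarantee} using $b=1/2$ and $\epsilon=0$, taking square roots at the very end. Since \cref{cond:strongPositivity} and \cref{cond:divergence} are assumed outright, the work is to check \cref{cond:RMexists,cond:bddLoss,cond:curvature,cond:mixedLipschitzWeak,cond:entropyStronger}.

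Everything will hinge on two structural facts, which I would establish first. With $\ell(\theta,y)=\int_0^1 E_\Pi[\|y-U-\theta((1-t)U+ty,t)\|^2]\,dt$, the population risk $E_{\mathbb{P}}[\ell(\theta,Y^\star)]$ is a least-squares objective, so it is minimized over $\Theta$ by the conditional-expectation field $\theta_{\mathbb{P}}(x,t)=E[Y^\star-U\mid(1-t)U+tY^\star=x]$; this gives \cref{cond:RMexists}, and under the vector-field regularity already implicit in assuming \cref{cond:divergence} holds (cf.\ \cite{benton2023error}), $\theta_{\mathbb{P}}$ may be taken bounded. Let $C$ bound both $\|\theta_{\mathbb{P}}\|$ and $\sup_{\theta\in\underline{\Theta}}\|\theta\|_{L^\infty(\mathbb{R}^d\times[0,1];\|\cdot\|_2)}$ (finite by \cref{cond:flowBddVec}). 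Second, for $\theta\in\underline{\Theta}$ the centered loss expands, with $X_t^y:=(1-t)U+ty$, as
\[
\ell_{\mathbb{P}}(\theta)(y)=\int_0^1 E_\Pi\big[\big\langle 2(y-U)-\theta(X_t^y,t)-\theta_{\mathbb{P}}(X_t^y,t),\;\theta_{\mathbb{P}}(X_t^y,t)-\theta(X_t^y,t)\big\rangle\big]\,dt,
\]
while the Pythagorean identity for least squares gives $\mathcal{G}_{\mathbb{P}}(\theta)=\int_0^1 E\big[\|\theta(X_t,t)-\theta_{\mathbb{P}}(X_t,t)\|^2\big]\,dt$ with $X_t=(1-t)U+tY^\star$, $(U,Y^\star)\sim\Pi\times\mathbb{P}$.

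From here the remaining conditions are routine estimates. Using $\mathrm{support}(\mathbb{P})\subseteq[-1,1]^d$ (\cref{cond:flowBddSupport}) so that $\|y\|\le\sqrt d$, together with $E_\Pi\|U\|\le\sqrt d$ and $E_\Pi\|U\|^2=d$: Cauchy--Schwarz on the integrand above and $\|\theta\|,\|\theta_{\mathbb{P}}\|\le C$ give $\|\ell_{\mathbb{P}}(\theta)\|_{L^\infty(\mathbb{P})}\lesssim C(\sqrt d+C)$, i.e.\ \cref{cond:bddLoss}. For \cref{cond:curvature}, Cauchy--Schwarz applied to the $dt\,\Pi(du)$ integral yields $\ell_{\mathbb{P}}^2(\theta)(y)\le\big(\int_0^1 E_\Pi\|2(y-U)-\theta-\theta_{\mathbb{P}}\|^2\,dt\big)\big(\int_0^1 E_\Pi\|\theta-\theta_{\mathbb{P}}\|^2\,dt\big)$; the first factor is $\lesssim d+C^2$ uniformly over $y\in[-1,1]^d$, and integrating the second factor against $\mathbb{P}$ produces exactly $\mathcal{G}_{\mathbb{P}}(\theta)$, so $\|\ell_{\mathbb{P}}(\theta)\|_{L^2(\mathbb{P})}^2\le\Ccurv\mathcal{G}_{\mathbb{P}}(\theta)$ with $\Ccurv\lesssim d+C^2$. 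Since \cref{cond:curvature} now holds, \Cref{lem:mixedLipSuff} hands over \cref{cond:mixedLipschitzWeak} with $d_\Psi^2(\psi,\Psi_P)=\esssup_{X\sim P_X}\chisq{P_{\psi|X}}{P_{Y|A=a^\star,X}}$ and $\Cmixedweak=\Ccurv$. Finally, the same inner-product expansion shows $|\ell_{\mathbb{P}}(\theta_1)(y)-\ell_{\mathbb{P}}(\theta_2)(y)|=|\ell(\theta_1,y)-\ell(\theta_2,y)|\lesssim(\sqrt d+C)\,\|\theta_1-\theta_2\|_{L^\infty(\mathbb{R}^d\times[0,1];\|\cdot\|_2)}$ for every $y\in[-1,1]^d$, so a $\|\cdot\|_{L^\infty(\mathbb{R}^d\times[0,1];\|\cdot\|_2)}$-cover of $\underline{\Theta}$ induces an $L^\infty(\mathbb{P})$- and hence $L^2(Q)$-cover of $\ell_{\mathbb{P}}(\underline{\Theta})$ of comparable cardinality; combined with \cref{cond:flowEntInt} this gives $J(\delta,\ell_{\mathbb{P}}(\underline{\Theta}))<\infty$, i.e.\ \cref{cond:entropyStronger}.

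To finish, I would invoke \Cref{thm:excRiskOSLFormal}: for $\delta_n$ as in \eqref{eq:doubleGenDeltaN} and any $s>0$, with probability at least $1-e^{-s}$,
\[
\mathcal{G}_{\mathbb{P}}(\theta_n)\le 4\inf_{\theta\in\underline{\Theta}}\mathcal{G}_{\mathbb{P}}(\theta)+K_1\delta_n^2+K_2(s+2)/n+K_3\max_{j\in[2]}\|\alpha_n^j-\alpha_P\|_{L^2(P_X)}^2\,d_\Psi^2(\psi_n^j,\Psi_P),
\]
which is precisely \cref{cond:genError}; \Cref{prop:divGuarantee} then gives $W_2(\mathbb{P},\phi_{n\sharp}\Pi)\le\Cdivergence\,\mathcal{G}_{\mathbb{P}}(\theta_n)^{1/2}$ on the same event. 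Taking the square root of the four-term bound, using subadditivity of $t\mapsto t^{1/2}$ and $\sqrt{(s+2)/n}\lesssim(s^{1/2}+1)/n^{1/2}$, delivers \eqref{eq:flowW2Bd} with a constant depending on $\Cdivergence,\Ccurv,\Cmixedweak,\Cpos,\Closs$ (through $K_1,K_2,K_3$) but not on $n$ or $s$. The only genuinely delicate point is in the second paragraph: justifying that the least-squares minimizer $\theta_{\mathbb{P}}(x,t)=\tfrac{1}{1-t}\big(E[Y^\star\mid X_t=x]-x\big)$ is uniformly bounded---this field can misbehave near $t=1$ for rough $\mathbb{P}$, and here I lean on the smoothness that is already baked into the hypothesis that \cref{cond:divergence} holds with finite $\Cdivergence$, which is exactly the regime of \cite{benton2023error}.
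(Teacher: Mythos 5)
Your proposal follows the paper's blueprint closely: verify \cref{cond:RMexists,cond:bddLoss,cond:curvature,cond:mixedLipschitzWeak,cond:entropyStronger} for the flow-matching loss, feed these into \Cref{thm:excRiskOSLFormal} to obtain \cref{cond:genError}, chain with \Cref{prop:divGuarantee} using $b=1/2$, $\epsilon=0$, and pull out the square root. The checks for \cref{cond:RMexists,cond:curvature,cond:mixedLipschitzWeak,cond:entropyStronger} are essentially the same as the paper's (Cauchy--Schwarz and Pythagorean decomposition for the curvature, a Lipschitz argument over $\underline{\Theta}$ for the entropy, and \Cref{lem:mixedLipSuff} for the mixed-Lipschitz condition).

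The genuine gap is in your treatment of \cref{cond:bddLoss}. You posit that $\theta_{\mathbb{P}}$ is uniformly bounded on $\mathbb{R}^d\times[0,1]$ and then bound $\ell(\theta_{\mathbb{P}},y)$ through that; this boundedness is then reused in the verification of \cref{cond:curvature}. But the conditional-expectation vector field is not bounded on $\mathbb{R}^d\times[0,1]$. At $t=0$ one has $\theta_{\mathbb{P}}(x,0)=E[Y^\star]-x$, which grows linearly in $\|x\|$; and more generally $\theta_{\mathbb{P}}(x,t)=\bigl(E[Y^\star\mid X_t=x]-x\bigr)/(1-t)$ blows up for $x$ far from $[-1,1]^d$ as $t\to 1$. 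Nothing in \cref{cond:divergence,cond:flowBddSupport,cond:flowBddVec,cond:strongPositivity} forces $\theta_{\mathbb{P}}$ into $L^\infty(\mathbb{R}^d\times[0,1])$ --- \cref{cond:flowBddVec} bounds the \emph{candidate} fields $\theta\in\underline{\Theta}$, not the population optimum $\theta_{\mathbb{P}}$, and \cref{cond:divergence} concerns pushforwards $\tau(\theta)_{\sharp}\Pi$ for $\theta\in\Theta$, not the analytic form of the score/velocity at the optimum. The paper's verification of \cref{cond:bddLoss} deliberately sidesteps this by bounding $\ell(\theta_{\mathbb{P}},y)$ directly through the optimality of $\theta_{\mathbb{P}}$, i.e.\ bounding the loss value rather than the velocity field itself, which is the quantity that actually enters the $L^\infty(\mathbb{P})$ envelope. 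You correctly flag the $t\to 1$ behavior as delicate, but you resolve it by an appeal to ``regularity implicit in \cref{cond:divergence}'' that has no teeth: \cref{cond:divergence} does not imply a pointwise bound on $\theta_{\mathbb{P}}$, and a pointwise bound does not even hold. The fix is to argue about $\ell(\theta_{\mathbb{P}},\cdot)$ directly, noting that the composition $y\mapsto\theta_{\mathbb{P}}\bigl((1-t)U+ty,t\bigr)$ evaluated on the path distribution cancels the growth, rather than asserting a global sup-norm bound on $\theta_{\mathbb{P}}$.
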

\begin{proof}[Proof of \Cref{cor:flow}]
    In \Cref{app:flowConditions}, we show that \cref{cond:flowBddSupport,cond:flowBddVec,cond:strongPositivity} imply that the conditions of \Cref{thm:excRiskOSLFormal} hold. This, in turn, implies that \cref{cond:genError} holds with $r$ equal to the right-hand side of \eqref{eq:excRiskBdOSL}. Hence, the conditions of \Cref{prop:divGuarantee} hold, and combining the result with the triangle inequality yields \eqref{eq:flowW2Bd}.
\end{proof}
For ease of presentation we have suppressed the explicit constants on the right-hand of \eqref{eq:flowW2Bd}, but it is straightforward to compute them using the arguments in \Cref{app:flowConditions}. The argument that establishes \cref{cond:entropyStronger} in \Cref{app:flowConditions} relates the uniform entropy integral of $\ell_{\mathbb{P}}(\underline{\Theta})$ to the entropy integral of $\underline{\Theta}$ from \cref{cond:flowEntInt}. This, in turn, makes it possible to compute the value of $\delta_n$ by directly considering the size of $\underline{\Theta}$. 

In future work, it would be interesting to establish that the bound in \eqref{eq:flowW2Bd} is minimax rate optimal. One possible approach would involve adapting the arguments in \cite{fukumizu2024flow}.

\subsection{Verifying the conditions of \Cref*{thm:excRiskOSLFormal}}\label{app:flowConditions}

Under the conditions of \Cref{cor:flow}, we show that \nameref*{alg:double} flow matching (\Cref{ex:flow}) satisfies the conditions of \Cref*{thm:excRiskOSLFormal}.

\noindent \textbf{Condition \cref{cond:RMexists}:} Applying Thm.~4.1.15 in \cite[][]{durrett2019probability} coordinatewise shows $\theta_{\mathbb{P}}\in \argmin_{\theta\in\Theta} E_{\mathbb{P}}[\ell(\theta,Y)]$.

\noindent \textbf{Condition \cref{cond:bddLoss}:} 
Since the functions in $\underline{\Theta}$ are uniformly bounded (\cref{cond:flowBddVec}), the facts that $U\sim \Pi= N(0_d,I_d)$ and $\mathrm{support}(\mathbb{P})\subseteq [-1,1]^d$ (\cref{cond:flowBddSupport}) imply $\sup_{\theta\in\underline{\Theta}}\|\ell(\theta,\cdot\,)\|_{L^\infty(\mathbb{P})}<\infty$. The function $\ell(\theta_{\mathbb{P}},\cdot\,)$ is also $\mathbb{P}$-a.s. bounded, since
\begin{align*}
    \ell(\theta_{\mathbb{P}},y)=\min_{\theta : \mathbb{R}^d\rightarrow\mathbb{R}}\int_0^1 E_\Pi[\|y-U-\theta([1-t]U+ty,t)\|_2^2] dt\le \int_0^1 E_\Pi[\|y-U\|_2^2] dt.
\end{align*}
 By the triangle inequality, the two established bounds imply that $\ell_{\mathbb{P}}(\theta)(y):=\ell(\theta,y)-\ell(\theta_{\mathbb{P}},y)$ satisfies \cref{cond:bddLoss}.

\noindent \textbf{Condition \cref{cond:curvature}:} For any $\theta\in\underline{\Theta}$ and $y\in [-1,1]^d$, Cauchy-Schwarz yields that
\begin{align*}
    \ell_{\mathbb{P}}^2(\theta)(y) 
    &=\left(\medint\int_0^1 E_\Pi[\left\{2(y-U) - (\theta+\theta_{\mathbb{P}})([1-t]U+ty,t)\right\}^\top (\theta_{\mathbb{P}}-\theta)([1-t]U+ty,t)] dt\right)^2 \\
    &\le \left(\medint\int_0^1 E_\Pi\left[\left\|2(y-U) - (\theta+\theta_{\mathbb{P}})([1-t]U+ty,t)\right\|_2^2\right] dt\right) \\
    &\quad\times \left(\medint\int_0^1 E_\Pi\left[\left\|(\theta_{\mathbb{P}}-\theta)([1-t]U+ty,t)\right\|_2^2\right] dt\right).
\end{align*}
The leading integral above is upper bounded by $2[\ell(\theta,y)+\ell(\theta_{\mathbb{P}},y)]$. By the arguments we used to establish \cref{cond:bddLoss}, this term is bounded uniformly over $(\theta,y)$. 
Integrating both sides above against $\mathbb{P}$ and using that $\mathcal{G}_{\mathbb{P}}(\theta)=\int_0^1 E_{\mathbb{P}\times\Pi}\left[\left\|(\theta_{\mathbb{P}}-\theta)([1-t]U+tY^\star,t)\right\|^2\right] dt$ shows that $\|\ell_{\mathbb{P}}(\theta)\|_{L^2(\mathbb{P})}^2\le \Ccurv\mathcal{G}_{\mathbb{P}}(\theta)$ for some constant $\Ccurv$ that does not depend on $\theta\in\underline{\Theta}$.


\noindent \textbf{Condition \cref{cond:entropyStronger}:} 
We will show that $\ell_{\mathbb{P}}$ is Lipschitz and then use properties of covering numbers to relate the uniform entropy integral of $\ell_{\mathbb{P}}(\underline{\Theta})$ to one measuring the size of $\underline{\Theta}$, which is finite by \cref{cond:flowEntInt}.

To establish the Lipschitz property, we use that, for any $\theta_1,\theta_2\in\underline{\Theta}$ and $y$,
\begin{align*}
    &\ell_{\mathbb{P}}(\theta_1)(y)-\ell_{\mathbb{P}}(\theta_2)(y) \\
    &=\medint\int_0^1 E_\Pi[\left\{2(y-U) - (\theta_1+\theta_2)([1-t]U+ty,t)\right\}^\top (\theta_1-\theta_2)([1-t]U+ty,t)] dt,
\end{align*}
and so, by H\"{o}lder's inequality and the triangle inequality,
\begin{align*}
    \|\ell_{\mathbb{P}}(\theta_1)-\ell_{\mathbb{P}}(\theta_2)\|_{\infty}&\le C\|\theta_1-\theta_2\|_{L^\infty(\mathbb{R}^d\times [0,1];\|\cdot\|_2)},
\end{align*}
where $C:=2(\sup_{\theta\in\underline{\Theta}}\|\theta\|_{L^\infty(\mathbb{R}^d\times [0,1];\|\cdot\|_2)} + \sup_{y\in [-1,1]^d}E_\Pi\left[\|y-U\|_2\right])$ and $\|f\|_{\infty}:= \sup_{y\in[-1,1]^d}|f(y)|$. The constant $C$ is finite since functions in $\underline{\Theta}$ are uniformly bounded (\cref{cond:flowBddVec}), and so the above shows $\ell_{\mathbb{P}}(\cdot)$ is $C$-Lipschitz continuous on $\underline{\Theta}$ when its domain is equipped with $\|\cdot\|_{L^\infty(\mathbb{R}^d\times [0,1];\|\cdot\|_2)}$ and codomain is equipped with the supremum norm. 

When $\ell_{\mathbb{P}}(\underline{\Theta})$ is $C$-Lipschitz, properties of covering numbers yield that $N(\epsilon,\ell_{\mathbb{P}}(\underline{\Theta}),\|\cdot\|_{\infty})\le N(\epsilon/C,\underline{\Theta},\|\cdot\|_{L^\infty(\mathbb{R}^d\times [0,1];\|\cdot\|_2)})$. Combining this with the fact that $\sup_Q N(\epsilon,\ell_{\mathbb{P}}(\underline{\Theta}),L^2(Q))\le N(\epsilon,\ell_{\mathbb{P}}(\underline{\Theta}),\|\cdot\|_{\infty})$, with the supremum taken over all finitely supported measures $Q$ on $[-1,1]^d$, and then subsequently plugging these bounds into \eqref{eq:unifEntDef}, we find that, for all $\delta>0$,
\begin{align}
    J(\delta,\ell_{\mathbb{P}}(\underline{\Theta}))\le \medint\int_0^\delta \sqrt{1+\log N(\epsilon/C,\underline{\Theta},\|\cdot\|_{L^\infty(\mathbb{R}^d\times [0,1];\|\cdot\|_2)})}\,d\epsilon. \label{eq:flowThetaEntInt} 
\end{align}
Applying a change of variables to the right-hand side and using \cref{cond:flowEntInt} shows that the right-hand side is finite, which establishes \cref{cond:entropyStronger}.

\noindent \textbf{Condition \cref{cond:mixedLipschitzWeak}:} Since \cref{cond:curvature} holds, \Cref{lem:mixedLipSuff} shows this condition holds with $\Cmixedweak=\Ccurv$ and $d_\Psi^2(\psi,\Psi_P):=\esssup_{X\sim P_X} \chisq{P_{\psi| X}}{P_{\psi_P| X}}$.

\section{Kullback-Leibler guarantee for \nameref*{alg:double} autoregressive language models}\label{app:autoreg}

\subsection{Statement of guarantee}

We now present a Kullback-Leibler guarantee for \nameref*{alg:double} autoregressive language models  (\Cref{ex:autoreg}) for an empirical risk minimizer $\theta_n$ over some class $\underline{\Theta}$, such as one based on a transformer architecture \citep{vaswani2017attention}. This guarantee will rely on the following condition:

\begin{condenum}[resume*]
    \item\label{cond:autoregThetaNontrivial} \textit{Non-trivial probability of any possible next token:} For all $j\in [d]$, there exists $\delta_j>0$ such that, for all $\theta\in\underline{\Theta}\cup\{\theta_{\mathbb{P}}\}$,
    \begin{align*}
        \mathbb{P}\big\{\delta_j\le \theta_{Y^\star(j)}(1,1,\ldots,1,Y^\star(1),Y^\star(2),\ldots,Y^\star(j-1))\big\}=1.
    \end{align*}
\end{condenum}
Because \cref{cond:autoregThetaNontrivial} only involves token sequences that are generated under $\mathbb{P}$, it allows some tokens to have zero probability conditional on earlier tokens. However, for token sequences that occur with positive probability, no hypothesis $\theta$ can assign tokens vanishing probability given past tokens.

\begin{corollary}[Kullback-Leibler guarantee for \nameref*{alg:double} autoregressive language models]\label{cor:autoreg}
    If there exists an empirical risk minimizer $\theta_n\in\argmin_{\theta\in\underline{\Theta}} R_n(\theta)$ and \cref{cond:autoregThetaNontrivial,cond:strongPositivity,cond:entropyStronger} hold, then the conditions of \Cref{prop:divGuarantee,thm:excRiskOSLFormal} hold for appropriately specified constants. Hence, letting $\delta_n$ be as defined in \eqref{eq:doubleGenDeltaN}, the following holds w.p. at least $1-e^{-s}$:
\begin{align*}
    \KL{\mathbb{P}}{\phi_{n\sharp}\Pi}&\lesssim \inf_{\theta\in\underline{\Theta}}\mathcal{G}_{\mathbb{P}}(\theta) + \delta_n^2 + (s+1)/n + \max_{j\in [2]}\|\alpha_n^j-\alpha_P\|_{L^2(P_X)}^2 d_\Psi^2(\psi_n^j,\Psi_P),
\end{align*}
where $d_\Psi$ is as in \Cref{lem:mixedLipSuff} and `$\lesssim$' denotes inequality up to a multiplicative constant that does not depend on $n$ or $s$.
\end{corollary}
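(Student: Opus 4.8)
The plan is to check that \cref{cond:autoregThetaNontrivial,cond:strongPositivity,cond:entropyStronger} (together with the postulated existence of $\theta_n$) imply every hypothesis of \Cref{prop:divGuarantee,thm:excRiskOSLFormal}, and then chain those two results. Among the hypotheses of \Cref{thm:excRiskOSLFormal}, \cref{cond:strongPositivity,cond:entropyStronger} are assumed outright, so the work is to verify \cref{cond:RMexists,cond:bddLoss,cond:curvature,cond:mixedLipschitzWeak}; once \cref{cond:curvature} holds, \Cref{lem:mixedLipSuff} supplies \cref{cond:mixedLipschitzWeak} for free with $\Cmixedweak=\Ccurv$ and $d_\Psi$ the $\chi^2$-based divergence from that lemma. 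Throughout I will use the identity $\mathcal{G}_{\mathbb{P}}(\theta)=\mathbb{E}_{\mathbb{P}}[\ell_{\mathbb{P}}(\theta)(Y^\star)]=\KL{\mathbb{P}}{\tau(\theta)_\sharp\Pi}$ from the continuation of \Cref{ex:autoreg} in \Cref{app:divergence}, whose underlying reason is that $\ell(\theta,y)$ equals minus the log-density of $\tau(\theta)_\sharp\Pi$ at $y$, so that $\ell_{\mathbb{P}}(\theta)(y)=\log\frac{d\mathbb{P}}{d(\tau(\theta)_\sharp\Pi)}(y)$ for $\mathbb{P}$-a.e.\ $y$. Here \cref{cond:autoregThetaNontrivial} is exactly what makes $\mathbb{P}\ll\tau(\theta)_\sharp\Pi$ (every conditional exceeds $\delta_j>0$ on the support of $\mathbb{P}$), so this ratio is well defined; it is also what makes \cref{cond:divergence} hold with $D=\KLop$, $b=\Cdivergence=1$, $\epsilon=0$.

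Next, \cref{cond:RMexists}: the population cross-entropy is minimized coordinatewise by the true autoregressive conditionals (cf.\ Thm.~4.1.15 of \cite{durrett2019probability}, or directly since $\theta\mapsto-\sum_m p_m\log\theta_m$ on the simplex is minimized at $\theta_m=p_m$), so $\theta_{\mathbb{P}}\in\argmin_{\theta\in\Theta}\mathbb{E}_{\mathbb{P}}[\ell(\theta,Y^\star)]$ exists. For \cref{cond:bddLoss}, write $\ell_{\mathbb{P}}(\theta)(y)=\sum_{j:y(j)\neq 1}\log\!\big(\theta_{\mathbb{P},y(j)}(\cdots)/\theta_{y(j)}(\cdots)\big)$; by \cref{cond:autoregThetaNontrivial} each conditional probability appearing here is at least $\delta_j$ (for both $\theta\in\underline{\Theta}$ and $\theta_{\mathbb{P}}$) and, being a simplex coordinate, at most $1$, so each summand lies in $[\log\delta_j,-\log\delta_j]$ and $\|\ell_{\mathbb{P}}(\theta)\|_{L^\infty(\mathbb{P})}\le M:=\sum_{j=1}^d\log(1/\delta_j)$ for all $\theta\in\underline{\Theta}$. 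Hence \cref{cond:bddLoss} holds with $\Closs=M$.

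The only genuinely nontrivial step is \cref{cond:curvature}, which I will get from the standard Bernstein-type bound for bounded log-likelihood ratios. Since $e^{-\ell_{\mathbb{P}}(\theta)(y)}=\frac{d(\tau(\theta)_\sharp\Pi)}{d\mathbb{P}}(y)$ (on the set where $d\mathbb{P}/d(\tau(\theta)_\sharp\Pi)>0$), we have $\mathbb{E}_{\mathbb{P}}\big[e^{-\ell_{\mathbb{P}}(\theta)}\big]=(\tau(\theta)_\sharp\Pi)\{d\mathbb{P}/d(\tau(\theta)_\sharp\Pi)>0\}\le 1$. Combining this with the elementary inequality $e^{-x}\ge 1-x+c_M x^2$, valid for $|x|\le M$ with $c_M:=(e^{-M}-1+M)/M^2>0$ (since $x\mapsto(e^{-x}-1+x)/x^2$ is decreasing), and taking $\mathbb{E}_{\mathbb{P}}$ using the bound from \cref{cond:bddLoss}, gives $1\ge 1-\mathcal{G}_{\mathbb{P}}(\theta)+c_M\,\|\ell_{\mathbb{P}}(\theta)\|_{L^2(\mathbb{P})}^2$, i.e.\ $\|\ell_{\mathbb{P}}(\theta)\|_{L^2(\mathbb{P})}^2\le c_M^{-1}\mathcal{G}_{\mathbb{P}}(\theta)$ for all $\theta\in\underline{\Theta}$; so \cref{cond:curvature} holds with $\Ccurv=c_M^{-1}$, and then \Cref{lem:mixedLipSuff} yields \cref{cond:mixedLipschitzWeak}. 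With all hypotheses of \Cref{thm:excRiskOSLFormal} verified, for $\delta_n$ as in \eqref{eq:doubleGenDeltaN} and any $s>0$ its conclusion \eqref{eq:excRiskBdOSL} holds w.p.\ at least $1-e^{-s}$; in particular \cref{cond:genError} holds with $r$ equal to the right-hand side of \eqref{eq:excRiskBdOSL}. Since \cref{cond:divergence} was established above with $\Cdivergence=b=1$ and $\epsilon=0$, \Cref{prop:divGuarantee} gives $\KL{\mathbb{P}}{\phi_{n\sharp}\Pi}\le r$ w.p.\ at least $1-e^{-s}$, and absorbing the constants $4,K_1,K_2,K_3$ into $\lesssim$ (and using $s+2\le 2(s+1)$) yields the stated bound.

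I expect \cref{cond:curvature} to be the crux: it is the only place where the log-likelihood structure of the cross-entropy loss is used, and the only place where \cref{cond:autoregThetaNontrivial} is strictly needed, since that condition is precisely what forces $\ell_{\mathbb{P}}(\theta)$ to be uniformly bounded so the Bernstein-type inequality applies with a finite constant $c_M$. Everything else is bookkeeping: matching the finite, discrete token-sequence setup against the abstract conditions and invoking \Cref{lem:mixedLipSuff}, \Cref{thm:excRiskOSLFormal}, and \Cref{prop:divGuarantee} in sequence.
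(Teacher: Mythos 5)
Your proof is correct, and the high-level structure (verify \cref{cond:RMexists,cond:bddLoss,cond:curvature,cond:mixedLipschitzWeak}, invoke \Cref{lem:mixedLipSuff} for \cref{cond:mixedLipschitzWeak}, then chain \Cref{thm:excRiskOSLFormal} and \Cref{prop:divGuarantee}) matches the paper. Where you genuinely diverge is in establishing \cref{cond:curvature}. The paper bounds $\|\ell_{\mathbb{P}}(\theta)\|_{L^2(\mathbb{P})}^2$ by a constant times the squared Hellinger distance between $\mathbb{P}$ and $\tau(\theta)_{\sharp}\Pi$ using the pointwise bound $\log^2 b \le (b-1)^2(b^{-1}\vee 1)^2$, then invokes Hellinger$^2 \le$ KL $= \mathcal{G}_{\mathbb{P}}(\theta)$, yielding $\Ccurv = 4\prod_{j}\delta_j^{-1}$. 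You instead use the Bernstein/Birg\'{e}--Massart-style identity $E_{\mathbb{P}}[e^{-\ell_{\mathbb{P}}(\theta)}] \le 1$ together with the Taylor-type lower bound $e^{-x} \ge 1 - x + c_M x^2$ valid on $|x|\le M$, where $M := \sum_j \log(1/\delta_j)$ bounds the log-likelihood ratio via \cref{cond:bddLoss}, giving $\Ccurv = c_M^{-1}$. Both arguments are valid, but yours is actually quantitatively sharper: since $c_M = (e^{-M}-1+M)/M^2 \asymp 1/M$ for large $M$, your constant is of order $\sum_j \log(1/\delta_j)$, i.e., polynomial in $d$, whereas the paper's $4\prod_j\delta_j^{-1} = 4e^M$ is exponential in $d$. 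The paper explicitly flags this exponential dependence as a limitation at the end of its \cref{cond:curvature} verification and defers to alternative bounds; your argument partially addresses that concern directly within the framework of \cref{cond:curvature}. The remaining steps (the cross-entropy minimizer argument for \cref{cond:RMexists}, the bounded log-ratio argument for \cref{cond:bddLoss}, and the observation that \cref{cond:autoregThetaNontrivial} supplies the domination $\mathbb{P}\ll\tau(\theta)_{\sharp}\Pi$ needed for \cref{cond:divergence}) all match the paper up to the choice of citation.
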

\begin{proof}[Proof of \Cref{cor:autoreg}]
    In \Cref{app:autoregConditions}, we show that \cref{cond:RMexists,cond:bddLoss,cond:curvature,cond:mixedLipschitzWeak} hold. Combining this with the assumed \cref{cond:strongPositivity,cond:entropyStronger}  implies the conditions of \Cref{thm:excRiskOSLFormal} hold. This, in turn, implies that \cref{cond:genError} holds with $r$ equal to the right-hand side of \eqref{eq:excRiskBdOSL}. Moreover, by \Cref{app:divergence}, \cref{cond:divergence} holds with $D=\KLop$, $b=\Cdivergence=1$, and $\epsilon=0$. Hence, the conditions of \Cref{prop:divGuarantee} hold, and applying that proposition gives the result.
\end{proof}

\subsection{Verifying the conditions of \Cref*{thm:excRiskOSLFormal}}\label{app:autoregConditions}

Under the conditions of \Cref{cor:autoreg}, we show that \nameref*{alg:double} autoregressive language models (\Cref{ex:autoreg}) satisfy the conditions of \Cref*{thm:excRiskOSLFormal}. This corollary directly assumes \cref{cond:entropyStronger,cond:strongPositivity}, and so we only need to establish the remaining conditions.


\noindent \textbf{Condition \cref{cond:RMexists}:} This follows from Thm.~2.6.3 in \cite{cover1999elements}.

\noindent \textbf{Condition \cref{cond:bddLoss}:} This follows from the definition of the cross-entropy loss and \cref{cond:autoregThetaNontrivial}.

\noindent \textbf{Condition \cref{cond:curvature}:} Let $\bar{\theta}(y)=\prod_{j=1}^d \theta_{y(j)}(1,\ldots,1,y(1),\ldots,y(j-1))$, and define $\bar{\theta}_{\mathbb{P}}$ similarly. Observe that $\bar{\theta}_{\mathbb{P}}$ is the probability mass function of $\mathbb{P}$. Let $\mathbb{P}_{\theta}$ be the distribution of $[k]^d$ with probability mass function $\bar{\theta}$. By \cref{cond:autoregThetaNontrivial}, $\bar{\theta}(y)>0$ if $y\in\mathcal{Y}_{\mathbb{P}}:=\{y : \bar{\theta}_{\mathbb{P}}(y)>0\}$. Combining this with the fact that $\log^2 b\le (b-1)^2(b^{-1}\vee 1)^2$ for all $b>0$ yields
\begin{align*}
    \|\ell_{\mathbb{P}}(\theta)\|_{L^2(\mathbb{P})}^2&= 4\sum_{y\in\mathcal{Y}_{\mathbb{P}}} \log^2\left[\frac{\bar{\theta}^{1/2}(y)}{\bar{\theta}_{\mathbb{P}}^{1/2}(y)}\right] \bar{\theta}_{\mathbb{P}}(y)\le 4\sum_{y\in\mathcal{Y}_{\mathbb{P}}} \left[\frac{\bar{\theta}_{\mathbb{P}}^{1/2}(y)}{\bar{\theta}^{1/2}(y)}\vee 1\right]^2 \left[\bar{\theta}^{1/2}(y)-\bar{\theta}_{\mathbb{P}}^{1/2}(y)\right]^2.
\end{align*}
By \cref{cond:autoregThetaNontrivial}, the squared maximum on the right is upper bounded by $\prod_{j=1}^d \delta_j^{-1}$, and so the right-hand side is upper bounded by $\Ccurv:=4\prod_{j=1}^d \delta_j^{-1}$ times $\sum_{y\in\mathcal{Y}_{\mathbb{P}}} [\bar{\theta}^{1/2}(y)-\bar{\theta}_{\mathbb{P}}^{1/2}(y)]^2$, the squared Hellinger distance between $\mathbb{P}$ and $\mathbb{P}_\theta$. This squared distance is upper bounded by $\KL{\mathbb{P}}{\mathbb{P}_{\theta}}$ \citep[][Lemma 2.4]{tsybakov2009nonparametric}, which in turn is equal to the generalization error $\mathcal{G}_{\mathbb{P}}(\theta)$. Hence, \cref{cond:curvature} holds with constant $\Ccurv$.

While we have shown \cref{cond:curvature} holds, the corresponding constant grows exponentially with $d$. Given this, alternative bounds that do not rely on \cref{cond:curvature} may be preferred in this scenario. Section~3.2 of \cite{foster2023orthogonal} provides one approach for deriving such bounds. While those bounds tend to have a slower rate in $n$ than the one in \Cref{thm:excRiskOSLFormal}, they may be much better behaved in terms of $d$. In future work, it would be interesting to apply such results to derive generalization bounds for \nameref*{alg:double} autoregressive language models.

\noindent \textbf{Condition \cref{cond:mixedLipschitzWeak}:} Since \cref{cond:curvature} holds, \Cref{lem:mixedLipSuff} shows this condition holds with $\Cmixedweak=\Ccurv$ and $d_\Psi$ as specified in that lemma.

\section{Further details on numerical experiments}

\subsection{Generating counterfactual smiling faces}

\subsubsection{Experimental setup}\label{app:moreExperimentalSet}

\myparagraph{Code availability.} The code used to implement our experiments can be found on GitHub: \texttt{https://github.com/alexluedtke12/DoubleGen-Experiments}.


\myparagraph{Preprocessing.} Training and test images were cropped to remove artifacts and resized to 64$\times$64 pixels.

\myparagraph{Features.} 
Each image was accompanied by 39 binary features indicating attributes such as hair color, age, and presence of accessories like hats and eyeglasses. We removed 8 features judged to plausibly be caused by whether someone smiles (raised eyebrows, open mouth, etc.), leaving 31 features in $X$. The names of these 31 features were: \texttt{5\_o\_Clock\_Shadow}, \texttt{Bald}, \texttt{Bangs}, \texttt{Big\_Lips}, \texttt{Big\_Nose}, \texttt{Black\_Hair}, \texttt{Blond\_Hair}, \texttt{Brown\_Hair}, \texttt{Bushy\_Eyebrows}, \texttt{Chubby}, \texttt{Double\_Chin}, \texttt{Eyeglasses}, \texttt{Goatee}, \texttt{Gray\_Hair}, \texttt{Heavy\_Makeup}, \texttt{Male}, \texttt{Mustache}, \texttt{Narrow\_Eyes}, \texttt{No\_Beard}, \texttt{Pale\_Skin}, \texttt{Pointy\_Nose}, \texttt{Receding\_Hairline}, \texttt{Sideburns}, \texttt{Straight\_Hair}, \texttt{Wavy\_Hair}, \texttt{Wearing\_Earrings}, \texttt{Wearing\_Hat}, \texttt{Wearing\_Lipstick}, \texttt{Wearing\_Necklace}, \texttt{Wearing\_Necktie}, and \texttt{Young}. The names of the 8 features excluded from $X$ because they could plausibly be caused by whether someone smiles are: \texttt{Arched\_Eyebrows}, \texttt{Attractive}, \texttt{Bags\_Under\_Eyes}, \texttt{Blurry}, \texttt{High\_Cheekbones}, \texttt{Mouth\_Slightly\_Open}, \texttt{Oval\_Face}, \texttt{Rosy\_Cheeks}.

\myparagraph{Nuisance estimation.}
For nuisance estimation, the inverse propensity was estimated using lightgbm \citep{ke2017lightgbm} with the Riesz regression loss \citep{chernozhukov2021automatic}, with hyperparameters tuned via Optuna \citep{akiba2019optuna}. The outcome model was estimated using $k$-nearest neighbors, with $k=200$ and the Euclidean distance used to compare feature vectors. A draw from this outcome model was obtained by sampling from $\mathrm{Unif}\{1,2,\ldots,200\}$ and then returning the corresponding neighboring image.

Sensitivity to nuisance misspecification was assessed by fitting each nuisance twice. In the first, well-specified, scenario, the nuisances were fit using all available data. In the second, misspecified, scenario, the outcome model was trained only using dark-haired instances and the inverse propensity weights for lighter-haired instances was scaled down down by a factor of 4. By misspecifying the nuisances in this way, IPW and plug-in estimation should overrepresent dark-haired individuals when their relevant nuisance is misspecified.

\myparagraph{Diffusion model architecture and training.} 
The diffusion models were trained using Diffusers and Pytorch \citep{von_platen_diffusers_2022,NEURIPS2019_9015}, with code adapted from \cite{huggingface2025} with the help of a Claude coding assistant. A U-Net score network was used \citep{ronneberger2015u}, with block output channels of (128, 384, 768), 4 layers per block, and attention head dimension 16. Training images were segmented to isolate faces \citep{kirillov2023segany,lugaresi2019mediapipe}, and the training loss upweighted face pixels by a factor of 8. AdamW was used to update network weights over 500 epochs, with minibatches of size 192 and an initial $10^{-4}$ learning rate decaying exponentially with a half-life of 250 epochs \citep{loshchilov2017decoupled}. To improve model stability, we kept an exponential moving average of the model weights during training (decay rate 0.999) and used these averaged weights to generate images at inference time \citep{polyak1992acceleration}.

\myparagraph{Performance metrics.} 
Performance was evaluated using several metrics. Fr\'{e}chet and kernel inception distances were computed \citep{heusel2017gans,binkowski2018demystifying}, based on an embedding tailored for human faces (ArcFace) \citep{deng2019arcface} and the classical Inception-v3 embedding \citep{szegedy2016rethinking}. Precision and recall were also computed \citep{kynkaanniemi2019improved}, using these same embeddings. All of the aforementioned metrics were evaluated using torch-fidelity with its default hyperpameter values \citep{obukhov2020torchfidelity}.

The metrics were evaluated by comparing synthetically generated images to those from the test set. This test set consists of 39,829 draws from $P$, rather than from the target counterfactual distribution $\mathbb{P}$. To account for this, those data were reweighted in a doubly robust fashion. Concretely, test data were used to estimate the propensity score and outcome model with the same lightgbm and $k$-nearest neighbors nuisance estimation strategies described earlier. This yielded doubly robust weights, such that the weighted empirical distribution of the test data approximates $\mathbb{P}$. Observations were then resampled with replacement according to these weights, yielding 10,000 test images that should be approximately distributed according to $\mathbb{P}$.


\subsubsection{Results}

\Cref{tab:facesMetricsInception} shows results for this diffusion model experiment, but using the Inception-v3 embedding rather than ArcFace embeddings. \Cref{fig:moreFaces} displays a random sample of 200 faces generated by the na\"{i}ve model and \nameref*{alg:double} when both nuisances were well specified.

\begin{table}[tb]\centering
\caption{Diffusion model performance as in \Cref{tab:facesMetricsArcFace}, but using Inception-v3 rather than ArcFace embeddings.}\label{tab:facesMetricsInception}
\begin{tabular}{ll@{\hskip 2.5em}llll}
     & & FID $\downarrow$ & KID $\downarrow$ & Prec. $\uparrow$ & Rec. $\uparrow$ \\\midrule\midrule
&  Na\"{i}ve                                &                 1.00 &                 1.00 &                 0.66 &                 0.49  \\\midrule
\multirow{3}{4em}{\textit{Both right}} &  Plug-in                                  & 0.90 & 0.78&                 0.64 &                 0.49  \\
&  IPW                                      & 0.88 & 0.76 &                 0.64 &                 0.49  \\
&  \nameref*{alg:double}                    & \textbf{\textcolor{CB5blue}{0.88}} & \textcolor{CB5blue}{0.79} &                 \textbf{0.65} & \textbf{\textcolor{CB5blue}{0.50}}   \\\midrule
\multirow{2}{4em}{\textit{Outcome wrong}} &  Plug-in                                  &                 1.76 &                 1.83 & 0.67 &                 0.25  \\
&  \nameref*{alg:double}                    & \textbf{\textcolor{CB5blue}{0.89}} & \textbf{\textcolor{CB5blue}{0.81}} &                 0.63 &                 \textbf{0.49}  \\\midrule
\multirow{2}{4em}{\textit{Propensity wrong}} &  IPW                                      & 0.92 & 0.76 &                 0.64 &                 0.47  \\
&  \nameref*{alg:double}                    & \textcolor{CB5blue}{0.99} & \textcolor{CB5blue}{0.92} &                 \textbf{0.64} &                 0.45  \\\midrule
\multirow{1}{5.5em}{\textit{Both wrong}} &  \nameref*{alg:double}                    &                 1.28 &                 1.27 &                 0.66 &                 0.40  \\\midrule
\end{tabular}
\vspace{-.75em}

\begin{flushleft}\footnotesize \textit{
$\downarrow$\,/\,$\uparrow$ denote whether smaller/larger values are preferred. \\
Fr\'{e}chet/Kernel inception distances (FID/KID) rescaled so  Na\"{i}ve takes value 1.00.}\\
Values where \nameref*{alg:double} is \textcolor{CB5blue}{better than Na\"{i}ve} are marked in blue, and---within a given nuisance misspecification category---those where it \textbf{performs at least as well as the best alternative method} are marked in bold.
\end{flushleft}
\end{table}

\begin{figure}[h!]
    \centering
    \includegraphics[width=\textwidth]{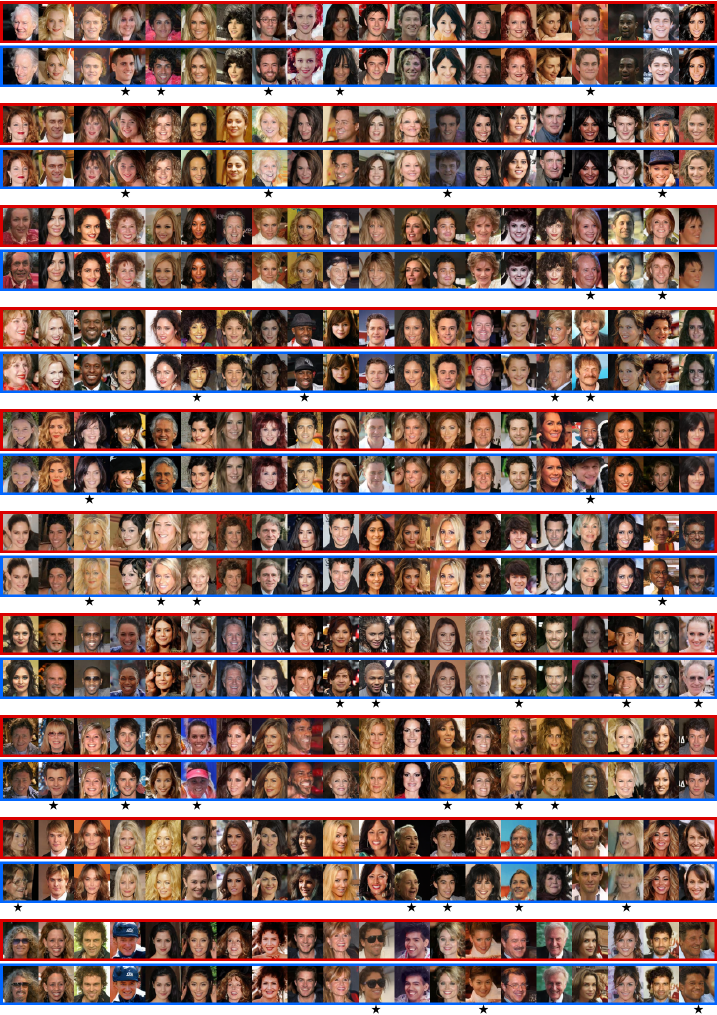}
    \caption{Batch of 200 random samples from a \textbf{\color{CB5red}traditional diffusion model} (left columns) and \textbf{\color{CB5blue}\nameref*{alg:double} diffusion model} (right columns), under the same setup as in \Cref{fig:faces}. The \raisebox{-0.1em}{\includegraphics[height=.85em]{faces/Five_Pointed_Star_Solid.pdf}}'s mark the 20\% of pairs whose ArcFace embeddings \citep{deng2019arcface} have the largest cosine dissimilarity.} \label{fig:moreFaces}
\end{figure}


\subsection{Generating counterfactual product reviews}\label{app:amazonReviews}

\subsubsection{Experimental setup}\label{app:amazonReviewsExperimentalSetup}

\myparagraph{Preprocessing.} Approximately 63.8 million instances with unknown product categories were removed from the dataset. Reviews were preprocessed by first prepending the rating (e.g., `5 stars: ' or `1 star: '), then tokenizing with the Llama 3 tokenizer, and finally truncating to at most 192 tokens.

\myparagraph{Features.} A total of 38 total features were considered. Of these, 33 were one-hot encoded product categories extracted from the dataset's metadata (books, electronics, etc.) The other 5 were the number of product images, number of product videos, and the length (in characters) of the product title, description, and details.

\myparagraph{Creating the semi-synthetic dataset.} To make the synthetic intervention depend on the features in a realistic way, we used 10\% of the data to estimate a propensity model $\pi(a\mymid x)$ based on an actual variable in the dataset, verified purchase status (see next paragraph for details). The remaining 90\% was restricted to verified reviews only. From this subset, 30\% was held out for evaluation, while 60\% was modified for training. We drew a synthetic intervention indicator for each instance conditionally on $X$ using $\pi$, and replaced review text for instances not following the intervention with that of randomly drawn unverified reviews. Letting $Q$ denote the distribution of the original observations, $P$ the synthetic training data, and $A=a^\star$ denote following the synthetic intervention, a draw from $P$ can be sampled by drawing $X\sim Q_{X\mymid A=a^\star}$, $A\mymid X$ from $\pi$, $Y$ given $A=a^\star$ and $X$ from $Q_{Y\mymid A=a^\star,X}$, and $Y$ given $A\not=a^\star$ and $X$ from $Q_{Y\mymid A\not=a^\star}$. Crucially, the G-computed counterfactual distribution under $P$, $\mathbb{P}$, equals $Q_{Y\mymid A=a^\star}$, which lets us evaluate the ground truth using our test set.

\myparagraph{Synthetic propensity $\pi$.} The propensity $\pi$ for the synthetic intervention $a^\star$ was designed to behave similarly to the propensity to have a purchase be verified, but with two key modifications. First, to make training computationally feasible on an academic budget, we reduced the proportion of instances receiving the synthetic intervention from approximately 90\% to 3.5\%. This reduction also demonstrates our method's performance in settings where receiving the target intervention is rare. Second, to ensure sufficient confounding for distinguishing between causal and non-causal methods in this illustrative experiment, we amplified how strongly the propensity depended on baseline features. Specifically, we used 10\% of the data to obtain an estimate $\widehat{\pi}$ via linear-logistic regression, then defined the synthetic propensity on the remaining data as $\expit[-8.5 + 2\logit \widehat{\pi}(x)]$. This transformation maintains the relative ordering of propensities while increasing variance and shifting the overall values they take downward.

Large absolute coefficients in the logistic regression fit $\widehat{\pi}$ include dummies for the following categories: books ($-0.32$), Kindle Store ($-0.31$), movies and TV ($-0.15$), automotive ($0.15$), CDs and vinyl ($-0.14$), and clothing, shoes, and jewelry ($0.13$). Video count also had a coefficient of $0.07$ and title length had a coefficient of $0.03$.

\myparagraph{Nuisance estimation.} Nuisances were estimated similarly to as in the smiling faces experiment, so here we only highlight the differences. For the propensity, we used the SynapseML implementation of lightgbm, since this can be run on a dataset with hundreds of millions of observations \citep{hamilton2018mmlspark}. Because that implementation does not allow custom loss functions, we used a cross-entropy loss instead of a Riesz regression loss. We also used standard default tuning parameter values, rather than selecting them via Optuna, and truncated inverse propensities at 1000. For the outcome model, we stratified the search for the 200 nearest neighbors by product category, using the other 5 features for matching within each category.

As for the smiling faces experiment, sensitivity to nuisance misspecification was assessed by fitting each nuisance model twice. The well-specified nuisances used all baseline features. The poorly specified propensity omitted the product category features. In contrast, the poorly specified outcome model ignored the 5 non-category features. A draw from this model was obtained by equiprobability sampling of a training instance that received the intervention within the category defined by $x$.

\myparagraph{Training.} We used low-rank adaptation (LoRA) to finetune the 1 billion parameter Llama 3.2 base model \citep{hu2022lora,dubey2024llama}. The LoRA layers had rank 8, resulting in about 5.5M trainable parameters. AdamW was used to update the weights, with a learning rate decaying from $2\times 10^{-4}$ according to a cosine decay over 1 epoch. Training was implemented using the Transformers and PEFT libraries, with a PyTorch backend \citep{wolf2019huggingface, peft, NEURIPS2019_9015}. A Claude coding assistant was used to help write, edit, and comment the code.

\myparagraph{Performance metrics.} Performance was evaluated using four metrics. Mean perplexity was computed using 20k randomly selected test samples, as computed in the HuggingFace evaluate library \citep{jelinek1977perplexity,von2022evaluate}. MAUVE and frontier integrals were used to assess divergences between 10k synthetic generated samples and 20k randomly selected test samples, using the default settings from the software in \citep{pillutla2021mauve}. Finally, the 1-Wasserstein distance was computed between the distributions of the star ratings appearing at the beginning of the generated samples (e.g., a review that begins ``3 stars: $\ldots$'' is converted to a numeric rating of 3) and the star ratings of the roughly 150M test samples.

\subsection{Results}

\begin{figure}[tb]
    \includegraphics[width=\textwidth]{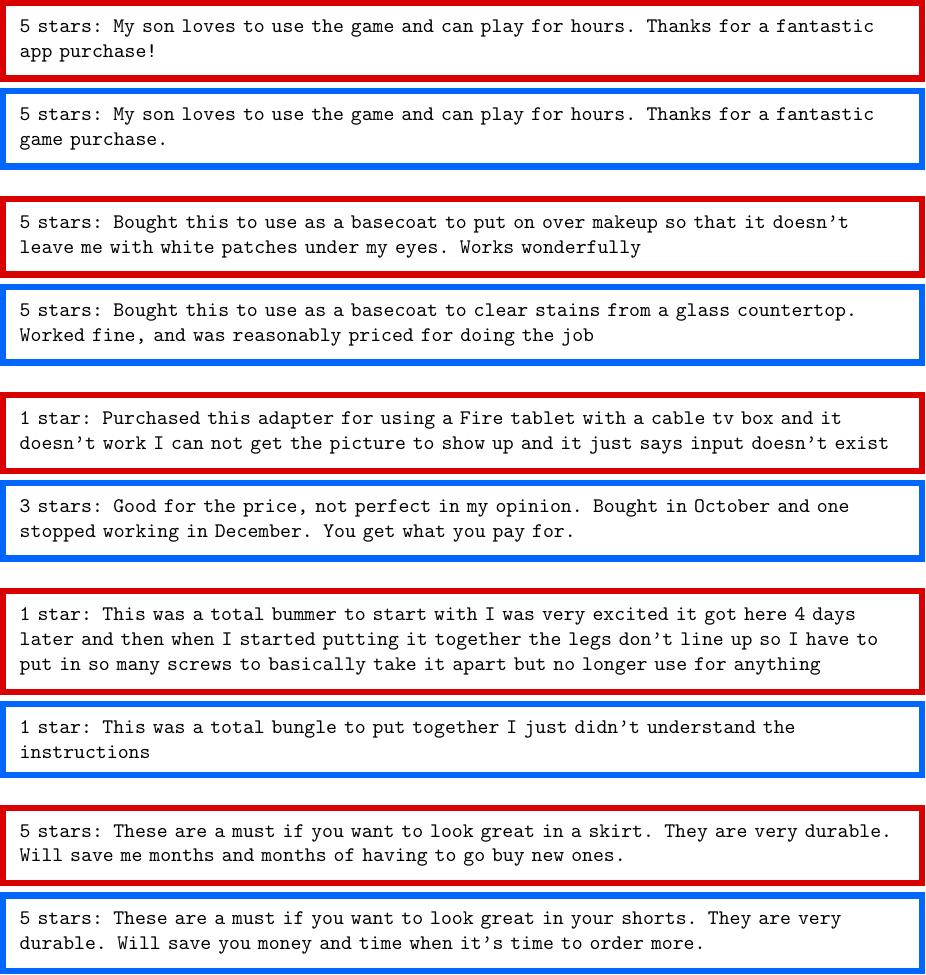}
    \caption{Reviews generated by a \textbf{\color{CB5red}traditional language model} trained on only instances that follow the target intervention (top) and \textbf{\color{CB5blue}\nameref*{alg:double} language model} (bottom). Samples are coupled, with the random seed set to the same value for both models before generation. While the models often produce similar reviews, certain product categories are over- or underrepresented by the traditional language model---see \Cref{fig:reviewsBooks} for an example with book reviews.}\label{fig:reviews}
\end{figure}

\begin{figure}[tb]
    \includegraphics[width=\textwidth]{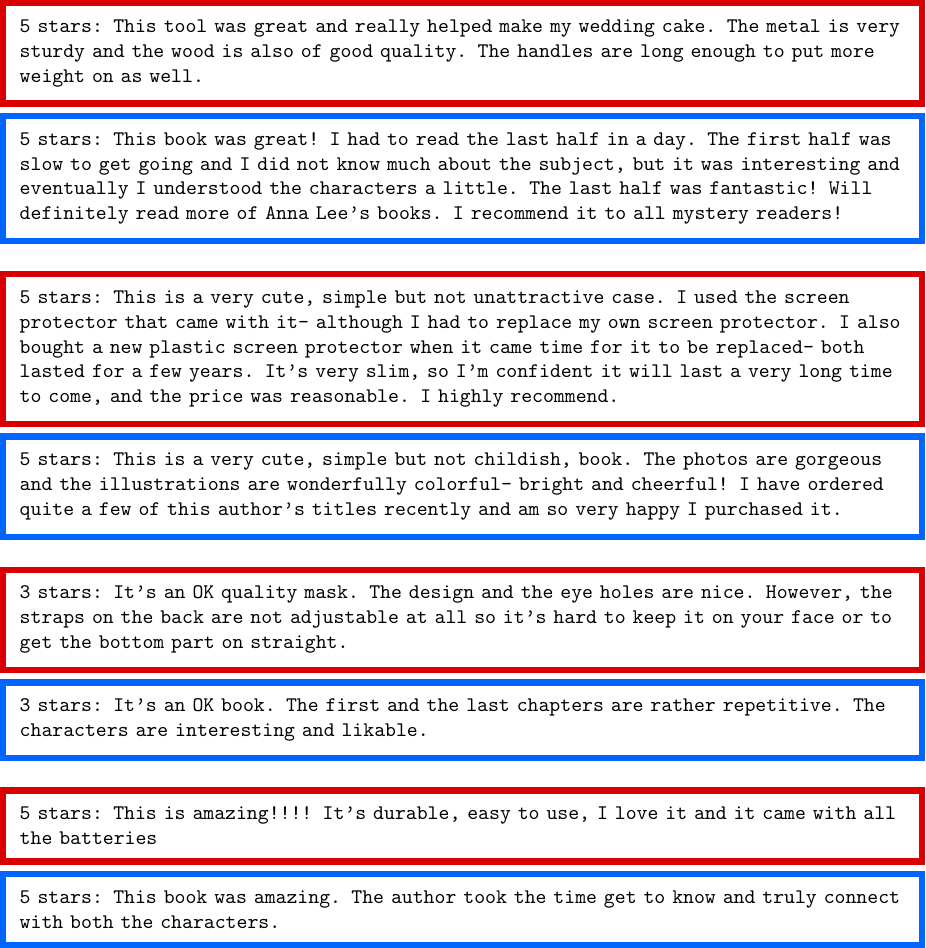}
    \caption{Coupled reviews from the same models as in \Cref{fig:reviews}, filtered to show only pairs where at least one model's review contains the word `book' or `books'. \textbf{\color{CB5blue}\nameref*{alg:double}} generates reviews with this word at roughly the same frequency as the test set (4.4\% vs. 4.2\%),  while the \textbf{\color{CB5red}traditional model} severely underrepresents this content (0.24\%).}\label{fig:reviewsBooks}
\end{figure}

\Cref{fig:reviews} displays reviews generated by \nameref*{alg:double} and the na\"{i}ve approach. Most of these reviews are similar to one another. This suggests there was relatively little confounding in this experiment. However, there are certain types of reviews that are underrepresented by the na\"{i}ve approach. This includes those that contain the word `book' or `books' (\Cref{fig:reviewsBooks}). This is not surprising given that the propensity to receive the synthetic intervention is lower for reviews written for items in the Amazon's Books product category.

\end{document}